\documentclass[a4paper,11pt]{article}
\usepackage{graphicx} 
\usepackage[a4paper,left=2.5cm,right=2.5cm,top=3cm,bottom=3cm]{geometry}
\usepackage{natbib}
\usepackage{amsfonts}
\usepackage{amsmath}
\usepackage{amsthm}
\usepackage{import}
\usepackage{enumerate}
\usepackage[dvipsnames]{xcolor}
\usepackage{authblk, bm, bbm, soul}
\usepackage{amssymb}
\usepackage{enumitem}
\usepackage{multirow}
\usepackage{capt-of}
\usepackage{booktabs}
\usepackage{hyperref}
\usepackage{cleveref}
\usepackage{natbib}
\hypersetup{colorlinks=true, citecolor=blue, linkcolor = blue, urlcolor = blue}
\usepackage{stackrel}
\usepackage{algorithm}
\usepackage{multirow}
\usepackage{dsfont}
\usepackage{bbm}
\usepackage{algpseudocode}
\usepackage[toc,page]{appendix}

\newtheorem{theorem}{Theorem}[]
\newtheorem{definition}{Definition}[]

\newtheorem{assumption}{Assumption}
\newtheorem{proposition}[theorem]{Proposition}
\newtheorem{corollary}[theorem]{Corollary}
\newtheorem{lemma}[theorem]{Lemma}
\theoremstyle{remark}
\newtheorem{remark}{Remark}

\DeclareMathOperator*{\var}{Var}
\DeclareMathOperator*{\cov}{Cov}

\DeclareMathOperator*{\opt}{opt}
\DeclareMathOperator*{\D}{\mathrm{D}}

\DeclareMathOperator*{\DD}{\mathrm{DD}}

\DeclareMathOperator*{\argmin}{arg\,min}

\DeclareMathOperator*{\arginf}{arg\,inf}
\DeclareMathOperator*{\argmsup}{arg\,sup}
\global\long\def\indc{\mathds{1}}

\allowdisplaybreaks

\title{Federated fairness-aware classification under differential privacy}
\author{Gengyu Xue}
\author{Yi Yu}

\affil{Department of Statistics, University of Warwick}
\date{\today}

\begin{document}
\maketitle
\begin{abstract}
    Privacy and algorithmic fairness have become two central issues in modern machine learning. Although each has separately emerged as a rapidly growing research area, their joint effect remains comparatively under-explored. In this paper, we systematically study the joint impact of differential privacy and fairness on classification in a federated setting, where data are distributed across multiple servers.  Targeting demographic disparity constrained classification under federated differential privacy, we propose a two-step algorithm, namely FDP-Fair. In the special case where there is only one server, we further propose a simple yet powerful algorithm, namely CDP-Fair, serving as a computationally-lightweight alternative. Under mild structural assumptions, theoretical guarantees on privacy, fairness and excess risk control are established. In particular, we disentangle the source of the private fairness-aware excess risk into a) intrinsic cost of classification, b) cost of private classification, c) non-private cost of fairness and d) private cost of fairness. Our theoretical findings are complemented by extensive numerical experiments on both synthetic and real datasets, highlighting the practicality of our designed algorithms.
\end{abstract}

\section{Introduction} \label{sec_intro}

Recent advances in modern technology have made it increasingly easier to collect, store, and analyse large volumes of data. While these developments bring substantial benefits, the growing availability of data has also raised significant public concerns. Among these, privacy concerns due to use of sensitive data and ethical challenges caused by algorithmic unfairness have emerged as two central issues, attracting increasing attention from both researchers and the public.

To address privacy concerns, differential privacy (DP; \citealp{dwork2006calibrating}) has emerged as one of the most principled approaches with wide applications in Google \citep[e.g.][]{song2021evading}, Meta \citep[e.g.][]{yousefpour2021opacus} and the US Census Bureau \citep[e.g.][]{uscensus}, to name but a few. From a statistical perspective, substantial analyses have also been carried out in various settings to quantify the cost of privacy constraints on statistical accuracy across a range of problems, such as mean estimation \citep[e.g.][]{karwa2017finite}, density estimation \citep[e.g.][]{butucealocal2020}, classification \citep[e.g.][]{auddy2025minimax} and non-parametric regression \citep[e.g.][]{cai2024optimal}.

Alongside privacy, ensuring fairness in algorithmic decision-making has emerged as another key challenge. Empirical studies have shown that algorithms may inherit biases from data, leading to ethical concerns \citep[e.g.][]{angwin2022machine}. In response, leading companies such as LinkedIn \citep[e.g.][]{quinonero2023disentangling} and Meta \citep[e.g.][]{bakalar2021fairness} have proposed practical frameworks to mitigate fairness issues and protect vulnerable groups in practice. From a methodological perspective, existing approaches to algorithmic fairness are often broadly categorised into pre-processing \citep[e.g.][]{calmon2017optimized, johndrow2019algorithm}, in-processing \citep[e.g.][]{celis2019classification, cho2020fair}, and post-processing \citep[e.g.][]{zeng2024bayes, hou2024finite} strategies, depending on the stage at which fairness constraints are incorporated. We refer readers to \cite{pessach2022review} for a comprehensive review of recent developments.

Despite being well studied separately, DP and fairness have received relatively less attention when considered jointly. Prior empirical work indicates that differential privacy can deteriorate fairness in various settings \citep[e.g.][]{ bagdasaryan2019differential, ganev2022robin}. From a theoretical perspective, \citet{mangold2023differential} further study the impact of DP on fairness when fairness measures are pointwise Lipschitz continuous. By incorporating fairness directly into the objective function, \citet{zhou2024differentially} focus on private worst-group risk minimisation under the min-max fairness framework introduced by \citet{martinez2020minimax}. In the context of classification, \citet{jagielski2019differentially} consider multi-class private fair classification under label-central DP when only sensitive features are protected. \citet{ghoukasian2024differentially} design a post-processing method for private group-wise classifiers, targeting demographic disparity. More recently, \citet{say2025fairness} consider multi-class classification under DP by extending the framework in \citet{denis2024fairness} using output perturbation and noisy gradient descent. However, to the best of our knowledge, most prior works lack rigorous theoretical guarantees on both fairness and excess risk control, and are typically developed under a single-server data setting, where all data are assumed to be centrally stored and processed by a single trusted entity. Algorithms with theoretical guarantees for distributed fairness-aware classification under privacy constraints are yet to be explored.

\subsection{List of contributions} \label{sec_list_contribution}
In this paper, we systematically study the problem of classification under demographic disparity (\Cref{def_disparity_measure}) in a distributed learning setup with multiple servers, adhering to federated DP (\Cref{def_fdp}) constraints. This setting is graphically illustrated in \Cref{fig_fdp_fair} and has wide-ranging applications, for instance, analysing data from patients across different hospitals \citep[e.g.][]{li2020multi} and cross-organisation collaborations \citep[e.g.][]{heyndrickx2023melloddy}, among many others. 

We summarise the main contributions of the paper as follows.
\begin{itemize}
    \item To the best of our knowledge, this is the first work to systematically study private distributed fairness-aware classification. We propose a two-step post-processing algorithm, detailed in \Cref{alg_fair_fdp}. In the first step, due to the coexistence of scalar and functional outputs, we employ various Gaussian mechanisms, adding univariate Gaussian noise to scalars and Gaussian processes to functional outputs. In the second step, to determine the adjusted decision threshold under federated DP, we introduce a novel private tree-based algorithm that avoids repeated privacy composition arising from summation estimations in the optimisation problem. 
    
    \item In the special case when there is only one server, we further show that a new and simpler algorithm, \Cref{alg_fair_cdp}, is sufficient for fair classification under central DP, in contrast to the more complex tree-based procedure in \Cref{alg_bin_tree}. Notably, \Cref{alg_fair_cdp} and its theoretical guarantees provide a novel private mechanism for optimising a functional object without incurring functional privacy, and is of independent interest.

    \item We further establish finite-sample theoretical guarantees for the proposed algorithms in terms of privacy, fairness and excess risk control. In particular, we show that both Algorithms~\ref{alg_fair_fdp} and~\ref{alg_fair_cdp} satisfy fairness constraints with high probability while maintaining satisfactory classification performance under DP, with the cost of privacy and fairness explicitly quantified. We separate the source of excess risk into decoupled terms, corresponding to the intrinsic hardness of classification, the privacy and fairness constraints. These results provide a rigorous theoretical foundation for the decision makers to understand the trade-offs between privacy, fairness and excess risk.

    \item Systematic numerical experiments on both simulated and real datasets are carried out in \Cref{sec_numerical}. These results further support our theoretical findings in \Cref{sec_theoretical} and highlight the practicality.
    
\end{itemize}

\noindent\textbf{Notation.} For a positive integer $a$, denote $[a] = \{1, \ldots, a\}$.  Let $\lceil a \rceil$ be the smallest integer greater than or equal to $a$ and $\lfloor a \rfloor$ be the greatest integer less than or equal to $a$. For $a,b \in \mathbb{R}$, let $a \vee b = \max\{a,b\}$ and $a\wedge b = \min\{a,b\}$. For $v \in \mathbb{R}^p$, let $\|v\|_1$, $\|v\|_2$ and $\|v\|_{\infty}$ be $\ell_1$-, $\ell_2$- and $\ell_{\infty}$-norms. For a sequence of positive numbers $\{a_n\}$ and a sequence of random variables $\{X_n\}$, denote $X_n = O_{\mathrm{p}}(a_n)$ if $\lim_{M \rightarrow \infty}\lim\sup_n \mathbb{P}(|X_n|\geq M a_n)=0$. We write $X_n = \widetilde{O}_{\mathrm{p}}(a_n)$ if there exists $k>0$ such that $X_n = O_{\mathrm{p}}(a_n (\log n)^k)$. For two sequences of positive numbers $\{a_n\}$ and $\{b_n\}$, denote $a_n \lesssim b_n$, $a_n \gtrsim b_n$, and $a_n \asymp b_n$, if there exists some constants $c,C > 0$ such that $a_n/b_n \leq C$, $b_n/a_n \leq C$ and $c \leq a_n/b_n \leq C$. Write $a_n\lesssim_{\log} b_n$, $a_n \asymp_{\log} b_n$ and $a_n =_{\log} b_n$, if $a_n\lesssim b_n$, $a_n \asymp b_n$ and $a_n = b_n$ up to poly-logarithmic factors.

For $d \in \mathbb{N}_+$ and any vector $s = (s_1, \ldots, s_d)^{\top} \in \mathbb{N}^d$, define $\vert s\vert = \sum_{i = 1}^d s_i$, $s!=s_1!\cdots s_d!$ and the associated partial differential operator $D^s= \frac{\partial^{\vert s\vert}}{\partial x_1^{s_1}\cdots  \partial x_{d}^{s_d}}$. For any function $f:\, \mathbb{R}^d \to \mathbb R $ that is 
$ \lfloor \beta \rfloor $-times continuously differentiable at point $x_0$, denote $f_{x_0}^\beta $ the Taylor polynomial of degree $ \lfloor\beta\rfloor $ at $x_0$, defined as $f_{x_0}^\beta(x) = \sum_{|s| \leq \lfloor\beta\rfloor } \frac{(x-x_0)^s}{s!} D^s f(x_0)$. For a constant $L>0$, let $\mathcal{H}^{\beta}(L)$ be the class of H\"{o}lder smooth functions $f:\, \mathbb{R}^p \to \mathbb R $ such that $f$ is $\lfloor\beta\rfloor$-times differentiable for all  $  x  \in \mathbb{R}^d$  and satisfies $ |f(x) - f_{x_0}^\beta(x)  | \le L| x-x_0|^ \beta$, for all $x, x_0\in \mathbb{R}^d$. 

\section{Problem formulation} \label{sec_formulation}
In this section, we formally present the model setup and federated DP in \Cref{sec_fdp}, then introduce fairness constraints and present the framework of distributed fairness-aware classification in \Cref{sec_fair_fdp_classification}.

\subsection{Setup and federated differential privacy} \label{sec_fdp}

Suppose we have $N_{\text{total}}$ independent and identically distributed samples $\mathcal{D} = \{(X_i, A_i, Y_i), i \in [N_{\text{total}}]\}$, where $X_i \in [0,1]^d$ is the standard $d$-dimensional feature, $A_i \in \{0,1\}$ is the sensitive feature (e.g.~race and gender) and $Y_i \in \{0,1\}$ is the binary label. We further assume that these data are distributed across $S$ servers, with server $s \in [S]$ holding a dataset $\mathcal{D}_s = \{(X^s_i,A^s_i,Y^s_i)\}_{i=1}^{N_s}$ of size $N_s$, where $\sum_{s=1}^S N_s = N_{\text{total}}$.

\begin{figure}[!htbp]
    \centering
    \includegraphics[width=0.75\linewidth]{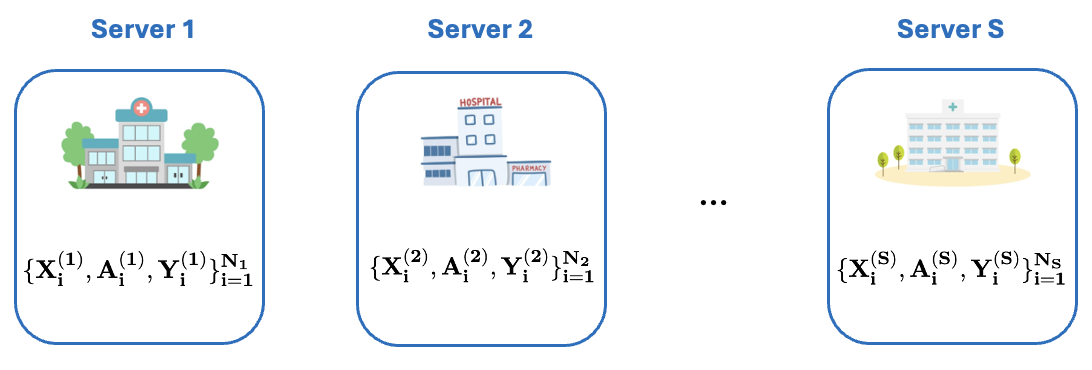}
    \caption{An illustration of the framework we consider in the problem of fairness-aware classification in a distributed setting.}
    \label{fig_fdp_fair}
\end{figure}

With the setup above, we now introduce the federated DP constraints. At a high level, it is a variant of DP tailored to distributed settings, ensuring only private information is shared across servers.  To formally define federated DP, we begin with the definition of central DP \citep{dwork2006calibrating}, which is arguably the most fundamental notion of DP and is imposed within each server under our setting. Formally speaking, given a dataset $D$, a privacy mechanism $Q(\cdot|D)$ is a conditional distribution of the private information given data $D$. Let $Z \in \mathcal{Z}$ denote the privatised data and $\sigma(\mathcal{Z})$ denote the sigma-algebra on $\mathcal{Z}$. We require the privacy mechanism to satisfy the~following. 
\begin{definition}[Central differential privacy, CDP] \label{def_cdp}
    For $\epsilon > 0$ and $\delta \geq 0$, the privacy mechanism~$Q$ is said to satisfy $(\epsilon,\delta)$-CDP if
    \begin{align*}
        Q(Z \in A|D) \leq e^{\epsilon}Q(Z \in A|D')+ \delta,
    \end{align*}
    for all $A \in \sigma(\mathcal{Z})$ and $D$ and $D'$ that differ by at most one data entry, denoted by $D\sim D'$.
\end{definition}

Motivated by federated learning, Federated DP is introduced and has been applied in various settings under various names, e.g.~\citet{li2024federated}, \citet{cai2024optimal}, \citet{xue2024optimal}, \citet{auddy2025minimax}, and many more. In this paper, we focus on the non-interactive setting, in which no communication or interaction between servers is allowed.  Let $Z^s \in \mathcal{Z}$ denote the privatised information output by server $s$ and $\sigma(\mathcal{Z})$ denote the sigma-algebra on $\mathcal{Z}$. 
\begin{definition} [Federated differential privacy, FDP] 
\label{def_fdp}
     For $S \in \mathbb{Z}_+$, denote $(\bm{\epsilon}, \bm{\delta}) = \{(\epsilon_s, \delta_s)\}_{s=1}^S$ the collection of privacy parameters where $\epsilon_s >0$ and $\delta_s \geq 0$, $s\in [S]$. We say that a privacy mechanism $Q$ satisfies $(\bm{\epsilon}, \bm{\delta})$-FDP, if for any $s\in [S]$, the transcript $Z^s \in \mathcal{Z}$ shared to the central server satisfies an $(\epsilon_s,\delta_s)$-CDP constraint, i.e.
     \begin{align*}
         Q(Z^s \in A|\mathcal{D}_s) \leq e^{\epsilon_s}Q(Z^s \in A|\mathcal{D}'_s)+ \delta_s,
     \end{align*}
     for all $A \in \sigma(\mathcal{Z})$ and $\mathcal{D}_s$ and $\mathcal{D}_s'$ that differ by at most one data entry.
\end{definition}

In both Definitions \ref{def_cdp} and \ref{def_fdp}, the strength of the privacy is controlled by $(\epsilon,\delta)$. The parameter $\epsilon$ controls the strength of the privacy constraint, with smaller values of $\epsilon$ corresponding to stronger privacy. The parameter $\delta \geq 0 $ controls the level of privacy leakage.

\subsection{Fairness-aware classification} \label{sec_fair_fdp_classification}

Having introduced the FDP constraints, we next present the fairness constraints, i.e.~demographic disparity (\Cref{def_disparity_measure}), and formulate the fairness-aware classification problem in this subsection.  

\begin{definition}[Randomised classifier]\label{def_random_classifier}
    For any $x \in [0,1]^d$ and $a \in \{0,1\}$, a randomised classifier $f\in \mathcal F$ is a measurable function such that $f(x, a) = \mathbb{P}(\widehat Y_f =1 | X=x, A=a)$, where $\widehat{Y}_f = \widehat{Y}_f(x,a)$ is the predicted label, i.e.~$\widehat Y_f |\{X=x, A=a\} \sim \mathrm{Bernoulli}\;(f(x,a))$.
\end{definition}

\begin{definition}[Demographic disparity] \label{def_disparity_measure}
    For a given classifier $f \in \mathcal F$, the demographic disparity ($\DD$) is defined as
	\begin{align*} 
            \DD(f) & = \mathbb{P}\{\widehat Y_f(X, 1)=1| A=1\} - \mathbb{P}\{\widehat Y_f(X,0)=1 | A=0\}.
	\end{align*}
\end{definition}

Let $\mathcal{F}$ be the class of measurable functions $f : [0,1]^d \times \{0,1\} \rightarrow [0,1]$. For a prespecified $\alpha \geq 0$, we define the $\alpha$-fair Bayes optimal classifier as the randomised classifier (\Cref{def_random_classifier}), $f^*_{\DD,\alpha}$, which minimises the misclassification error $R(f^*_{\DD,\alpha}) = \mathbb{P}(\widehat{Y}_{f^*_{\DD,\alpha}}(X,A) \neq Y)$ while satisfying the $\alpha$-demographic disparity , i.e.
\begin{align} \label{def_bayes_fair_classifier}
    f^*_{\DD,\alpha} \in \argmin_{f \in \mathcal{F}} \{R(f): |\DD(f)| \leq \alpha\}.
\end{align}

The Bayes fairness-aware classifier defined in \eqref{def_bayes_fair_classifier} has been previously studied in \citet{zeng2024bayes}, where it is shown that the optimal fairness-aware classifier can be obtained by shifting the Bayes decision rule through an adjusted threshold. Specifically,
\begin{align} \label{eq_bayes_fair}
    f^*_{\DD,\alpha}(x,a) = \begin{cases} \vspace{0.5em}
            1, &\eta_a(x) \geq \frac{1}{2}+\frac{\tau^*_{\DD,\alpha}(2a-1)}{2\pi_a},\\ 
            0, &\eta_a(x) < \frac{1}{2}+\frac{\tau^*_{\DD,\alpha}(2a-1)}{2\pi_a},
        \end{cases}
\end{align}
where
\begin{equation} \label{eq_tau_star}
    \tau_{\mathrm{DD}, \alpha}^* = \arg\min_{\tau \in \mathbb{R}} \big\{ |\tau|: \, |\mathrm{DD}(\tau)| \leq \alpha\big\}
\end{equation}
is the magnitude of the adjusted threshold, and for any $a \in \{0,1\}$ and $x \in [0,1]^d$, $\eta_a(x) = \mathbb{P}(Y=1|A=a, X=x)$, $\pi_{a,y} = \mathbb{P}(A=a, Y=y)$ and $\pi_a = \mathbb{P}(A=a)$. 

Our goal in this paper is to construct an estimator for $f^*_{\DD,\alpha}$ under FDP defined in \Cref{def_fdp}.

\begin{remark}
    In this paper, we focus on classification with demographic disparity, which controls the difference in the probability of predicting the positive label across sensitive groups. We would like to remark that, with minor modifications, our algorithm can be generalised to accommodate other bilinear disparity measures (\Cref{def_bilinear_disparity}, Definition~3.3 in \citealp{zeng2024bayes}), such as disparity of opportunity (DO) and predictive disparity (PD).
\end{remark}

\section{Federated differentially private fair classifier under demographic disparity}  \label{sec_alg}

In this section, we present our two-step plug-in FDP-Fair classifier in \Cref{alg_fair_fdp}.

Building on the Bayes optimal fairness-aware classifier in \eqref{eq_bayes_fair}, the construction of \Cref{alg_fair_fdp} relies on estimating three key quantities: regression functions $\eta_a$, class probabilities $\pi_a$, and the fairness threshold parameter $\tau^*_{\DD,\alpha}$. The algorithm is organised in two steps. \textbf{S1} produces estimators of $\eta_a$ and $\pi_a$. In \textbf{S2}, we estimate $\tau^*_{\DD,\alpha}$ through a private threshold search procedure building upon binary tree constructions in \Cref{alg_threshold_search_fdp}, which itself relies on the auxiliary procedures given in Algorithms \ref{alg_bin_tree} and \ref{alg_non_increasing}.

For notational clarity, in \Cref{alg_fair_fdp}, we decompose the data for server $s \in [S]$ as $\mathcal{D}^s = \mathcal{D}^s_{0,1} \cup \mathcal{D}^s_{0,0} \cup \mathcal{D}^s_{1,1} \cup \mathcal{D}^s_{1,0}$, where for $a, y \in \{0,1\}$, $\mathcal{D}^s_{a,y} = \{(X_i^s, A_i^s = a, Y_i^s = y)\}$. We further denote the $i$-th feature in $\mathcal{D}^s_{a,y}$ by $X^{s,i}_{a,y}$ for $i \in [|\mathcal{D}^s_{a,y}|]$. For any kernel function $K : \mathbb{R}^d \to \mathbb{R}^+$, let the scaled kernel function be $K_h(x) = h^{-d}K(x/h)$, for any $x \in \mathbb{R}^d$ and bandwidth $0 < h < 1$.

The key components of \Cref{alg_fair_fdp} are outlined below.

\smallskip

\noindent \textbf{Estimation of class probability $\pi_a$ and regression function $\eta_a$.} In \textbf{S1.}~of \Cref{alg_fair_fdp}, site-wise estimators of $\pi_a^s$ and $\eta_a^s$ are constructed from the training data using empirical and kernel density estimators, respectively. To preserve FDP, we utilise Gaussian mechanisms for both scalars used in estimating $\pi_a$'s and functions in estimating $\eta_a$'s. The resulting privatised site-wise estimators are then aggregated at the central server through weighted sums, yielding global estimators while maintaining FDP constraints.

\medskip

\noindent \textbf{Binary tree construction.} To estimate the private threshold, $\widetilde{\tau}_{\DD,\alpha}$, we approximate the original continuous optimisation problem in \eqref{eq_tau_star} using discretisation. For each discretised candidate, we further reduce it to evaluating tail counts utilising binary trees, where counts in dyadic intervals are organised hierarchically, enabling efficient privacy budget allocation across a sufficiently fine evaluation grids.

Specifically, by considering the random variables $Z_{s,a,y} = 2(2a-1)\widetilde{\pi}_a\{\widetilde{\eta}_a(X^s_{a,y})-1/2\}$, for each evaluated $\tau \in \mathbb{R}$, we can rewrite the empirical plug-in estimator of DD,
\begin{align} \notag
    \widehat{\mathrm{DD}}(\tau) = \;& \sum_{s=1}^S\mu_s  \widehat{\mathrm{DD}}_s(\tau)\\ \notag
    = \;& \sum_{s=1}^S\mu_s \Bigg[\frac{1}{\breve{n}_{s,1}}\sum_{y\in\{0,1\}}\sum_{i=1}^{\breve{n}_{s,1,y}}\indc\Big\{2\widetilde{\pi}_1\{\widetilde{\eta}_1(\breve{X}^{s,i}_{1,y})-1/2\}\geq \tau \Big\} \\  \label{eq_DD_tree}
    & \hspace{3.5em}-\frac{1}{\breve{n}_{s,0}}\sum_{y\in\{0,1\}}\sum_{i=1}^{\breve{n}_{s,0,y}}\indc\Big\{-2\widetilde{\pi}_0\{\widetilde{\eta}_0(\breve{X}^{s,i}_{0,y})-1/2\} \leq \tau\Big\}\Bigg],
\end{align}
as
\[\widehat{\DD}(\tau) = \sum_{s=1}^S \mu_s \times \Big \{\frac{\sum_y \mbox{count of } Z_{s,1,y} \mbox{ larger than } \tau }{\sum_y \mbox{Total count of }Z_{s,1,y}} -\frac{\sum_y \mbox{count of } Z_{s,0,y} \mbox{ less than } \tau }{\sum_y \mbox{Total count of }Z_{s,0,y}}\Big\}.\]

Given the fact that $|\tau^*_{\DD,\alpha}| \leq \min\{\pi_0, \pi_1\}$ in \Cref{l_range_tau}, we partition the interval~$[-1,1]$ into equal-sized subintervals and arrange them in a multi-layer binary tree. For each server $s \in [S]$ and sensitive group $a \in \{0,1\}$, the count of $Z_{s,a,y}$ falling into binary subintervals across all layers is computed and privatised using Gaussian mechanisms. The resulting site-wise binary trees are then aggregated at the central server via weighted averaging, where the weights are chosen according to the variances of the corresponding site-level estimators, similar to \textbf{S1}.
\begin{remark} \label{remark_tree}
    With the binary tree construction in \Cref{alg_bin_tree}, changing a single data point affects at most one node per tree-level, resulting in constant-order $\ell_2$-sensitivity used in the Gaussian mechanism. Moreover, since the tree depth is logarithmic by selection in \Cref{t_fdp_risk}, only a \emph{logarithmic} number of privacy compositions is required. This leads to a substantial improvement in accuracy compared to standard discretisation approaches in private nonparametric estimation, where a \emph{polynomial} number of compositions is needed. We defer more discussions on tree depth selection until after presenting theoretical results in \Cref{t_fdp_risk} in \Cref{sec_theoretical}.
\end{remark}

\medskip
\noindent  \textbf{Monotonicity correction.} The aggregation of Gaussian noise across different leaves in the binary tree means that the evaluated $\widetilde{\DD}$ across grids is no longer guaranteed to be monotonically non-increasing, which is a crucial property of the true DD function underlying the derivation of $\tau^*_{\DD,\alpha}$ in \eqref{eq_tau_star}. To preserve this structure, we introduce \Cref{alg_non_increasing}, which fits, with high probability, a non-increasing sequence of $\{\widetilde{\DD}(\tau_j)\}$ while controlling the induced bias.

\begin{algorithm}[!htbp]
    \caption{\textsc{FDP-Fair} classifier under demographic disparity. \label{alg_fair_fdp}}
    \begin{algorithmic}
        \Require Data $\mathcal{D}$, disparity level $\alpha$, Kernel function $K$, bandwidth parameters $h$, weights $\{\nu_s\}_{s\in [S]}$ and $\{\mu_s\}_{s\in [S]}$, constant $C_\rho, C_{\omega} >0$, smallest interval length $\theta$, tolerance $\eta$.

        \State For each $s\in [S]$, let $n_s = \lceil N_s/2 \rceil$ and $\breve{n}_s = N_s -n_s$. Split the data in each site into training data $\{(X^s_i,A^s_i,Y^s_i)\}_{i=1}^{n_s}$ and calibration data $\{(X^s_i, A^s_i, Y^s_i)\}_{i=n_s+1}^{N_s} = \{\breve{X}^s_i, \breve{A}^s_i, \breve{Y}^s_i\}_{i=1}^{\breve{n}_s}$. We further denote $n_{s,a,y} = |\{(X^s_i,A^s_i=a,Y^s_i=y)\}_{i=1}^{n_s}|$ and $n_{s,a} = |\{(X^s_i,A^s_i=a,Y^s_i)\}_{i=1}^{n_s}|$.

        \State \textbf{S1.} Estimating regression function $\eta_a$ and class probabilities $\pi_{a}$ using training data adhering to federated differential privacy constraints.

        \State \textbf{S1.1} For $s \in [S]$, $a \in \{0,1\}$, generate $w^s_a \stackrel{\mathrm{i.i.d.}}{\sim} N(0, \sigma_{s}^2)$, where $\sigma_{s} = 4\sqrt{2\log(5/\delta_s)}/(n_s\epsilon_s)$. Calculate
        \[\widetilde{\pi}_a = \sum_{s=1}^S \nu_s \widetilde{\pi}_a^s, \; \text{where}\;\; \widetilde{\pi}_a^s= \frac{n_{s,a,0}+n_{s,a,1}}{n_s} + w_{a}^s.\]

        \State \textbf{S1.2} Let $\{W^s_{k,a}(\cdot)\}_{k,a\in\{0,1\},s\in[S]}\}$  be independent mean zero Gaussian processes with covariance kernels $\cov(W^s_{k,a}(\ell),W^s_{k,a}(t)) = K\{(\ell-t)/h\}, \ell,t \in [0,1]^d$.
        
        For any $x\in [0,1]^d$ and $s \in [S]$, denote
        \begin{align*}
            \widetilde{p}^s_{X|A=a}(x|a) =\frac{1}{n_{s,a}}\sum_{y \in \{0,1\}}\sum_{i=1}^{n_{s,a,y}}K_{h}(X_{a,y}^{s,i} -x)+ \frac{8\sqrt{2C_K\log(8/\delta_s)}}{n_{s,a}\epsilon_s h^d}W^s_{1,a}(x), 
        \end{align*}
        and
        \begin{align*}
           \widetilde{p}^s_{X,Y|A=a}(x,y|a) = \frac{1}{n_{s,a}}\sum_{i=1}^{n_{s,a,1}} K_{h}(X_{a,1}^{s,i} -x)+ \frac{8\sqrt{2C_K\log(8/\delta_s)}}{n_{s,a}\epsilon_s h^d}W^s_{2,a}(x),
        \end{align*}
        Denote $\widetilde{p}_{X|A=a}(x|a) = \sum_{s=1}^S \nu_s\widetilde{p}^s_{X|A=a}(x|a)$ and $\widetilde{\eta}_a(x) =\frac{1}{\widetilde{p}_{X|A=a}(x|a)}\sum_{s=1}^S \nu_s \widetilde{p}^s_{X,Y|A=a}(x,1|a)$.

        \State \textbf{S2.} Estimating the optimal threshold using calibration data. 

        \State Set $\widetilde{\tau}_{\DD,\alpha} = \textsc{FDP.Threshold.Search}(\{\{\breve{X}_i, \breve{A}_i, \breve{Y}_i\}_{i=1}^{\breve{n}_s}\}_{s=1}^S, \{\mu_s\}_{s\in [S]}, \{\widetilde{\eta}_a,\widetilde{\pi}_a\}_{a\in\{0,1\}}, C_\rho, C_{\omega}, \theta, \eta)$. \Comment{See \Cref{alg_threshold_search_fdp}.}

        \Ensure $\widetilde{f}_{\DD,\alpha}(x,a)$ with 
        \begin{align*}
            \widetilde{f}_{\DD,\alpha}(x,a) = \begin{cases} \vspace{0.5em}
            1, &\widetilde{\eta}_a(x) \geq \frac{1}{2}+\frac{\widetilde{\tau}_{\DD,\alpha}(2a-1)}{2\widetilde{\pi}_a},\\ 
            0, &\widetilde{\eta}_a(x) < \frac{1}{2}+\frac{\widetilde{\tau}_{\DD,\alpha}(2a-1)}{2\widetilde{\pi_a}}.
        \end{cases}
        \end{align*}
    \end{algorithmic}
\end{algorithm}

\begin{algorithm}[!htbp]
\caption{\textsc{FDP.Threshold.Search}$(\breve{\mathcal{D}},\{\mu_s\}_{s\in [S]},  \{\widetilde{\eta}_a, \widetilde{\pi}_a\}_{a\in\{0,1\}}, C_\rho, C_{\omega}, \theta, \eta)$.} \label{alg_threshold_search_fdp}
    \begin{algorithmic}
        \Require Data $\breve{\mathcal{D}}$, weight $\{\mu_s\}_{s\in [S]}$ , estimated regression functions and class probabilities $\{\widetilde{\eta}_a, \widetilde{\pi}_a\}_{a\in\{0,1\}}$ with bandwidth $h$, constant $C_\rho, C_{\omega} >0$, minimum bin length $\theta$, tolerance $\eta$.

        \State Set the number of layers at $M := \log_2(\theta^{-1})+1$ and tolerance at $\rho^* = C_{\rho}\rho$, where $\rho$ is given in \eqref{t_fdp_risk_eq1}.

        \State \textbf{S1 Binary tree construction: } 
        \State Run \textsc{FDP.Binary.Tree}$(\breve{\mathcal{D}},  \{\widetilde{\eta}_a, \widetilde{\pi}_a\}_{a\in\{0,1\}}, M)$ detailed in \Cref{alg_bin_tree}.

        \State \textbf{S2 Estimation of disparity value in central server: }
        \State Construct evaluation grids $\mathcal{G}: = \{\tau_1, \ldots, \tau_{2^M+1}\}$, where $\tau_j =-1+ (j-1)2^{1-M} $ for $j \in [2^M+1]$. 
        
        \State With the binary tree given in \textbf{S1}, calculate 
        \[N_{s,1} = N_{s, 1, 1,1} + N_{s, 1, 1,2},\;\; N_{s,0} = N_{s, 0, 1,1} + N_{s, 0, 1,2},\]
        and for any $j \in [2^{M}]$
        \begin{align*}
            \text{Tail}_{s,a}(\tau_j) =  \sum_{\ell=1}^M \sum_{k=1}^{2^\ell}  N_{s,a,\ell,k}\indc\Big\{(k-1)2^{M-\ell}+1 \geq j \text{ and } (\lceil k/2\rceil-1)2^{M-\ell+1}+1 <j \Big\}.
        \end{align*}
        Then for any $\tau \in \mathcal{G}$, set $\widetilde{\DD}(\tau) = \sum_{s=1}^S  \mu_s \widetilde{\DD}^s(\tau)$, where
        \begin{align*}
            \widetilde{\DD}^s(\tau)= \frac{\text{Tail}_{s,1}(\tau)}{N_{s,1}} - \frac{N_{s,0}-\text{Tail}_{s,0}(\tau)}{N_{s,0}}.
        \end{align*}
        \State Set $\{\widetilde{\DD}_{\downarrow}(\tau_i)\}_{i \in [2^M+1]} = \textsc{Non.increasing}(\{\widetilde{\DD}(\tau_i)\}_{i \in [2^M+1]}, \{\mu_s\}_{s\in[S]}, C_{\omega}, \eta)$. \Comment{See \Cref{alg_non_increasing}}

        \State \textbf{S3 Threshold selection:}
        
        \If{$|\widetilde{\DD}_{\downarrow}(0)| \leq \alpha$}
        \State Set $\widetilde{\tau}_{\DD,\alpha} =0$.
        \Else 
        \State Set $\widetilde{\tau}_{\DD,\alpha} = \argmin_{\tau \in \mathcal{G}}\{|\tau|: |\widetilde{\DD}_{\downarrow}(\tau)| \in [\alpha-\rho^*, \alpha+\rho^*]\}$.
        \EndIf

        \Ensure $\widetilde{\tau}_{\DD,\alpha}$

    \end{algorithmic}
\end{algorithm}

\begin{algorithm}
    \caption{\textsc{FDP.Binary.Tree}$(\breve{\mathcal{D}},  \{\widetilde{\eta}_a, \widetilde{\pi}_a\}_{a\in\{0,1\}}, M)$.} 
    \label{alg_bin_tree}
    \begin{algorithmic}
        \Require Data $\breve{\mathcal{D}}$, estimated regression functions and class probabilities  $\{\widetilde{\eta}_a,\widetilde{\pi}_a\}_{a\in\{0,1\}}$, number of layers~$M$.

        \For{$s = 1, \ldots, S; a = 0, 1; \ell =1, \ldots, M\text{ and } k = 1, \ldots, 2^\ell$}
        \State Generate independently
        \[\breve{w}_{s,a,\ell,k} \sim N\Big(0, \frac{4\log(1/\delta_s)/\epsilon_s+2}{\epsilon_s/M} \Big).\]
        \EndFor

        \For{$s = 1, \ldots, S$}
        \For{$a = 0, 1 $}
        \State For any $y \in\{0,1\}$ and $i \in [\breve{n}_{s,a,y}]$, calculate
        \[Z_{s,i,a,y} := 2(2a-1)\widetilde{\pi}_a\{\widetilde{\eta}_a(\breve{X}^{s,i}_{a,y})-1/2\}.\]
        \For{$ j = 1, \ldots, 2^{M}$}
        \State Let
        \[\text{count}_{s,a,M, j} = \sum_{y\in\{0,1\}} \sum_{i=1}^{\breve{n}_{s,a,y}} \indc\Big\{-1+ (j-1)2^{1-M} \leq Z_{s,i,a,y} < -1+ j2^{1-M}\Big\}.\]
        \For{$\ell = M-1, \ldots, 1$}
        \For{$k=1, \ldots, 2^{\ell}$}
        \State Set 
        \[\text{count}_{s, a, \ell,k} = \text{count}_{s,a,l+1, 2k-1} +\text{count}_{s,a,l+1, 2k}.\]
        \EndFor
        \EndFor
        \EndFor

        \For{$\ell =1, \ldots, M$}
        \For{$k = 1, \ldots, 2^\ell$}

        \State Calculate $N_{s, a, \ell,k} = \text{count}_{s, a, \ell,k}+\breve{w}_{s,a,\ell,k}.$
        
        \EndFor
        \EndFor

        \EndFor

        \EndFor
        \Ensure Collection of private trees $\{\{N_{s, a, \ell,k}\}_{ \ell=1,k =1}^{M,2^\ell}\}_{a \in\{0,1\}, s \in[S]}$.
    \end{algorithmic}
\end{algorithm}

\begin{algorithm}[!htbp]  
    \caption{\textsc{Non.increasing}($\{g_i\}_{i \in [2^M+1]}, \{\mu_s\}_{s\in[S]}, C_{\omega}$, $M$, $\eta$)}
    \label{alg_non_increasing}
    \begin{algorithmic}
        \Require Sequence of points $\{g_i\}_{i \in [2^M+1]}$, weight $\{\mu_s\}_{s\in[S]}$, constant $C_{\omega}$, binary tree layers $M$, tolerance $\eta$.

        \State Set the tolerance level 
        \begin{align*}
            \omega = C_{\omega} \sqrt{\sum_{s=1}^S\frac{\mu_s^2M^4\log(1/\delta_s)\log(M/\eta)}{\breve{n}_s^2\epsilon_s^2}}.
        \end{align*}
         \If{$g_i \geq \ldots \geq g_{2^M+1}$}
        \State \textbf{Break} and return $\{g_i\}_{i \in [2^M+1]}$.

        \Else
        \State Initialise the algorithm by setting $f_1 = g_1 + \omega$.

        \For{$i = 2, \ldots, 2^M+1$}
        \State Set $f_i = \min\{f_{i-1}, g_i + \omega\}$.

        \If{$f_i < g_i - \omega$}
        \State \textbf{Break} and return \textbf{NULL}.

        \Else 
        \State \textbf{Continue}
        \EndIf

        \EndFor
        \EndIf

        \Ensure Sequence of non-increasing numbers $\{f_i\}_{i \in [2^M+1]}$.
    \end{algorithmic}
   
\end{algorithm}

\subsection{Special case: Central differential privacy} \label{sec_cdp_algorithm}
In the special case when $S = 1$ and $N_1 = N$, \Cref{def_fdp} reduces to CDP as defined in \Cref{def_cdp}. Rather than implementing \Cref{alg_fair_fdp}, which targets the general setting when $S > 1$, we present in this subsection a simplified yet powerful algorithm dedicated to the CDP setting, given in \Cref{alg_fair_cdp}.

Compared with \Cref{alg_fair_fdp}, the main difference lies in \Cref{alg_fair_cdp} \textbf{S2.}, which is the estimation of adjusted threshold $\tau^*_{\DD,\alpha}$. Instead of using the noisy binary tree based algorithm, we show that shifting the empirical DD vertically by a correctly calibrated Gaussian noise is sufficient. The key idea is illustrated in \Cref{fig_tau_cdp}. 

\begin{figure}[!htbp]
    \centering
    \includegraphics[width=0.6\linewidth]{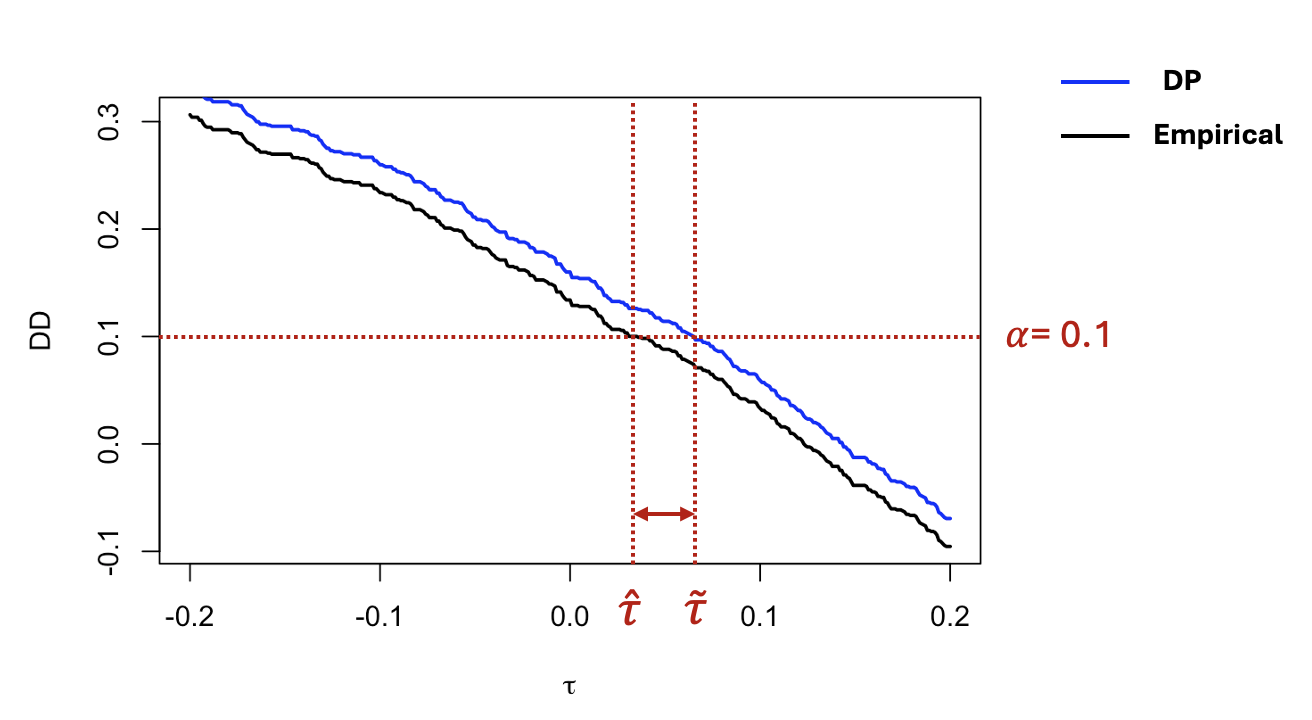}
    \caption{Graphical illustration of \textbf{S2.} of \Cref{alg_fair_cdp} when $\alpha = 0.1$.}
    \label{fig_tau_cdp}
\end{figure}

Given the fact that the empirical DD, $\widehat{\DD}$ in \eqref{eq_DD_tree} with $S = 1$, is monotonically non-increasing (\Cref{l_DD_non_increase}), the optimisation problems $\widehat{\tau}_{\DD,\alpha} = \argmin_{\tau \in \mathbb{R}}\{|\tau|:\, |\widehat{\DD}(\tau)| \leq \alpha \}$ is equivalent to finding the smallest cut-off point in magnitude at which $\widehat{\DD}(\tau)$ first reaches the fairness threshold $\alpha$ or $-\alpha$. Therefore, to construct a private cut-off point $\widetilde{\tau}_{\DD,\alpha}$, instead of adding noise directly to the cut-off point, we shift the $\widehat{\DD}$ function itself. Geometrically, this corresponds to perturbing the $\widehat{\DD}$ curve in the vertical direction (i.e.~$y$-axis). As a result, the location at which the perturbed $\widehat{\DD}$ curve crosses the threshold is shifted horizontally, producing a privacy-preserving cut-off point. By adding correctly calibrated noise and ensuring that each pointwise release of the $\widehat{\DD}$ function at any given $\tau$ satisfies CDP, we are able to show that the induced release of the cut-off point $\widetilde{\tau}_{\DD,\alpha}$ also satisfies CDP. We defer the formal privacy guarantees to \Cref{thm_privacy_fair} in \Cref{sec_theoretical}.

\begin{algorithm}[!htbp]
    \caption{\textsc{CDP-Fair} classifier under demographic disparity. \label{alg_fair_cdp}}
    \begin{algorithmic}
        \Require Data $\mathcal{D}$, disparity level $\alpha$, bandwidth parameters $h$.

        \State Let $n= \lfloor N/2 \rfloor$ and $\breve{n} = N - n$. Split the data in each site into training data $\{(X_i,A_i,Y_i)\}_{i=1}^{n}$ and calibration data $\{(X_i, A_i, Y_i)\}_{i=n+1}^{N} = \{\breve{X}_i, \breve{A}_i, \breve{Y}_i\}_{i=1}^{\breve{n}}$. We further denote $\breve{n}_{a} = |\{(X_i,A_i=a,Y_i)\}_{i=1}^{n_s}|$.

        \State \textbf{S1.} Implement \textbf{S1} in \Cref{alg_fair_fdp} with $S=1$ to estimate regression function $\eta_a$ and class probabilities $\pi_{a}$ using training data adhering to central DP constraints.

        \State \textbf{S2.} Estimate the optimal threshold using calibration data.

        \State Generate $\breve{w}\sim N(0,\sigma^2)$, where $\sigma = 2\sqrt{2\log(1.25/\delta)}/(\{\breve{n}_0\wedge \breve{n}_1\}\epsilon)$. Calculate
        \begin{align*}
            \widetilde{\DD}(\tau) = \frac{1}{\breve{n}_1}\sum_{y\in\{0,1\}}\sum_{i=1}^{\breve{n}_{1,y}}\indc\Big\{\widetilde{\eta}_1(\breve{X}^i_{1,y}) \geq \frac{1}{2}+\frac{\tau}{2\widetilde{\pi}_1}\Big\} -\frac{1}{\breve{n}_0}\sum_{y\in\{0,1\}}\sum_{i=1}^{\breve{n}_{0,y}}\indc\Big\{\widetilde{\eta}_0(\breve{X}^i_{0,y}) \geq \frac{1}{2}-\frac{\tau}{2\widetilde{\pi}_0}\Big\}+ \breve{w}.
        \end{align*}

        \State Set $\widetilde{\tau}_{\DD,\alpha} = \argmin_{\tau \in \mathbb{R}}\{|\tau|:\, |\widetilde{\DD}(\tau)| \leq \alpha \}$. 

        \Ensure $\widetilde{f}_{\DD,\alpha}(x,a)$ with 
        \begin{align*}
            \widetilde{f}_{\DD,\alpha}(x,a) = \begin{cases} \vspace{0.5em}
            1, &\widetilde{\eta}_a(x) \geq \frac{1}{2}+\frac{\widetilde{\tau}_{\DD,\alpha}(2a-1)}{2\widetilde{\pi}_a},\\ 
            0, &\widetilde{\eta}_a(x) < \frac{1}{2}+\frac{\widetilde{\tau}_{\DD,\alpha}(2a-1)}{2\widetilde{\pi_a}}.
        \end{cases}
        \end{align*}
    \end{algorithmic}
\end{algorithm}

\section{Theoretical properties} \label{sec_theoretical} 
In this section, we provide theoretical guarantees on privacy, fairness, and excess risk control for Algorithms \ref{alg_fair_fdp} and \ref{alg_fair_cdp}.

\subsection{Assumptions}

To kick off, we list a few assumptions below.
\begin{assumption}[Class and feature probability] \label{a_prob}
    We assume that the following holds for any $a \in \{0,1\}$.
    \begin{enumerate}[label=\textbf{\alph*}]
        \item \label{a_prob_pi_a} There exists an absolute constant $0< C_\pi <1$ such that the class probability $\pi_a = \mathbb{P}(A = a) \geq C_\pi$.

        \item \label{a_prob_p_x_a} There exists an absolute constant $L>0$ such that $p_{X|A=a}\in \mathcal{H}^\beta(L)$. We further assume that $\inf_{x\in [0,1]^d} p_{X|A=a}(x|a) \geq C_p$ for an absolute constant $C_p >0$.
    \end{enumerate}
\end{assumption}

\begin{assumption}[Kernel function] \label{a_kernel} Let the kernel function $K:\mathbb{R}^d \rightarrow \mathbb{R}^+$ satisfy the following conditions.
    \begin{enumerate}[label=\textbf{\alph*}]
        \item \label{a_kernel_bounded}  There exists an absolute constant $C_K>0$ such that $\sup_{x\in \mathbb{R}^d} K(x) \leq C_K$.

        \item \label{a_kernel_lipschitz} We assume that the function $K$ is Lipschitz, i.e.~there exists an absolute constant $C_{\mathrm{Lip}}>0$ such that for any $x,y \in \mathbb{R}^d$, $|K(x)-K(y)| \leq C_{\mathrm{Lip}}\|x-y\|_2$. 

        \item \label{a_kernel_adaptive} The kernel function $K$ is adaptive to H\"{o}lder class $\mathcal{H}^\beta(L)$, i.e.~for $L>0$ and any $f \in \mathcal{H}^\beta(L)$, it holds that $\sup_{x\in [0,1]^d}|\int_{[0,1]^d} K_h(x-u)f(u)\;\mathrm{d}u - f(x)| \leq C_{\mathrm{adp}} h^{\beta}$, where $C_{\mathrm{adp}} >0$ is an absolute constant only depending on $L$. 
    \end{enumerate}
\end{assumption}

In \Cref{a_prob}\ref{a_prob_pi_a}, we assume that the class probabilities are bounded away from~0~and~1, essential to ensure that a sufficient number of samples for each group can be observed. In \Cref{a_prob}\ref{a_prob_p_x_a}, we further regulate the smoothness and boundedness of the per-sensitive class density of feature distributions. Similar assumptions are commonly seen in the literature of non-parametric statistics \citep[e.g.][]{madrid2023change, auddy2025minimax}. 

\Cref{a_kernel} provides standard kernel assumptions in the nonparametric literature \citep[e.g.][]{kim2019uniform, madrid2023change}, holding for various kernels, such as uniform, Epanechnikov and Gaussian. In particular, \Cref{a_kernel}\ref{a_kernel_adaptive}~holds for any $\beta$-valid kernel functions.\footnote{For a fixed $\beta > 0$, a function $K : \mathbb{R}^d \rightarrow \mathbb{R}^+$ is a $\beta$-valid kernel if $\int_{\mathbb{R}^d} K(x)\; \mathrm{d}x = 1$, $\|K\|_{L_p} < \infty$ for all $p \geq 1$, $\int_{\mathbb{R}^d} \|x\|^{s} K(x)\;\mathrm{d}x < \infty$ and $\int_{\mathbb{R}^d} x^s K(x)\;\mathrm{d}x = 0 $ for all $ s = (s_1,\dots,s_d) \in \mathbb{Z}^d$ such that $1 \leq \sum_{i=1}^d s_i \leq \lfloor \beta \rfloor$, where for $x=(x_1,\dots,x_d)\in\mathbb{R}^d$, $x^s = \prod_{i=1}^d x_i^{s_i}$. See e.g.~Definition A.1 in \citet{rigollet2009optimal}.}

\begin{assumption}[Regression function] \label{a_posterior} 
    Recall $\tau^*_{\DD, \alpha}$, the optimal magnitude of the adjusted threshold for fairness, defined in \eqref{eq_bayes_fair}.  Suppose the following holds for any $a \in \{0,1\}$.
    \begin{enumerate}[label=\textbf{\alph*}]
        \item \label{a_posterior_holder}The class-wise regression function is H\"{o}lder continuous in $x$ over $[0,1]^d$, i.e.~there exists $L>0$ such that $\eta_a(x) \in \mathcal{H}^\beta(L)$.

        \item \label{a_posterior_margin} For any $\tau \in \mathbb{R}$, denote $T_a(\tau) =1/2+ \tau(2a-1)/(2\pi_a)$. We assume that the margin condition holds in a small neighbourhood around $T_a(\tau^*_{\DD,\alpha})$, i.e. for a given small $\varpi >0$, there exists absolute constants $C_m, \gamma \geq 0$ such that for any $\kappa >0$, $\sup_{|\xi| \leq \varpi} \mathbb{P}\{|\eta_a(X)-T_a(\tau^*_{\DD,\alpha}+\xi)| \leq \kappa\} \leq C_m\kappa^\gamma$. 

        \item \label{a_posterior_disparity} In addition to \Cref{a_posterior}\ref{a_posterior_margin}, in the case when $|\DD(\tau^*_{\DD,\alpha})| =\alpha$, we assume that there exists an absolute constant $c_m >0$ such that for any small $\kappa$ in the neighbourhood of $0$, $|\DD(\tau^*_{\DD,\alpha}+\kappa) -\DD(\tau^*_{\DD,\alpha})| \geq c_m |\kappa|^{\gamma}$.
    \end{enumerate}
\end{assumption}

In \Cref{a_posterior}, we provide assumptions on class-wise regression functions $\eta_a$, $a \in \{0,1\}$, specifically for the task of fairness-aware classification. Similar assumptions can also be found in \citet{zeng2024minimax}. 

\Cref{a_posterior}\ref{a_posterior_margin} characterises the decay rate of the regression function within a band of width $\varpi$ around the decision boundary. This assumption is a mild modification of the margin condition commonly used in the non-parametric classification literature \citep[e.g.][]{audibert2007fast, zeng2024bayes}. \Cref{a_posterior}\ref{a_posterior_margin} enables us to obtain theoretical guarantees on the estimation error of DD over the critical range relevant for searching the adjusted threshold $\widetilde{\tau}_{\DD,\alpha}$. Theoretically, it suffices for $\varpi$ to be larger than the order $\rho^{1/\gamma}$, where $\rho$ is given in \eqref{t_fdp_risk_eq1}.

\Cref{a_posterior}\ref{a_posterior_disparity} further controls the steepness
of DD in a small neighbourhood of $\tau^*_{\DD,\alpha}$ for the purpose of threshold $\tau^*_{\DD,\alpha}$ estimation. The smaller the gamma, the steeper the DD near the boundary, the easier the estimation task. At a high level, \Cref{a_posterior}\ref{a_posterior_disparity} can also be treated as lower bound counterpart for \Cref{a_posterior}\ref{a_posterior_margin} since by definition we have 
\begin{align*}
    c_m |\kappa|^{\gamma} \leq \;&|\DD(\tau^*_{\DD,\alpha}) - \DD(\tau^*_{\DD,\alpha}+\kappa)|\\
    \leq \;& \mathbb{P}\Big\{|\eta_1(X)-T_1(\tau^*_{\DD,\alpha})| < \frac{|\kappa|}{2\pi_1}\Big\} + \mathbb{P}\Big\{|\eta_0(X) - T_0(\tau^*_{\DD,\alpha}) |< \frac{\kappa}{2\pi_0}\Big\}. 
\end{align*}

\subsection{Privacy and fairness control}
With the above assumptions, we establish the privacy and fairness guarantee of Algorithms~\ref{alg_fair_fdp} and \ref{alg_fair_cdp} in this subsection.

\begin{theorem} \label{thm_privacy_fair}
    Denote $\widetilde{f}^{\mathrm{FDP}}_{\DD,\alpha}$ and $\widetilde{f}^{\mathrm{CDP}}_{\DD,\alpha}$ the output of Algorithms \ref{alg_fair_fdp} and \ref{alg_fair_cdp} respectively. Then under Assumptions \ref{a_prob}, \ref{a_kernel} and \ref{a_posterior}, the following holds.
    
    \begin{enumerate}
        \item \Cref{alg_fair_fdp} is $(\bm{\epsilon},\bm{\delta})$-FDP and \Cref{alg_fair_cdp} is $(\epsilon,\delta)$-CDP.
        
        \item \label{thm_privacy_fair_cdp} It holds that
        \begin{equation} \label{t_cdp_fair_eq1}
            \big|\DD(\widetilde{f}^{\mathrm{CDP}}_{\DD,\alpha})\big| \leq \alpha + \widetilde{O}_{\mathrm{p}}\Big(\sqrt{\frac{1}{N}}+\sqrt{\frac{1}{N^2\epsilon^2}}\Big),
        \end{equation}
        where the probability of $\DD(\widetilde{f}^{\mathrm{CDP}}_{\DD,\alpha})$ is taken over the test sample conditioning on training data, and $\mathrm{p}$ in $\widetilde{O}_{\mathrm{p}}$ captures the randomness of training data.

        \item \label{thm_privacy_fair_fdp} For absolute constants $C_1, C_2, C_3 >0$, we further assume that $\min_{s\in[S]} N_s^2 \epsilon_s^2 \geq C_1$; $\varpi$ given in \Cref{a_posterior}\ref{a_posterior_margin} satisfying $\varpi \geq C_2\rho^{1/\gamma}$; and  $\DD(0) \notin [\alpha - \zeta,\alpha] \cup [-\alpha,-\alpha+\zeta]$ where 
        \begin{align} \notag
            \rho =\;& \Bigg\{\sqrt{\sum_{s=1}^S \frac{\nu^2_s}{N_{s}h^d}} +  \max_{s \in [S]} \frac{\nu_s}{N_{s}h^{d}}+ h^\beta+ \sqrt{\sum_{s=1}^S\frac{\nu^2_s}{N^2_{s}\epsilon^2_s h^{2d}}}\Bigg\}^\gamma\\\label{t_fdp_risk_eq1}
            &+\sqrt{\sum_{s=1}^S \frac{\mu_s^2}{N_{s}}} + \max_{s \in [S]}\frac{\mu_s }{N_{s}}+\sqrt{\sum_{s=1}^S\frac{\mu_s^2}{N_{s}^2\epsilon_s^2}},
        \end{align}
        and $C_3\rho \leq \zeta <\alpha$.
         Then if we initialise \Cref{alg_fair_fdp} with the bin width $\theta = C_1\rho^{1/\gamma}$, it holds that 
         \begin{align*}
             \big|\DD(\widetilde{f}^{\mathrm{FDP}}_{\DD,\alpha})\big| \leq \alpha+ \widetilde{O}_{\mathrm{p}}(\rho),
         \end{align*}
          where the probability of $\DD(\widetilde{f}^{\mathrm{FDP}}_{\DD,\alpha})$ is taken over the test sample conditioning on training data, and $\mathrm{p}$ in $\widetilde{O}_{\mathrm{p}}$ captures the randomness of training data.

        \end{enumerate}
\end{theorem}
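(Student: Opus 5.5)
The proof splits into the privacy part and the two fairness bounds.

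\textbf{Privacy for Algorithm~\ref{alg_fair_fdp}.} Training and calibration samples are disjoint, and each server releases only privatised quantities. For every server $s$ I will split the budget $(\epsilon_s,\delta_s)$ across \textbf{S1.1}, \textbf{S1.2} and \textbf{S2}, and then apply basic composition. Parallel composition is not available across these steps, since a changed entry could sit in either half; so I allocate a constant fraction of $(\epsilon_s,\delta_s)$ to each release.

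\begin{itemize}
\item \textbf{S1.1.} The group proportion $(n_{s,a,0}+n_{s,a,1})/n_s$ has sensitivity $1/n_s$, so the scalar Gaussian mechanism with $\sigma_s$ gives the required guarantee.
\item \textbf{S1.2.} The kernel sums have functional sensitivity of order $C_K/(n_{s,a}h^d)$ in the RKHS norm induced by $K\{(\cdot-\cdot)/h\}$. I will use the Gaussian process mechanism of Hall et al. The only subtlety is that $n_{s,a}$ is itself data dependent. A neighbouring change alters both $n_{s,a}$ and one summand, so I need to bound the change of the normalised sum by $O(1/(n_{s,a}h^d))$. The constant $8$ absorbs this.
\item \textbf{S2 (tree).} Conditional on the already-private $\widetilde\eta_a,\widetilde\pi_a$, changing one calibration point moves one leaf count, and hence at most two counts per layer across the two possibly affected leaves. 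The $\ell_2$ sensitivity of the whole tree is therefore $O(\sqrt M)$. Equivalently, I treat the $M$ layers as $M$ compositions of a sensitivity-$O(1)$ mechanism, which is exactly what the noise variance $\propto M/\epsilon_s$ is calibrated for. Post-processing then covers Algorithms~\ref{alg_threshold_search_fdp} and~\ref{alg_non_increasing}, as well as the final classifier.
\end{itemize}

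\textbf{Privacy for Algorithm~\ref{alg_fair_cdp}.} \textbf{S1} is as above. The only nontrivial part is \textbf{S2}, where one noise draw $\breve w$ shifts the whole empirical curve. Write $\widehat{\DD}$ for the noiseless curve. For a neighbouring dataset, $\sup_\tau|\widehat{\DD}(\tau)-\widehat{\DD}'(\tau)|\le 2/(\breve n_0\wedge\breve n_1)$, up to the data-dependent denominators, which I handle as before. I will represent $\widetilde\tau$ as a deterministic, monotone map of $\breve w$. Because $\widehat{\DD}$ is non-increasing (\Cref{l_DD_non_increase}), the event $\{\widetilde\tau\le t\}$ is of the form $\{\breve w\in I_t\}$, with the interval $I_t$ determined by $\widehat{\DD}$ near $\pm\alpha$. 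A neighbouring dataset shifts the endpoints of $I_t$ by at most the sup-sensitivity. The claim then reduces to the one-dimensional Gaussian mechanism applied to the statistic "level of the curve", so its $(\epsilon,\delta)$ guarantee transfers to every measurable set of $\widetilde\tau$ values. Making this reduction rigorous is the main obstacle. I would first handle half-lines, and then extend to general sets by expressing $\widetilde\tau=\Phi(\widehat{\DD}+\breve w)$ with $\Phi$ depending only on the shifted curve.

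\textbf{Fairness for CDP.} Condition on the training data. Uniformly in $\tau$, $|\DD_{\widetilde\eta,\widetilde\pi}(\tau)-\widehat{\DD}(\tau)|=\widetilde O_{\mathrm p}(N^{-1/2})$ by the DKW inequality applied to the one-dimensional scores $Z$ within each group. Also $|\breve w|=\widetilde O_{\mathrm p}(1/(N\epsilon))$. By construction $|\widetilde{\DD}(\widetilde\tau)|\le\alpha$, where right-continuity of the indicators ensures the argmin is attained. Since $\widetilde f$ is exactly the plug-in rule at $\widetilde\tau$, its population disparity equals $\DD_{\widetilde\eta,\widetilde\pi}(\widetilde\tau)$, and the triangle inequality gives \eqref{t_cdp_fair_eq1}.

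\textbf{Fairness for FDP.} The argument is the same, with three extra error terms:
\begin{itemize}
\item each tail count carries noise summed over at most $M$ nodes, of total variance $\lesssim M^2\log(1/\delta_s)/\epsilon_s^2$, so after weighting and a union bound over the $2^M+1$ grid points the curve is within $\omega$ uniformly, whence Algorithm~\ref{alg_non_increasing} returns non-NULL and $\sup|\widetilde{\DD}_\downarrow-\widehat{\DD}|\lesssim\omega=\widetilde O(\rho)$;
\item the sampling error is $\sqrt{\sum_s\mu_s^2/N_s}$, plus the $\max_s\mu_s/N_s$ term from the Bernstein-type bound;
\item the grid discretisation error is controlled by $\theta=C_1\rho^{1/\gamma}$ through \Cref{a_posterior}\ref{a_posterior_margin}, because moving $\tau$ by $\theta$ changes $\DD$ by $O(\theta^\gamma)=O(\rho)$ inside the $\varpi$-neighbourhood. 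Transferring that condition from $\eta_a$ to $\widetilde\eta_a$ costs the first bracket of $\rho$.
\end{itemize}
Taking $\rho^*=C_\rho\rho$ larger than these errors ensures the feasible set in \textbf{S3} is nonempty. The assumption that $\DD(0)$ stays away from $\pm\alpha$ by $\zeta$ guarantees that the branch decision at $\tau=0$ agrees with the population truth. Hence $|\DD(\widetilde f)|\le\alpha+\rho^*+\widetilde O_{\mathrm p}(\rho)$.
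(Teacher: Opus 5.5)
Your fairness arguments follow the paper's \Cref{thm_cdp_fair_guarantee} and \Cref{thm_fdp_disparity_control}, and your tree-sensitivity argument matches \Cref{thm_fdp_guarantee}. For \Cref{alg_fair_cdp} that means DKW plus a Gaussian tail. For \Cref{alg_fair_fdp} it means uniform bounds on tree noise and sampling error, the $\omega$-accurate monotone fit, and feasibility of \textbf{S3} via $\theta\asymp\rho^{1/\gamma}$. The problems are in the privacy part.

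\textbf{Parallel composition.} Your claim that parallel composition is unavailable is backwards. Under replace-one neighbours and the index-based split, the altered record lies in exactly one half, which is precisely the setting of adaptive parallel composition. If it is a training record, \textbf{S2} is post-processing of the private \textbf{S1} output with the calibration data held fixed. If it is a calibration record, \textbf{S1} has the same law, and \textbf{S2} need only be private conditionally on that output. The paper relies on this. Your substitute, splitting $(\epsilon_s,\delta_s)$ across \textbf{S1.1}, \textbf{S1.2} and \textbf{S2}, is not available, because the algorithms fix the noise scales. \textbf{S1} alone already spends the whole budget (four releases at $(\epsilon_s/4,\delta_s/4)$). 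The tree noise spends the full budget again, as does $\breve w$ in \Cref{alg_fair_cdp} (sensitivity $2/(\breve n_0\wedge\breve n_1)$ at level $(\epsilon,\delta)$). Basic composition therefore only certifies roughly $(2\epsilon_s,2\delta_s)$.

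\textbf{Privacy of $\widetilde\tau$ in \Cref{alg_fair_cdp}.} This is the more serious gap. Your half-line step is exactly the paper's argument in \Cref{t_cdp_privacy_guarantee}. For $t\ge 0$ one has $\{\widetilde\tau\le t\}\subseteq\{\widetilde\DD(t)\le\alpha\}$ and $\{\widetilde\DD'(t)\le\alpha\}\subseteq\{\widetilde\tau'\le t\}$. The scalar Gaussian mechanism then gives $\mathbb P(\widetilde\tau\le t)\le e^{\epsilon}\mathbb P(\widetilde\tau'\le t)+\delta$, and the paper concludes DP from this.

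A CDF-level inequality is not $(\epsilon,\delta)$-DP, and your extension cannot close the gap. Writing $\widetilde\tau=\Phi(\widehat\DD+\breve w)$ and invoking post-processing would require the whole shifted curve to be private. Only its level is randomised: its jump locations are the exact calibration scores, and a nonzero $\widetilde\tau$ is one of them.

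Here is a concrete failure. Fix the \textbf{S1} output and replace one $A=0$ calibration record by one with score $u'=2\widetilde\pi_0\{1/2-\widetilde\eta_0(\breve X')\}>0$. Choose $u'$ so that it is not a jump point of $\widehat\DD$ and sits where $\widehat\DD'$ crosses the level $\alpha$. Since the group-$0$ indicators are right-continuous, $\{\widetilde\tau'=u'\}$ is the event that $\breve w$ lands in an interval of length $1/\breve n_0$ near the origin. Its probability is of order $\epsilon/\sqrt{\log(1/\delta)}$, far above $\delta$ in the usual regime, whereas $\mathbb P(\widetilde\tau=u')=0$. On this event the released classifier has group-$0$ threshold $\widetilde\eta_0(\breve X')$, a value attained by the continuous $\widetilde\eta_0$, so it reveals $\widetilde\tau$.

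So the step you flag as the main obstacle is not a technicality. For this vertical-shift mechanism only the half-line inequality holds, and neither your route nor the paper's yields $(\epsilon,\delta)$-CDP for arbitrary events. A genuine guarantee needs a different release, such as independently noised counts on a data-independent grid as in \Cref{alg_bin_tree}. A fixed grid with a single common shift is not enough, since it still leaks whether a cell is empty.

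\textbf{Minor.} The group-$1$ indicators $\indc\{\widetilde\eta_1(\breve X)\ge 1/2+\tau/(2\widetilde\pi_1)\}$ are left-continuous, not right-continuous, in $\tau$. The argmin in \textbf{S2} therefore need not be attained. This costs only one jump of size $1/\breve n_1$ in your fairness bound.
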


\Cref{thm_privacy_fair} is a direct consequence of Propositions  \ref{t_cdp_privacy_guarantee},  \ref{thm_cdp_fair_guarantee}, \ref{thm_fdp_guarantee}, \ref{thm_fdp_disparity_control} in the Appendix. It demonstrates that both Algorithms \ref{alg_fair_fdp} and \ref{alg_fair_cdp} satisfy the privacy constraints, while at the same time only inflate the pre-specified disparity level by a small offset term. 

As constructed in \textbf{S2.}~of \Cref{alg_fair_cdp}, the magnitude of the offset term associated to $\widetilde{f}^{\mathrm{CDP}}_{\DD,\alpha}$, $N^{-1/2} + (N\epsilon)^{-1}$, is determined by the high-probability upper bound on $|\widetilde{\DD}(\widetilde{\tau}_{\DD,\alpha}) - \DD(\widetilde{\tau}_{\DD,\alpha})|$, which captures both the deviation of the empirical distribution from its population counterpart, i.e.~$|\widehat{\DD}(\widetilde{\tau}_{\DD,\alpha}) - \DD(\widetilde{\tau}_{\DD,\alpha})|$, and the variance introduced by the Gaussian noise, i.e.~$|\breve{w}|$. The parametric rate, $N^{-1/2}$, is commonly observed for plug-in type estimators in the existing literature on fairness-aware classification without privacy constraints \citep[e.g.][]{hou2024finite, hu2025fairness}. Under additional CDP constraints, as expected, this classical parametric rate is accompanied by a standard additional private parametric rate \citep[e.g.][]{cai2021cost} of order $(N\epsilon)^{-1}$. Similar results on disparity control under CDP constraints can also be found in \citet{ghoukasian2024differentially}.

For $\widetilde{f}^{\mathrm{FDP}}_{\DD,\alpha}$ output by \Cref{alg_fair_fdp}, the disparity level will exceed the desired threshold up to the order of $\rho$, larger than the corresponding excess in \eqref{t_cdp_fair_eq1} for the CDP case when $S = 1$. Unlike CDP, privacy compositions during the communication between servers prohibit infinite pointwise evaluation of DD as discussed in \Cref{remark_tree}, thereby making the construction of a direct plug-in optimisation-based algorithm infeasible. The higher inflation term is primarily driven by the discretisation error introduced by the noisy binary tree construction in \Cref{alg_bin_tree}, and corresponds to the estimation error of DD per evaluation. 

\begin{remark}
    We would like to remark that if one instead insists on controlling the population unfairness $\DD(\widetilde{f}^{\mathrm{FDP}}_{\DD,\alpha})$ and $ \DD(\widetilde{f}^{\mathrm{CDP}}_{\DD,\alpha})$ below $\alpha$ with large probability, then provided that  $\alpha \gtrsim \rho$  or $\alpha \gtrsim N^{-1/2} + (N\epsilon)^{-1}$, it suffices to adjust the input of \Cref{alg_fair_fdp} and $\Cref{alg_fair_cdp}$ by $\alpha -\rho$ and $\alpha - N^{-1/2} + (N\epsilon)^{-1}$ respectively. 
\end{remark}

\subsection{Excess risk control}
We next present the excess risk control of \Cref{alg_fair_fdp}, with the excess risk control of \Cref{alg_fair_cdp} deferred to \Cref{t_fair_risk} in \Cref{sec_appendix_excess_risk_cdp}. 

Note that for any $f \in \mathcal{F}$, the ordinary excess risk $R(f)-R(f^*_{\DD,\alpha})$, where $f^*_{\DD,\alpha}$ is the optimal Bayes fairness-aware classifier given in \eqref{def_bayes_fair_classifier}, may be negative as $f_{\DD, \alpha}^\star$ does not necessarily minimise the excess risk, i.e.~$f_{\DD, \alpha}^\star \notin \argmin_{f\in \mathcal{F}} R(f)$. To make the excess risk control meaningful, we resort to the quantity $|R(f)-R(f^*_{\DD,\alpha})|$, which can be further decomposed as the non-negative fairness-aware excess risk $d_{\mathrm{fair}}(f, f^{\star}_{\DD, \alpha})$ (\Cref{def_fair_aware_risk}) and a disparity cost, namely $|R(f)-R(f^*_{\DD,\alpha})| \leq d_{\mathrm{fair}}(f, f^*_{\DD,\alpha}) +|\tau^*_{\DD,\alpha}|\cdot |\DD(f^*_{\DD,\alpha})-\DD(f)|$ \citep[e.g.~Proposition 4.2 in][]{zeng2024minimax}.

\begin{definition}[Fairness-aware excess risk under DD, Definition 4.1 in \citealp{zeng2024minimax}]\label{def_fair_aware_risk}
Let $\alpha \geq 0$ and let $f^*_{\DD,\alpha}$ be an $\alpha$-fair Bayes-optimal classifier in \eqref{def_bayes_fair_classifier}. For any classifier $f:[0,1]^d \times\{0,1\}\rightarrow [0,1]$, the fairness-aware excess risk under $\DD$ is defined as
\begin{equation*}
d_{\mathrm{fair}}(f, f^*_{\DD,\alpha})
= 2 \sum_{a\in\{0,1\}} \pi_a
\Big[\int \big\{ f(x,a) - f^*_{\DD,\alpha}(x,a) \big\}
\Big\{\frac{1}{2}+\frac{\tau^*_{\DD,\alpha}(2a-1)}{2\pi_a} - \eta_a(x) \Big\}\;
\mathrm{d}\mathbb{P}_{X\mid A=a}(x) \Big].
\end{equation*}
\end{definition}

\begin{remark}
    An important property of fairness-aware excess risk is that $d_{\mathrm{fair}}(f, f^*_{\DD,\alpha}) \geq 0$ for all $f \in \mathcal{F}$ since it follows from \eqref{eq_bayes_fair} that $f(x,a) - f^*_{\DD,\alpha}(x,a) \leq 0$ whenever $\eta_a(x) \geq 1/2 +(\tau^*_{\DD,\alpha}(2a-1))/(2\pi_a)$, and vice versa. In the case when $f^*_{\DD,\alpha}(x,a)$ is automatically fair, $d_{\mathrm{fair}}(f, f^*_{\DD,\alpha})$ reduces to standard excess risk, and it holds that $d_{\mathrm{fair}}(f, f^*_{\DD,\alpha}) = R(f) - R(f^*_{\DD,\alpha})$.
\end{remark}

We are now ready to present the excess risk control for \Cref{alg_fair_fdp} in \Cref{t_fdp_risk}.

\begin{theorem}\label{t_fdp_risk} 
    Let $C_1>0$ be an absolute constant. Under the same assumptions and notation as in \Cref{thm_privacy_fair}.\ref{thm_privacy_fair_fdp}, if we initialise \Cref{alg_fair_fdp} with the bin width $\theta = C_1\rho^{1/\gamma}$, then the following holds.

    \begin{enumerate}
        \item \label{t_fdp_risk_1} The fairness-aware excess risk satisfies
        \begin{align*}
            d_{\mathrm{fair}}(\widetilde{f}^{\mathrm{FDP}}_{\DD,\alpha}, f^*_{\DD,\alpha})= \widetilde{O}_{\mathrm{p}}\Bigg[&\Bigg\{\sqrt{\sum_{s=1}^S \frac{\nu^2_s}{N_{s}h^d}} +  \max_{s \in [S]} \frac{\nu_s}{N_{s}h^{d}}+ h^\beta+ \sqrt{\sum_{s=1}^S\frac{\nu^2_s}{N^2_{s}\epsilon^2_s h^{2d}}}\Bigg\}^{(1+\gamma)}\\
            &+\indc\{\tau^*_{\DD, \alpha} \neq 0\}\Bigg\{\sqrt{\sum_{s=1}^S \frac{\mu_s^2}{N_{s}}} + \max_{s \in [S]}\frac{\mu_s }{N_{s}}+\sqrt{\sum_{s=1}^S\frac{\mu_s^2}{N_{s}^2\epsilon_s^2}}\Bigg\}^{\frac{1+\gamma}{\gamma}}\Bigg].
        \end{align*}

        \item  \label{t_fdp_risk_2} In the fairness-impacted regime when $\tau^*_{\DD, \alpha}  \neq 0$, choose the bandwidth $h_{\opt} \in \mathbb{R}_+$ as the smallest positive real number satisfying 
        \begin{align} \label{t_fdp_risk_eq4}
            h^{-2\beta}_{\opt} \asymp h^d_{\opt}\sum_{s=1}^S (N_s \wedge N_s^2\epsilon_s^2h^d_{\opt}) + \Big\{\sum_{s=1}^S (N_s \wedge N_s^2\epsilon_s^2)\Big\}^{1/\gamma},
        \end{align}
        and pick the weights 
        \begin{align} \label{t_fdp_risk_eq2}
            \nu_s = \frac{u_s}{\sum_{j=1}^S u_j}, \text{ where } u_s = N_s \wedge N_s^2\epsilon_s^2h_{\opt}^d,
        \end{align}
        and
        \begin{align} \label{t_fdp_risk_eq3}
            \mu_s = \frac{u'_s}{\sum_{s=1}^S u'_s},  \text{where } u_s' = N_s \wedge N_s^2\epsilon_s^2.
        \end{align}
        Then it holds that 
        \begin{align} \label{t_fdp_risk_eq5}
            d_{\mathrm{fair}}(\widetilde{f}^{\mathrm{FDP}}_{\DD,\alpha}, f^*_{\DD,\alpha}) = \widetilde{O}_{\mathrm{p}}\Big(h_{\opt}^{\beta(1+\gamma)}\Big),
        \end{align}
        and
        \begin{align*}
            |R(\widetilde{f}^{\mathrm{FDP}}_{\DD,\alpha}) - R(f^*_{\DD,\alpha})| = d_{\mathrm{fair}}(\widetilde{f}^{\mathrm{FDP}}_{\DD,\alpha}, f^*_{\DD,\alpha}) + |\tau^*_{\DD,\alpha}|\widetilde{O}_{\mathrm{p}}(h_{\opt}^{\beta\gamma}).
        \end{align*}

        \item \label{t_fdp_risk_3} In the automatically fair regime when $\tau^*_{\DD, \alpha} = 0$, choose the bandwidth $h_{\opt} \in \mathbb{R}_+$ as the smallest positive real number satisfying
        \begin{align*}
            h^{-2\beta}_{\opt} \asymp h^d_{\opt}\sum_{s=1}^S (N_s \wedge N_s^2\epsilon_s^2h^d_{\opt}),
        \end{align*}
        and pick the same weights as \eqref{t_fdp_risk_eq2} and \eqref{t_fdp_risk_eq3}. Then it holds that 
        \begin{align*}
            d_{\mathrm{fair}}(\widetilde{f}^{\mathrm{FDP}}_{\DD,\alpha}, f^*_{\DD,\alpha}) = \widetilde{O}_{\mathrm{p}}\Big(h_{\opt}^{\beta(1+\gamma)}\Big),
        \end{align*}
        and
        \begin{align*}
            |R(\widetilde{f}^{\mathrm{FDP}}_{\DD,\alpha}) - R(f^*_{\DD,\alpha})| = d_{\mathrm{fair}}(\widetilde{f}_{\DD,\alpha}, f^*_{\DD,\alpha}) + |\tau^*_{\DD,\alpha}|\widetilde{O}_{\mathrm{p}}(h_{\opt}^{\beta\gamma}).
        \end{align*}
    \end{enumerate}    
\end{theorem}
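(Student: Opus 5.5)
The proof reduces the fairness-aware excess risk to two high-probability error bounds, one for the estimated regression functions and one for the estimated threshold, and then integrates these against the margin condition. Write $\widetilde T_a(\tau)=1/2+\tau(2a-1)/(2\widetilde\pi_a)$ and $T_a$ as in \Cref{a_posterior}\ref{a_posterior_margin}. Since $\widetilde f$ and $f^*_{\DD,\alpha}$ are both threshold rules, the integrand in \Cref{def_fair_aware_risk} vanishes unless $\eta_a(x)-T_a(\tau^*_{\DD,\alpha})$ and $\widetilde\eta_a(x)-\widetilde T_a(\widetilde\tau_{\DD,\alpha})$ have different signs. On that set,
\[
|\eta_a(x)-T_a(\tau^*_{\DD,\alpha})|\le \|\widetilde\eta_a-\eta_a\|_\infty+|\widetilde T_a(\widetilde\tau_{\DD,\alpha})-T_a(\tau^*_{\DD,\alpha})|=:\Delta_a .
\]
The contribution of group $a$ is therefore at most $2\pi_a\Delta_a\,\mathbb{P}\{|\eta_a(X)-T_a(\tau^*_{\DD,\alpha})|\le\Delta_a\}\lesssim\Delta_a^{1+\gamma}$ by \Cref{a_posterior}\ref{a_posterior_margin}, which we may apply once $\Delta_a$ is small. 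This gives
\[
d_{\mathrm{fair}}\lesssim \sum_a \Delta_a^{1+\gamma}\lesssim \|\widetilde\eta-\eta\|_\infty^{1+\gamma}+|\widetilde\pi_a-\pi_a|^{1+\gamma}+|\widetilde\tau_{\DD,\alpha}-\tau^*_{\DD,\alpha}|^{1+\gamma},
\]
using \Cref{l_range_tau} to bound $|\tau|$ and \Cref{a_prob}\ref{a_prob_pi_a} to keep $\widetilde\pi_a$ away from zero.

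\textbf{Step 1: sup-norm error of $\widetilde\eta_a$.} I would show
\[
\sup_x|\widetilde\eta_a-\eta_a|=\widetilde O_{\mathrm p}(r_\eta),\qquad r_\eta=\sqrt{\textstyle\sum_s\nu_s^2/(N_sh^d)}+\max_s\nu_s/(N_sh^d)+h^\beta+\sqrt{\textstyle\sum_s\nu_s^2/(N_s^2\epsilon_s^2h^{2d})}.
\]
The four pieces come from a weighted Bernstein/Talagrand bound over a grid, extended to all $x$ by the Lipschitz property of $K$; the bias $h^\beta$ from \Cref{a_kernel}\ref{a_kernel_adaptive}; and a sup-norm bound on the Gaussian processes of order $\sqrt{\log}$ via Dudley's entropy bound, with Lipschitz covariance. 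The ratio step uses the lower bound $\widetilde p_{X|A=a}\ge C_p/2$ from \Cref{a_prob}\ref{a_prob_p_x_a}. The error of $\widetilde\pi_a$ is of parametric order and is dominated.

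\textbf{Step 2: threshold error.} When $\tau^*_{\DD,\alpha}=0$, I would show that with high probability $\widetilde\tau_{\DD,\alpha}=0$. The assumption $\DD(0)\notin[\alpha-\zeta,\alpha]\cup[-\alpha,-\alpha+\zeta]$ with $\zeta\ge C_3\rho$, together with a uniform bound $|\widetilde{\DD}_{\downarrow}-\DD|\lesssim\rho$ on the grid, places $|\widetilde{\DD}_{\downarrow}(0)|$ on the correct side of $\alpha$. This explains the indicator in part \ref{t_fdp_risk_1}. I expect the uniform bound to be available from the propositions underlying \Cref{thm_privacy_fair}.\ref{thm_privacy_fair_fdp}. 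It has three sources:
\begin{enumerate}
\item plugging in $\widetilde\eta,\widetilde\pi$ changes $\DD$ by at most $r_\eta^\gamma$, via the margin condition on the $\varpi$-band;
\item the empirical deviation of the calibration data, uniform over the grid by the DKW inequality;
\item the tree noise, which has $O(M)$ summands per tail count, controlled by $\omega$, with \Cref{alg_non_increasing} not returning NULL with probability $1-\eta$.
\end{enumerate}
Discretisation with $\theta=C_1\rho^{1/\gamma}$ adds $\lesssim\theta^\gamma\asymp\rho$, again by the margin condition. When $\tau^*_{\DD,\alpha}\ne0$, $|\DD(\tau^*_{\DD,\alpha})|=\alpha$ by monotonicity. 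The selected $\widetilde\tau_{\DD,\alpha}$ satisfies $|\DD(\widetilde\tau_{\DD,\alpha})|\in[\alpha-2\rho^*,\alpha+2\rho^*]$, so \Cref{a_posterior}\ref{a_posterior_disparity} inverts this to $|\widetilde\tau_{\DD,\alpha}-\tau^*_{\DD,\alpha}|\lesssim\rho^{1/\gamma}$. One also needs the argmin not to land on a spurious crossing farther away, which follows from monotonicity of $\widetilde{\DD}_{\downarrow}$. Hence $|\widetilde\tau_{\DD,\alpha}-\tau^*_{\DD,\alpha}|^{1+\gamma}\lesssim\rho^{(1+\gamma)/\gamma}$. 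Since $\rho$ splits as $r_\eta^\gamma+r_\mu$, this yields the bound $r_\eta^{1+\gamma}+r_\mu^{(1+\gamma)/\gamma}$ of part \ref{t_fdp_risk_1}. I expect this inversion, which needs uniform control of $\widetilde{\DD}_{\downarrow}$ near $\tau^*_{\DD,\alpha}$ and the compatibility of the grid resolution with the steepness exponent, to be the main obstacle.

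\textbf{Parts \ref{t_fdp_risk_2} and \ref{t_fdp_risk_3}.} With the weights \eqref{t_fdp_risk_eq2}, Cauchy--Schwarz-optimal inverse-variance weighting gives $\sum_s\nu_s^2/(N_sh^d)+\sum_s\nu_s^2/(N_s^2\epsilon_s^2h^{2d})\asymp\{h^d\sum_s u_s\}^{-1}$. The $\max_s$ terms are of lower order under $\min_sN_s^2\epsilon_s^2\ge C_1$. Similarly $r_\mu\asymp\{\sum_s u'_s\}^{-1/2}$. Balancing $h^{2\beta}$ against these, as in \eqref{t_fdp_risk_eq4}, makes each term at most $h_{\opt}^{\beta(1+\gamma)}$, which gives \eqref{t_fdp_risk_eq5}. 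The second term $r_\mu^{(1+\gamma)/\gamma}\le h_{\opt}^{\beta(1+\gamma)}$ exactly because of the $\{\sum u'_s\}^{1/\gamma}$ term in \eqref{t_fdp_risk_eq4}. The risk statement then follows from the decomposition of Proposition~4.2 in \citet{zeng2024minimax} quoted before \Cref{def_fair_aware_risk}, together with the bound
\[
|\DD(\widetilde f^{\mathrm{FDP}}_{\DD,\alpha})-\DD(f^*_{\DD,\alpha})|\lesssim\rho\lesssim h_{\opt}^{\beta\gamma},
\]
obtained from Steps 1 and 2. Part \ref{t_fdp_risk_3} is identical without the threshold term.
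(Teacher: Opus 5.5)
Your architecture matches the paper's. It has the margin reduction to $\max_a(\|\widetilde\eta_a-\eta_a\|_\infty+|\widetilde T_a-T^*_a|)^{1+\gamma}$ and the sup-norm bound on $\widetilde\eta_a$. It also has the three-source bound on $\widetilde{\DD}_{\downarrow}-\DD$ over in-band grid points (\Cref{l_fdp_DD_est_err}). Finally, it localises $\widetilde\tau_{\DD,\alpha}$ via monotonicity of $\widetilde{\DD}_{\downarrow}$ at $\tau^*_{\mathcal G}\pm\kappa$ together with \Cref{a_posterior}\ref{a_posterior_disparity} (\Cref{l_fdp_tau_est}).

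Two steps, however, do not give the stated bounds as written. The first is the calibration-data term. The estimator $\widehat{\DD}=\sum_s\mu_s\widehat{\DD}^s$ is a weighted combination of per-server empirical distribution functions, with weights $\mu_s/\breve n_{s,a}$, so the DKW inequality does not apply to it. If you apply DKW server by server and sum, you get $\sum_s\mu_s\breve n_s^{-1/2}$. In the homogeneous case this is $N^{-1/2}$ rather than $(SN)^{-1/2}$, so you lose exactly the aggregation gain in the $\sqrt{\sum_s\mu_s^2/N_s}$ term of part \ref{t_fdp_risk_1}. The paper instead applies Bousquet's Bennett-type inequality to the weighted empirical process. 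It bounds the expected supremum by symmetrisation and by Doob's maximal inequality after sorting the pooled scores, and this is also where the $\max_s\mu_s/N_s$ term comes from. Since the grid has only $2^M+1$ points with $M$ logarithmic, a simpler fix also works: a weighted Bernstein bound at each grid point plus a union bound, at the cost of a log factor.

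The second issue is your verification of \eqref{t_fdp_risk_eq5}, which runs in the wrong direction if \eqref{t_fdp_risk_eq4} is read literally. You need $r_\eta\lesssim h_{\opt}^\beta$ and $r_\mu^{1/\gamma}\lesssim h_{\opt}^{\beta}$, and also $\rho\lesssim h_{\opt}^{\beta\gamma}$ for the risk statement. These require both $h_{\opt}^{-2\beta}\lesssim h_{\opt}^d\sum_su_s$ and $h_{\opt}^{-2\beta}\lesssim(\sum_su'_s)^{1/\gamma}$. A solution of \eqref{t_fdp_risk_eq4}, which has a sum on the right, instead satisfies $h_{\opt}^{-2\beta}\gtrsim$ each of the two terms. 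Unless the two terms are of the same order, the smaller one then leaves its error contribution strictly larger than $h_{\opt}^{\beta(1+\gamma)}$. So the $\{\sum_su'_s\}^{1/\gamma}$ term gives the reverse of the inequality you claim, not that inequality. The balance that works is
\[
h_{\opt}^{2\beta}\asymp\Big\{h_{\opt}^d\sum_su_s\Big\}^{-1}+\Big\{\sum_su'_s\Big\}^{-1/\gamma}.
\]
That is, $h_{\opt}$ should be the larger of the two balancing bandwidths, consistent with the $\vee$-form bandwidth in \Cref{t_fair_risk}. State this reading explicitly; with it, your inverse-variance weight computation goes through.
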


We defer the proof of \Cref{t_fdp_risk} to \Cref{sec_app_proof_t_fdp_risk}. \Cref{t_fdp_risk} is the first time in literature, providing finite-sample guarantees for excess risk in fairness-aware classification under FDP constraints. To optimise for a heterogeneous distributed setting, we choose the bandwidth~$h$ by balancing the bias and variance trade-offs, and select weights $\{\nu_s\}_s$ and $\{\mu_s\}_s$ that are proportional to the effective sample size at each server.

To better interpret the results of $ d_{\mathrm{fair}}(\widetilde{f}_{\DD,\alpha}, f^*_{\DD,\alpha})$ in Theorems~\ref{t_fdp_risk}.\ref{t_fdp_risk_2} and~\ref{t_fdp_risk}.\ref{t_fdp_risk_3}, we present in \Cref{coro_fdp_risk_homo} the fairness-aware excess risk control for the homogeneous setting.

\begin{corollary}\label{coro_fdp_risk_homo}
     Under the same assumptions as in \Cref{t_fdp_risk}, in a homogeneous setting when $N_s = N$ and $\epsilon_s = \epsilon$ for all $s \in [S]$, if we pick weights as $\nu_s = \mu_s =S^{-1}$, bandwidth as 
     \[ h_{\opt} \asymp (SN)^{-\frac{(1+\gamma)}{2\beta+d}} + (SN^2\epsilon^2)^{-\frac{(1+\gamma)}{2\beta+2d}}+ \indc\big\{\tau^*_{\DD,\alpha} \neq 0\} \cdot \big\{(SN)^{-\frac{1+\gamma}{2\gamma\beta}} + (SN^2\epsilon^2)^{-\frac{1+\gamma}{2\gamma\beta}}\big\} \]
     and the bin width as 
     \[\theta \asymp (SN)^{-\frac{\beta}{2\beta+d}} + (SN^2\epsilon^2)^{-\frac{\beta}{2\beta+2d}} + (SN)^{-\frac{1}{2\gamma}} + (SN^2\epsilon^2)^{-\frac{1}{2\gamma}},\]
     it holds that 
      \begin{align} \notag
          d_{\mathrm{fair}}(\widetilde{f}_{\DD,\alpha}, f^*_{\DD,\alpha}) = \widetilde{O}_{\mathrm{p}}\Big(&(SN)^{-\frac{\beta(1+\gamma)}{2\beta+d}} + (SN^2\epsilon^2)^{-\frac{\beta(1+\gamma)}{2\beta+2d}} \\ \label{coro_fdp_risk_homo_eq1}
          &\hspace{6em} + \indc\big\{\tau^*_{\DD,\alpha} \neq 0\} \cdot \big\{(SN)^{-\frac{1+\gamma}{2\gamma}} + (SN^2\epsilon^2)^{-\frac{1+\gamma}{2\gamma}}\big\}\Big).
      \end{align}      
\end{corollary}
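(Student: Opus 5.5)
This corollary follows by specialising \Cref{t_fdp_risk}.\ref{t_fdp_risk_1} to the homogeneous setting and then optimising the bandwidth explicitly. The plan is to substitute $N_s = N$, $\epsilon_s = \epsilon$ and $\nu_s = \mu_s = S^{-1}$ into the general bound. Note that these uniform weights coincide with the choices \eqref{t_fdp_risk_eq2} and \eqref{t_fdp_risk_eq3}, because all $u_s$ (respectively all $u'_s$) are equal.

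Under this substitution the first bracket becomes
\[
\sqrt{\frac{1}{SNh^d}} + \frac{1}{SNh^d} + h^\beta + \sqrt{\frac{1}{SN^2\epsilon^2h^{2d}}}.
\]
The second bracket becomes
\[
\sqrt{\frac{1}{SN}} + \frac{1}{SN} + \frac{1}{\sqrt{S}N\epsilon}.
\]
Whenever the first bracket is $o(1)$, the term $(SNh^d)^{-1}$ is dominated by its square root, so it can be dropped; the same holds for $1/(SN)$ against $(SN)^{-1/2}$. Hence
\[
\rho \asymp \Big\{(SNh^d)^{-1/2} + h^\beta + (SN^2\epsilon^2h^{2d})^{-1/2}\Big\}^{\gamma} + (SN)^{-1/2} + (SN^2\epsilon^2)^{-1/2}.
\]
The fairness term raised to the power $(1+\gamma)/\gamma$ then gives exactly $\indc\{\tau^*_{\DD,\alpha}\neq 0\}\{(SN)^{-(1+\gamma)/(2\gamma)} + (SN^2\epsilon^2)^{-(1+\gamma)/(2\gamma)}\}$, which is the second line of \eqref{coro_fdp_risk_homo_eq1}.

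Next I handle the classification term by balancing bias and variance. Balancing $h^\beta$ with $(SNh^d)^{-1/2}$ gives $h \asymp (SN)^{-1/(2\beta+d)}$. Balancing $h^\beta$ with $(SN^2\epsilon^2)^{-1/2}h^{-d}$ gives $h \asymp (SN^2\epsilon^2)^{-1/(2\beta+2d)}$. Taking the larger of the two (equivalently, the sum) and raising to $\beta(1+\gamma)$ yields the first line of \eqref{coro_fdp_risk_homo_eq1}. This is the homogeneous analogue of solving \eqref{t_fdp_risk_eq4}: the equation $h^{-2\beta} \asymp h^d S(N\wedge N^2\epsilon^2h^d)$ gives exactly these two regimes. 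The stated $h_{\opt}$ is only used up to the order of $h^\beta$; the additional fairness-driven summand in it does not change the bound. It is permitted only in that it is at most the fairness rate, because \Cref{t_fdp_risk} bounds the two contributions additively and a larger $h$ only affects the bias part. I will check that $h^{\beta}$ at that value is dominated by the displayed rates, so the sum of all terms is at most the stated bound. The bin width $\theta = C_1\rho^{1/\gamma}$ is then read off: $\rho^{1/\gamma}$ is of the order of the classification bracket plus $\{(SN)^{-1/2}+(SN^2\epsilon^2)^{-1/2}\}^{1/\gamma}$, which matches the stated $\theta$ up to constants and logarithmic factors.

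The main obstacle is bookkeeping rather than ideas. First, the exponents in the stated $h_{\opt}$ must be reconciled with the exponents produced by the balancing above. Second, the auxiliary conditions of \Cref{thm_privacy_fair}.\ref{thm_privacy_fair_fdp} must still hold: $N^2\epsilon^2 \ge C_1$, $\varpi \ge C_2\rho^{1/\gamma}$, and $\zeta \ge C_3\rho$. These hold for large $N$ because $\rho \to 0$ under the stated choices. Finally, the lower-order terms $1/(SNh^d)$ and $1/(SN)$ are absorbed into the leading terms, and logarithmic factors are absorbed into $\widetilde{O}_{\mathrm{p}}$.
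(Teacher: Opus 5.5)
Your route is the one the paper intends: substitute $N_s=N$, $\epsilon_s=\epsilon$, $\nu_s=\mu_s=S^{-1}$ into \Cref{t_fdp_risk}.\ref{t_fdp_risk_1} and balance. The corollary is just \Cref{t_fdp_risk}.\ref{t_fdp_risk_2}--\ref{t_fdp_risk_3} with \eqref{t_fdp_risk_eq4} solved in the homogeneous case. Your two brackets, the balancing $h\asymp (SN)^{-1/(2\beta+d)}$ and $h\asymp (SN^2\epsilon^2)^{-1/(2\beta+2d)}$, and your treatment of the fairness term are all correct.

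The gap is the step you postpone as ``reconciling the exponents in the stated $h_{\opt}$''. It cannot be done. Your reason for it being harmless, that $h_{\opt}$ ``is only used up to the order of $h^\beta$'', is false: $h$ also enters the variance terms $(SNh^d)^{-1/2}$ and $(SN^2\epsilon^2h^{2d})^{-1/2}$. Every printed exponent is $(1+\gamma)$ times the one your balancing produces. So, up to constants, the printed bandwidth is the $(1+\gamma)$-th power of yours, which is polynomially too small. For instance, take $\tau^*_{\DD,\alpha}=0$ and $\epsilon$ large enough that the first summand dominates. Then $h_{\opt}\asymp (SN)^{-(1+\gamma)/(2\beta+d)}$ and
\[
(SNh_{\opt}^d)^{-1/2} = (SN)^{-\frac{2\beta-\gamma d}{2(2\beta+d)}} \gg (SN)^{-\frac{\beta}{2\beta+d}} \qquad \text{for every } \gamma>0.
\]
Hence \Cref{t_fdp_risk}.\ref{t_fdp_risk_1} only gives $(SN)^{-(1+\gamma)(2\beta-\gamma d)/\{2(2\beta+d)\}}$. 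This is strictly worse than \eqref{coro_fdp_risk_homo_eq1}, and does not even vanish when $\gamma d\geq 2\beta$. Relatedly, with the printed bandwidth the printed $\theta$ is far below $\rho^{1/\gamma}$, so the hypothesis $\theta=C_1\rho^{1/\gamma}$ of \Cref{t_fdp_risk} fails.

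The printed exponents are those that make $h_{\opt}^{\beta}$ equal to the target rate, rather than $h_{\opt}^{\beta(1+\gamma)}$ as in \eqref{t_fdp_risk_eq5}. The same slip recurs in the remark on the choice of bandwidth. The homogeneous solution of \eqref{t_fdp_risk_eq4} is
\[
h_{\opt}\asymp (SN)^{-\frac{1}{2\beta+d}}+(SN^2\epsilon^2)^{-\frac{1}{2\beta+2d}}+\indc\{\tau^*_{\DD,\alpha}\neq 0\}\big\{(SN)^{-\frac{1}{2\gamma\beta}}+(SN^2\epsilon^2)^{-\frac{1}{2\gamma\beta}}\big\}.
\]
This is also the only reading under which the stated $\theta$ equals $\rho^{1/\gamma}$. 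Its first two terms are the $\beta$-th powers of the first two summands, and its last two are the fairness bracket raised to the power $1/\gamma$. You should say explicitly that you prove the corollary for this bandwidth.

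With it your argument closes:
\begin{itemize}
\item The first two summands bound the classification bracket by $(SN)^{-\beta/(2\beta+d)}+(SN^2\epsilon^2)^{-\beta/(2\beta+2d)}$.
\item When $\tau^*_{\DD,\alpha}\neq 0$, the third summand can only enlarge $h$. That shrinks the variance terms and raises $h^{\beta(1+\gamma)}$ at most to $\{S(N\wedge N^2\epsilon^2)\}^{-(1+\gamma)/(2\gamma)}$, which is the fairness rate already in the bound.
\item The side conditions and logarithmic factors are handled as you describe.
\end{itemize}
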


\begin{remark}[Choice of bandwidth $h$]
    The choice of bandwidth, $h_{\mathrm{opt}}$, in \Cref{coro_fdp_risk_homo} is guided by solving equation \eqref{t_fdp_risk_eq4}, which matches the bias with four variance terms arising from privacy and fairness constraints. We would like to remark that, in the homogeneous case, it suffices to choose $h_{\mathrm{opt}} = (SN)^{-\frac{(1+\gamma)}{2\beta+d}} + (SN^2\epsilon^2)^{-\frac{(1+\gamma)}{2\beta+2d}}$ without affecting the final rate. The choice of $h_{\mathrm{opt}}$ via \eqref{t_fdp_risk_eq4} in \Cref{t_fdp_risk} is solely to present the final rate for $ d_{\mathrm{fair}}(\widetilde{f}^{\mathrm{FDP}}_{\DD,\alpha}, f^*_{\DD,\alpha})$ in \eqref{t_fdp_risk_eq5} more clearly in the form $h_{\opt}^{\beta(1+\gamma)}$.
\end{remark}

At a high level, the upper bound in \Cref{coro_fdp_risk_homo} is of the form
\begin{align*}
    d_{\mathrm{fair}}(\widetilde{f}_{\DD,\alpha}, f^*_{\DD,\alpha}) \leq\;& \mbox{intrinsic cost of classificataion} + \mbox{cost of privacy in classification} \\
    &+ \mbox{non-private cost of fairness} + \mbox{private cost of fairness},
\end{align*}
where the cost of privacy corresponds to the terms involving $\epsilon$, and the cost of fairness corresponds to the terms that are active when $\tau^*_{\DD,\alpha} \neq 0$. This result decouples the excess risk into different sources regarding the privacy and fairness constraints.

When $\tau^*_{\DD,\alpha} =0$, we are in the automatically fair regime and $f^*_{\DD,\alpha}$ in \eqref{eq_bayes_fair} reduces to the unconstrained groupwise Bayes optimal classifier, indicating that the latter is intrinsically fair. Consequently, imposing fairness constraints incurs no additional excess risk. In this case, our results in~\eqref{coro_fdp_risk_homo_eq1} recover the excess risk bound, i.e.~$R(\widetilde{f}_{\DD,\alpha}) - R(f^*_{\DD,\alpha})$, for FDP non-parametric classification without fairness constraints, and match the minimax optimal results in \citet{auddy2025minimax}. The cost of privacy is reflected in a reduction of the effective sample size from $SN$ to $SN^2\epsilon^2$ and a loss in the exponent due to an additional dependence on the dimension $d$ in the denominator.

When fairness constraints are in effect, i.e.~$\tau^*_{\DD,\alpha} \neq 0$, the two additional terms $(SN)^{-(1+\gamma)/(2\gamma)}$ and $(SN^2\epsilon^2)^{-(1+\gamma)/(2\gamma)}$ in \eqref{coro_fdp_risk_homo_eq1} together quantify the cost of fairness, arising from the estimation of $\widetilde{\tau}_{\DD,\alpha}$. The former term represents the non-private cost of fairness and echoes the rate in \citet{zeng2024minimax}, which studies fairness-aware classification in a single-server setting without privacy constraints. The latter one reflects the additional cost induced by FDP constraints.

Equivalently, the cost of fairness can be rewritten as $\{(SN)^{-1/2} + (SN^2\epsilon^2)^{-1/2}\}^{(1+\gamma)/\gamma}$, where the term inside the bracket is the standard parametric rate under FDP \citep[e.g.][]{li2024federated}. This parametric term inside the bracket also highlights the importance of the binary tree based construction in \Cref{alg_fair_fdp}. A naive approach to estimate the adjusted threshold $\tau^*_{\DD,\alpha}$ might be a plug-in optimisation problem, based on perturbing site-wise empirical DD using Gaussian processes \citep[e.g.][]{hall2013differential} then aggregating. However, due to the non-smoothness of $\widehat{\DD}_s$ in \eqref{eq_DD_tree}, additional smoothing is required, which in turn introduces non-parametric rates inside the bracket and leads to strictly weaker guarantees. Moreover, the term $(1 + \gamma)$ in the numerator of the exponent in the cost of fairness is linked to the margin assumption in \Cref{a_posterior}\ref{a_posterior_margin}, and similar structures are commonly seen in the existing literature \citep[e.g.][]{audibert2007fast}. The term $\gamma$ in the denominator of the exponent is a consequence of local steepness of $\DD$ around $\tau^*_{\DD,\alpha}$ in \Cref{a_posterior}\ref{a_posterior_disparity}. The steepness, in turn, implies that $|\widetilde{\tau}_{\DD,\alpha}-\tau^*_{\DD,\alpha}| \leq |\DD(\widetilde{\tau}_{\DD,\alpha}) -\DD(\tau^*_{\DD,\alpha})|^{1/\gamma}$ and characterises the $1/\gamma$-type rate in the exponent.

Apart from the choices of bandwidth and site-wise weights, another important ingredient in the analysis is the choice of the bin width $\theta \asymp \rho^{1/\gamma}$ in the noisy binary tree construction (\Cref{alg_bin_tree}), since discretising the search interval introduces bias into the estimation. The bin width $\theta$ is therefore chosen to balance the discretisation bias, i.e.~$|\DD(\tau^*_{\mathcal{G}}) - \DD(\tau^*_{\DD,\alpha})|\lesssim \theta^\gamma$, and the error associated with DD estimation, i.e.~$|\widetilde{\DD}_{\downarrow}(\tau^*_{\mathcal{G}}) - \DD(\tau^*_{\mathcal{G}})|\lesssim \rho$, where $\tau^*_{\mathcal{G}}$ is the closed point to $\tau^*_{\DD,\alpha}$ in the searching grids. This choice subsequently ensures that the optimisation problem in \textbf{S3} of \Cref{alg_threshold_search_fdp} admits a feasible solution and, moreover, limits privacy composition across layers to a poly-logarithmic number of times, as the tree depth satisfies $M = \log_2(\theta^{-1})+1 \asymp \log_2(SN \wedge SN^2\epsilon^2)$. More discussions can be found in \Cref{remark_tree}.

\section{Numerical experiments} \label{sec_numerical}
In this section, we conduct numerical experiments to demonstrate the effectiveness of Algorithms~\ref{alg_fair_fdp} and \ref{alg_fair_cdp}. Experiments with simulated and real datasets are demonstrated in Sections \ref{sec_numerical_simulated} and \ref{sec_numerical_real}, respectively. The code for reproducing all experiments can be found at \url{https://github.com/GengyuXue/DP_Fair_classification}.

\subsection{Simulated data analysis}\label{sec_numerical_simulated}
In this subsection, we carried out independent experiments for Algorithms~\ref{alg_fair_fdp} and \ref{alg_fair_cdp} on simulated datasets to support our theoretical findings in \Cref{sec_theoretical}.

\subsubsection{Numerical experiments for CDP-Fair in Algorithm~\ref{alg_fair_cdp}} \label{sec_numerical_cdp}

We consider the case when $d=2$, where $X= (X_1, X_2)$. We generate the sensitive feature $A$ according to $\pi_1 = \mathbb{P}(A=1)= 0.3$ and the standard feature $(X_1,X_2)$ by $X_1|A=1 \sim \text{Beta}(4,2)$, $X_1|A=0 \sim \text{Beta}(4.5,2)$ and $X_2 \sim \text{Unif}(0,1)$. Given $X = (x_1,x_2)$ and $A=a$, the label $Y$ is then generated by following
\begin{align*}
    \eta_a(x_1,x_2)= \frac{1}{2} + \frac{1}{\pi}\text{arctan}\Big\{12(x_1 + x_2 -1)-0.3(2a-1)\Big\}.
\end{align*}

We generate training datasets with size $N = \{5000, 7000, 9000\}$ and evaluate performance on an independently generated test dataset of size $4000$. The privacy parameter is set as $\epsilon \in \{0.75,1,2,3, 4\}$. The privacy leakage parameter is chosen as $\delta = (N/2)^{-2}$.

The kernel $K$ is chosen to be the Gaussian kernel $(C_K = 1)$, and the bandwidth $h$ is selected via a three-fold cross-validation. Specifically, we choose the bandwidth that attains the lowest misclassification error on the training data without incorporating fairness constraints.

To generate the Gaussian process noise added to the regression functions, we perform an eigen-decomposition of the corresponding covariance matrix and retain only the first $35$ leading eigencomponents for computational efficiency. For simplicity, we assume that the class probabilities $\{\pi_a\}_{a \in \{0,1\}}$ are known and only use their non-private empirical counterparts during the implementation. During implementation, the training set $\mathcal D$ is randomly split into two equal-sized subsets, $\mathcal{D}_1 \cup \mathcal{D}_2$. One subset is used to estimate $\widetilde{\eta}_a$ (\textbf{S1} in \Cref{alg_fair_cdp}), while the other is used to estimate the threshold $\widetilde{\tau}_{\DD,\alpha}$ (\textbf{S2} in \Cref{alg_fair_cdp}).
Let $\widetilde{f}_1$ denote the classifier estimated using $\mathcal{D}_1$ for model estimation and $\mathcal{D}_2$ for threshold calibration, and let~$\widetilde{f}_2$ denote the classifier constructed with the roles of $\mathcal{D}_1$  and $\mathcal{D}_2$ reversed. To mitigate the randomness caused by random splitting, we adopt a cross-fitting approach and define the final probabilistic classifier as the average $\widetilde{f} = (\widetilde{f}_1 + \widetilde{f}_2)/2$.

We report the mean misclassification errors and empirical disparities along with their corresponding $95\%$ confidence bands over 200 iterations in \Cref{fig_cdp}. For comparisons, we also calculate the oracle misclassification error for group-wise classifiers with and without fairness constraints by simulations on an independent dataset of sample size $200000$. The simulated Bayes risk for the group-wise classifier without fairness constraint is $0.106$, and the intrinsic demographic disparity is $0.559$. The simulated Bayes risk for the fairness-aware Bayes classifier is plotted in dashed grey.

Across all sample sizes, \Cref{fig_cdp} exhibits the expected privacy, fairness and accuracy trade-off as shown in \Cref{t_fair_risk} in \Cref{sec_appendix_excess_risk_cdp}. Our designed classifier CDP-Fair successfully keeps the empirical disparities below the diagonal reference $y=x$ for each $\alpha$ in the second row of \Cref{fig_cdp}, indicating that the target disparity level is respected. In terms of classification accuracy, relaxing the fairness constraint (i.e.~increasing $\alpha$) yields a noticeable reduction in misclassification error when the fairness constraint is active (i.e.~small $\alpha$). Once $\alpha$ exceeds a critical threshold, the fairness constraint becomes inactive and the error curves flatten. Moreover, weakening the privacy constraint (i.e.~increasing $\epsilon$) reduces the excess risk for each fixed $\alpha$, with the performance converging towards the oracle benchmark as $\epsilon$ grows. Finally, under fixed privacy and fairness parameters $(\alpha,\epsilon)$, increasing the sample size $N$ leads to uniformly smaller misclassification error, and the difference in performance between CDP-Fair and the oracle Bayes fairness-aware classifier becomes less significant when $\epsilon$ is large.

\begin{figure}[!htbp]
    \centering
    \includegraphics[width=0.9\linewidth]{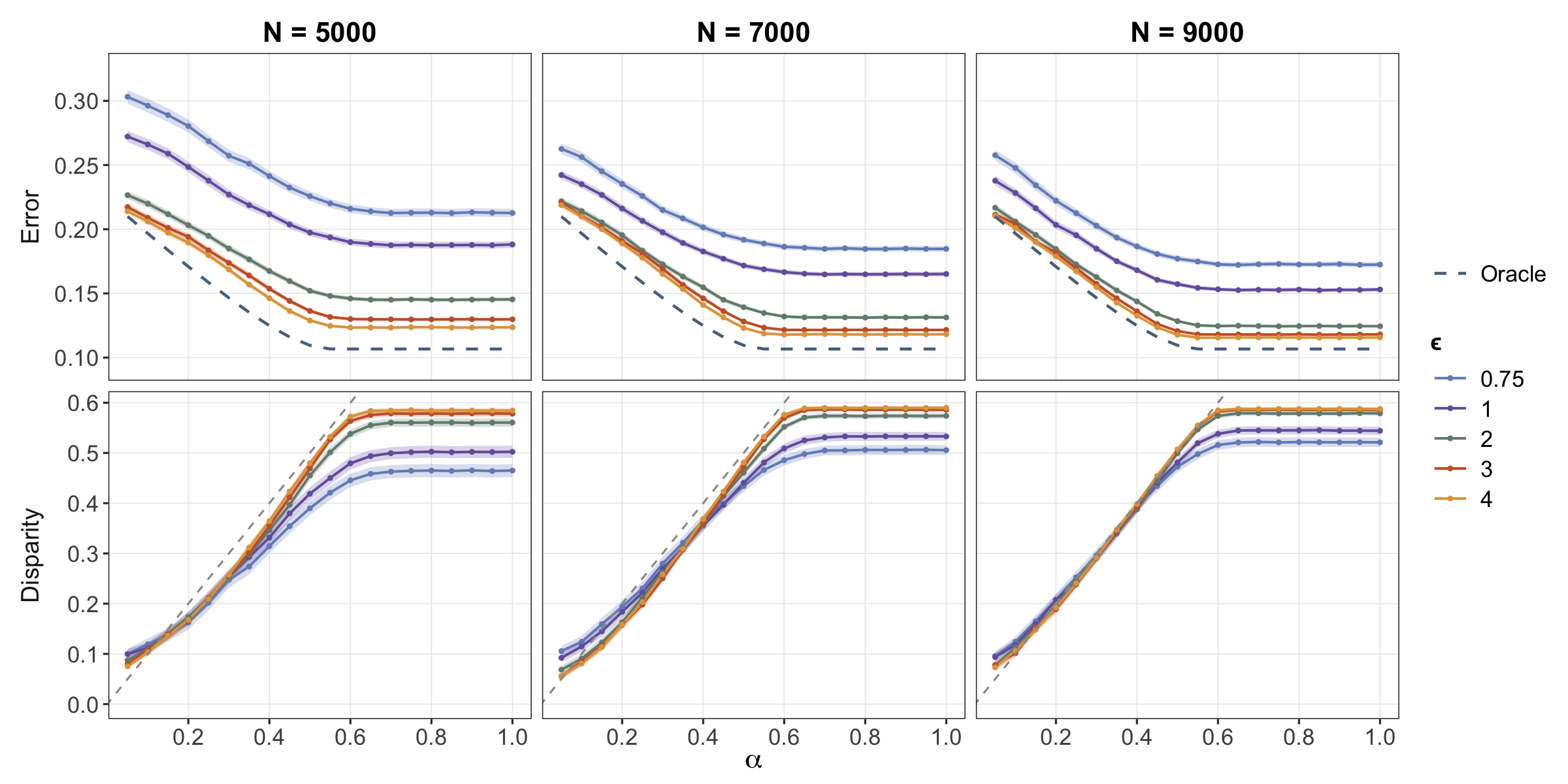}
    \caption{Means and $95\%$ confidence bands for misclassification errors and empirical disparities of \Cref{alg_fair_cdp} when $N \in \{5000, 7000, 9000\}$. The grey dashed line represents $y=x$.}
    \label{fig_cdp}
\end{figure}

\subsubsection{Numerical experiments for FDP-Fair in Algorithm~\ref{alg_fair_fdp}} \label{sec_numerical_FDP}
To demonstrate the effectiveness of FDP-Fair classifier in \Cref{alg_fair_fdp}, we generate data following the same mechanism as described in \Cref{sec_numerical_cdp}, then distribute across various servers. In this subsection, we consider a homogeneous setting where $N_s = N = 2000$, $\epsilon_s = \epsilon \in \{0.75,1, 2, 3, 4, 10\}$, $\delta_s = \delta = (N/2)^{-2}$ and vary the number of servers $S \in \{4, 5, 6\}$. We train the classifier using the distributed training data of total size $N_{\text{total}} = SN$, then test the trained classifier on an independent sample of size $2000$. 

In order to implement \Cref{alg_fair_fdp}, four tuning parameters need to be selected: bandwidth $h$, constants $C_\omega$ and $C_{\rho}$ and minimum interval length $\theta$. For all experiments, following the same approach as in \Cref{sec_numerical_cdp}, we select $h$ with a three-fold cross-validation on a randomly chosen site. To select $\theta$, we utilise our theoretical results and select $\theta$ such that $M = \max\{\lfloor \log_2 \big(\sum_{s=1}^S \min\{N_s, N_s^2\epsilon_s^2\}\big)\rfloor +1, 6\}$. For illustrative purposes, we fix $C_{\omega} = 0.1$ and $\rho = 0.03$ throughout the experiments. Sensitivity analysis for $C_{\omega}$ and $\rho$ when $S= 3$, $N_s = 2000$ and $\alpha =0.3$ are also carried out and defer the results to \Cref{sec_appendix_sensitivity}.

To ensure the feasibility of \Cref{alg_non_increasing} in the high privacy regime when the $\widetilde{\DD}$ curve fluctuates significantly, we adjust \Cref{alg_non_increasing} to \Cref{alg_non_increasing_simulation} detailed in \Cref{sec_appendix_non_increasing_numeircal} in practice. At a high level, \Cref{alg_non_increasing_simulation} outputs a sequence that is as close to non-increasing as possible, subject to bias control. 

We report the mean misclassification errors and empirical disparities along with their corresponding $95\%$ confidence bands over 200 iterations in \Cref{fig_fdp}. Overall, the patterns in \Cref{fig_fdp} closely mirror those observed in \Cref{sec_numerical_cdp} and are consistent with our theoretical results detailed in \Cref{t_fdp_risk}. In particular, as $S$ increases, the total training sample size $SN$ grows, which leads to uniformly better misclassification error.

\begin{figure}[!htbp]
    \centering
    \includegraphics[width=0.9\linewidth]{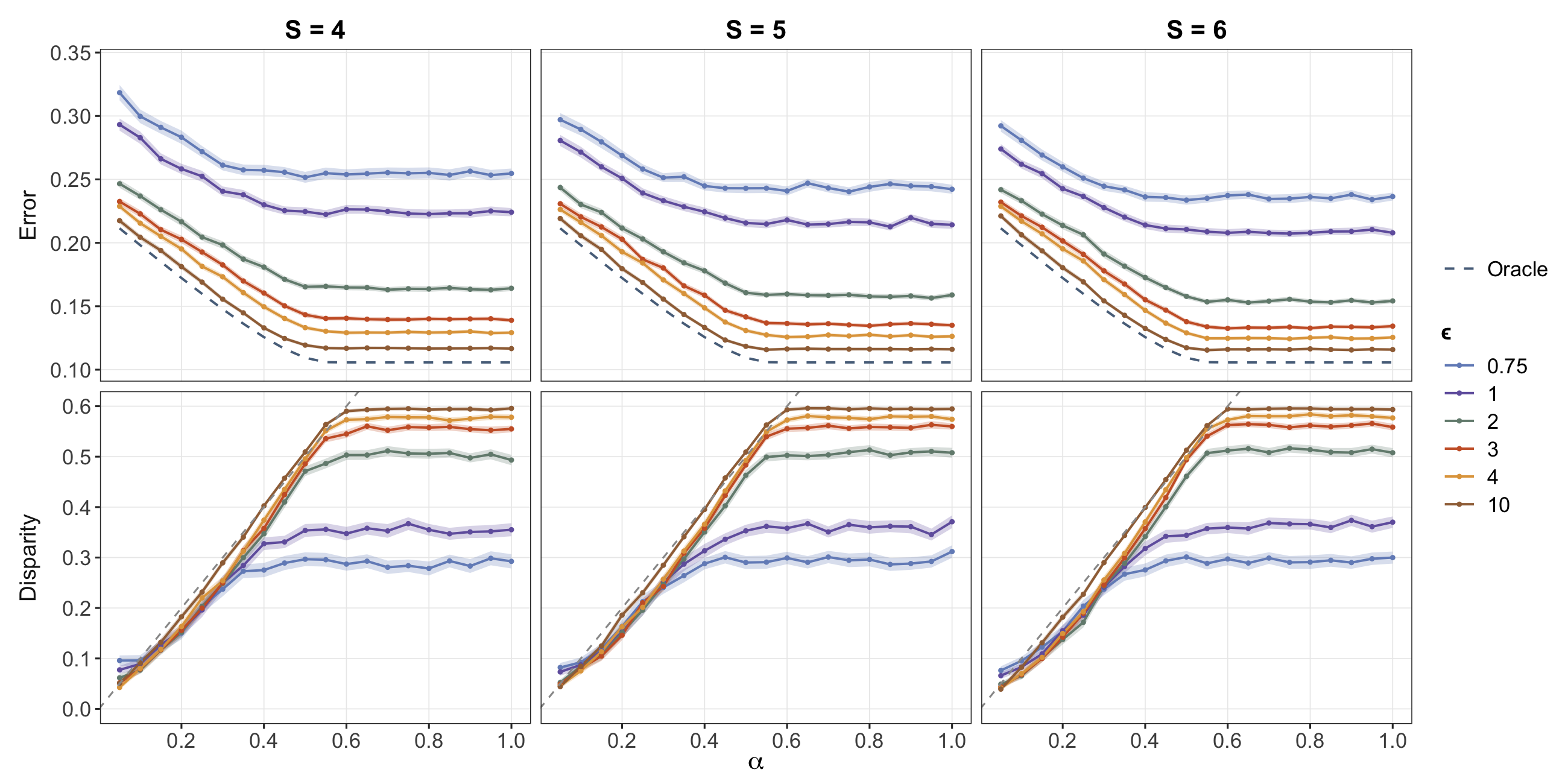}
     \caption{Means and $95\%$ confidence bands for misclassification errors and empirical disparities of \Cref{alg_fair_fdp} when $N_s = 2000$ and $S \in \{4,5,6\}$. The grey dashed line represents $y=x$. }
     \label{fig_fdp}
\end{figure}

\subsubsection{Effect of the number of servers \texorpdfstring{$S$}{}}
To further study the effect of the distributed nature of data across servers, we consider below the case when a fixed total of $N_{\text{total}} = 7200$ data points are equally distributed across $S \in \{1,2,3,4,5\}$ servers. In the case when $S=1$, we further apply CDP-Fair in \Cref{alg_fair_cdp}. Tuning parameters are selected following the same procedure as described in \Cref{sec_numerical_FDP} ($C_{\omega} = 0.1$, $\rho = 0.03$). After initial training, we use an independent sample of size $2500$ to test the performance of the trained classifier. The results are collected in \Cref{fig_effect_S}. 

The results in \Cref{fig_effect_S} highlight the effect of data decentralisation on both accuracy and disparity control. When $S=1$, FDP-Fair performs comparably to CDP-Fair, yielding nearly identical misclassification errors and empirical disparities across $\alpha$. As $S$ increases, i.e.~when the same amount of data is distributed across more servers, the misclassification error of FDP-Fair deteriorates. Intuitively, distributing a fixed total sample size $N_{\text{total}}$ across more servers reduces the local sample size per server, which increases estimation error at the server level. Moreover, aggregating output across servers also necessitates additional privacy-preserving communication, introducing extra noise. This phenomenon further echoes our theoretical results in \Cref{t_fdp_risk}.

\begin{figure}[!htbp]
    \centering
    \includegraphics[width=0.9\linewidth]{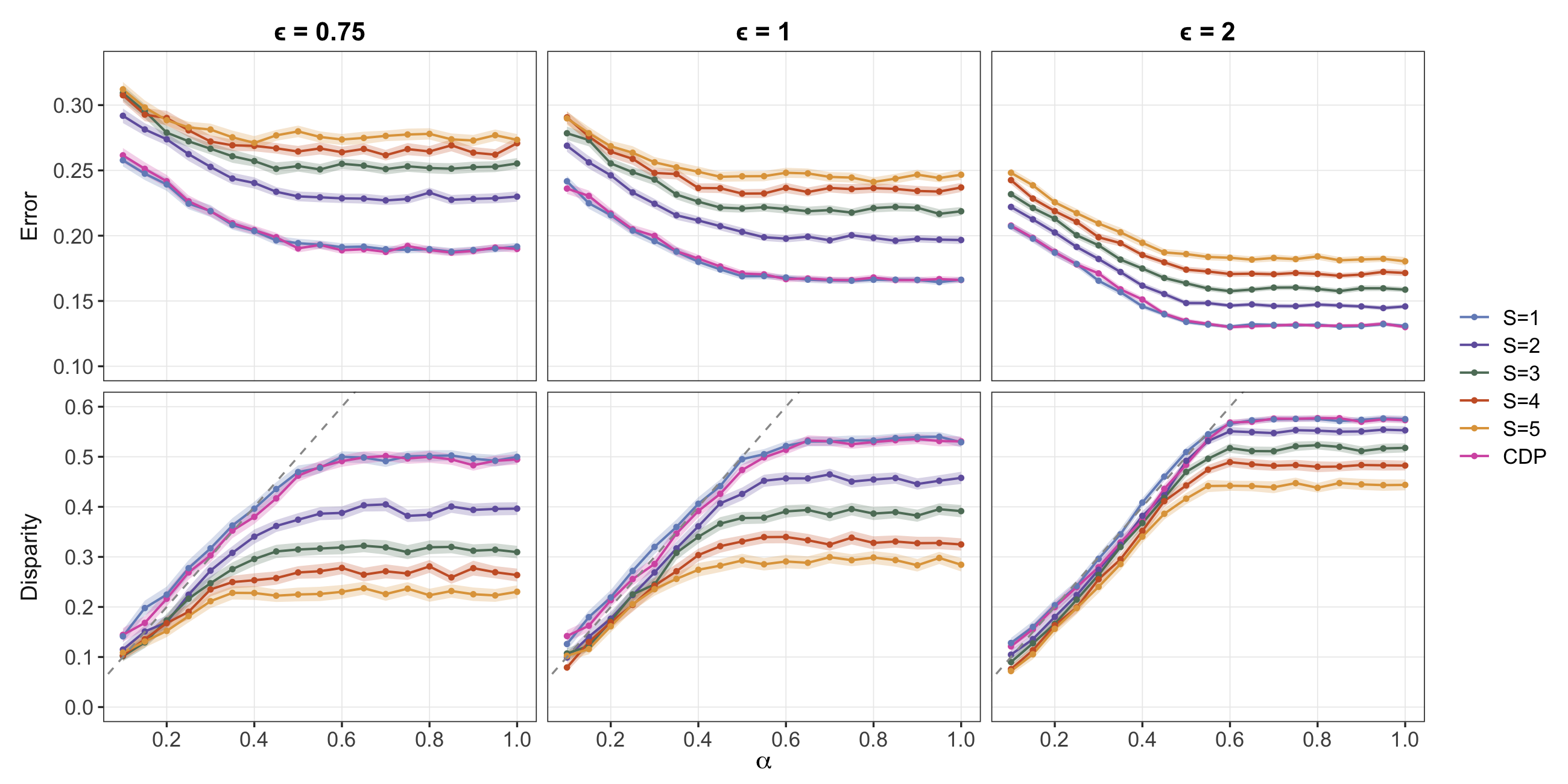}
    \caption{Means and $95\%$ confidence bands for misclassification errors and empirical disparities of Algorithms~\ref{alg_fair_fdp} and \ref{alg_fair_cdp} when $N_{\text{total}} = 7200$ and $S \in \{1,2,3,4,5\}$. The grey dashed line represents $y=x$.}
    \label{fig_effect_S}
\end{figure}

\subsection{Real data analysis}\label{sec_numerical_real}
In this section, we compare the performances of CDP-Fair and FDP-Fair with classifiers that satisfy either privacy and fairness constraints or none, on two widely used datasets: AdultCensus and Law school entrance datasets \citep[e.g.][]{zeng2024bayes}.

For the privacy-only baseline, we implement CDP-Fair with the fairness step \textbf{S2} removed. For the fairness-only baseline, we adopt the post-processing approach of \citet{zeng2024minimax}, using kernel density estimators to obtain the initial regression function estimates. As an unconstrained benchmark, we use a naive plug-in implementation of the group-wise Bayes classifier without privacy or fairness constraints. Finally, we implement CDP-Fair and FDP-Fair with $S=1$ following the same procedure detailed in \Cref{sec_numerical_simulated}, and set $C_{\omega}=0.1$ and $\rho=0.02$ throughout.

\subsubsection{AdultCensus dataset} \label{sec_numerical_adult}
With the AdultCensus dataset, we aim to classify whether an individual’s annual income exceeds \$50000. We select age, workclass, and education level as features $X$, and take gender as the protected attribute $A$. The full dataset contains $n_{\text{total}}=48842$ observations. In each experiment, we randomly selected $70\%$ of the data as the training set, and the rest $30\%$ is used as the test dataset. The experiment is repeated for $200$ times, and we report the mean and $95\%$ confidence band in \Cref{fig_adult}. To reflect the post-processing nature of CDP-Fair and FDP-Fair, all unfair classifiers are trained using only half of the available training samples.

As expected, imposing privacy constraints deteriorates classification accuracy. For a fixed fairness level $\alpha$, privacy-preserving classifiers incur larger misclassification errors than their non-private counterparts. As $\epsilon$ increases, the gap steadily narrows, and performance moves towards the non-private regime. In terms of disparity control, classifiers trained without any fairness constraints are substantially unfair, with empirical disparities remaining well above the prescribed tolerance for a wide range of small $\alpha$. By contrast, CDP-Fair and FDP-Fair successfully satisfy the fairness requirement, with empirical disparities controlled below the target level $\alpha$. Moreover, as $\epsilon$ increases, CDP-Fair and FDP-Fair converge to the fair non-private benchmark of \citet{zeng2024bayes}. Finally, the CDP and FDP implementations yield very similar performance, as evidenced by the close overlap of their curves across all panels.

\begin{figure}[!htbp]
    \centering
    \includegraphics[width=0.95\linewidth]{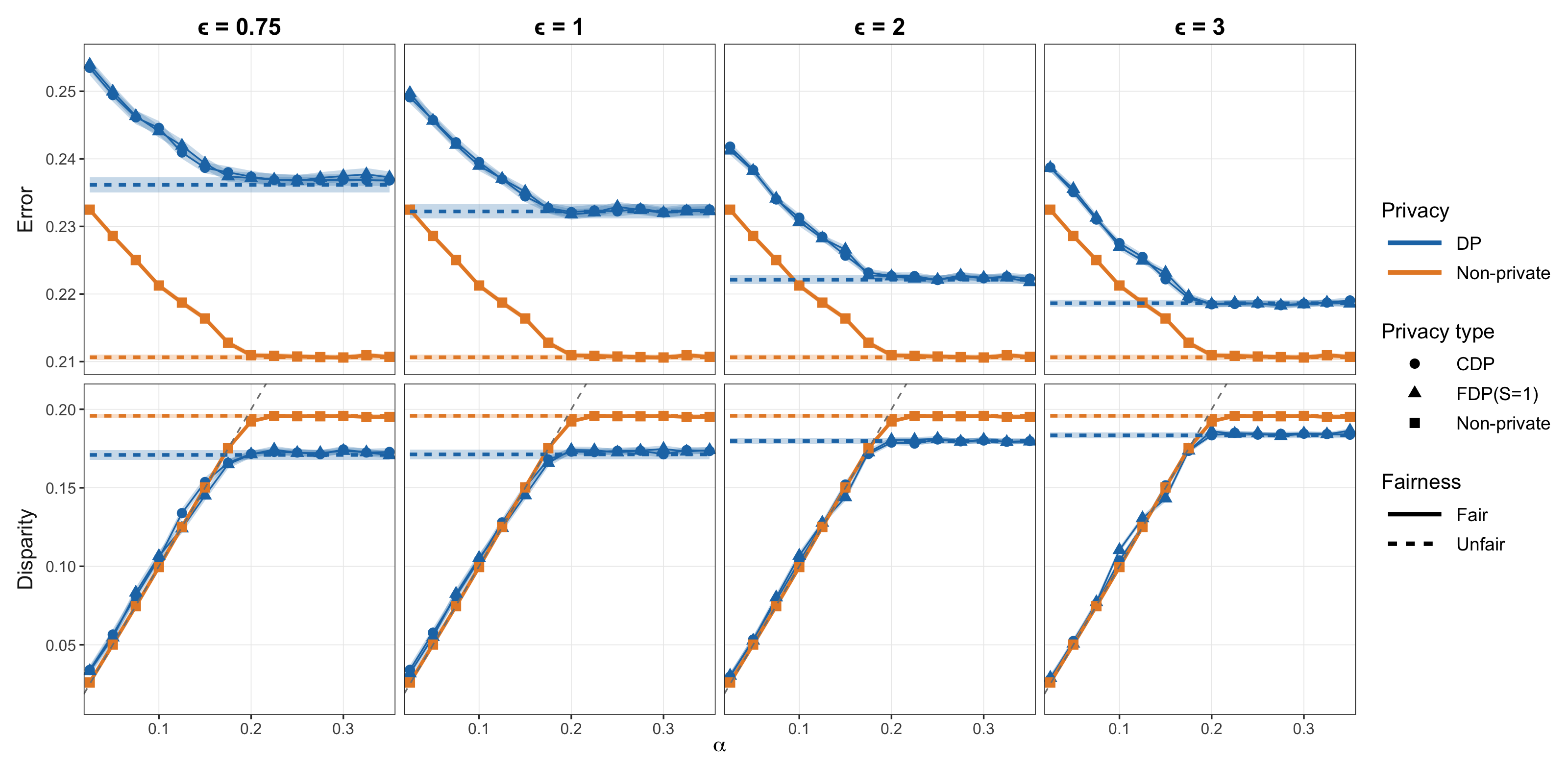}
    \caption{Means and $95\%$ confidence bands for misclassification errors and empirical disparities on the AdultCensus dataset.}
    \label{fig_adult}
\end{figure}

\subsubsection{Law school entrance dataset}
We further apply our algorithms to a selected subset of the Law school entrance dataset with $n_{\text{total}} = 40000$. We aim to classify if a student gets admitted to law school using the standard features $X$, such as LSAT scores, undergraduate GPA and more. The protected attribute $A$ is taken as race, i.e.~white vs non-white. Numerical results are summarised in \Cref{fig_law}. 

Similar trends as in \Cref{sec_numerical_adult} are observed. One difference compared with \Cref{fig_adult} is that in the disparity plots of \Cref{fig_law}, once the fairness constraint becomes inactive for the fair non-private benchmark, the empirical disparities of CDP-Fair and FDP-Fair continue to increase, though they still remain below the prespecified threshold. This gap diminishes as $\epsilon$ increases. We attribute this behaviour to the bias induced by the data-selection step. In the high-privacy regime, $\mathbb{P}(\widetilde{Y}=1\mid A=1)$ can be driven close to $0$, while $\mathbb{P}(\widetilde{Y}=1\mid A=0)$ becomes comparatively larger than in the non-private case, due to a higher frequency of $\widetilde{\eta}_0$ taking values close to $1$.

\begin{figure}[!htbp]
    \centering
    \includegraphics[width=0.95\linewidth]{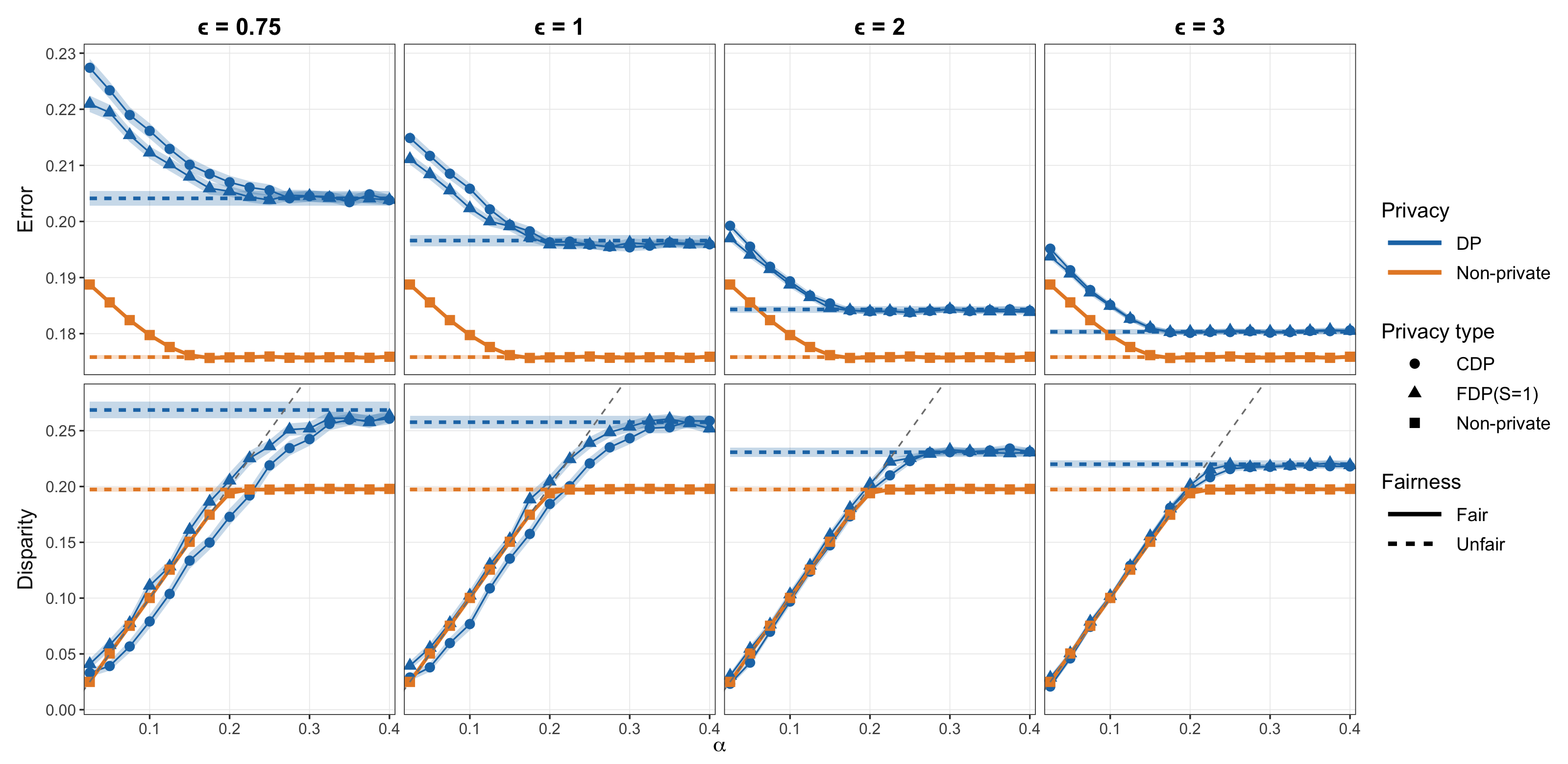}
    \caption{Means and $95\%$ confidence bands for misclassification errors and empirical disparities on the Law school entrance dataset.}
    \label{fig_law}
\end{figure}

\section{Conclusion} \label{sec_conclusion}
In this paper, we study private fairness-aware classification in a distributed setting. To the best of our knowledge, this is the first time seen in the literature. To handle the decentralised nature of data analysis, we propose a two-step plug-in classifier, FDP-fair, which is built upon Gaussian mechanisms and private binary trees. In the special case when there is only one server, we further provide a simplified yet effective algorithm, CDP-Fair. Under appropriate assumptions, theoretical guarantees for privacy, fairness and excess risk controls are provided, which are further supported by extensive numerical experiments on both synthetic and real~datasets.

We envisage several potential extensions. Firstly, our algorithms depend on the availability of sensitive features $A$, which may be restricted in certain practical settings due to privacy concerns. When sensitive features are available during training but not testing, one possible way is to predict $A$ using standard feature $X$ \citep[e.g.][]{zeng2024bayes,hou2024finite}. In the case when $A$ is available but subject to privacy constraints, one may instead adopt a label local DP framework \citep[e.g.][]{zhao2025theoretical}, under which only noisy versions of $A$ are released for use in the analysis. Secondly, in some scenarios, data from different servers may differ in distribution. Existing literature, e.g.~\citet{auddy2025minimax}, attempts to study differentially private transfer learning in non-parametric classifications. It would be interesting to further investigate the effect of data heterogeneity under fairness constraints.

\section*{Acknowledgements}
Xue is supported by EPSRC programme grant EP/Z531327/1. Yu is partially supported by the Philip Leverhulme Prize and EPSRC programme grant EP/Z531327/1.

\bibliographystyle{apalike}
\bibliography{reference}

@inproceedings{jagielski2019differentially,
  title={Differentially private fair learning},
  author={Jagielski, Matthew and Kearns, Michael and Mao, Jieming and Oprea, Alina and Roth, Aaron and Sharifi-Malvajerdi, Saeed and Ullman, Jonathan},
  booktitle={International Conference on Machine Learning},
  pages={3000--3008},
  year={2019},
  organization={PMLR}
}

@inproceedings{mangold2023differential,
  title={Differential privacy has bounded impact on fairness in classification},
  author={Mangold, Paul and Perrot, Micha{\"e}l and Bellet, Aur{\'e}lien and Tommasi, Marc},
  booktitle={International Conference on Machine Learning},
  pages={23681--23705},
  year={2023},
  organization={PMLR}
}

@inproceedings{
zhou2024differentially,
title={Differentially Private Worst-group Risk Minimization},
author={Xinyu Zhou and Raef Bassily},
booktitle={Forty-first International Conference on Machine Learning},
year={2024},
url={https://openreview.net/forum?id=ElNxZ40tBJ}
}

@inproceedings{ghoukasian2024differentially,
  title={Differentially private fair binary classifications},
  author={Ghoukasian, Hrad and Asoodeh, Shahab},
  booktitle={2024 IEEE International Symposium on Information Theory (ISIT)},
  pages={611--616},
  year={2024},
  organization={IEEE}
}

@article{zeng2024bayes,
  title={Bayes-optimal fair classification with linear disparity constraints via pre-, in-, and post-processing},
  author={Zeng, Xianli and Cheng, Guang and Dobriban, Edgar},
  journal={arXiv preprint arXiv:2402.02817},
  year={2024}
}

@article{massart1990tight,
  title={The tight constant in the Dvoretzky-Kiefer-Wolfowitz inequality},
  author={Massart, Pascal},
  journal={The annals of Probability},
  pages={1269--1283},
  year={1990},
  publisher={JSTOR}
}

@article{dvoretzky1956asymptotic,
  title={Asymptotic minimax character of the sample distribution function and of the classical multinomial estimator},
  author={Dvoretzky, Aryeh and Kiefer, Jack and Wolfowitz, Jacob},
  journal={The Annals of Mathematical Statistics},
  pages={642--669},
  year={1956},
  publisher={JSTOR}
}

@book{vershynin2018high,
  title={High-dimensional probability: An introduction with applications in data science},
  author={Vershynin, Roman},
  volume={47},
  year={2018},
  publisher={Cambridge university press}
}

@article{zeng2024minimax,
  title={Minimax Optimal Fair Classification with Bounded Demographic Disparity},
  author={Zeng, Xianli and Cheng, Guang and Dobriban, Edgar},
  journal={arXiv preprint arXiv:2403.18216},
  year={2024}
}

@article{audibert2007fast,
author = {Jean-Yves Audibert and Alexandre B. Tsybakov},
title = {{Fast learning rates for plug-in classifiers}},
volume = {35},
journal = {The Annals of Statistics},
number = {2},
publisher = {Institute of Mathematical Statistics},
pages = {608 -- 633},
year = {2007}
}

@article{rigollet2009optimal,
author = {Philippe Rigollet and R{\'e}gis Vert},
title = {{Optimal rates for plug-in estimators of density level sets}},
volume = {15},
journal = {Bernoulli},
number = {4},
publisher = {Bernoulli Society for Mathematical Statistics and Probability},
pages = {1154 -- 1178},
year = {2009}
}

@book{adler2007random,
  title={Random fields and geometry},
  author={Adler, Robert J and Taylor, Jonathan E},
  year={2007},
  publisher={Springer}
}

@article{hall2013differential,
  title={Differential privacy for functions and functional data},
  author={Hall, Rob and Rinaldo, Alessandro and Wasserman, Larry},
  journal={The Journal of Machine Learning Research},
  volume={14},
  number={1},
  pages={703--727},
  year={2013},
  publisher={JMLR. org}
}

@article{dwork2014algorithmic,
  title={The algorithmic foundations of differential privacy},
  author={Dwork, Cynthia and Roth, Aaron and others},
  journal={Foundations and trends{\textregistered} in theoretical computer science},
  volume={9},
  number={3--4},
  pages={211--407},
  year={2014},
  publisher={Now Publishers, Inc.}
}

@article{hu2025fairness,
  title={Fairness-aware Bayes optimal functional classification},
  author={Hu, Xiaoyu and Xue, Gengyu and Lin, Zhenhua and Yu, Yi},
  journal={arXiv preprint arXiv:2505.09471},
  year={2025}
}

@article{hou2024finite,
  title={Finite-sample and distribution-free fair classification: Optimal trade-off between excess risk and fairness, and the cost of group-blindness},
  author={Hou, Xiaotian and Zhang, Linjun},
  journal={arXiv preprint arXiv:2410.16477},
  year={2024}
}

@article{bousquet2002bennett,
  title={A Bennett concentration inequality and its application to suprema of empirical processes},
  author={Bousquet, Olivier},
  journal={Comptes Rendus Mathematique},
  volume={334},
  number={6},
  pages={495--500},
  year={2002},
  publisher={Elsevier}
}

@incollection{van1996weak,
  title={Weak convergence},
  author={Van Der Vaart, Aad W and Wellner, Jon A},
  booktitle={Weak convergence and empirical processes: with applications to statistics},
  pages={16--28},
  year={1996},
  publisher={Springer}
}

@article{hung2025optimal,
  title={Optimal Cox regression under federated differential privacy: coefficients and cumulative hazards},
  author={Hung, Elly KH and Yu, Yi},
  journal={arXiv preprint arXiv:2508.19640},
  year={2025}
}

@article{smith2021making,
  title={Making the most of parallel composition in differential privacy},
  author={Smith, Josh and Asghar, Hassan Jameel and Gioiosa, Gianpaolo and Mrabet, Sirine and Gaspers, Serge and Tyler, Paul},
  journal={arXiv preprint arXiv:2109.09078},
  year={2021}
}

@incollection{dudley2016vn,
  title={VN Sudakov’s work on expected suprema of Gaussian processes},
  author={Dudley, Richard M},
  booktitle={High Dimensional Probability VII: The Carg{\`e}se Volume},
  pages={37--43},
  year={2016},
  publisher={Springer}
}

@book{durrett2019probability,
  title={Probability: theory and examples},
  author={Durrett, Rick},
  volume={49},
  year={2019},
  publisher={Cambridge university press}
}

@inproceedings{dwork2006calibrating,
  title={Calibrating noise to sensitivity in private data analysis},
  author={Dwork, Cynthia and McSherry, Frank and Nissim, Kobbi and Smith, Adam},
  booktitle={Theory of cryptography conference},
  pages={265--284},
  year={2006},
  organization={Springer}
}

@article{yousefpour2021opacus,
  title={Opacus: User-friendly differential privacy library in PyTorch},
  author={Yousefpour, Ashkan and Shilov, Igor and Sablayrolles, Alexandre and Testuggine, Davide and Prasad, Karthik and Malek, Mani and Nguyen, John and Ghosh, Sayan and Bharadwaj, Akash and Zhao, Jessica and others},
  journal={arXiv preprint arXiv:2109.12298},
  year={2021}
}

@inproceedings{song2021evading,
  title={Evading the curse of dimensionality in unconstrained private glms},
  author={Song, Shuang and Steinke, Thomas and Thakkar, Om and Thakurta, Abhradeep},
  booktitle={International Conference on Artificial Intelligence and Statistics},
  pages={2638--2646},
  year={2021},
  organization={PMLR}
}

@misc{uscensus,
  author = {{United States Census Bureau}},
  title = {Census Bureau Sets Key Parameters to Protect Privacy in 2020 Census Results},
  year = 2021,
  howpublished = {\url{https://www.census.gov/newsroom/press-releases/2021/2020-census-key-parameters.html}}
}

@article{karwa2017finite,
  title={Finite sample differentially private confidence intervals},
  author={Karwa, Vishesh and Vadhan, Salil},
  journal={arXiv preprint arXiv:1711.03908},
  year={2017}
}

@article{butucealocal2020,
author = {Cristina Butucea and Amandine Dubois and Martin Kroll and Adrien Saumard},
title = {{Local differential privacy: Elbow effect in optimal density estimation and adaptation over Besov ellipsoids}},
volume = {26},
journal = {Bernoulli},
number = {3},
pages = {1727 -- 1764},
year = {2020},
doi = {10.3150/19-BEJ1165}
}

@article{cai2024optimal,
  title={Optimal federated learning for nonparametric regression with heterogeneous distributed differential privacy constraints},
  author={Cai, T Tony and Chakraborty, Abhinav and Vuursteen, Lasse},
  journal={arXiv preprint arXiv:2406.06755},
  year={2024}
}

@incollection{angwin2022machine,
  title={Machine bias},
  author={Angwin, Julia and Larson, Jeff and Mattu, Surya and Kirchner, Lauren},
  booktitle={Ethics of data and analytics},
  pages={254--264},
  year={2022},
  publisher={Auerbach Publications}
}

@inproceedings{quinonero2023disentangling,
  title={Disentangling and operationalizing ai fairness at linkedin},
  author={Qui{\~n}onero Candela, Joaquin and Wu, Yuwen and Hsu, Brian and Jain, Sakshi and Ramos, Jennifer and Adams, Jon and Hallman, Robert and Basu, Kinjal},
  booktitle={Proceedings of the 2023 ACM Conference on Fairness, Accountability, and Transparency},
  pages={1213--1228},
  year={2023}
}

@article{bakalar2021fairness,
  title={Fairness on the ground: Applying algorithmic fairness approaches to production systems},
  author={Bakalar, Chlo{\'e} and Barreto, Renata and Bergman, Stevie and Bogen, Miranda and Chern, Bobbie and Corbett-Davies, Sam and Hall, Melissa and Kloumann, Isabel and Lam, Michelle and Candela, Joaquin Qui{\~n}onero and others},
  journal={arXiv preprint arXiv:2103.06172},
  year={2021}
}

@article{pessach2022review,
  title={A review on fairness in machine learning},
  author={Pessach, Dana and Shmueli, Erez},
  journal={ACM Computing Surveys (CSUR)},
  volume={55},
  number={3},
  pages={1--44},
  year={2022},
  publisher={ACM New York, NY}
}

@article{calmon2017optimized,
  title={Optimized pre-processing for discrimination prevention},
  author={Calmon, Flavio and Wei, Dennis and Vinzamuri, Bhanukiran and Natesan Ramamurthy, Karthikeyan and Varshney, Kush R},
  journal={Advances in Neural Information Processing Systems},
  volume={30},
  year={2017}
}

@article{johndrow2019algorithm,
  title={An algorithm for removing sensitive information},
  author={Johndrow, James E and Lum, Kristian},
  journal={The Annals of Applied Statistics},
  volume={13},
  number={1},
  pages={189--220},
  year={2019},
  publisher={JSTOR}
}

@article{cho2020fair,
  title={A fair classifier using kernel density estimation},
  author={Cho, Jaewoong and Hwang, Gyeongjo and Suh, Changho},
  journal={Advances in Neural Information Processing Systems},
  volume={33},
  pages={15088--15099},
  year={2020}
}

@inproceedings{celis2019classification,
  title={Classification with fairness constraints: A meta-algorithm with provable guarantees},
  author={Celis, L Elisa and Huang, Lingxiao and Keswani, Vijay and Vishnoi, Nisheeth K},
  booktitle={Proceedings of the conference on fairness, accountability, and transparency},
  pages={319--328},
  year={2019}
}

@inproceedings{martinez2020minimax,
  title={Minimax pareto fairness: A multi objective perspective},
  author={Martinez, Natalia and Bertran, Martin and Sapiro, Guillermo},
  booktitle={International conference on machine learning},
  pages={6755--6764},
  year={2020},
  organization={PMLR}
}

@article{bagdasaryan2019differential,
  title={Differential privacy has disparate impact on model accuracy},
  author={Bagdasaryan, Eugene and Poursaeed, Omid and Shmatikov, Vitaly},
  journal={Advances in neural information processing systems},
  volume={32},
  year={2019}
}

@inproceedings{ganev2022robin,
  title={Robin hood and matthew effects: Differential privacy has disparate impact on synthetic data},
  author={Ganev, Georgi and Oprisanu, Bristena and De Cristofaro, Emiliano},
  booktitle={International Conference on Machine Learning},
  pages={6944--6959},
  year={2022},
  organization={PMLR}
}

@article{say2025fairness,
  title={Fairness Meets Privacy: Integrating Differential Privacy and Demographic Parity in Multi-class Classification},
  author={Say, Lilian and Denis, Christophe and Pinot, Rafael},
  journal={arXiv preprint arXiv:2511.18876},
  year={2025}
}

@article{denis2024fairness,
  title={Fairness guarantees in multi-class classification with demographic parity},
  author={Denis, Christophe and Elie, Romuald and Hebiri, Mohamed and Hu, Fran{\c{c}}ois},
  journal={Journal of Machine Learning Research},
  volume={25},
  number={130},
  pages={1--46},
  year={2024}
}

@article{li2020multi,
  title={Multi-site fMRI analysis using privacy-preserving federated learning and domain adaptation: ABIDE results},
  author={Li, Xiaoxiao and Gu, Yufeng and Dvornek, Nicha and Staib, Lawrence H and Ventola, Pamela and Duncan, James S},
  journal={Medical image analysis},
  volume={65},
  pages={101765},
  year={2020},
  publisher={Elsevier}
}

@article{heyndrickx2023melloddy,
  title={Melloddy: Cross-pharma federated learning at unprecedented scale unlocks benefits in qsar without compromising proprietary information},
  author={Heyndrickx, Wouter and Mervin, Lewis and Morawietz, Tobias and Sturm, No{\'e} and Friedrich, Lukas and Zalewski, Adam and Pentina, Anastasia and Humbeck, Lina and Oldenhof, Martijn and Niwayama, Ritsuya and others},
  journal={Journal of chemical information and modeling},
  volume={64},
  number={7},
  pages={2331--2344},
  year={2023},
  publisher={ACS Publications}
}

@article{auddy2025minimax,
  title={Minimax and adaptive transfer learning for nonparametric classification under distributed differential privacy constraints},
  author={Auddy, Arnab and Cai, T Tony and Chakraborty, Abhinav},
  journal={Journal of the Royal Statistical Society Series B: Statistical Methodology},
  pages={qkaf070},
  year={2025},
  publisher={Oxford University Press UK}
}

@article{li2024federated,
  title={Federated transfer learning with differential privacy},
  author={Li, Mengchu and Tian, Ye and Feng, Yang and Yu, Yi},
  journal={arXiv preprint arXiv:2403.11343},
  year={2024}
}

@article{xue2024optimal,
  title={Optimal estimation in private distributed functional data analysis},
  author={Xue, Gengyu and Lin, Zhenhua and Yu, Yi},
  journal={arXiv preprint arXiv:2412.06582},
  year={2024}
}

@article{madrid2023change,
  title={Change point detection and inference in multivariate non-parametric models under mixing conditions},
  author={Madrid Padilla, Carlos Misael and Xu, Haotian and Wang, Daren and MADRID PADILLA, OSCAR HERNAN and Yu, Yi},
  journal={Advances in Neural Information Processing Systems},
  volume={36},
  pages={21081--21134},
  year={2023}
}

@inproceedings{kim2019uniform,
  title={Uniform convergence rate of the kernel density estimator adaptive to intrinsic volume dimension},
  author={Kim, Jisu and Shin, Jaehyeok and Rinaldo, Alessandro and Wasserman, Larry},
  booktitle={International Conference on Machine Learning},
  pages={3398--3407},
  year={2019},
  organization={PMLR}
}

@article{cai2021cost,
  title={The cost of privacy: Optimal rates of convergence for parameter estimation with differential privacy},
  author={Cai, T Tony and Wang, Yichen and Zhang, Linjun},
  journal={The Annals of Statistics},
  volume={49},
  number={5},
  pages={2825--2850},
  year={2021},
  publisher={Institute of Mathematical Statistics}
}

@article{zhao2025theoretical,
  title={On Theoretical Limits of Learning with Label Differential Privacy},
  author={Zhao, Puning and Ma, Chuan and Shen, Li and Wang, Shaowei and Fan, Rongfei},
  journal={arXiv preprint arXiv:2502.14309},
  year={2025}
}

\newpage
\appendix 
\section*{Appendices}
All technical details and additional numerical results are collected in the Appendices. Additional numerical experiments are presented in \Cref{sec_appendix_numerical}. We provide all proofs related to Algorithms~\ref{alg_fair_cdp} and \ref{alg_fair_fdp} in Appendices~\ref{sec_appendix_cdp} and \ref{sec_appendix_fdp}, respectively. For completeness, additional background results are provided in \Cref{sec_appendix_additional_background}.

Throughout the appendix, with a slight abuse of notation, unless specifically stated otherwise, let $C_1, C_2, \ldots >0$ denote constants whose values may vary from place to place and depend on $d$. Let $\widehat{\cdot}_s$ denote standard empirical estimators without privacy constraints for site $s$, $s \in [S]$, and $\widehat{\cdot}$ the aggregated one. Similarly, let $\widetilde{\cdot}_s$ denote the site-wise estimators that satisfy privacy constraints, and let $\widetilde{\cdot}$ denote the aggregated one in the central server.

For notational simplicity, throughout the appendix, we write $\widetilde{\tau} = \widetilde{\tau}_{\DD,\alpha}$.

\section{Additional numerical experiments} \label{sec_appendix_numerical}
\subsection{Additional algorithm} \label{sec_appendix_non_increasing_numeircal}
To ensure the feasibility of the \Cref{alg_non_increasing} in the high privacy regime when the DD curve fluctuates significantly, we adjust \Cref{alg_non_increasing} to \Cref{alg_non_increasing_simulation} detailed below in practice.

\begin{algorithm}[!htbp]
\caption{\textsc{Non.increasing (numerical)}($\{g_i\}_{i \in 2^M+1}$, $\{\mu_s\}_{s\in[S]}$, $C_{\omega}$, $M$, $\eta$)}
\label{alg_non_increasing_simulation}
\begin{algorithmic}
  \Require Sequence of number  $\{g_i\}_{i \in 2^M+1}$, weight $\{\mu_s\}_{s\in [S]}$, constants $C_{\omega}$, binary tree layer $M$, tolerance $\eta$.

  \State Set the tolerance level 
    \begin{align*}
        \omega = C_{\omega} \sqrt{\sum_{s=1}^S\frac{\mu_s^2M^4\log(1/\delta_s)\log(M/\eta)}{\breve{n}_s^2\epsilon_s^2}}.
    \end{align*}
   \If{$g_i \geq \ldots \geq g_{2^M+1}$}
   \State \textbf{Break} and return $\{g_i\}_{i \in [2^M+1]}$.
   \Else
   \State Initialize $f_n = \min\{g_n+\omega,\,1\}$.
   \For{$i = n-1, n-2, \dots, 1$}
   \State Set $f_i \gets \min\bigl(\min\{g_i+\omega,\,1\},\; \max(\max\{g_i-\omega,\,0\},\; f_{i+1})\bigr)$.
   \EndFor
   \EndIf
   \Ensure $\{f_i\}_{i \in2^N+1}$.
\end{algorithmic}
\end{algorithm}
\subsection{Sensitivity analysis} \label{sec_appendix_sensitivity}
Under the same setup detailed in \Cref{sec_numerical_FDP}, we carry out sensitivity analysis for tuning parameters $C_{\omega}$ and $\rho$ in a setting where $S= 3$, $n_s = 2000$, $n_{\text{test}} = 2000$ and $\alpha =0.3$. Numerical experiments are collected in Figures~\ref{fig_sensitivity_rho} and~\ref{fig_sensitivity_Cw} below.

As detailed in \Cref{alg_fair_fdp}, $\rho$ determines the tolerance band used in selecting $\widetilde{\tau}$ through the optimisation problem $\widetilde{\tau} = \argmin_{\tau \in \mathcal{G}}\{|\tau|: |\widetilde{\DD}_{\downarrow}(\tau)| \in [\alpha-\rho, \alpha+\rho]\}$.
Since $\widetilde{\DD}{\downarrow}(\cdot)$ is, with high probability, a monotonically non-increasing function, choosing the feasible $\tau\in\mathcal{G}$ with the smallest magnitude tends to pick a threshold for which $\widetilde{\DD}{\downarrow}(\widetilde{\tau})$ lies near the upper end of the admissible band, i.e.~closer to $\alpha+\rho$. Consequently, a larger $\rho$ typically leads to a larger disparity of $\widetilde{f}$, potentially exceeding $\alpha$, yet remaining below $\alpha+\rho$. This behaviour is precisely illustrated in \Cref{fig_sensitivity_rho}.

In \Cref{alg_non_increasing_simulation}, $C_\omega$ controls the bias incurred when constructing a sequence that is as close to non-increasing as possible. Based on the numerical results in \Cref{fig_sensitivity_Cw}, the performance is not particularly sensitive to $C_\omega$ in terms of either misclassification error or empirical disparity. The slightly upward trend in the disparity plot is mainly because a larger $C_\omega$ increases the possibilities that the adjusted sequence is non-increasing, which can in turn lead to a slightly more accurate choice of $\widetilde{\tau}$. However, the corresponding effect on misclassification error is less significant.

\begin{figure}[!htbp]
    \centering
    \includegraphics[width=0.9\linewidth]{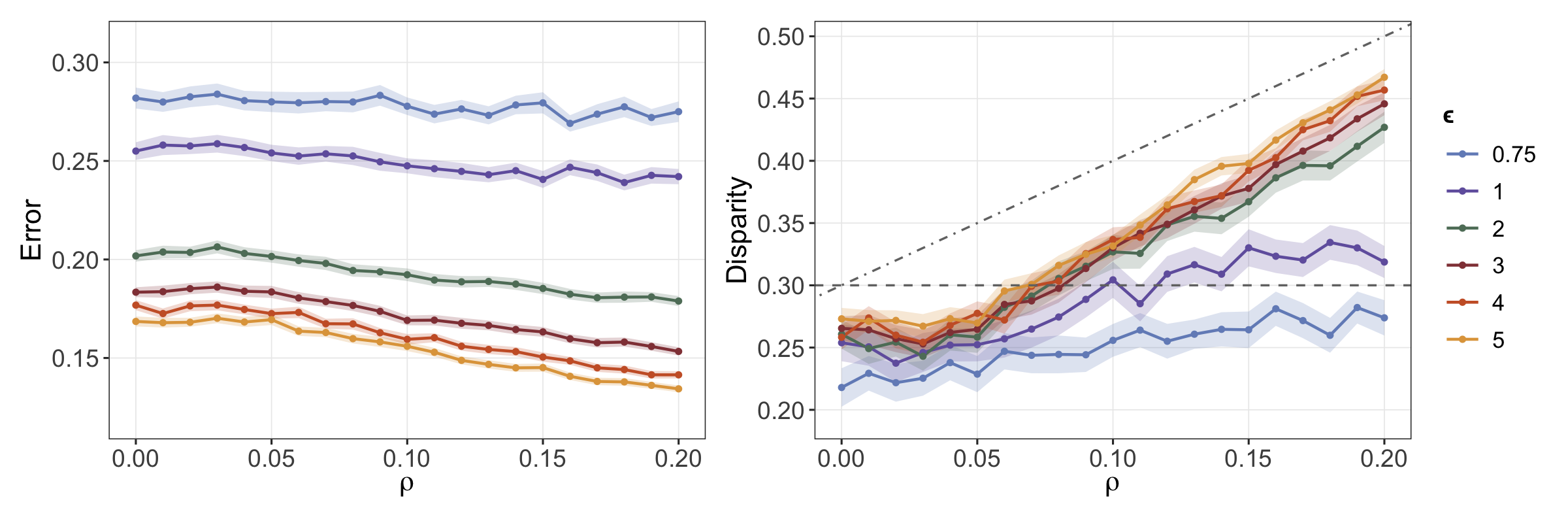}
    \caption{Sensitivity analysis for $\rho$. The grey dotted lines are $y =0.3$ and $y = 0.3+\rho$, respectively. }
    \label{fig_sensitivity_rho}
\end{figure}

\begin{figure}[!htbp]
    \centering
    \includegraphics[width=0.9\linewidth]{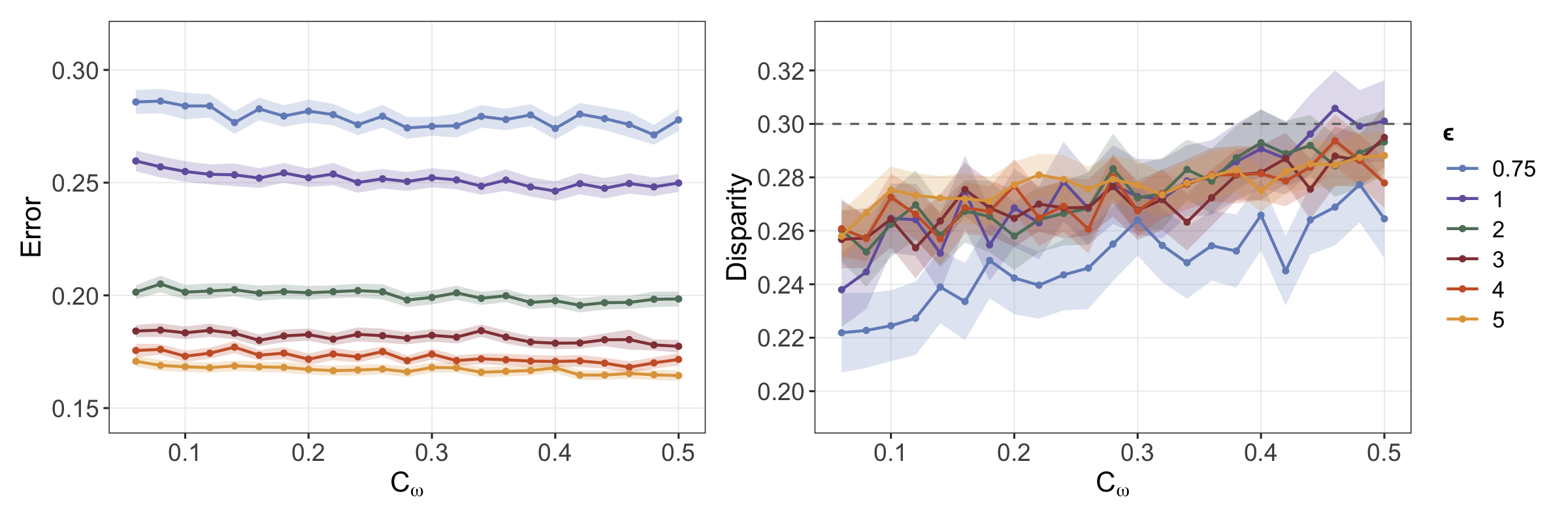}
    \caption{Sensitivity analysis for $C_{\omega}$. The grey dotted line represents $y =0.3$. }
    \label{fig_sensitivity_Cw}
\end{figure}

\section{Theoretical guarantee for Algorithm \ref{alg_fair_cdp}} \label{sec_appendix_cdp}
\subsection{Privacy guarantee}
\begin{proposition} \label{t_cdp_privacy_guarantee}
    \Cref{alg_fair_cdp} is $(\epsilon,\delta)$-CDP.
\end{proposition}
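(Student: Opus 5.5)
The plan exploits the sample splitting in \Cref{alg_fair_cdp}. Training data $\{(X_i,A_i,Y_i)\}_{i=1}^n$ and calibration data $\{(\breve X_i,\breve A_i,\breve Y_i)\}_{i=1}^{\breve n}$ are disjoint, so a neighbouring pair $\mathcal D\sim\mathcal D'$ differs in exactly one of the two halves. It therefore suffices to show two things. First, the output of \textbf{S1} (namely $\widetilde\pi_a$ and $\widetilde\eta_a$) is $(\epsilon,\delta)$-CDP with respect to the training half. Second, for any fixed values of $\widetilde\pi_a,\widetilde\eta_a$, the output $\widetilde\tau_{\DD,\alpha}$ of \textbf{S2} is $(\epsilon,\delta)$-CDP with respect to the calibration half. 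Parallel composition over disjoint data, together with post-processing (the final classifier is a deterministic function of $(\widetilde\pi_a,\widetilde\eta_a,\widetilde\tau_{\DD,\alpha})$), then gives $(\epsilon,\delta)$-CDP overall.

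For \textbf{S1} with $S=1$, I would treat the scalar and functional releases separately.

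For the scalars $\widetilde\pi_a$: changing one entry moves each empirical proportion by at most $1/n$, and both $a=0,1$ can move simultaneously, so the joint $\ell_2$-sensitivity is at most $\sqrt2/n$. The noise level $\sigma=4\sqrt{2\log(5/\delta)}/(n\epsilon)$ then gives, by the standard Gaussian mechanism, an $(\epsilon/2,\delta/2)$-type guarantee.

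For the functions: I would use the functional Gaussian mechanism of \citet{hall2013differential} for RKHS-valued releases. The key bound is on the RKHS sensitivity of the kernel-smoothed average in the RKHS of the covariance $K\{(\cdot-\cdot)/h\}$. Changing one point alters the sum $\sum_i K_h(X_i-\cdot)$ by $h^{-d}\{K((X-\cdot)/h)-K((X'-\cdot)/h)\}$, whose RKHS norm is at most $2h^{-d}\sqrt{C_K}$ by \Cref{a_kernel}\ref{a_kernel_bounded}. One also has to account for the denominator $n_{s,a}$ possibly changing if $A$ flips; I would either treat $n_{s,a}$ as bounded via the same argument (change of at most $1/n_{s,a}$ in the normalisation, uniformly bounded function) or absorb it into the constant 8 in the noise scale. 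The four Gaussian-process releases, over $k\in\{1,2\}$ and $a\in\{0,1\}$, are then combined by basic composition. The noise $8\sqrt{2C_K\log(8/\delta)}/(n_{s,a}\epsilon h^d)$ is calibrated so the total is $(\epsilon,\delta)$ after splitting the budget. Finally I combine the scalar and functional parts by composition, checking that the constants $5$ and $8$ inside the logarithms are what the split yields.

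For \textbf{S2}, condition on the training output. Write $\widehat{\DD}(\tau)$ for the noiseless part, so that $\widetilde{\DD}(\tau)=\widehat{\DD}(\tau)+\breve w$, with one scalar noise shared across all $\tau$. Changing one calibration point changes each group-wise average by at most $1/\breve n_a$, counting the denominator change (the averages of indicators move by at most $1/(\breve n_0\wedge\breve n_1)$ even if a point switches group). Thus $\sup_\tau|\widehat{\DD}(\tau)-\widehat{\DD}'(\tau)|\le 2/(\breve n_0\wedge\breve n_1)=:\Delta$, uniformly in $\tau$.

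This uniform sensitivity bound on the whole function is the main obstacle, because $\widetilde\tau$ depends on the entire curve. My approach is a coupling argument. Since a single scalar shift is added, the map $w\mapsto\widetilde\tau(\widehat{\DD}+w)$ is deterministic, and the output distribution is the pushforward of $N(0,\sigma^2)$. I would compare it with $\widehat{\DD}'+w$ as follows. The difference $g=\widehat{\DD}-\widehat{\DD}'$ is not constant in $\tau$, so a direct shift is unavailable. Instead, note that $\widetilde\tau$ is determined by where the monotone curve (\Cref{l_DD_non_increase}) crosses the levels $\pm\alpha$. Hence the event $\{\widetilde\tau\in B\}$ can be written through the values $w$ at which $\widehat{\DD}+w$ crosses $\pm\alpha$ within $B$. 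Pointwise, the crossing level of $\widehat{\DD}'$ differs by at most $\Delta$. I would show that for each $B$, the set $\{w:\widetilde\tau(\widehat{\DD}+w)\in B\}$ is sandwiched between shifts by $\pm\Delta$ of the corresponding set for $\widehat{\DD}'$. Then the one-dimensional Gaussian mechanism with sensitivity $\Delta$ and $\sigma=2\sqrt{2\log(1.25/\delta)}/(\{\breve n_0\wedge\breve n_1\}\epsilon)$ gives the $(\epsilon,\delta)$ inequality. If exact sandwiching fails because of ties or the case split at $\tau=0$, I would fall back on writing $\widetilde\tau$ as a threshold functional of monotone step functions and handling each case separately.
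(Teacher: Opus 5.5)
Your reductions are sound and agree with the paper. The index-based split means a neighbouring pair differs in only one half, the classifier is post-processing of $(\widetilde{\pi}_a,\widetilde{\eta}_a,\widetilde{\tau})$, and $\sup_\tau|\widehat{\DD}(\tau)-\widehat{\DD}'(\tau)|\le\Delta=2/(\breve{n}_0\wedge\breve{n}_1)$.

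The genuine gap is the sandwiching claim in \textbf{S2}. Monotonicity (\Cref{l_DD_non_increase}) gives two things. First, $\phi(w)=\widetilde{\tau}(\widehat{\DD}+w)$ is non-decreasing in $w$. Second, $\phi(w-\Delta)\le\phi'(w)\le\phi(w+\Delta)$, where $\phi'(w)=\widetilde{\tau}(\widehat{\DD}'+w)$. Hence preimages of half-lines $(-\infty,t]$ are half-lines whose endpoints move by at most $\Delta$. The scalar Gaussian mechanism then yields the $(\epsilon,\delta)$ inequality only for half-line events such as $\{\widetilde{\tau}\le t\}$.

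For general $B$, even short intervals, the claim is false. Whenever $\widetilde{\tau}\neq0$ is well defined, it is a jump point of the step function $\widehat{\DD}$. That is, it equals one of the calibration scores $2(2a-1)\widetilde{\pi}_a\{\widetilde{\eta}_a(\breve{X}_i)-1/2\}$.
\begin{itemize}
\item Let the replaced point $\breve{X}_1$ lie in group $0$ with score $z>0$, shared by no other point. Its term in $\widehat{\DD}$ is $-\breve{n}_0^{-1}\indc\{\tau\ge z\}$, so the drop at $z$ is attained.
\item Hence $\{w:\phi(w)=z\}$ is an interval of length $1/\breve{n}_0$, whereas $\{w:\phi'(w)=z\}=\emptyset$.
\item Choose the data so that $\widehat{\DD}(0)>\alpha$ and $\widehat{\DD}$ crosses level $\alpha$ at $z$. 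Given \textbf{S1}, $\mathbb{P}(\widetilde{\tau}=z\mid\breve{\mathcal{D}})\approx(\sqrt{2\pi}\,\breve{n}_0\sigma)^{-1}$, which is of order $\epsilon/\sqrt{\log(1/\delta)}$ when $\breve{n}_0\le\breve{n}_1$.
\item Meanwhile $\mathbb{P}(\widetilde{\tau}'=z\mid\breve{\mathcal{D}}')=0$.
\item The released classifier exposes this event, because its group-$0$ threshold is then exactly $\widetilde{\eta}_0(\breve{X}_1)$.
\end{itemize}
So no case analysis or tie-breaking can complete the step. With a single scalar noise shared by all $\tau$, the output has atoms at data-determined locations.

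For comparison, the paper's proof stops exactly where your argument is legitimate. It shows $\mathbb{P}(\widetilde{\tau}\le\tau\mid\mathcal{D},\breve{\mathcal{D}})\le e^{\epsilon}\mathbb{P}(\widetilde{\tau}'\le\tau\mid\mathcal{D},\breve{\mathcal{D}}')+\delta$ for every $\tau$. It uses the inclusions $\{\widetilde{\tau}\le\tau\}\subseteq\{\widetilde{\DD}(\tau)\le\alpha\}$ and $\{\widetilde{\DD}'(\tau)\le\alpha\}\subseteq\{\widetilde{\tau}'\le\tau\}$. It then asserts that releasing $\widetilde{\tau}$ is $(\epsilon,\delta)$-DP. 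A CDF-type inequality does not imply $(\epsilon,\delta)$-DP, so the paper does not supply your missing step either. By the example above, that step is false for this mechanism. A correct statement needs either a modified \textbf{S2} or a weaker conclusion. One option is fresh noise on each value released over a data-independent grid, composed as in the tree of \Cref{alg_bin_tree}. Another is a sparse-vector-type search with per-query noise.

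There is also a smaller accounting problem in \textbf{S1}. At the stated noise level each Gaussian-process release is exactly $(\epsilon/4,\delta/4)$ by \Cref{l_GM_func}, with RKHS sensitivity $2\sqrt{C_K}/(n_ah^d)$. Basic composition over the four pairs $(k,a)$ therefore already spends $(\epsilon,\delta)$, and adding $\widetilde{\pi}_0,\widetilde{\pi}_1$ gives $(3\epsilon/2,3\delta/2)$. The budget closes if, for each $k$, you treat the $a=0$ and $a=1$ releases as one:
\begin{itemize}
\item For $S=1$ the factor $1/n_a$ cancels in $\widetilde{\eta}_a$. You may therefore work with unnormalised kernel sums, whose noise scale is data-independent; this also removes your worry about the changing denominator.
\item Flipping $A$ removes one kernel section from one group and adds one to the other. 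The joint RKHS sensitivity is thus at most $\sqrt{2C_K}\,h^{-d}\le2\sqrt{C_K}\,h^{-d}$.
\end{itemize}
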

\begin{proof}
    We divide the proof into two steps, with the first step quantifying the sensitivity parameters used in Lemmas \ref{l_GM_univariate} and \ref{l_GM_func}, and the second step involving DP compositions. 
    
    \noindent \textbf{Step 1: Upper bound on sensitivity}
        In this step, we upper bound $\ell_2$ sensitivities for terms of interests in the calculation of $\widetilde{\pi}_a$, $\widetilde{p}_{X|A=a}$, $\widetilde{\eta}_a$ and $\widetilde{\DD}$. Without loss of generality, we consider the case when $A_1=a=1$ and $Y_1 =1$.  We further assume that only the first data point in the dataset changes i.e.~$\mathcal{D} = \{(X_i, A_i, Y_i)\}_{i \in [n]}$, $\mathcal{D}' = (X_1', A_1', Y_1') \cup  \{(X_i, A_i, Y_i)\}_{i \in [n]\backslash \{1\}}$. Similar notation is considered for the calibration data $\breve{\mathcal{D}}$ and $\breve{\mathcal{D}}'$. Five general cases are considered below, and other cases can be justified similarly:
    \begin{itemize}
        \item[] Case 1: $A_1 = A_1'$, $X_1 = X_1'$ and $Y_1 \neq Y_1'$;

        \item[] Case 2: $A_1 = A_1'$, $X_1 \neq X_1'$ and $Y_1 = Y_1'$;

        \item[] Case 3: $A_1 = A_1'$, $X_1 \neq X_1'$ and $Y_1 \neq Y_1'$;

        \item[] Case 4: $A_1 \neq A_1'$, $(X_1, Y_1) =  (X_1', Y_1')$;

        \item[] Case 5: $A_1 \neq A_1'$, $(X_1, Y_1) \neq (X_1', Y_1')$.
    \end{itemize} 

    \begin{itemize}
        \item For $\widetilde{\pi}_a$: 
        \begin{enumerate}
            \item Cases 1-3: Since $A_1 = A_1'$ in these cases, we have that $n_{a,0}+n_{a,1} = n'_{a,0}+n'_{a,1}$. Thus,
            \begin{align*}
                \Delta = \Big|\frac{n_{a,0}+n_{a,1}}{n} - \frac{n'_{a,0}+n'_{a,1}}{n}\Big| = 0.
            \end{align*}

            \item Cases 4 and 5: Since $A_1 \neq A_1'$ in these cases, we have that $|n_{a,0}+n_{a,1}- (n'_{a,0}+n'_{a,1})| =1$. Thus, 
            \begin{align*}
                \Delta = \Big|\frac{n_{a,0}+n_{a,1}}{n} - \frac{n'_{a,0}+n'_{a,1}}{n}\Big| = \frac{1}{n}.
            \end{align*}
        \end{enumerate}

        \item  For $\widetilde{p}_{X|A=a}$: The proof follows from a similar argument as the one on \citet{auddy2025minimax}. Denote $\|\cdot\|_{\mathcal{K}}$ the RKHS norm of the space given by linear combination $\{\sum_{i}\theta_iK\{(X_i-\cdot)/h\}: \theta_i \in \mathbb{R}\}$. We also rewrite
        \[\frac{1}{n_a}\sum_{y \in \{0,1\}}\sum_{i=1}^{n_{a,y}}K_{h}(X_{a,y}^i -x) = \frac{1}{n_a} \sum_{i=1}^n K_{h}(X_i -x)\indc\{A_i=a\}.\]
        \begin{enumerate}
            \item Cases 1-3: Since $A_1 = A_1'$ in these cases, we have that 
            \begin{align*}
                \Delta_{\mathcal{K}} =\;& \Big\|\frac{1}{n_a} \sum_{i=1}^n K_{h}(X_i -\cdot)\indc\{A_i=a\}-\frac{1}{n_a'} \sum_{i=1}^{n} K_{h}(X'_i -\cdot)\indc\{A'_i=a\}\Big\|_{\mathcal{K}}\\
                = \;& \frac{1}{n_a}\Big\|K_{h}(X_1 -\cdot) - K_{h}(X_1' -\cdot)\Big\|_{\mathcal{K}} \leq  \frac{2\sqrt{C_K}}{n_ah^d},
            \end{align*}
            the last inequality follows from \Cref{a_kernel}\ref{a_kernel_bounded}.

            \item Cases 4 and 5: Since $A_1 \neq A_1'$ in these two cases, we have that $n_a' = n_a -1$ and 
            \begin{align*}
                \Delta_{\mathcal{K}} = \;& \Big\|\frac{1}{n_a} \sum_{i=1}^n K_{h}(X_i -\cdot)\indc\{A_i=a\}-\frac{1}{n_a-1} \sum_{i=1}^{n} K_{h}(X'_i -\cdot)\indc\{A'_i=a\}\Big\|_{\mathcal{K}}\\
                \leq \;&  \frac{1}{n_a-1}\Big(1-\frac{n_a-1}{n_a}\Big)\Big\|\sum_{i=2}^n K_{h}(X_i -\cdot)\indc\{A_i=a\}\Big\|_{\mathcal{K}}\\
                &+ \frac{1}{n_a}\Big\|K_{h}(X_1 -\cdot)\Big\|_{\mathcal{K}}\\
                \leq \;& \frac{2\sqrt{C_K}}{n_ah^d}. 
            \end{align*}            
        \end{enumerate}

        \item For $\widetilde{\eta}_a$: The proof is very similar to the proof of $\widetilde{p}_{X|A=a}$. Rewrite
        \begin{align*}
            \frac{1}{n_a}\sum_{i=1}^{n_{a,1}} K_{h}(X_{a,1}^i -x) = \frac{1}{n_a }\sum_{i=1}^n
            Y_iK_h(X_i -x)\indc\{A_i=1\}.
        \end{align*}
        \begin{enumerate}
            \item Cases 1 and 3: In these two cases, we have that $A_1 =A_1' =1$ and $1= Y_1 \neq Y_1' =0$. Thus, 
            \begin{align*}
                \Delta_{\mathcal{K}} =\;& \Big\|\frac{1}{n_a}\sum_{i=1}^n
            Y_iK_h(X_i -\cdot)\indc\{A_i=1\} -\frac{1}{n_a}\sum_{i=1}^n
            Y'_iK_h(X'_i -\cdot)\indc\{A'_i=1\}\Big\|_{\mathcal{K}}\\
            = \;& \frac{1}{n_a}\Big\|Y_1K_h(X_1 -\cdot)\Big\|_{\mathcal{K}} \leq \frac{\sqrt{C_K}}{n_ah^d}.
            \end{align*}

            \item Case 2: In this case, we have that $A_1 =A_1' =1$ and $Y_1 = Y_1' =1$. Thus,
            \begin{align*}
                \Delta_{\mathcal{K}} = \frac{1}{n_a}\Big\|Y_1K_h(X_1 -\cdot)- Y'_1K_h(X'_1 -\cdot)\Big\|_{\mathcal{K}} \leq \frac{2\sqrt{C_K}}{n_ah^d}.
            \end{align*}

            \item Cases 4 and 5:  we have that $1= A_1 \neq A_1' =0$. Thus,
            \begin{align*}
                \Delta_{\mathcal{K}} = \;& \Big\|\frac{1}{n_a}\sum_{i=1}^n
            Y_iK_h(X_i -\cdot)\indc\{A_i=1\} -\frac{1}{n_a-1}\sum_{i=1}^n
            Y'_iK_h(X'_i -\cdot)\indc\{A'_i=1\}\Big\|_{\mathcal{K}}\\
            =\;& \frac{1}{n_a(n_a-1)}\Big\|\sum_{i=2}^n
            Y_iK_h(X_i -\cdot)\indc\{A_i=1\}\Big\|_{\mathcal{K}} + \frac{1}{n_a}\Big\|Y_1K_h(X_1 -\cdot)\Big\|_{\mathcal{K}}\\
            \leq\;& \frac{2\sqrt{C_K}}{n_ah^d}.
            \end{align*}
        \end{enumerate}

        \item For $\widetilde{\DD}(\tau)$: Rewrite
        \begin{align*}
            &\frac{1}{\breve{n}_1}\sum_{y\in\{0,1\}}\sum_{i=1}^{\breve{n}_{1,y}}\indc\Big\{\widetilde{\eta}_1(\breve{X}^i_{1,y}) \geq \frac{1}{2}+\frac{\tau}{2\widetilde{\pi}_1}\Big\} -\frac{1}{\breve{n}_0}\sum_{y\in\{0,1\}}\sum_{i=1}^{\breve{n}_{0,y}}\indc\Big\{\widetilde{\eta}_0(\breve{X}^i_{0,y}) \geq \frac{1}{2}-\frac{\tau}{2\widetilde{\pi}_0}\Big\}\\
            =\;& \frac{1}{\breve{n}_1}\sum_{i=1}^{\breve{n}}\indc\Big\{\widetilde{\eta}_1(\breve{X}_i) \geq \frac{1}{2}+\frac{\tau}{2\widetilde{\pi}_1}\Big\} \indc\{A_i=1\} -\frac{1}{\breve{n}_0}\sum_{i=1}^{\breve{n}}\indc\Big\{\widetilde{\eta}_0(\breve{X}_i) \geq \frac{1}{2}-\frac{\tau}{2\widetilde{\pi}_0}\Big\}\indc\{A_i=0\}.
        \end{align*}
        \begin{enumerate}
            \item Cases 1-3: In these three cases, we have that $A_1 = A_1' =1$, therefore, it holds that 
            \begin{align*}
                \Delta = \frac{1}{\breve{n}_1}\Big|\indc\Big\{\widetilde{\eta}_1(\breve{X}_1) \geq \frac{1}{2}+\frac{\tau}{2\widetilde{\pi}_1}\Big\} - \indc\Big\{\widetilde{\eta}_1(\breve{X}_1') \geq \frac{1}{2}+\frac{\tau}{2\widetilde{\pi}_1}\Big\}\Big| \leq \frac{1}{\breve{n}_1}.
            \end{align*}

            \item  Cases 4-6: In these three cases, we have that $1 =A_1 \neq A_1' =0$. Therefore, we have that $\breve{n}_1' = \breve{n}_1-1$ and $\breve{n}_0' = \breve{n}_0+1$. Thus, 
            \begin{align*}
                \Delta \leq\;& \Big|\frac{1}{\breve{n}_1(\breve{n}_1-1)}\sum_{i=2}^n\indc\Big\{\widetilde{\eta}_1(\breve{X}_i) \geq \frac{1}{2}+\frac{\tau}{2\widetilde{\pi}_1}\Big\} \indc\{A_i=1\}\\
                & - \frac{1}{\breve{n}_0(\breve{n}_0+1)}\Big|\sum_{i=2}^{n}\indc\Big\{\widetilde{\eta}_0(\breve{X}_i) \geq \frac{1}{2}-\frac{\tau}{2\widetilde{\pi}_0}\Big\}\indc\{A_i=0\}\Big|\\
                & + \Big|\frac{1}{\breve{n}_1}\indc\Big\{\widetilde{\eta}_1(\breve{X}_1) \geq \frac{1}{2}+\frac{\tau}{2\widetilde{\pi}_1}\Big\} - \frac{1}{\breve{n}_0+1}\indc\Big\{\widetilde{\eta}_0(\breve{X}'_1) \geq \frac{1}{2}-\frac{\tau}{2\widetilde{\pi}_0}\Big\}\Big|\\
                \leq \;& \frac{1}{\breve{n}_1} + \frac{1}{\breve{n}_0+1} \leq \frac{2}{\breve{n}_0 \wedge \breve{n}_1}.
            \end{align*}
        \end{enumerate}
        
    \end{itemize}

    \noindent \textbf{Step 2: DP guarantee.} In this step we will show that the final output is $(\epsilon,\delta)$-DP.
    
    \noindent \textbf{Step 2-1: Privacy guarantee for each estimator.} By \textbf{Step 1} and Lemmas \ref{l_GM_univariate} and \ref{l_GM_func}, it holds that $\widetilde{\pi}_0$ and $\widetilde{\pi}_1$ are $(\epsilon/4,\delta/4)$-DP each; $\widetilde{p}_{X|A=0}$ and $\widetilde{p}_{X|A=1}$ are $(\epsilon/4,\delta/4)$-DP each; $\widetilde{\eta}_0$ and $\widetilde{\eta}_1$ are $(\epsilon/4,\delta/4)$-DP each; and $\widetilde{\DD}(\tau)$ is $(\epsilon,\delta)$-DP for a given $\tau \in \mathbb{R}$. To show the privacy guarantee for $\widetilde{\tau}$, note that for any $\tau \geq 0$, we have that
    \begin{align*}
         \mathbb{P}(\widetilde{\tau} \leq \tau|\mathcal{D}, \breve{\mathcal{D}}) \leq \mathbb{P}(\widetilde{\DD}(\tau) \leq \alpha|\mathcal{D}, \breve{\mathcal{D}}) \leq e^\epsilon\mathbb{P}(\widetilde{\DD}'(\tau) \leq \alpha|\mathcal{D}, \breve{\mathcal{D}}')+ \delta,
    \end{align*}
    where the first inequality follows from the fact that $\widetilde{\DD}(\tau)$ is a non-increasing function as shown in \Cref{l_DD_non_increase} and $\widetilde{\DD}(\widetilde{\tau}) \leq \alpha$, the second inequality follows as $\widetilde{\DD}(\tau)$ is $(\epsilon,\delta)$-DP by the Gaussian mechanism. Consider the following set: $\mathcal{A} = \{t\geq 0: \widetilde{\DD}'(t) \leq \alpha\}$. In the case when $\widetilde{\DD}'(\tau) \leq \alpha$, we have $\tau \in \mathcal{A}$ and $\mathcal{A}$ is non-empty. Define $\tau_{\mathcal{A}} = \inf \mathcal{A}$ and we have $\tau_{\mathcal{A}} \leq \tau$. Recall, by construction, we have that $\widetilde{\tau}' = \argmin_{t \in \mathbb{R}} \{|t| :  |\widetilde{\DD}'(t)| \leq \alpha \}$, 
    which is the first time the fairness regime is hit. Consequently, we have that $\widetilde{\tau}' \leq \tau_{\mathcal{A}}$, and further $\{\widetilde{\DD}'(\tau) \leq \alpha\} \subseteq \{\widetilde{\tau}' \leq \tau\}$. Thus,
    \begin{align*}
         \mathbb{P}(\widetilde{\tau} \leq \tau|\mathcal{D}, \breve{\mathcal{D}})  \leq\;&  e^\epsilon\mathbb{P}(\widetilde{\DD}'(\tau) \leq \alpha|\mathcal{D}, \breve{\mathcal{D}}')+ \delta\\
         \leq\;& e^\epsilon \mathbb{P}(\widetilde{\tau}' \leq  \tau |\mathcal{D}, \breve{\mathcal{D}}')+ \delta.
    \end{align*}
    By a similar and even simpler argument, for any $\tau <0$, we also have that 
    \begin{align*}
        \mathbb{P}(\widetilde{\tau} \leq \tau|\mathcal{D}, \breve{\mathcal{D}}) \leq \;& \mathbb{P}(\widetilde{\DD}(\tau) \leq -\alpha|\mathcal{D}, \breve{\mathcal{D}})\\
        \leq \;& e^\epsilon\mathbb{P}(\widetilde{\DD}'(\tau) \leq -\alpha|\mathcal{D}, \breve{\mathcal{D}}')+ \delta \\
        \leq \;& e^\epsilon\mathbb{P}(\widetilde{\tau}' \leq \tau|\mathcal{D}, \breve{\mathcal{D}}') + \delta,
    \end{align*}
    where the last inequality follows since $\widetilde{\DD}'$ is a non-increasing function and the fact that $\widetilde{\DD}'(\tau) \leq -\alpha$ implies $\widetilde{\DD}'(t) \leq -\alpha$ for all $t \geq \tau$, hence $\widetilde{\tau}' \leq \tau$. Thus, combining both cases together, we show that the release of $\widetilde{\tau}$ is $(\epsilon, \delta)$-DP.

    \noindent\textbf{Step 2-2: DP composition.}
    In \textbf{S1} of \Cref{alg_fair_cdp}, since $\widetilde{p}_{X|A=1}$ and $\widetilde{p}_{X|A=0}$ are computed using disjoint dataset, by parallel composition, if each of them satisfies $(\epsilon/4, \delta/4)$-DP, then both of them together are guaranteed to be $(\epsilon/4, \delta/4)$-DP. A similar idea follows for $\widetilde{\eta}_1$ and  $\widetilde{\eta}_0$. Therefore, together with the fact that  $\widetilde{\pi}_0$ and $\widetilde{\pi}_1$ are each $(\epsilon/4, \delta/4)$-DP, by \Cref{l_dp_composition}, we can show that \textbf{S1} is $(\epsilon,\delta)$-CDP. In \textbf{S2}, we use an independent calibration dataset; thus, by the parallel composition property of DP \citep[e.g.][]{smith2021making}, we have that \Cref{alg_fair_cdp} is $(\epsilon,\delta)$-DP.
\end{proof}

\subsection{Disparity control}
\begin{proposition}[Fairness guarantee] \label{thm_cdp_fair_guarantee}
    For any $\alpha >0$, let $\widetilde{f}_{\DD, \alpha}$ denote the classifier output by \Cref{alg_fair_cdp}, we have, for any $\eta \in (0,1/2)$,~that
    \begin{equation*}
        \mathbb{P}\Big\{|\DD(\widetilde{f}_{\DD, \alpha})|\leq \alpha + C_1\sqrt{\frac{\log(1/\eta)}{\breve{n}}} + C_2\sqrt{\frac{\log(1/
        \eta)\log(1/\delta)}{\breve{n}^2\epsilon^2}}\Big\} \geq 1-\eta,
    \end{equation*}
    where $C_1, C_2>0$ are absolute constants and the probability in $\DD(\widetilde{f}_{\DD, \alpha})$ is taken over the test sample conditioning on training data and $\mathbb{P}$ is taken over the training data.
\end{proposition}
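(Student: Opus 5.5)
Throughout, I condition on the training data $\mathcal{D}$, so that $\widetilde{\eta}_a$ and $\widetilde{\pi}_a$ are fixed functions. Only the calibration sample $\breve{\mathcal{D}}$ and the noise $\breve{w}$ are random. Write the population disparity of the plug-in rule at threshold $\tau$ as
\[
\DD(\tau) = \mathbb{P}\Big\{\widetilde{\eta}_1(X) \geq \tfrac12+\tfrac{\tau}{2\widetilde{\pi}_1}\,\Big|\,A=1\Big\} - \mathbb{P}\Big\{\widetilde{\eta}_0(X)\geq \tfrac12-\tfrac{\tau}{2\widetilde{\pi}_0}\,\Big|\,A=0\Big\}.
\]
Then $\DD(\widetilde{f}_{\DD,\alpha}) = \DD(\widetilde{\tau})$. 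Let $\widehat{\DD}$ be the empirical version on $\breve{\mathcal{D}}$, so that $\widetilde{\DD} = \widehat{\DD} + \breve{w}$. The plan is to use the decomposition
\[
|\DD(\widetilde{\tau})| \leq |\widetilde{\DD}(\widetilde{\tau})| + \sup_{\tau\in\mathbb{R}}|\widehat{\DD}(\tau)-\DD(\tau)| + |\breve{w}|
\]
and bound each term separately.

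\textbf{First term.} I would show $|\widetilde{\DD}(\widetilde{\tau})|\le\alpha$ deterministically.
\begin{itemize}
\item $\widehat{\DD}$ is a non-increasing step function by \Cref{l_DD_non_increase}.
\item For $\tau$ large it equals $-1$, since $\widetilde{\eta}_1\le$ some bound and the first count is $0$ while the second count is $1$. For $\tau$ very negative it equals $+1$.
\item Hence, once $\alpha$ exceeds the noise, the feasible set is nonempty. If $|\breve{w}|$ is so large that the set is empty, this happens only on an event of probability below $\eta$; it can be absorbed there, or one can use the convention that the output threshold still attains the bound up to $|\breve w|$.
\item The indicator sets are closed in the sense $\{\cdot\ge\cdot\}$, so the empirical step function is right/left continuous appropriately and the argmin is attained with $|\widetilde{\DD}(\widetilde{\tau})|\le \alpha$. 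If attainment fails, I will take $\widetilde\tau$ as a limit point and pay nothing, since jump issues are only of order $1/\breve n$.
\end{itemize}

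\textbf{Second term (uniform deviation).} This is a DKW-type bound. For each $a$, the map $\tau\mapsto \breve{n}_a^{-1}\sum_i \indc\{\widetilde{\eta}_a(\breve X_i)\ge \tfrac12 \pm \tfrac{\tau}{2\widetilde\pi_a}\}$ is an empirical survival (or distribution) function of the scalar variables $\widetilde{\eta}_a(\breve{X}_i)$, which are i.i.d.\ given $A_i=a$ and $\breve{n}_a$. By the Dvoretzky--Kiefer--Wolfowitz inequality with Massart's constant, conditionally on $\breve{n}_a$ the supremum is at most $\sqrt{\log(4/\eta)/(2\breve{n}_a)}$ with probability $1-\eta/4$. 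A Chernoff/Hoeffding bound with \Cref{a_prob}\ref{a_prob_pi_a} then gives $\breve{n}_a\ge C_\pi\breve{n}/2$ with probability $1-\eta/4$, provided $\breve n\gtrsim\log(1/\eta)$; otherwise the bound is trivial, since $C_1\sqrt{\log(1/\eta)/\breve n}\ge 2$ for suitable $C_1$. Together these give $C_1\sqrt{\log(1/\eta)/\breve{n}}$.

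\textbf{Third term (noise).} The Gaussian tail bound gives $|\breve w|\le \sigma\sqrt{2\log(4/\eta)}$ with probability $1-\eta/4$. Here $\sigma = 2\sqrt{2\log(1.25/\delta)}/((\breve n_0\wedge\breve n_1)\epsilon)$, and on the event $\breve{n}_a\gtrsim\breve n$ this yields $C_2\sqrt{\log(1/\eta)\log(1/\delta)/(\breve n^2\epsilon^2)}$.

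A union bound over these events completes the argument. The main obstacle is the first step: establishing existence and attainment of the argmin with $|\widetilde{\DD}(\widetilde\tau)|\le\alpha$, given the ties and discontinuities of the empirical step function and the possibility that the noise pushes the curve out of reach. I expect to handle this by working on the high-probability event $|\breve w|<\alpha$, or else noting that the bound is vacuous there, and by carefully checking the one-sided continuity of $\widehat{\DD}$.
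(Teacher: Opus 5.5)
Your proposal is correct and follows essentially the same route as the paper. Conditional on the training estimators, both use a DKW bound uniform in $\tau$ for each group, a Gaussian tail bound on $\breve{w}$, the event $\breve{n}_a \gtrsim \breve{n}$, and a union bound, all chained through the triangle inequality and $|\widetilde{\DD}(\widetilde{\tau})| \leq \alpha$. The paper simply takes $|\widetilde{\DD}(\widetilde{\tau})|\le\alpha$ ``by construction'' and never discusses whether the feasible set is nonempty or whether the argmin is attained. Your handling of these points goes beyond the paper: you treat an empty feasible set as the case where the bound is vacuous, and you pay at most one jump of the empirical step function. Note that this jump is $O(1/\breve{n})$ only when there are no ties among the $\widetilde{\eta}_a(\breve{X}_i)$, so your ``pay nothing'' phrasing should read ``pay one jump''.
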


\begin{proof}
   With a slight abuse of notation, in this proof, denote $\mathcal{D} = \{\widetilde{\eta}_0, \widetilde{\eta}_1, \widetilde{\pi}_0, \widetilde{\pi}_1\}$. Conditioning on $\mathcal{D}$, by the Dvoretzky--Kiefer--Wolfowitz inequality \citep{dvoretzky1956asymptotic,massart1990tight}, we have, for any $a \in \{0,1\}$ and $t >0$,
   \begin{align*}
       \mathbb{P}\Big[\sup_{\tau \in \mathbb{R}}\Big|&\frac{1}{\breve{n}_a}\sum_{y\in\{0,1\}}\sum_{i=1}^{\breve{n}_{a,y}}\indc\Big\{\widetilde{\eta}_a(\breve{X}^i_{a,y}) \geq\frac{1}{2}+\frac{\tau(2a-1)}{2\widetilde{\pi}_a}\Big\}\\
       & - \mathbb{P}_{X|A=a}\Big\{\widetilde{\eta}_a(X) \geq\frac{1}{2}+\frac{\tau(2a-1)}{2\widetilde{\pi}_a}\Big|\mathcal{D}\Big\}\Big| \geq t \Big|\mathcal{D}\Big] \leq C_1\exp(-2\breve{n}_at^2).
   \end{align*}
   Taking another expectation over $\mathcal{D}$, we have by the Tower property that 
   \begin{align}
       \mathbb{P}\Big[\sup_{\tau \in \mathbb{R}}\Big|&\frac{1}{\breve{n}_a}\sum_{y\in\{0,1\}}\sum_{i=1}^{\breve{n}_{a,y}}\indc\Big\{\widetilde{\eta}_a(\breve{X}^i_{a,y}) \geq\frac{1}{2}+\frac{\tau(2a-1)}{2\widetilde{\pi}_a}\Big\} \nonumber \\
       & - \mathbb{P}_{X|A=a}\Big\{\widetilde{\eta}_a(X) \geq\frac{1}{2}+\frac{\tau(2a-1)}{2\widetilde{\pi}_a}\Big|\mathcal{D}\Big\}\Big| \geq t \Big] \leq C_1\exp(-2\breve{n}_at^2). \label{eq-fair-control-cdp-proof-1}
   \end{align}
   Additionally, by standard Gaussian tail properties \citep[e.g.~Proposition 2.1.2 in][]{vershynin2018high}, we have with probability at least $1-\eta/6$ that 
   \begin{align*}
       |\breve{w}| \leq \frac{C_2\sqrt{\log(1/\delta)\log(1/\eta)}}{\{\breve{n}_0\wedge \breve{n}_1\}\epsilon}.
   \end{align*}
   Consequently, taking $t = C_3 \sqrt{\log(1/\eta)/\{\breve{n}_0\wedge \breve{n}_1\}}$ and by a union bound argument, we have with probability at least $1-\eta/2$ that 
   \begin{align*}
       &|\DD(\widetilde{f}_{\DD, \alpha})| \\
       =\;& \Big|\mathbb{P}_{X|A=1}\Big\{\widetilde{\eta}_1(X) \geq\frac{1}{2}+\frac{\widetilde{\tau}_{\DD,\alpha}}{2\widetilde{\pi}_1}\Big|\mathcal{D}\Big\}- \mathbb{P}_{X|A=0}\Big\{\widetilde{\eta}_0(X) \geq\frac{1}{2}-\frac{\widetilde{\tau}_{\DD,\alpha}}{2\widetilde{\pi}_0}\Big|\mathcal{D}\Big\}\Big|\\
       \leq \;& \Big|\frac{1}{\breve{n}_1}\sum_{y\in\{0,1\}}\sum_{i=1}^{\breve{n}_{1,y}}\indc\Big\{\widetilde{\eta}_1(\breve{X}^i_{1,y}) \geq \frac{1}{2}+\frac{\widetilde{\tau}_{\DD,\alpha}}{2\widetilde{\pi}_1}\Big\} -\frac{1}{\breve{n}_0}\sum_{y\in\{0,1\}}\sum_{i=1}^{\breve{n}_{0,y}}\indc\Big\{\widetilde{\eta}_0(\breve{X}^i_{0,y}\geq \frac{1}{2}-\frac{\widetilde{\tau}_{\DD,\alpha}}{2\widetilde{\pi}_0}\Big\}\Big| \\
       &+ C_4\sqrt{\frac{\log(1/\eta)}{\breve{n}_0\wedge \breve{n}_1}}\\
       \leq \;& |\widetilde{\DD}(\widetilde{\tau}_{\DD,\alpha})| + |\breve{w}| +C_4\sqrt{\frac{\log(1/\eta)}{\breve{n}_0\wedge \breve{n}_1}}\\
       \leq \;& \alpha + C_4\sqrt{\frac{\log(1/\eta)}{\breve{n}_0\wedge \breve{n}_1}} + \frac{C_2\sqrt{\log(1/\delta)\log(1/\eta)}}{\{\breve{n}_0\wedge \breve{n}_1\}\epsilon},
   \end{align*}
   where the first inequality follows from the triangle inequality and \eqref{eq-fair-control-cdp-proof-1}, the second inequality follows from the triangle inequality and the definition of $\widetilde{\DD}$, and the last inequality follows from the construction in \textbf{S2} of \Cref{alg_fair_cdp}. The final result holds by a further union bound argument with the event $\{\breve{n}_1 \asymp \breve{n}_2 \asymp \breve{n}\}$ in \Cref{l_n_a_n_same_order}.
   
\end{proof}

\subsection{Excess risk control} \label{sec_appendix_excess_risk_cdp}
\begin{theorem} \label{t_fair_risk}
    Suppose Assumptions \ref{a_prob}, \ref{a_kernel} and \ref{a_posterior} holds. Denote
    \[\rho = C_1\Big[\Big\{\sqrt{\frac{\log\{(h^{-d}\vee n)/\eta\}}{nh^d}}+\sqrt{\frac{\log(1/\delta)\log(1/
        \eta)}{n^2\epsilon^2 h^{2d}}}+h^\beta\Big\}^\gamma +\sqrt{\frac{\log(1/\eta)}{\breve{n}}}+\sqrt{\frac{\log(1/\delta)\log(1/\eta)}{\breve{n}^2\epsilon^2}}\Big],\]
    and we further assume that $\DD(0) \notin [\alpha - \zeta, \alpha] \cup [-\alpha,-\alpha+\zeta]$ where $\rho \leq \zeta < 
    \alpha$. Then for any $\min\{n, \breve{n}\} \geq 4C_1^2\log(1/\eta)/C_\pi^2$ and absolute constants $C_1, \ldots, C_7 >0$, the following holds.
    \begin{enumerate} 
        \item  \label{t_fair_risk_1}For any $\eta \in (0,1/2)$ such that
        \[nh^d \geq C_2\log\{(h^{-d}\vee n)/\eta\}, \; n^2\epsilon^2 h^{2d} \geq C_3\log(1/\delta)\log(1/\eta), \; \breve{n}\epsilon^2 \geq C_4\log(1/\delta)\log(1/\eta),\]
        then it holds with probability at least $1-\eta$ that
        \begin{align*}
            d_{\mathrm{fair}}(\widetilde{f}_{\DD,\alpha}, f^*_{\DD,\alpha}) \leq\;& C_5\Big[\sqrt{\frac{\log\{(h^{-d}\vee n)/\eta\}}{nh^d}}+\sqrt{\frac{\log(1/\delta)\log(1/
        \eta)}{n^2\epsilon^2 h^{2d}}}+h^\beta \\
            &\hspace{0.5cm} + \indc\{\tau^*_{\DD,\alpha}\neq 0\}\cdot\Big\{\sqrt{\frac{\log(1/\eta)}{\breve{n}}}+\sqrt{\frac{\log(1/\delta)\log(1/\eta)}{\breve{n}^2\epsilon^2}}\Big\}^{1/
    \gamma}\Big]^{1+\gamma}.
        \end{align*}

        \item \label{t_fair_risk_2} Additionally, if $n\asymp \breve{n} \asymp N$ up to poly-logarithmic factors, then
        \begin{enumerate}
            \item In the fairness-impacted regime when $\tau^*_{\DD,\alpha}\neq 0$, set the bandwidth as
            \[h= C_6\{N^{-\frac{1}{2\beta+d}} \vee (N^2\epsilon^2)^{-\frac{1}{2\beta+2d}} \vee N^{-\frac{1}{2\gamma\beta}}   \vee (N^2\epsilon^2)^{-\frac{1}{2\gamma \beta}}\},\]
            it then holds that 
            \begin{align*}
                d_{\mathrm{fair}}(\widetilde{f}_{\DD,\alpha}, f^*_{\DD,\alpha}) = O_{\mathrm{p}}\Big\{N^{-\frac{\beta(1+\gamma)}{2\beta+d}} \vee (N^2\epsilon^2)^{-\frac{\beta(1+\gamma)}{2\beta+2d}} \vee N^{-\frac{1+\gamma}{2\gamma}} \vee (N^2\epsilon^2)^{-\frac{1+\gamma}{2\gamma}}\Big\},
            \end{align*}
            and 
            \begin{align*}
                |R(\widetilde{f}_{\DD,\alpha})-R(f^*_{\DD,\alpha})| \leq d_{\mathrm{fair}}(\widetilde{f}_{\DD,\alpha}, f^*_{\DD,\alpha}) +|\tau^*_{\DD,\alpha}|\cdot O_{\mathrm{p}}\{N^{-\frac{1}{2}} \vee (N^2\epsilon^2)^{-1}\}.
            \end{align*}

            \item In the automatically fair regime when $\tau^*_{\DD,\alpha} = 0$, pick the bandwidth
            \[h = C_7\{N^{-\frac{1}{2\beta+d}} \vee (N^2\epsilon^2)^{-\frac{1}{2\beta+2d}}\},\]
            then it holds that 
            \begin{align*}
                R(\widetilde{f}_{\DD,\alpha})-R(f^*_{\DD,\alpha}) = d_{\mathrm{fair}}(\widetilde{f}_{\DD,\alpha}, f^*_{\DD,\alpha}) = O_{\mathrm{p}}\Big\{N^{-\frac{\beta(1+\gamma)}{2\beta+d}} \vee (N^2\epsilon^2)^{-\frac{\beta(1+\gamma)}{2\beta+2d}}\Big\}.
            \end{align*}
        \end{enumerate}
    \end{enumerate}
\end{theorem}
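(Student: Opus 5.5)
The plan is to reduce the excess risk to a margin-type bound, via a uniform estimation error for the plug-in score plus an estimation error for the threshold. Write $\widetilde{T}_a(\tau)=1/2+\tau(2a-1)/(2\widetilde{\pi}_a)$ and $T_a(\tau)$ as in \Cref{a_posterior}\ref{a_posterior_margin}.

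\textbf{Step 1: uniform accuracy of the first-stage estimators.} On a high-probability event I will show $|\widetilde{\pi}_a-\pi_a|\lesssim\sqrt{\log(1/\eta)/n}+\sqrt{\log(1/\delta)\log(1/\eta)}/(n\epsilon)$. For this I use Hoeffding and a Gaussian tail, together with \Cref{l_n_a_n_same_order} for $n_a\asymp n$. I will also show
\[\sup_x|\widetilde{\eta}_a(x)-\eta_a(x)|\le r_n:=C\Big\{\sqrt{\tfrac{\log\{(h^{-d}\vee n)/\eta\}}{nh^d}}+\sqrt{\tfrac{\log(1/\delta)\log(1/\eta)}{n^2\epsilon^2h^{2d}}}+h^\beta\Big\}.\]
The second bound is proved in three parts:
\begin{itemize}
\item the stochastic term comes from a standard uniform KDE bound (Lipschitz kernel, a grid of size $h^{-d}\vee n$, Bernstein and a union bound);
\item the bias term comes from \Cref{a_kernel}\ref{a_kernel_adaptive} applied to $p_{X|A=a}$ and $\eta_ap_{X|A=a}$;
\item the noise term comes from the supremum of the Gaussian process $W$, bounded by Dudley's entropy integral (it is $O(\sqrt{\log(1/\eta)})$ since $K$ is Lipschitz), times the scale $\sqrt{\log(1/\delta)}/(n\epsilon h^d)$.
\end{itemize}
The ratio is then handled using $p_{X|A=a}\ge C_p$ and the sample-size conditions, which keep the denominator bounded below.

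\textbf{Step 2: threshold accuracy.} Define $\widehat{\DD}_{\mathrm{pop}}(\tau)=\mathbb{P}_{X|A=1}\{\widetilde\eta_1\ge\widetilde T_1(\tau)\}-\mathbb{P}_{X|A=0}\{\widetilde\eta_0\ge \widetilde T_0(\tau)\}$ conditionally on training data. I compare it with $\DD(\tau)$. Because $\widetilde\eta_a$ is uniformly $r_n$-close to $\eta_a$ and $\widetilde T_a$ is close to $T_a$, each difference of indicators lies in a band $|\eta_a-T_a(\tau)|\lesssim r_n$. The margin condition then gives $\sup_{|\tau-\tau^*|\le\varpi}|\widehat{\DD}_{\mathrm{pop}}-\DD|\lesssim r_n^\gamma$.

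Adding the DKW and Gaussian-noise bounds from the proof of \Cref{thm_cdp_fair_guarantee} yields $\sup|\widetilde{\DD}-\DD|\lesssim\rho$ on the relevant range. Next I split into two cases.
\begin{itemize}
\item If $\tau^*=0$, then $|\DD(0)|\le\alpha-\zeta$, and the assumption $\zeta\ge\rho$ forces $|\widetilde{\DD}(0)|\le\alpha$. Hence $\widetilde\tau=0$.
\item If $\tau^*\ne0$, then $|\DD(\tau^*)|=\alpha$. Monotonicity (\Cref{l_DD_non_increase}) gives $|\widetilde{\DD}(\widetilde\tau)|=\alpha$ up to jumps. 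The steepness condition \Cref{a_posterior}\ref{a_posterior_disparity} then gives $c_m|\widetilde\tau-\tau^*|^\gamma\le|\DD(\widetilde\tau)-\DD(\tau^*)|\lesssim\rho$, so $|\widetilde\tau-\tau^*|\lesssim\rho^{1/\gamma}$.
\end{itemize}
I expect this step to be the main obstacle. The difficulty is ensuring that $\widetilde\tau$ stays inside the $\varpi$-window, where the uniform bound is valid, while handling the discontinuities of the empirical DD. I would use monotonicity to sandwich $\widetilde\tau$ between the points $\tau^*\pm C\rho^{1/\gamma}$, where the population DD is separated from $\pm\alpha$ by more than $\rho$.

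\textbf{Step 3: risk.} On the intersection event, $\widetilde f$ and $f^*$ can disagree only where $|\eta_a(x)-T_a(\tau^*)|\le |\widetilde\eta_a-\eta_a|+|\widetilde T_a(\widetilde\tau)-T_a(\tau^*)|\lesssim r_n+|\widetilde\tau-\tau^*|+|\widetilde\pi_a-\pi_a|=:\Delta$. The $\pi$-error is dominated by the other terms. The integrand of $d_{\mathrm{fair}}$ is bounded by $\Delta$ on this set, so the margin condition gives $d_{\mathrm{fair}}\lesssim\Delta\cdot\Delta^\gamma=\Delta^{1+\gamma}$, which is part~\ref{t_fair_risk_1}. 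Part~\ref{t_fair_risk_2} follows in two moves:
\begin{itemize}
\item plug in $n\asymp\breve n\asymp N$, and check that the stated $h$ makes $h^\beta$ dominate both variance terms and the term $\{N^{-1/2}+(N\epsilon)^{-1}\}^{1/\gamma}$ (up to the stated maxima);
\item apply the decomposition $|R(f)-R(f^*)|\le d_{\mathrm{fair}}+|\tau^*||\DD(f^*)-\DD(f)|$ (Proposition 4.2 of \citet{zeng2024minimax}), using the disparity bound from \Cref{thm_cdp_fair_guarantee} for the second term.
\end{itemize}
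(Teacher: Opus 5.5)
Your argument for part 1 is essentially the paper's proof. The margin reduction of $d_{\mathrm{fair}}$ to $\max_a(\|\widetilde\eta_a-\eta_a\|_\infty+|\widetilde T_a-T_a^*|)^{1+\gamma}$, the sup-norm bound on $\widetilde\eta_a$, the uniform DD deviation bound (DKW, the tail of $\breve w$, and the margin condition on $|\tau-\tau^*|\le\varpi$), and the sandwich of $\widetilde\tau$ between $\tau^*\pm t$ via monotonicity and steepness are \Cref{l_eta_est_sup}, \Cref{l_DD_est_err} and \Cref{l_tau_est_err}. The sandwich needs $\varpi\ge C\rho^{1/\gamma}$, which the paper imposes in \Cref{l_tau_est_err}.

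\textbf{The gap is in the second display of part 2(a).} \Cref{thm_cdp_fair_guarantee} is one-sided: it only bounds $|\DD(\widetilde f_{\DD,\alpha})|$ from above. Expanding \Cref{def_fair_aware_risk} gives the identity
\[ R(f)-R(f^*_{\DD,\alpha})=d_{\mathrm{fair}}(f,f^*_{\DD,\alpha})-\tau^*_{\DD,\alpha}\{\DD(f)-\DD(f^*_{\DD,\alpha})\}. \]
When $\tau^*_{\DD,\alpha}>0$ we have $\DD(f^*_{\DD,\alpha})=\alpha$. Bounding $R(\widetilde f_{\DD,\alpha})-R(f^*_{\DD,\alpha})$ from above then requires a \emph{lower} bound $\DD(\widetilde f_{\DD,\alpha})\ge\alpha-\varepsilon$, i.e.\ that the threshold has not over-corrected. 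The fairness guarantee delivers only the reverse inequality $R(\widetilde f_{\DD,\alpha})-R(f^*_{\DD,\alpha})\ge-|\tau^*_{\DD,\alpha}|\varepsilon$. A classifier with $\DD(\widetilde f_{\DD,\alpha})$ far below $\alpha$ is fully compatible with it. The paper gives no argument here either (it calls part 2 a direct consequence), so this step needs its own proof.

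Your Step 2 already contains the ingredients:
\begin{enumerate}
\item Once $\rho^{1/\gamma}\ll|\tau^*_{\DD,\alpha}|$, $\widetilde\tau$ has the sign of $\tau^*_{\DD,\alpha}$.
\item Minimality of $|\widetilde\tau|$ and monotonicity of $\widetilde\DD$ then place $\pm\alpha$ between $\widetilde\DD(\widetilde\tau)$ and its one-sided limit at $\widetilde\tau$. So $\widetilde\DD(\widetilde\tau)$ is within a single jump of $\pm\alpha$, which is $O(1/\breve n)$ absent ties among the $\widetilde\eta_a(\breve X_i)$.
\item $\DD(\widetilde f_{\DD,\alpha})$ is the population counterpart of $\widetilde\DD(\tau)-\breve w$ at $\tau=\widetilde\tau$. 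The uniform DKW bound and the tail of $\breve w$ therefore give $|\DD(\widetilde f_{\DD,\alpha})-\DD(f^*_{\DD,\alpha})|\lesssim\sqrt{\log(1/\eta)/\breve n}+\sqrt{\log(1/\delta)\log(1/\eta)}/(\breve n\epsilon)$.
\end{enumerate}
The cruder two-sided route, applying the margin condition to the disagreement set, gives only $|\tau^*_{\DD,\alpha}|O_{\mathrm{p}}(\Delta^\gamma)$, which is the form used in \Cref{t_fdp_risk}.

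Also, $\DD(\widetilde f_{\DD,\alpha})$ genuinely fluctuates with $\breve w$, whose scale is $(N\epsilon)^{-1}$. So the privacy part you can prove is $(N^2\epsilon^2)^{-1/2}$; the $(N^2\epsilon^2)^{-1}$ in the statement appears to be a typo.

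A smaller point: $\sup_x|W_{k,a}(x)|$ is not $O_{\mathrm{p}}(\sqrt{\log(1/\eta)})$ ``since $K$ is Lipschitz''. The canonical metric is $\{2K(0)-2K((\ell-t)/h)\}^{1/2}\lesssim(\|\ell-t\|_2/h)^{1/2}$, so Dudley's integral is of order $\sqrt{d\log(1/h)}$. The privacy term in $r_n$ therefore carries an extra $\sqrt{\log(1/h)}$. \Cref{l_sup_gaussian} makes the same slip, whereas \Cref{l_fdp_sup_gaussian} keeps $\log(h^{-1}/\eta)$. This costs only a logarithmic factor.
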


\begin{proof}[Proof of \Cref{t_fair_risk}]
    To prove \Cref{t_fair_risk}, it suffices to prove \Cref{t_fair_risk}(\ref{t_fair_risk_1}), as \Cref{t_fair_risk}(\ref{t_fair_risk_2}) is a direct consequence. 

    For notational simplicity, throughout the proof, we write $\tau^*_{\DD,\alpha} = \tau^*$ and $\widetilde{\tau}_{\DD,\alpha} = \widetilde{\tau}$. Denote 
    \[T^*_a =\frac{1}{2}+\frac{\tau^*(2a-1)}{2\pi_a},\;\; \text{and}\;\; \widetilde{T}_a = \frac{1}{2}+\frac{\widetilde{\tau}(2a-1)}{2\widetilde{\pi}_a} \]
        
    To control the fairness-aware risk, we have 
    \begin{align} \notag
        &d_{\mathrm{fair}}(\widetilde{f}_{\DD,\alpha}, f^*_{\DD,\alpha})\\ \notag
        =\;& 2 \sum_{a\in\{0,1\}} \pi_a
    \Big[\int \big\{\widetilde{f}_{\DD,\alpha}(x, a) - f^*_{\DD,\alpha}(x,a) \big\}
    \big\{T^*_a  - \eta_a(x) \big\}\;
    \mathrm{d}\mathbb{P}_{X\mid A=a}(x) \Big]\\ \notag
    = \;& 2 \sum_{a\in\{0,1\}} \pi_a
    \Big[\int \big[\indc\{\widetilde{\eta}_a(x) \geq \widetilde{T}_a\} - \indc\{\eta_a(x) \geq T^*_a\} \big]
    \big\{T^*_a  - \eta_a(x) \big\}\;
    \mathrm{d}\mathbb{P}_{X\mid A=a}(x) \Big]\\ \notag
    = \;& 2 \sum_{a\in\{0,1\}} \pi_a
    \Big[\int \big[\indc\{0> \eta_a(x)-T_a^* \geq \eta_a(x) - \widetilde{\eta}_a(x)+ \widetilde{T}_a-T^*_a\} \big]
    \big\{T^*_a  - \eta_a(x) \big\}\;
    \mathrm{d}\mathbb{P}_{X\mid A=a}(x) \Big]\\ \notag
    &-2\sum_{a\in\{0,1\}} \pi_a
    \Big[\int \big[\indc\{0\leq \eta_a(x) -T_a^* < \eta_a(x)-\widetilde{\eta}_a(x)+\widetilde{T}_a-T^*_a\}\big]
    \big\{T^*_a  - \eta_a(x) \big\}\;
    \mathrm{d}\mathbb{P}_{X\mid A=a}(x) \Big]\\ \notag
    \leq \;& 2\sum_{a\in\{0,1\}} \pi_a \Big[\int \indc\big\{|\eta_a(x) -T_a^*| \leq \|\widetilde{\eta}_a- \eta_a\|_\infty + |\widetilde{T}_a - T_a^*|\big\}\big|T^*_a  - \eta_a(x) \big|\;\mathrm{d}\mathbb{P}_{X\mid A=a}(x)\Big]\\ \label{t_fair_risk_eq1}
    \leq \;& C_1 \max_{a \in \{0,1\}} (\|\widetilde{\eta}_a- \eta_a\|_\infty + |\widetilde{T}_a - T_a^*|)^{1+\gamma},
    \end{align}
    where the last inequality follows from \Cref{a_posterior}. Consider the following two events,
    \[\mathcal{E}_1 = \Big\{\|\widetilde{\eta}_a- \eta_a\|_\infty \leq C_2 \Big\{\sqrt{\frac{\log\{(h^{-d}\vee n)/\eta\}}{nh^d}}+\frac{\sqrt{\log(1/\delta)\log(1/\eta)}}{n\epsilon h^d}+h^\beta \Big\},\; a\in \{0,1\}\Big\},\]
    and 
    \begin{align} \label{t_fair_risk_eq2}
        \mathcal{E}_2 = \Big\{|\widetilde{T}_a - T_a^*| \leq \varepsilon_T, \; a \in \{0, 1\}\Big\},
    \end{align}
    where 
    \begin{align*}
        \varepsilon_T =\;& C_3 \Big[\sqrt{\frac{\log(1/\eta)}{n}}+\sqrt{\frac{\log(1/\delta)\log(1/\eta)}{n^2\epsilon^2}}\\
        &+\indc\{\tau^*\neq 0\}\cdot\Big[\sqrt{\frac{\log\{(h^{-d}\vee n)/\eta\}}{nh^d}}+\frac{\sqrt{\log(1/\delta)\log(1/
        \eta)}}{n\epsilon h^d}+h^\beta \\
        & \hspace{2.8cm} +\Big\{\sqrt{\frac{\log(1/\eta)}{\breve{n}}}+\sqrt{\frac{\log(1/\delta)\log(1/\eta)}{\breve{n}^2\epsilon^2}}\Big\}^{1/
    \gamma}\Big]\Big].
    \end{align*}
    By Lemmas \ref{l_eta_est_sup} and \ref{l_n_a_n_same_order} and a union bound argument on $a \in \{0,1\}$, we have that $\mathbb{P}(\mathcal{E}_1) \geq 1-\eta/2$. To control the probability of $\mathcal{E}_2$ happening, note that
    \begin{align*}
        |\widetilde{T}_a - T_a^*| \leq \;& \Big|\frac{\widetilde{\tau}}{\widetilde{\pi}_a} - \frac{\tau^*}{\pi_a}\Big| \leq |\widetilde{\tau}|\cdot \Big|\frac{1}{\widetilde{\pi}_a} - \frac{1}{\pi_a}\Big| + \frac{1}{\pi_a}|\widetilde{\tau} - \tau| = (I)+(II).
    \end{align*}
    To control $(I)$, first note that by Lemmas \ref{l_tau_est_err} and \ref{l_range_tau}, we have that with probability at least $1-\eta/6$ that 
    \begin{align*}
        |\widetilde{\tau}| \leq\;& |\tau^*| +C_4\indc\{\tau^*\neq 0\}\cdot\Big[\sqrt{\frac{\log\{(h^{-d}\vee n)/\eta\}}{nh^d}}+\frac{\sqrt{\log(1/\delta)\log(1/
        \eta)}}{n\epsilon h^d}+h^\beta \\
        & \hspace{5cm}+\Big\{\sqrt{\frac{\log(1/\eta)}{\breve{n}}}+\sqrt{\frac{\log(1/\delta)\log(1/\eta)}{\breve{n}^2\epsilon^2}}\Big\}^{1/\gamma}\Big]\\
        \leq\;& \min\{\pi_0, \pi_1\} + C_5 \leq C_6.
    \end{align*}
    Also, by a \Cref{l_pi_a_est} and a similar argument leading to \eqref{l_DD_est_err_eq4}, it holds with probability at least $1-\eta/6$ that, for $a \in \{0,1\}$, 
    \[\Big|\frac{1}{\widetilde{\pi}_a} - \frac{1}{\pi_a}\Big| \leq C_7\Big\{\sqrt{\frac{\log(1/\eta)}{n}}+\sqrt{\frac{\log(1/\delta)\log(1/\eta)}{n^2\epsilon^2}}\Big\}.\]
    To control $(II)$, by \Cref{l_tau_est_err}, we have with probability at least $1-\eta/6$ that 
    \begin{align*}
        (II) \leq\;& \frac{C_7}{C_\pi}\indc\{\tau^*\neq 0\}\cdot\Big[\sqrt{\frac{\log\{(h^{-d}\vee n)/\eta\}}{nh^d}}+\frac{\sqrt{\log(1/\delta)\log(1/
        \eta)}}{n\epsilon h^d}+h^\beta \\
        & \hspace{0.8cm} +\Big\{\sqrt{\frac{\log(1/\eta)}{\breve{n}}}+\sqrt{\frac{\log(1/\delta)\log(1/\eta)}{\breve{n}^2\epsilon^2}}\Big\}^{1/
    \gamma}\Big].
    \end{align*}
    Applying a union bound argument, we thus have that $\mathbb{P}(\mathcal{E}_2) \geq 1-\eta/2$. Hence, a further union bound argument gives $\mathbb{P}(\mathcal{E}_1\cap\mathcal{E}_2) \geq 1-\eta$ and the Lemma thus follows by conditioning on $\mathcal{E}_1\cap\mathcal{E}_2$ happening.

\end{proof}

\subsubsection{Upper bound on \texorpdfstring{$\|\widetilde{\eta}_a- \eta_a\|_\infty$}{}}
\begin{lemma}\label{l_eta_est_sup}
    Under the same assumptions as in \Cref{t_fair_risk}, it holds for any $\eta \in (0,1/2)$ and $a \in \{0,1\}$ that 
    \begin{align*}
        \mathbb{P}\Big[\|\widetilde{\eta}_a- \eta_a\|_\infty \geq C_1\Big\{\sqrt{\frac{\log\{(h^{-d}\vee n_a)/\eta\}}{n_ah^d}}+\frac{\sqrt{\log(1/\delta)\log(1/\eta)}}{n_a\epsilon h^d}+h^\beta\Big\}\Big] \leq \eta.
    \end{align*}
\end{lemma}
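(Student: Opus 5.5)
We bound the error by writing $\widetilde{\eta}_a$ as a ratio of two privatised kernel estimators and controlling numerator and denominator uniformly. Write $\widetilde{p}_a(x)=\widetilde{p}_{X|A=a}(x|a)$ and $\widetilde{q}_a(x)=\widetilde{p}_{X,Y|A=a}(x,1|a)$ (with $S=1$). Their population targets are $p_a(x)=p_{X|A=a}(x|a)$ and $q_a(x)=\eta_a(x)p_a(x)$, so that $\eta_a=q_a/p_a$. The ratio identity
\[
\widetilde{\eta}_a-\eta_a=\frac{\widetilde{q}_a-q_a}{\widetilde{p}_a}-\eta_a\,\frac{\widetilde{p}_a-p_a}{\widetilde{p}_a}
\]
together with $0\le\eta_a\le1$ gives
\[
\|\widetilde{\eta}_a-\eta_a\|_\infty\le \frac{\|\widetilde{q}_a-q_a\|_\infty+\|\widetilde{p}_a-p_a\|_\infty}{\inf_x\widetilde{p}_a(x)}.
\]
By \Cref{a_prob}\ref{a_prob_p_x_a}, $\inf_x p_a\ge C_p$. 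So once $\|\widetilde{p}_a-p_a\|_\infty\le C_p/2$, which follows from the sample-size conditions in \Cref{t_fair_risk}, the denominator is at least $C_p/2$. It therefore suffices to bound $\|\widetilde{p}_a-p_a\|_\infty$ and $\|\widetilde{q}_a-q_a\|_\infty$ at the target rate. Throughout we condition on $n_a$, which is independent of the $X$'s given the $A$'s; by \Cref{l_n_a_n_same_order}, $n_a\asymp n$ with high probability.

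Each of the two errors splits into three parts: bias, stochastic error and privacy noise.

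\textbf{Bias.} Conditional on $A=a$, $\mathbb{E}K_h(X-x)=\int K_h(u-x)p_a(u)\,du$, which is within $C_{\mathrm{adp}}h^\beta$ of $p_a(x)$ by \Cref{a_kernel}\ref{a_kernel_adaptive}. For $\widetilde{q}_a$ the mean is $\int K_h(u-x)\eta_a(u)p_a(u)\,du$. Here $\eta_ap_a$ is a product of Hölder functions on the compact cube, with $\eta_a$ bounded by 1 and $p_a$ bounded, so it is again $\beta$-Hölder with a constant depending only on $L$. Hence this bias is also $O(h^\beta)$.

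\textbf{Stochastic error.} We control $\sup_x|n_a^{-1}\sum_i\{K_h(X_i-x)-\mathbb{E}K_h(X-x)\}|$, and the analogous sum weighted by $Y_i$, as follows.
\begin{itemize}
\item Discretise $[0,1]^d$ with a grid of mesh $h\,n_a^{-1}$, which has $(h^{-1}n_a)^d$ points.
\item At each grid point apply Bernstein's inequality: the summands are bounded by $C_Kh^{-d}$ and have variance at most $C h^{-d}$, since $p_a$ is bounded (it is Hölder on a compact set). This gives deviation $\sqrt{\log(\cdot/\eta)/(n_ah^d)}+\log(\cdot/\eta)/(n_ah^d)$, and the second term is dominated under $n_ah^d\gtrsim\log$.
\item Take a union bound over the grid; the logarithm of its size is $\lesssim\log\{(h^{-d}\vee n_a)/\eta\}$.
\item Pass off-grid using the Lipschitz property \Cref{a_kernel}\ref{a_kernel_lipschitz}: $|K_h(u-x)-K_h(u-x')|\le C_{\mathrm{Lip}}h^{-d-1}\|x-x'\|$, which is $O(1/n_a)$ at this mesh and hence negligible.
\end{itemize}
Alternatively one could cite a Talagrand/Bousquet bound for VC-type kernel classes, as in \citet{kim2019uniform}.

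\textbf{Privacy noise.} This term is $c\,n_a^{-1}\epsilon^{-1}h^{-d}\sqrt{\log(1/\delta)}\,\sup_x|W(x)|$, where $W$ is a centred Gaussian process with unit variance and covariance $K\{(\ell-t)/h\}$. We bound $\mathbb{E}\sup_x|W|$ by Dudley's entropy integral. The canonical metric satisfies $d(\ell,t)\le\{2C_{\mathrm{Lip}}\|\ell-t\|/h\}^{1/2}$, so covering numbers are polynomial in $h^{-1}/u$ and $\mathbb{E}\sup|W|\lesssim\sqrt{\log(1/h)}$. Borell--TIS then gives $\sup|W|\lesssim\sqrt{\log(1/h)}+\sqrt{\log(1/\eta)}$ with probability $1-\eta$.

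Combining the pieces with union bounds over the two processes and the two estimators, and absorbing the $\sqrt{\log(1/h)}$ factor into the first term's logarithm (or into constants, where it is dominated by $\sqrt{\log(1/\eta)}$ up to the conditions stated), yields the claimed bound. The main obstacle is this Gaussian-process supremum, specifically getting the logarithmic factor into the stated form $\sqrt{\log(1/\delta)\log(1/\eta)}$. If needed, the extra $\sqrt{\log(1/h)}$ can be merged into the first term via the condition $n^2\epsilon^2h^{2d}\gtrsim\log(1/\delta)\log(1/\eta)$.
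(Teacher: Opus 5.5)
Your route is the paper's route: the same split of $\widetilde{\eta}_a-\eta_a$, the same tools, and the same stumbling point. The split is a kernel-sum fluctuation, a bias term and a $\widetilde{p}_{X|A=a}-p_{X|A=a}$ term, all over a denominator bounded below as in \Cref{l_p_xa_const_order}. The tools are pointwise Bernstein plus a Lipschitz net for the kernel sums, and Dudley plus Borell--TIS for the Gaussian process. (Minor slip: at mesh $h/n_a$ the off-grid error is $C_{\mathrm{Lip}}h^{-d}/n_a$, not $O(1/n_a)$. It is still negligible because $n_ah^d\gtrsim 1$.)

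\textbf{The gap is your final absorption step.} Write $\sigma\asymp\sqrt{\log(1/\delta)}/(n_a\epsilon h^d)$ for the noise scale. The condition $n^2\epsilon^2h^{2d}\gtrsim\log(1/\delta)\log(1/\eta)$ only gives $\sigma\sqrt{\log(1/\eta)}\lesssim 1$. Merging $\sigma\sqrt{\log(1/h)}$ into $\sqrt{\log(h^{-d}/\eta)/(n_ah^d)}$ would need $\sigma\lesssim(n_ah^d)^{-1/2}$, i.e.\ $n_a\epsilon^2h^d\gtrsim\log(1/\delta)$. That is not assumed, and it fails exactly when the privacy term is the leading one. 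No stated condition makes $\log(1/h)\lesssim\log(1/\eta)$ either.

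\textbf{A sharper argument cannot remove the factor.} Since $K$ is Lipschitz and integrable, $K(z)\to 0$ as $\|z\|_2\to\infty$. So $\asymp h^{-d}$ points at spacing $Ch$ are at canonical distance at least $\sqrt{K(0)}$, and Sudakov minoration gives $\mathbb{E}\sup_x W(x)\gtrsim\sqrt{d\log(1/h)}$. The same holds for $W_{2,a}-\eta_aW_{1,a}$, which is exactly the noise in the numerator of your ratio identity.

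Consequently, fix $\eta$ and take $n_a\epsilon^2h^d\ll\log(1/\delta)$ with $h^\beta$ small; this regime is compatible with the conditions of \Cref{t_fair_risk}. There, $\|\widetilde{\eta}_a-\eta_a\|_\infty$ exceeds the displayed bound by a factor of order $\sqrt{\log(1/h)/\log(1/\eta)}$ with high probability. So the inequality with $C_1$ free of $h$ cannot be proved.

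The paper reaches the $h$-free form only through \Cref{l_sup_gaussian}. Its Dudley integral uses the Euclidean covering numbers $(1+2/\nu)^d$ rather than covering numbers in the canonical metric $d_W(\ell,t)\le\{2C_{\mathrm{Lip}}\|\ell-t\|_2/h\}^{1/2}$. Your entropy computation is the correct one.

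What your argument does establish, and what you should state, is the bound with two changes:
\begin{itemize}
\item $\log(1/\eta)$ replaced by $\log(h^{-1}/\eta)$ in the privacy term;
\item the denominator condition strengthened to $n_a^2\epsilon^2h^{2d}\gtrsim\log(1/\delta)\log(h^{-1}/\eta)$.
\end{itemize}
This is precisely the $S=1$ case of the paper's own \Cref{l_fdp_eta_est_sup}, obtained via \Cref{l_fdp_sup_gaussian}. It suffices for the downstream rates, which are stated only up to poly-logarithmic factors.
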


\begin{proof}
    By the triangle inequality, we have that 
    \begin{align}\notag
        \|\widetilde{\eta}_a- \eta_a\|_\infty \leq\;& \Big\|\frac{1}{\widetilde{p}_{X|A=a}(\cdot|a)}\Big\{\frac{1}{n_a}\sum_{i=1}^{n_{a,1}} K_{h}(X_{a,1}^i -\cdot)\Big\} - \eta_a(\cdot)\Big\|_\infty + \frac{8\sqrt{2C_K\log(5/\delta)}}{n_a\epsilon h^d}\Big\|\frac{W_{2,a}(\cdot)}{\widetilde{p}_{X|A=a}(\cdot|a)} \Big\|_{\infty}\\ \label{l_eta_est_sup_eq3}
        =\;& (I)+ (II).
    \end{align}
    
    \noindent \textbf{Step 1: Upper bound on Term $(I)$.} To control $(I)$, note that 
    \begin{align*}
        (I) =\;& \sup_{x \in [0,1]^d}\Big|\frac{1}{\widetilde{p}_{X|A=a}(x|a)}\Big\{\frac{1}{n_a}\sum_{i=1}^{n_{a,1}} K_{h}(X_{a,1}^i -x)\Big\} - \frac{p_{X,Y|A=a}(x,1|a)}{p_{X|A=a}(x|a)}\Big|\\
        \leq \;& \sup_{x \in [0,1]^d}\Big|\frac{1}{\widetilde{p}_{X|A=a}(x|a)}\Big\{\frac{1}{n_a}\sum_{i=1}^{n_{a,1}} K_{h}(X_{a,1}^i -x)- \int_{u} K_{h}(u -x)p_{X,Y|A=a}(u,1|a)\;\mathrm{d}u\Big\}\Big|\\
        &+ \sup_{x \in [0,1]^d}\Big|\frac{1}{\widetilde{p}_{X|A=a}(x|a)}\Big\{\int_{u} K_{h}(u -x)p_{X,Y|A=a}(u,1|a)\;\mathrm{d}u - p_{X,Y|A=a}(x,1|a)\Big\}\Big|\\
        & +\sup_{x \in [0,1]^d}\Big|\frac{p_{X,Y|A=a}(x,1|a)}{\widetilde{p}_{X|A=a}(x|a)} -\frac{p_{X,Y|A=a}(x,1|a)}{p_{X|A=a}(x|a)}\Big|\\
        =\;& (I)_1 + (I)_2 + (I)_3.
    \end{align*}
    To control $(I)_1$, we have that 
    \begin{align*}
        (I)_1 \leq \sup_{x \in [0,1]^d} \frac{1}{\widetilde{p}_{X|A=a}(x|a)} \cdot \sup_{x \in [0,1]^d} \Big|\frac{1}{n_a}\sum_{i=1}^{n_{a,1}} K_{h}(X_{a,1}^i -x)- \int_{u} K_{h}(u -x)p_{X,Y|A=a}(u,1|a)\;\mathrm{d}u\Big|.
    \end{align*}
    Therefore, by a union bound argument and Lemmas \ref{l_est_p_yx_a_kernel_sup} and \ref{l_p_xa_const_order}, it holds that 
    \begin{align*}
        \mathbb{P}\Big\{(I)_1 \geq C_1\sqrt{\frac{\log\{(h^{-d}\vee n_a)/\eta\}}{n_ah^d}}\Big\} \leq \frac{\eta}{6}.
    \end{align*}
    To control $(I)_2$, we have that 
    \begin{align*}
        (I)_2 \leq\;&  \sup_{x \in [0,1]^d} \frac{1}{\widetilde{p}_{X|A=a}(x|a)} \\
        & \hspace{2cm} \cdot \sup_{x \in [0,1]^d}\Big|\int_{u} K_{h}(u -x)p_{X,Y|A=a}(u,1|a)\;\mathrm{d}u - p_{X,Y|A=a}(x,1|a)\Big|\\
        \leq \;& C_{\text{adp}} h^\beta\sup_{x \in [0,1]^d} \frac{1}{\widetilde{p}_{X|A=a}(x|a)}, 
    \end{align*}
    where the last inequality follows from \Cref{a_kernel}\ref{a_kernel_adaptive} and the fact that both $\eta_a$ and $p_{X|A=a}$ are H\"{o}lder over $[0,1]^d$ under Assumptions \ref{a_prob}\ref{a_prob_p_x_a} and \ref{a_posterior}\ref{a_posterior_holder}. Therefore, by \Cref{l_p_xa_const_order}, it holds that 
    \begin{align*}
        \mathbb{P}\Big\{(I)_2 \geq C_2h^{\beta}\Big\} \leq \eta/6.
    \end{align*}
    To control $(I)_3$, note that 
    \begin{align*}
        (I)_3 \leq\;& \sup_{x \in [0,1]^d} p_{X,Y|A=a}(x,1|a) \cdot \sup_{x \in [0,1]^d} \Big|\frac{1}{\widetilde{p}_{X|A=a}(x|a)} -\frac{1}{p_{X|A=a}(x|a)}\Big|\\
        \leq \;& C_3\sup_{x \in [0,1]^d}\Big|\frac{\widetilde{p}_{X|A=a}(x|a)- p_{X|A=a}(x|a)}{\widetilde{p}_{X|A=a}(x|a)p_{X|A=a}(x|a)}\Big|\\
        \leq \;& C_3 \sup_{x \in [0,1]^d} \frac{1}{\widetilde{p}_{X|A=a}(x|a)p_{X|A=a}(x|a)} \cdot  \sup_{x \in [0,1]^d} \big|\widetilde{p}_{X|A=a}(x|a)- p_{X|A=a}(x|a)\big|\\
        \leq \;& \frac{C_3}{C_p} \sup_{x \in [0,1]^d} \frac{1}{\widetilde{p}_{X|A=a}(x|a)} \cdot  \sup_{x \in [0,1]^d} \big|\widetilde{p}_{X|A=a}(x|a)- p_{X|A=a}(x|a)\big|,
    \end{align*}
    where the second inequality follows from the fact that $p_{X,Y|A=a}$ is continuous over $[0,1]^d$, hence is bounded, and the fourth inequality follows from \Cref{a_prob}\ref{a_prob_p_x_a}. Thus, by a union bound argument and Lemmas \ref{l_p_x_a_est_sup} and \ref{l_p_xa_const_order}, it holds that
    \begin{align*}
        \mathbb{P}\Big[(I)_3 \geq C_4\Big\{\sqrt{\frac{\log\{(h^{-d}\vee n_a)/\eta\}}{n_ah^d}}+\frac{\sqrt{\log(1/\delta)\log(1/\eta)}}{n_a\epsilon h^d}+h^\beta\Big\}\Big] \leq \frac{\eta}{6}.
    \end{align*}
    Thus, by another union bound, we have that 
    \begin{align} \label{l_eta_est_sup_eq1}
        \mathbb{P}\Big[(I) \geq C_5\Big\{\sqrt{\frac{\log\{(h^{-d}\vee n_a)/\eta\}}{n_ah^d}}+\frac{\sqrt{\log(1/\delta)\log(1/\eta)}}{n_a\epsilon h^d}+h^\beta\Big\}\Big] \leq \frac{\eta}{2}.
    \end{align}

    \noindent \textbf{Step 2: Upper bound on $(II)$.} Note that 
    \begin{align*}
        (II) \leq \frac{8\sqrt{2C_K\log(8/\delta)}}{n_a\epsilon h^d}\sup_{x\in[0,1]^d}\Big|W_{2,a}(x)\Big| \cdot \sup_{x \in [0,1]^d} \frac{1}{\widetilde{p}_{X|A=a}(x|a)}.
    \end{align*}
    Therefore, by a union bound argument and Lemmas \ref{l_p_xa_const_order} and \ref{l_sup_gaussian}, we have that 
    \begin{align}\label{l_eta_est_sup_eq2}
        \mathbb{P}\Big\{(II) \geq \frac{C_6\sqrt{\log(1/\delta)\log(1/\eta)}}{n_a\epsilon h^d}\Big\} \leq \frac{\eta}{2}.
    \end{align}

    \noindent \textbf{Step 3: Combine results.} The lemma thus follows by substituting the results in \eqref{l_eta_est_sup_eq1} and \eqref{l_eta_est_sup_eq2} into \eqref{l_eta_est_sup_eq3}, and a union bound argument.

\end{proof}

\begin{lemma} \label{l_est_p_yx_a_kernel_sup}
    Under the same assumptions as in \Cref{t_fair_risk}, it holds for any $\eta \in (0,1/2)$ and $a \in \{0,1\}$ that 
    \begin{align*}
        \mathbb{P}\Big\{\sup_{x \in [0,1]^d}\Big|\frac{1}{n_a}\sum_{i=1}^{n_{a,1}} K_{h}(X_{a,1}^i -x)- \int_{u} K_{h}(u -x)&p_{X,Y|A=a}(u,1|a)\;\mathrm{d}u\Big|\\
        &\geq C_1\sqrt{\frac{\log\{(h^{-d}\vee n_a)/\eta\}}{n_ah^d}}\Big\} \leq \eta.
    \end{align*}
\end{lemma}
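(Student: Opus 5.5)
We prove this uniform deviation bound for the kernel estimator with a discretisation (chaining-lite) argument plus Bernstein's inequality at each grid point. We work conditionally on the group labels $\{A_i\}$, so that $n_a$ is fixed and the features with $A_i=a$ are i.i.d.\ from $p_{X|A=a}$. Rewrite the statistic as the average
\[
\frac{1}{n_a}\sum_{i=1}^{n}\indc\{A_i=a\}\,Y_iK_h(X_i-x)
\]
of $n_a$ i.i.d.\ terms $g_x(X,Y)=Y K_h(X-x)$ drawn from the conditional law given $A=a$. Their mean is $\int K_h(u-x)p_{X,Y|A=a}(u,1|a)\,\mathrm{d}u$, which is exactly the centring term.

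\textbf{Step 1 (pointwise bound).} For fixed $x$, Assumption~\ref{a_kernel}\ref{a_kernel_bounded} gives $|g_x|\le C_Kh^{-d}$. For the variance,
\[
\var(g_x)\le \mathbb{E}K_h^2(X-x)\le C_Kh^{-d}\int K_h(u-x)\,p_{X|A=a}(u|a)\,\mathrm{d}u\lesssim h^{-d},
\]
using that $p_{X|A=a}$ is bounded (it is H\"older on a compact set). One should also assume $\int K=1$, as for valid kernels. Bernstein's inequality then gives, with probability at least $1-\eta'$,
\[
\text{deviation at }x \;\lesssim\; \sqrt{\frac{\log(1/\eta')}{n_ah^d}}+\frac{\log(1/\eta')}{n_ah^d}.
\]
The second term is dominated by the first under the standing condition $n_ah^d\gtrsim\log\{(h^{-d}\vee n_a)/\eta\}$.

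\textbf{Step 2 (grid and union bound).} Take a grid $\mathcal{N}\subset[0,1]^d$ of mesh $r$ with $|\mathcal{N}|\lesssim r^{-d}$, and apply Step 1 with $\eta'=\eta/|\mathcal{N}|$ plus a union bound.

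\textbf{Step 3 (off-grid control).} This is the main point needing care. By Assumption~\ref{a_kernel}\ref{a_kernel_lipschitz},
\[
|K_h(u-x)-K_h(u-x')|\le C_{\mathrm{Lip}}h^{-d-1}\|x-x'\|_2 .
\]
Both the empirical average and its mean therefore change by at most $2C_{\mathrm{Lip}}h^{-d-1}r$ when $x$ moves to its nearest grid point. Choose $r=h^{d+1}/\sqrt{n_a}$, or more generously $r\asymp h(h^{d}\wedge n_a^{-1})$, so that this discretisation error is at most $n_a^{-1/2}\lesssim\sqrt{1/(n_ah^d)}$. Then
\[
\log|\mathcal{N}|\lesssim d\log(1/r)\lesssim\log(h^{-d}\vee n_a),
\]
with constants depending on $d$. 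This yields the stated $\log\{(h^{-d}\vee n_a)/\eta\}$ factor.

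\textbf{Step 4 (remove the conditioning).} Since the bound holds conditionally on $\{A_i\}$ for every realisation, it also holds unconditionally. This is why it is stated in terms of $n_a$; Lemma~\ref{l_n_a_n_same_order} is only needed later, when $n_a$ is replaced by $n$.
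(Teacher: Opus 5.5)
Your proposal is correct and follows essentially the same route as the paper. Both arguments use a pointwise Bernstein bound with envelope $C_Kh^{-d}$ and variance $O(h^{-d})$, then a net of $[0,1]^d$ whose off-grid error is controlled through the Lipschitz kernel by $2C_{\mathrm{Lip}}h^{-(d+1)}$ times the mesh, then a union bound that yields the $\log\{(h^{-d}\vee n_a)/\eta\}$ factor. The only difference is cosmetic: you fix the mesh at $h^{d+1}/\sqrt{n_a}$, whereas the paper takes it to be $h^{d+1}t/(4C_{\mathrm{Lip}})$ with $t$ the target accuracy.

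One small caveat, which the paper shares because it silently requires $t<1$ in Bernstein: the condition $n_ah^d\gtrsim\log\{(h^{-d}\vee n_a)/\eta\}$ is not itself among the assumptions of Theorem~\ref{t_fair_risk}, which are phrased in terms of $n$. So your Step~4 does need Lemma~\ref{l_n_a_n_same_order} to transfer that condition to $n_a$, at the cost of an extra $\eta$ in probability.
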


\begin{proof}
    The proof follows a similar argument to the one in the proof of \Cref{l_est_p_xa_kernel_sup}. Here, we only include the difference. With a slight abuse of notation, we aggregate all data points with $A_i=a$ together and rewrite 
    \[\frac{1}{n_a}\sum_{i=1}^{n_{a,1}} K_{h}(X_{a,1}^i -x) = \frac{1}{n_a }\sum_{i=1}^{n_{a}}Y^i_{a}K_h(X_{a}^i -x).\]
    Note that for any $x \in [0,1]^d$, under \Cref{a_kernel}\ref{a_kernel_bounded}, we have that 
    \[|Y^i_{a}K_h(X_{a}^i -x)| =\Big|\frac{1}{h^d}Y^i_{a}K\Big(\frac{X_{a}^i -x}{h}\Big)\Big| \leq C_Kh^{-d},\]
    and also
    \begin{align*}
        \mathbb{E}\Big\{\frac{1}{n_a}\sum_{i=1}^{n_{a}}Y^i_{a}K_h(X_{a}^i -x)\Big\} =\;& \frac{1}{n_a} \sum_{i=1}^{n_{a}} \sum_{y \in \{0,1\}} \int y K_h(u-x)p_{X,Y|A=a}(u,y|a)\;\mathrm{d}u\\
        =\;& \int_{u} K_{h}(u -x)p_{X,Y|A=a}(u,1|a)\;\mathrm{d}u.
    \end{align*}
    To control the variance, we observe that for any $i \in [n_a]$, 
    \begin{align} \notag
        \var\{Y^i_{a}K_h(X_{a}^i -x)\} \leq\;& \mathbb{E}\Big\{\frac{1}{h^{2d}}(Y^i_a)^2K\Big(\frac{X_{a}^i -x}{h}\Big)^2\Big\} \leq \frac{1}{h^{2d}}\mathbb{E}\Big\{K\Big(\frac{X_{a}^i -x}{h}\Big)^2\Big\}\\ \label{l_est_p_yx_a_kernel_sup_eq1}
        \leq \;& \frac{C_1}{h^{2d}} \int h^dK(t)^2 \;\mathrm{d}t \leq C_1C_K^2h^{-d},
    \end{align}
    where the second inequality follows from the fact that $|Y_a^i| \leq 1$, the third inequality follows from \Cref{a_prob}\ref{a_prob_p_x_a} and the last inequality follows from \Cref{a_kernel}\ref{a_kernel_bounded}. Therefore, by the Bernstein inequality for bounded distributions \citep[e.g.~Theorem 2.8.4 in][]{vershynin2018high}, it holds that for any $0< t <1$,
    \begin{align*}
        &\mathbb{P}\Big\{\Big|\frac{1}{n_a }\sum_{i=1}^{n_{a}}Y^i_{a}K_h(X_{a}^i -x)-\int_{u} K_{h}(u -x)p_{X,Y|A=a}(u,1|a)\;\mathrm{d}u\Big| \geq t\Big\}\\
        \leq \;& 2\exp\Big(- \frac{n_at^2}{C_1C_K^2h^{-d}+C_K t h^{-d}}\Big) \leq C_2\exp(-n_ah^dt^2).
    \end{align*}
    The rest of the proof to control the sup-norm is by a similar argument using the covering lemma as in \textbf{Step 2} in the proof of \Cref{l_est_p_xa_kernel_sup}, hence is omitted here. 
    
\end{proof}

\subsubsection{Upper bound on \texorpdfstring{$|\widetilde{\tau} -\tau^*|$}{}}
\begin{lemma} \label{l_tau_est_err}
    Suppose the same assumptions as in \Cref{t_fair_risk} hold. If we additionally assume that 
    \begin{align*}
        \varpi \geq \;& C_1\Big[\sqrt{\frac{\log\{(h^{-d}\vee n)/\eta\}}{nh^d}}+\frac{\sqrt{\log(1/\delta)\log(1/
        \eta)}}{n\epsilon h^d}+h^\beta \\
        & \hspace{1cm} +\Big\{\sqrt{\frac{\log(1/\eta)}{\breve{n}}}+\sqrt{\frac{\log(1/\delta)\log(1/\eta)}{\breve{n}^2\epsilon^2}}\Big\}^{1/\gamma}\Big],
    \end{align*}
    where $\varpi$ is given in \Cref{a_posterior}\ref{a_posterior_margin}, then 
    \begin{align*}
        \mathbb{P}\Big[|\widetilde{\tau} -\tau^*| \geq& C_2\indc\{\tau^*\neq 0\}\cdot\Big[\sqrt{\frac{\log\{(h^{-d}\vee n)/\eta\}}{nh^d}}+\frac{\sqrt{\log(1/\delta)\log(1/
        \eta)}}{n\epsilon h^d}+h^\beta \\
        & \hspace{0.8cm} +\Big\{\sqrt{\frac{\log(1/\eta)}{\breve{n}}}+\sqrt{\frac{\log(1/\delta)\log(1/\eta)}{\breve{n}^2\epsilon^2}}\Big\}^{1/
    \gamma}\Big]\Big] \leq \eta.
    \end{align*}
\end{lemma}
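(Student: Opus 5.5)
The plan is to reduce the lemma to a deviation bound for $\widetilde{\DD}$ at a few deterministic points and then invert it. Monotonicity of $\widetilde{\DD}$ (\Cref{l_DD_non_increase}) and the steepness condition in \Cref{a_posterior}\ref{a_posterior_disparity} do the inversion. Throughout, write $\tau^*=\tau^*_{\DD,\alpha}$ and
\[
r_\eta := \sqrt{\frac{\log\{(h^{-d}\vee n)/\eta\}}{nh^d}}+\frac{\sqrt{\log(1/\delta)\log(1/\eta)}}{n\epsilon h^d}+h^\beta, \qquad r_{\breve n} := \sqrt{\frac{\log(1/\eta)}{\breve{n}}}+\sqrt{\frac{\log(1/\delta)\log(1/\eta)}{\breve{n}^2\epsilon^2}} .
\]
The target is $|\widetilde\tau-\tau^*|\lesssim \indc\{\tau^*\neq 0\}(r_\eta + r_{\breve n}^{1/\gamma})$.

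\textbf{Step 1 (pointwise deviation of $\widetilde{\DD}$).} For any fixed $\tau$ with $|\tau-\tau^*|\le \varpi$, I will bound $|\widetilde{\DD}(\tau)-\DD(\tau)|$ by three pieces.
\begin{itemize}
\item The empirical-versus-conditional-population error, by the DKW inequality as in \eqref{eq-fair-control-cdp-proof-1}. This gives $\sqrt{\log(1/\eta)/\breve n}$, using \Cref{l_n_a_n_same_order}.
\item The Gaussian shift $|\breve w|$, which is at most $\sqrt{\log(1/\delta)\log(1/\eta)}/(\breve n\epsilon)$.
\item The plug-in error. For each $a$, with $T_a(\tau)$ as in \Cref{a_posterior}, it satisfies
\[
\Big|\mathbb{P}_{X|A=a}\{\widetilde\eta_a(X)\ge \widetilde T_a(\tau)\mid\mathcal D\}-\mathbb{P}_{X|A=a}\{\eta_a(X)\ge T_a(\tau)\}\Big| \le \mathbb{P}\big\{|\eta_a(X)-T_a(\tau)|\le \|\widetilde\eta_a-\eta_a\|_\infty+|\widetilde T_a(\tau)-T_a(\tau)|\big\}.
\]
\end{itemize}
For the plug-in error, $|\widetilde T_a(\tau)-T_a(\tau)|\le |\tau|\,|1/\widetilde\pi_a-1/\pi_a|$, which is controlled by \Cref{l_pi_a_est} together with $|\tau|\lesssim 1$ from \Cref{l_range_tau}. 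The term $\|\widetilde\eta_a-\eta_a\|_\infty\lesssim r_\eta$ comes from \Cref{l_eta_est_sup}. The margin condition \Cref{a_posterior}\ref{a_posterior_margin} then bounds the plug-in error by $C r_\eta^\gamma$; the $\widetilde\pi_a$ error is of parametric order and is dominated. Hence with probability at least $1-\eta/3$ per point,
\[
|\widetilde{\DD}(\tau)-\DD(\tau)|\le \bar r := C(r_\eta^\gamma + r_{\breve n}).
\]
I only need this at at most three deterministic points, $0$ and $\tau^*\pm\kappa$, so a union bound suffices. No uniform-in-$\tau$ margin argument is needed. The assumed lower bound on $\varpi$ guarantees $|\kappa|\le\varpi$.

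\textbf{Step 2 (case $\tau^*=0$).} Here $|\DD(0)|\le\alpha$. The exclusion $\DD(0)\notin[\alpha-\zeta,\alpha]\cup[-\alpha,-\alpha+\zeta]$ with $\zeta\ge\rho\gtrsim\bar r$ forces $|\DD(0)|<\alpha-\zeta$. Step 1 at $\tau=0$ then gives $|\widetilde{\DD}(0)|\le\alpha$, so $\widetilde\tau=0$ and the bound holds with the indicator equal to zero.

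\textbf{Step 3 (case $\tau^*\neq0$, main step).} Without loss of generality take $\DD(0)>\alpha$, so $\tau^*>0$; by the exclusion, $\DD(0)>\alpha$ and $\DD(\tau^*)=\alpha$ by continuity. Choose $\kappa=(3\bar r/c_m)^{1/\gamma}$.
\begin{itemize}
\item Upper bound. By \Cref{a_posterior}\ref{a_posterior_disparity} and monotonicity, $\DD(\tau^*+\kappa)\le\alpha-3\bar r$. Hence $\widetilde{\DD}(\tau^*+\kappa)<\alpha$ (it also exceeds $-\alpha$ for $\alpha$ bounded below), so $\widetilde\tau\le\tau^*+\kappa$.
\item Lower bound. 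Similarly $\DD(\tau^*-\kappa)\ge\alpha+3\bar r$, so $\widetilde{\DD}(\tau^*-\kappa)>\alpha$. Since $\widetilde{\DD}$ is non-increasing, $\widetilde{\DD}(t)>\alpha$ for every $t\le\tau^*-\kappa$, including all negative $t$. So no $t$ with $|t|<\tau^*-\kappa$ is feasible, and $\widetilde\tau\ge\tau^*-\kappa$.
\end{itemize}
Thus $|\widetilde\tau-\tau^*|\le\kappa\lesssim r_\eta+r_{\breve n}^{1/\gamma}$, using $(x+y)^{1/\gamma}\lesssim x^{1/\gamma}+y^{1/\gamma}$.

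The main obstacle I expect is Step 1's plug-in term. The margin condition must be applied at the shifted thresholds $T_a(\tau^*\pm\kappa)$ rather than at $T_a(\tau^*)$, which is exactly why the $\sup_{|\xi|\le\varpi}$ form of \Cref{a_posterior}\ref{a_posterior_margin} and the hypothesis on $\varpi$ are needed. A second concern is ensuring $\kappa$ is small enough to lie in the neighbourhood where \Cref{a_posterior}\ref{a_posterior_disparity} holds; I would handle this via the sample-size conditions of \Cref{t_fair_risk}.
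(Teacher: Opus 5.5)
Your proposal is correct and follows essentially the same route as the paper, which also works on the event that $\widetilde{\DD}$ is within $\rho_{\DD}\asymp r_\eta^\gamma+r_{\breve n}$ of $\DD$ at $\tau^*$ and $\tau^*\pm t$ (your Step~1 re-derives \Cref{l_DD_est_err}), disposes of $\tau^*=0$ via the exclusion condition on $\DD(0)$, and inverts via \Cref{l_DD_non_increase} and \Cref{a_posterior}\ref{a_posterior_disparity} with $t\gtrsim\rho_{\DD}^{1/\gamma}$. The one step to tighten is the feasibility check $\widetilde{\DD}(\tau^*+\kappa)\ge-\alpha$. It does not follow from ``$\alpha$ bounded below'' but from $\DD(\tau^*+\kappa)\ge\alpha-C\kappa^\gamma$ (\Cref{l_DD_shift_upper_bound}) together with $\alpha>\zeta\ge\rho\gtrsim\bar r$; the paper's proof passes over this point silently.
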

\begin{proof}
    Take 
    \begin{align*}
        t =\;& C_1\Big[\sqrt{\frac{\log\{(h^{-d}\vee n)/\eta\}}{nh^d}}+\frac{\sqrt{\log(1/\delta)\log(1/
        \eta)}}{n\epsilon h^d}+h^\beta \\
        & \hspace{1cm} +\Big\{\sqrt{\frac{\log(1/\eta)}{\breve{n}}}+\sqrt{\frac{\log(1/\delta)\log(1/\eta)}{\breve{n}^2\epsilon^2}}\Big\}^{1/\gamma}\Big],
    \end{align*}
    by assumption, we have that $0 < t \leq \varpi$.
    Consider the event 
    \begin{align*}
        \mathcal{E}_{\DD} = \;& \{|\widetilde{\DD}(\tau^*+t) - \DD(\tau^*+t)| \leq \rho_{\DD}\} \cap \{|\widetilde{\DD}(\tau^*-t) - \DD(\tau^*-t)| \leq \rho_{\DD}\} \\
        &\cap \{|\widetilde{\DD}(\tau^*) - \DD(\tau^*)| \leq \rho_{\DD}\},
    \end{align*}
    where 
    \begin{align*}
        \rho_{\DD} =\;&C_2\Big[\Big\{\sqrt{\frac{\log\{(h^{-d}\vee n)/\eta\}}{nh^d}}+\frac{\sqrt{\log(1/\delta)\log(1/
        \eta)}}{n\epsilon h^d}+h^\beta\Big\}^\gamma \\
        & \hspace{2em} +\sqrt{\frac{\log(1/\eta)}{\breve{n}}}+\sqrt{\frac{\log(1/\delta)\log(1/\eta)}{\breve{n}^2\epsilon^2}}\Big].
    \end{align*}
     We have by \Cref{l_DD_est_err} and a union bound argument that $\mathbb{P}(\mathcal{E}_{\DD}) \geq 1-\eta$. The rest of the proof is constructed conditionally on $\mathcal{E}_{\DD}$ happening. 

    \noindent \textbf{Case 1: When $|\DD(0)| \leq \alpha - \rho_{\DD}$.} In this case, we have $\tau^* = 0$. Moreover, on the event $\mathcal{E}_{\DD}$, by the triangle inequality, we have $|\widetilde{\DD}(0)| \leq |\DD(0)| + \rho_{\DD} \leq \alpha$. Consequently, $\mathbb{P}(|\widetilde{\tau} -\tau^*| = 0) \geq \mathbb{P}(\mathcal{E}_{\DD})$.

    \noindent \textbf{Case 2: When $\alpha - \rho_{\DD} \leq |\DD(0)| \leq \alpha$.} Since by the assumption in \Cref{t_fair_risk}, we have that $\DD(0) \notin [\alpha - \rho_{\DD},\alpha)\cup (-\alpha,-\alpha+\rho_{\DD}] \subseteq [\alpha - \zeta,\alpha)\cup (-\alpha,-\alpha+\zeta]$. This case will therefore never happen.

    \noindent \textbf{Case 3: When $\tau^* >0$ and $\DD(\tau^*) = \alpha$.} This is similar to \textbf{Case 3} in the proof of Lemma 13 in \citet{hu2025fairness}. It holds that 
    \begin{align*}
        \mathbb{P}(\widetilde{\tau}> \tau^* + t) \leq\;&  \mathbb{P}\{\widetilde{\DD}(\tau^* + t) >\alpha, \mathcal{E}_{\DD}\} + \mathbb{P}(\mathcal{E}_{\DD}^c)\\
        =\;& \mathbb{P}\{\widetilde{\DD}(\tau^* + t) - \DD(\tau^* + t) > \DD(\tau^*) -\DD(\tau^* + t), \mathcal{E}_{\DD}\} + \mathbb{P}(\mathcal{E}_{\DD}^c)\\
        \leq \;& \mathbb{P}(\rho_{\DD} > c_mt^{\gamma}) + \mathbb{P}(\mathcal{E}_{\DD}^c) =\mathbb{P}(\mathcal{E}_{\DD}^c),
    \end{align*}
    where the first inequality follows from \Cref{l_DD_non_increase}, the second inequality follows from \Cref{a_posterior}\ref{a_posterior_disparity} and the last inequality follows as $t \geq c_m^{-1/\gamma}\rho_{\DD}^{1/\gamma}$. Similarly, as $t \geq c_m^{-1/\gamma}\rho_{\DD}^{1/\gamma}$, it also holds that 
    \begin{align*}
        \mathbb{P}(\widetilde{\tau} < \tau^* - t) \leq\;& \mathbb{P}\{\widetilde{\DD}(\tau^* - t)  < \alpha, \mathcal{E}_{\DD}\} +\mathbb{P}(\mathcal{E}_{\DD}^c)\\
        \leq \;& \mathbb{P}\{\DD(\tau^*-t) -\widetilde{\DD}(\tau^* - t)  > \DD(\tau^*-t)-\DD(\tau^*), \mathcal{E}_{\DD}\} +\mathbb{P}(\mathcal{E}_{\DD}^c)\\
        \leq \;& \mathbb{P}(\rho_{\DD}  > c_mt^{\gamma}, \mathcal{E}_{\DD}) +\mathbb{P}(\mathcal{E}_{\DD}^c) = \mathbb{P}(\mathcal{E}_{\DD}^c).
    \end{align*}

    \noindent \textbf{Case 4: when $\tau^* <0$ and $\DD(\tau^*) = -\alpha$.} This is similar to Case 4 in the proof of Lemma~13 in \citet{hu2025fairness}. Most of the proof follows from a similar idea as the one to the proof of \textbf{Case 3}, and we omit it here.

\end{proof}

\begin{lemma} \label{l_DD_est_err}
    Recall that $\varpi$ is the parameter given in \Cref{a_posterior}\ref{a_posterior_margin}. Under the same assumptions as in \Cref{t_fair_risk}, it holds for any $\eta \in (0,1/2)$ that
    \begin{align*}
        \sup_{|\kappa| \leq \varpi}\mathbb{P}\Big[&|\widetilde{\DD}(\tau^* +\kappa) - \DD(\tau^* + \kappa)| 
        \\ &\geq C_1\Big[\Big\{\sqrt{\frac{\log\{(h^{-d}\vee n)/\eta\}}{nh^d}}+\frac{\sqrt{\log(1/\delta)\log(1/
        \eta)}}{n\epsilon h^d}+h^\beta\Big\}^\gamma \\
        &\hspace{1cm}+\sqrt{\frac{\log(1/\eta)}{\breve{n}}}+\sqrt{\frac{\log(1/\delta)\log(1/\eta)}{\breve{n}^2\epsilon^2}}\Big] \Big] \leq \eta.
    \end{align*}
\end{lemma}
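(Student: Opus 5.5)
Fix $\kappa$ with $|\kappa|\le\varpi$ and write $\tau=\tau^*+\kappa$. I would split the error into three pieces:
\begin{align*}
|\widetilde{\DD}(\tau)-\DD(\tau)| \le |\breve{w}| + |\widehat{\DD}(\tau)-\DD^{\mathcal{D}}(\tau)| + |\DD^{\mathcal{D}}(\tau)-\DD(\tau)|.
\end{align*}
Here $\widehat{\DD}$ is the non-noisy empirical disparity on the calibration data, built from $(\widetilde{\eta}_a,\widetilde{\pi}_a)$. The quantity $\DD^{\mathcal{D}}(\tau)$ is its conditional population version given the training-stage estimators, namely
\begin{align*}
\DD^{\mathcal{D}}(\tau)=\mathbb{P}_{X|A=1}\{\widetilde{\eta}_1(X)\ge \tfrac12+\tfrac{\tau}{2\widetilde{\pi}_1}\mid\mathcal{D}\}-\mathbb{P}_{X|A=0}\{\widetilde{\eta}_0(X)\ge \tfrac12-\tfrac{\tau}{2\widetilde{\pi}_0}\mid\mathcal{D}\}.
\end{align*}

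The first two pieces are handled exactly as in the proof of \Cref{thm_cdp_fair_guarantee}. The Gaussian tail bound gives $|\breve{w}|\lesssim \sqrt{\log(1/\delta)\log(1/\eta)}/(\breve{n}\epsilon)$. The Dvoretzky--Kiefer--Wolfowitz inequality, applied conditionally on the training data and then integrated via the tower property, gives the $\sqrt{\log(1/\eta)/\breve{n}}$ term. Both steps use \Cref{l_n_a_n_same_order} so that $\breve{n}_0\wedge\breve{n}_1\asymp\breve{n}$.

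The main work is the third, bias-type piece. For each $a$, with $T_a(\tau)=\tfrac12+\tau(2a-1)/(2\pi_a)$ and $\widetilde{T}_a(\tau)=\tfrac12+\tau(2a-1)/(2\widetilde{\pi}_a)$, I would bound
\begin{align*}
|\mathbb{P}\{\widetilde{\eta}_a(X)\ge\widetilde{T}_a(\tau)\}-\mathbb{P}\{\eta_a(X)\ge T_a(\tau)\}|\le \mathbb{P}\{|\eta_a(X)-T_a(\tau)|\le \|\widetilde{\eta}_a-\eta_a\|_\infty+|\widetilde{T}_a(\tau)-T_a(\tau)|\}.
\end{align*}
This holds because the two indicators can differ only when $\eta_a(X)$ lies within that distance of $T_a(\tau)$. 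Since $|\kappa|\le\varpi$, \Cref{a_posterior}\ref{a_posterior_margin} applies at the shifted threshold $T_a(\tau^*+\kappa)$, so the right-hand side is at most $C_m(\|\widetilde{\eta}_a-\eta_a\|_\infty+|\widetilde{T}_a-T_a|)^\gamma$. By \Cref{l_eta_est_sup}, with \Cref{l_n_a_n_same_order} to replace $n_a$ by $n$, the sup-norm error is of order $\sqrt{\log\{(h^{-d}\vee n)/\eta\}/(nh^d)}+\sqrt{\log(1/\delta)\log(1/\eta)}/(n\epsilon h^d)+h^\beta$. For the threshold error, I would use
\begin{align*}
|\widetilde{T}_a-T_a|\le |\tau|\,|\widetilde{\pi}_a^{-1}-\pi_a^{-1}|/2,
\end{align*}
with $|\tau|\le|\tau^*|+\varpi$ bounded by \Cref{l_range_tau}. \Cref{l_pi_a_est} together with $\pi_a\ge C_\pi$ then gives $|\widetilde{\pi}_a^{-1}-\pi_a^{-1}|\lesssim\sqrt{\log(1/\eta)/n}+\sqrt{\log(1/\delta)\log(1/\eta)}/(n\epsilon)$. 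This term is dominated by the $\eta$-error, since $h^d\le1$.

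Finally, I would combine the pieces with a union bound over the events (probability $\eta/4$ each, plus the sample-size event) and use $(x+y)^\gamma\lesssim x^\gamma+y^\gamma$. Because the events and constants do not depend on $\kappa$, the bound holds uniformly, which gives the $\sup_{|\kappa|\le\varpi}$ form.

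The step I expect to be most delicate is applying the margin condition at a random radius. The radius $\|\widetilde{\eta}_a-\eta_a\|_\infty+|\widetilde{T}_a-T_a|$ depends only on the training data, while the probability is over an independent $X$. So I would condition on the training data and on the high-probability event, and apply \Cref{a_posterior}\ref{a_posterior_margin} with a deterministic $\kappa$ equal to the upper bound on that radius. Doing this also requires that the lower bound $\widetilde{p}_{X|A=a}$ is of constant order, which is supplied by \Cref{l_p_xa_const_order} inside \Cref{l_eta_est_sup}.
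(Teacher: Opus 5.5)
Your proposal is correct and follows essentially the same route as the paper. The paper bounds a stochastic term $\sup|\widetilde{\DD}(\tau)-\mathbb{E}\{\widetilde{\DD}(\tau)\mid\mathcal{D}\}|$ using DKW together with the Gaussian tail of $\breve{w}$ (the two pieces you keep separate). It bounds the conditional bias term via \Cref{a_posterior}\ref{a_posterior_margin}, with radius $\|\widetilde{\eta}_a-\eta_a\|_\infty+\tfrac{|\tau^*|+\varpi}{2}|\widetilde{\pi}_a^{-1}-\pi_a^{-1}|$, controlled on the events from \Cref{l_eta_est_sup}, \Cref{l_pi_a_est} and \Cref{l_n_a_n_same_order}, with \Cref{l_range_tau} bounding $|\tau^*|$, exactly as you describe.
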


\begin{proof}
    With a slight abuse of notation, in this proof, denote $\mathcal{D} = \{\widetilde{\eta}_0, \widetilde{\eta}_1, \widetilde{\pi}_0, \widetilde{\pi}_1\}$. By triangle inequality, it holds that 
    \begin{align*}
        &\sup_{|\kappa| \leq \varpi}|\widetilde{\DD}(\tau^*+\kappa) - \DD(\tau^*+\kappa)| \\
        \leq \;& \sup_{|\kappa| \leq \varpi} |\widetilde{\DD}(\tau) - \mathbb{E}\{\widetilde{\DD}(\tau)|\mathcal{D}\}| + \sup_{|\kappa| \leq \varpi}|\mathbb{E}\{\widetilde{\DD}(\tau^*+\kappa)|\mathcal{D}\} - \DD(\tau^*+\kappa)|\\
        = \;& (I)+ (II),
    \end{align*}
    where $\mathbb{E}\{\widetilde{\DD}(\tau)|\mathcal{D}\}$ is given by
    \begin{align} \label{l_DD_est_err_eq1}
        \mathbb{E}\{\widetilde{\DD}(\tau)|\mathcal{D}\} =\;& \mathbb{P}_{X|A=1}\Big\{\widetilde{\eta}_1(X) \geq\frac{1}{2}+\frac{\tau}{2\widetilde{\pi}_1}\Big|\mathcal{D}\Big\} -\mathbb{P}_{X|A=0}\Big\{\widetilde{\eta}_0(X) \geq\frac{1}{2}-\frac{\tau}{2\widetilde{\pi}_0}\Big| \mathcal{D}\Big\},
    \end{align}
    and $\DD(\tau)$ is
    \begin{align} \label{l_DD_est_err_eq2}
       \DD(\tau) =\;& \mathbb{P}_{X|A=1}\Big\{\eta_1(X) \geq\frac{1}{2}+\frac{\tau}{2\pi_1}\Big\} - \mathbb{P}_{X|A=0}\Big\{\eta_0(X) \geq\frac{1}{2}-\frac{\tau}{2\pi_0}\Big\}.
   \end{align}

    \noindent \textbf{Step 1: Upper bound on $(I)$.} To control $(I)$, by the Dvoretzky--Kiefer--Wolfowitz inequality \citep{dvoretzky1956asymptotic,massart1990tight}, we have, for any $a \in \{0,1\}$ and $t_1 >0$,
   \begin{align*}
       \mathbb{P}\Big[\sup_{\tau \in \mathbb{R}}\Big|&\frac{1}{\breve{n}_a}\sum_{y\in\{0,1\}}\sum_{i=1}^{\breve{n}_{a,y}}\indc\Big\{\widetilde{\eta}_a(\breve{X}^i_{a,y}) \geq\frac{1}{2}+\frac{\tau(2a-1)}{2\widetilde{\pi}_a}\Big\}\\
       & - \mathbb{P}_{X|A=a}\Big\{\widetilde{\eta}_a(X) \geq\frac{1}{2}+\frac{\tau(2a-1)}{2\widetilde{\pi}_a}\Big|\mathcal{D}\Big\}\Big| \geq t_1 \Big|\mathcal{D}\Big] \leq C_1\exp(-2n_at_1^2).
   \end{align*}
   Moreover, by standard Gaussian tail properties \citep[e.g.~Proposition 2.1.2 in][]{vershynin2018high}, we have that for any $t_2 >0$,
   \begin{align*}
       \mathbb{P}(|\breve{w}| \geq t_2)\leq C_2 \exp(-t_2^2/\sigma_2^2),
   \end{align*}
   where $\sigma_2 = 2\sqrt{2\log(1.25/\delta)}/(\{\breve{n}_0\wedge \breve{n}_1\}\epsilon)$ is given in \Cref{alg_fair_cdp}. Taking 
   \[t_1 = C_3\sqrt{\frac{\log(1/\eta)}{\{\breve{n}_0 \wedge \breve{n}_1\}}}, \;\; \text{and}\;\; t_2 = C_4\sqrt{\frac{\log(1/\delta)\log(1/\eta)}{\{\breve{n}_0 \wedge \breve{n}_1\}^2\epsilon^2}},\]
   by a union bound argument, it holds with probability at least $1-\eta/2$ that
   \begin{align*}
       (I) \leq C_5\Big\{\sqrt{\frac{\log(1/\eta)}{\{\breve{n}_0 \wedge \breve{n}_1\}}}+\sqrt{\frac{\log(1/\delta)\log(1/\eta)}{\{\breve{n}_0 \wedge \breve{n}_1\}^2\epsilon^2}}\Big\}.
   \end{align*}

   \noindent \textbf{Step 2: Upper bound on $(II)$.} Denote $T^*_a =1/2+ \tau^*(2a-1)/(2\pi_a)$. We have that 
    \begin{align*}
       &\sup_{|\kappa| \leq \varpi}\Big|\mathbb{P}_{X|A=1}\Big\{\widetilde{\eta}_1(X) \geq \frac{1}{2}+\frac{\tau^*+\kappa}{2\widetilde{\pi}_1}\Big|\mathcal{D}\Big\} - \mathbb{P}_{X|A=1}\Big\{\eta_1(X) \geq\frac{1}{2}+\frac{\tau^*+\kappa}{2\pi_1}\Big\}\Big|\\
       \leq \;& \sup_{|\kappa| \leq \varpi}\int \Big|\indc\Big\{\widetilde{\eta}_1(x) \geq \frac{1}{2}+\frac{\tau^*}{2\widetilde{\pi}_1} + \frac{\kappa}{2\widetilde{\pi}_1}\Big\} - \indc\Big\{\eta_1(x) \geq T^*_1 + \frac{\kappa}{2\pi_1}\Big\} \Big|\;\mathbb{d}\mathbb{P}_{X|A=1,\mathcal{D}}(x)\\
       \leq \;& \sup_{|\kappa| \leq \varpi}\int \indc\Big\{\Big |\eta_1(x)-T^*_1 - \frac{\kappa}{2\pi_1}\Big| \leq \|\widetilde{\eta}_1 - \eta_1\|_{\infty} + \frac{|\tau^* +\kappa|}{2}\cdot\Big|\frac{1}{\widetilde{\pi}_1}-\frac{1}{\pi_1}\Big| \Big\}\;\mathbb{d}\mathbb{P}_{X|A=1,\mathcal{D}}(x) \\
       \leq \;&\sup_{|\kappa| \leq \varpi}\int \indc\Big\{\Big |\eta_1(x)-T^*_1 - \frac{\kappa}{2\pi_1}\Big| \leq \|\widetilde{\eta}_1 - \eta_1\|_{\infty} + \frac{|\tau^* |+\varpi}{2}\cdot\Big|\frac{1}{\widetilde{\pi}_1}-\frac{1}{\pi_1}\Big| \Big\}\;\mathbb{d}\mathbb{P}_{X|A=1,\mathcal{D}}(x) \\
       \leq \;& C_m \Big( \|\widetilde{\eta}_1 - \eta_1\|_{\infty} + \frac{|\tau^*| +\varpi}{2} \cdot \Big|\frac{1}{\widetilde{\pi}_1}-\frac{1}{\pi_1}\Big|\Big)^\gamma,
   \end{align*}
   where the last inequality follows from \Cref{a_posterior}\ref{a_posterior_margin}.  Consider the following events:
   \[\mathcal{E}_1 = \Big\{\|\widetilde{\eta}_a - \eta_a\|_{\infty} \leq C_6\Big\{\sqrt{\frac{\log\{(h^{-d}\vee n)/\eta\}}{nh^d}}+\frac{\sqrt{\log(1/\delta)\log(1/
        \eta)}}{n\epsilon h^d}+h^\beta\Big\},\; a\in \{0,1\}\Big\}\]
    \[\mathcal{E}_2 = \Big\{|\widetilde{\pi}_a - \pi_a| \leq C_7\Big\{\sqrt{\frac{\log(1/\eta)}{n}}+\sqrt{\frac{\log(1/\delta)\log(1/\eta)}{n^2\epsilon^2}}\Big\},\; a\in \{0,1\}\Big\}\]
    and 
    \[\mathcal{E}_3 = \{C_8 n \leq n_a \leq n\;\; \text{and}\;\; C_9\breve{n} \leq \breve{n}_a \leq \breve{n}, \; a\in \{0,1\}\}.\]
    
    By Lemmas \ref{l_eta_est_sup}, \ref{l_pi_a_est} and \ref{l_n_a_n_same_order} and a union bound argument, we have that $\mathbb{P}(\mathcal{E}_1 \cap \mathcal{E}_2 \cap \mathcal{E}_3) \geq 1-\eta/4$. On the event $\mathcal{E}_2$, it holds from the triangle inequality that 
    \begin{align*}
        \widetilde{\pi}_1 \geq \pi_1 - C_7\Big\{\sqrt{\frac{\log(1/\eta)}{n}}+\sqrt{\frac{\log(1/\delta)\log(1/\eta)}{n^2\epsilon^2}}\Big\}\Big\} \geq \frac{C_\pi}{2},
    \end{align*}
    where the last inequality follows from \Cref{a_prob}\ref{a_prob_pi_a}. Consequently, we have that 
    \begin{align}\label{l_DD_est_err_eq4}
        \Big|\frac{1}{\widetilde{\pi}_1}-\frac{1}{\pi_1}\Big| = \frac{|\widetilde{\pi}_1 - \pi_1|}{\widetilde{\pi}_1 \cdot \pi_1} \leq \frac{2}{C_\pi^2}|\widetilde{\pi}_1 - \pi_1|.
    \end{align}
    Thus, we have that 
    \begin{align} \notag
        &\Big( \|\widetilde{\eta}_1 - \eta_1\|_{\infty} + \frac{|\tau^*| +\varpi}{2} \cdot \Big|\frac{1}{\widetilde{\pi}_1}-\frac{1}{\pi_1}\Big|\Big)^\gamma\\
        \leq \;& C_{10}\Big\{\sqrt{\frac{\log\{(h^{-d}\vee n)/\eta\}}{nh^d}}+\frac{\sqrt{\log(1/\delta)\log(1/
        \eta)}}{n\epsilon h^d}+h^\beta\\ \notag
        & \hspace{3cm}+\sqrt{\frac{\log(1/\eta)}{n}}+\sqrt{\frac{\log(1/\delta)\log(1/\eta)}{n^2\epsilon^2}}\Big\}^\gamma\\ \label{l_DD_est_err_eq3}
        \leq\;& C_{11}\Big\{\sqrt{\frac{\log\{(h^{-d}\vee n)/\eta\}}{nh^d}}+\frac{\sqrt{\log(1/\delta)\log(1/
        \eta)}}{n\epsilon h^d}+h^\beta\Big\}^\gamma,
    \end{align}
    where the first inequality follows from \Cref{l_range_tau} and the last inequality follows on the event $\mathcal{E}_3$. Similarly, we have that 
    \begin{align*}
        &\sup_{|\kappa| \leq \varpi}\Big|\mathbb{P}_{X|A=0}\Big\{\widetilde{\eta}_0(X) \geq \frac{1}{2}-\frac{\tau^*+\kappa}{2\widetilde{\pi}_0}\Big|\mathcal{D}\Big\} - \mathbb{P}_{X|A=1}\Big\{\eta_0(X) \geq\frac{1}{2}-\frac{\tau^*+\kappa}{2\pi_0}\Big\}\Big|\\
         \leq \;& \sup_{|\kappa| \leq \varpi}\int \Big|\indc\Big\{\widetilde{\eta}_0(x) \geq \frac{1}{2}-\frac{\tau^*}{2\widetilde{\pi}_0} - \frac{\kappa}{2\widetilde{\pi}_0}\Big\} - \indc\Big\{\eta_0(x) \geq T^*_0 - \frac{\kappa}{2\pi_0}\Big\} \Big|\;\mathbb{d}\mathbb{P}_{X|A=0,\mathcal{D}}(x)\\
         \leq \;&\sup_{|\kappa| \leq \varpi} \int \indc\Big\{\Big |\eta_0(x)-T^*_0 +\frac{\kappa}{2\pi_0} \Big| \leq \|\widetilde{\eta}_0 - \eta_0\|_{\infty} + \frac{|\tau^* +\kappa|}{2} \cdot \Big|\frac{1}{\widetilde{\pi}_0}-\frac{1}{\pi_0}\Big|\Big\}\;\mathbb{d}\mathbb{P}_{X|A=0,\mathcal{D}}(x)\\
         \leq \;& C_m \Big( \|\widetilde{\eta}_0 - \eta_0\|_{\infty} + \frac{|\tau^*| +\varpi}{2} \cdot \Big|\frac{1}{\widetilde{\pi}_0}-\frac{1}{\pi_0}\Big|\Big)^\gamma.
    \end{align*}
    By a similar argument leading to \eqref{l_DD_est_err_eq3}, we have that with probability at least $1-\eta/4$, we have that 
    \begin{align*}
        &\sup_{|\kappa| \leq \varpi}\Big|\mathbb{P}_{X|A=0}\Big\{\widetilde{\eta}_0(X) \geq \frac{1}{2}-\frac{\tau^*+\kappa}{2\widetilde{\pi}_0}\Big|\mathcal{D}\Big\} - \mathbb{P}_{X|A=1}\Big\{\eta_0(X) \geq\frac{1}{2}-\frac{\tau^*+\kappa}{2\pi_0}\Big\}\Big| \\
        \leq \;& C_{12}\Big\{\sqrt{\frac{\log\{(h^{-d}\vee n)/\eta\}}{nh^d}}+\frac{\sqrt{\log(1/\delta)\log(1/
        \eta)}}{n\epsilon h^d}+h^\beta\Big\}^\gamma.
    \end{align*}
    By a union bound argument, we have with probability at least $1-\eta$ that 
    \begin{align*}
        (II) \leq\;& C_{13}\Big[\Big\{\sqrt{\frac{\log\{(h^{-d}\vee n)/\eta\}}{nh^d}}+\frac{\sqrt{\log(1/\delta)\log(1/
        \eta)}}{n\epsilon h^d}+h^\beta \Big\}^\gamma \\
        & \hspace{1cm} +\sqrt{\frac{\log(1/\eta)}{\breve{n}}}+\sqrt{\frac{\log(1/\delta)\log(1/\eta)}{\breve{n}^2\epsilon^2}}\Big].
    \end{align*}

\end{proof}

\begin{lemma} \label{l_range_tau}
    It holds that $|\tau^*_{\DD,\alpha}| \leq \min\{\pi_0, \pi_1\}$.
\end{lemma}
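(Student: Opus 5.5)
The plan is to work directly with the population disparity as a function of the threshold magnitude,
\[
\DD(\tau) = \mathbb{P}_{X|A=1}\Big\{\eta_1(X) \geq \tfrac12 + \tfrac{\tau}{2\pi_1}\Big\} - \mathbb{P}_{X|A=0}\Big\{\eta_0(X) \geq \tfrac12 - \tfrac{\tau}{2\pi_0}\Big\},
\]
and show that it already satisfies the constraint once $|\tau| = \min\{\pi_0,\pi_1\}$. Since $\tau^*_{\DD,\alpha}$ minimises $|\tau|$ over the feasible set, this gives the bound.

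Set $m = \min\{\pi_0,\pi_1\}$ and take $\tau = m$ first. The group-$1$ threshold $\tfrac12 + \tfrac{m}{2\pi_1}$ is at least $\tfrac12$. I would instead look at the extreme case $\tau = \pi_1$: there the group-$1$ threshold equals $1$, so the group-$1$ term is $\mathbb{P}_{X|A=1}\{\eta_1(X)\ge 1\}$. Meanwhile the group-$0$ threshold $\tfrac12 - \tfrac{\pi_1}{2\pi_0}$ is at most $\tfrac12$ but may remain positive, so the two terms need not cancel.

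Feasibility at the endpoint therefore needs more care, and this is the step I expect to be the main obstacle. There are two routes.

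\textbf{First route: monotonicity and sign.} Use the monotonicity of $\DD(\cdot)$ (non-increasing, as in \Cref{l_DD_non_increase}) together with the sign structure. If $\DD(0)>\alpha$, then $\tau^*>0$ is the first point where $\DD(\tau)\le\alpha$. It then suffices to show $\DD(\pi_1)\le 0\le\alpha$ whenever $\pi_1\le\pi_0$, or the analogous statement at $\pi_0$ otherwise. Since $\DD(\pi_1)\le \mathbb{P}_{X|A=1}\{\eta_1(X)\ge1\}$, this would require the latter probability to vanish, which is plausible under the margin condition in \Cref{a_posterior}\ref{a_posterior_margin} but is not automatic.

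\textbf{Second route: the Lagrangian representation of \citet{zeng2024bayes}.} I would follow this route in earnest. There, $\tau^*$ arises as the minimiser of the dual objective
\[
\tau \mapsto \tau\alpha + \mathbb{E}\big[\max\{\ldots\}\big],
\]
and the derivative of the objective in $\tau$ is $\alpha - \DD(\tau)$ plus boundary terms. The idea is to argue that once $|\tau| > \pi_a$, every point of group $a$ is classified the same way, so $\DD$ is locally constant there. Comparing the misclassification risk of the classifier at $\tau^*$ with the classifier at $\operatorname{sign}(\tau^*)\, m$ then contradicts the minimality of $|\tau^*|$.

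Concretely, I would verify the following.
\begin{enumerate}
\item For $\tau\ge\pi_1$, the group-$1$ classifier predicts $0$ almost surely, using the null set $\{\eta_1 = 1\}$ under the margin condition.
\item For $\tau\ge\pi_0$, the group-$0$ classifier predicts $1$ everywhere.
\end{enumerate}
In either case, no further increase of $\tau$ can change $\DD$ in the direction needed. Hence if the constraint is not met at $\tau=m$, it is not met for any larger $\tau$ either, contradicting the existence of $\tau^*$ guaranteed by \eqref{eq_tau_star} (the trivial constant classifier has $\DD=0$). The negative case is symmetric.
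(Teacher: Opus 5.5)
Your two routes split sharply. The first one is essentially the paper's proof and works; the worry that made you set it aside is avoidable. The second route, the one you commit to, does not go through as written.

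\textbf{What fails in the second route.} You infer from ``every point of group $a$ is classified the same way once $|\tau|>\pi_a$'' that $\DD$ is locally constant. But only the group-$a$ term freezes. If $\pi_1<\pi_0$, then on $(\pi_1,\pi_0)$ the group-$1$ term is $0$, while $\mathbb{P}_{X|A=0}\{\eta_0(X)\ge\frac{1}{2}-\frac{\tau}{2\pi_0}\}$ keeps increasing. So $\DD$ can still move down, contrary to your claim that ``no further increase of $\tau$ can change $\DD$ in the direction needed''.

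The risk comparison cannot replace a feasibility argument. Minimality of $|\tau^*_{\DD,\alpha}|$ in \eqref{eq_tau_star} concerns only which $\tau$ satisfy $|\DD(\tau)|\le\alpha$.

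The closing contradiction is also misdirected. The constraint can genuinely fail at $\tau=m$: if $\pi_0\le\pi_1$, then $\DD(\pi_0)=\mathbb{P}_{X|A=1}\{\eta_1(X)\ge\frac{1}{2}+\frac{\pi_0}{2\pi_1}\}-1$ can be near $-1$. That says nothing against the existence of $\tau^*_{\DD,\alpha}$. The constant classifier does not help either, since it need not be of the form $g_{\DD,\tau}$ and so tells you nothing about the feasible set.

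Your sentence ``if the constraint is not met at $\tau=m$, it is not met for any larger $\tau$'' is in fact true. But it holds only because $\DD(m)\le0$ forces the violation to be $\DD(m)<-\alpha$, after which monotonicity finishes. That sign fact is never stated in the route you commit to.

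\textbf{How to complete the argument.} What you need is the sign fact from your first route: $\DD(\tau)\le 0$ for every $\tau>m$, and $\DD(\tau)\ge0$ for every $\tau<-m$. The obstacle you flagged disappears if you work strictly beyond $m$:
\begin{itemize}
\item For $\tau>\pi_1$, the group-$1$ threshold $\frac{1}{2}+\frac{\tau}{2\pi_1}$ exceeds $1$, so the group-$1$ term is exactly $0$.
\item For $\tau\ge\pi_0$, the group-$0$ threshold is at most $0$, so the group-$0$ term is exactly $1$.
\end{itemize}
No null-set argument is needed. The margin condition in \Cref{a_posterior}\ref{a_posterior_margin} would not supply one anyway, since it is local around $T_a(\tau^*_{\DD,\alpha})$. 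The paper tacitly uses that $\eta_1(X)$ is atomless, as assumed in \Cref{thm_fair_bayes_opt}.

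Now suppose $\tau^*_{\DD,\alpha}>m$ and take $\tau'\in(m,\tau^*_{\DD,\alpha})$. Use monotonicity of the population $\DD$ from \Cref{prop_D_and_R}(i); \Cref{l_DD_non_increase} concerns only the empirical versions. Together with feasibility of $\tau^*_{\DD,\alpha}$ this gives $-\alpha\le\DD(\tau^*_{\DD,\alpha})\le\DD(\tau')\le0\le\alpha$. So $\tau'$ is feasible with smaller magnitude, a contradiction. The case $\tau^*_{\DD,\alpha}<-m$ is symmetric.

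The paper reaches the same conclusion slightly differently. It computes $\DD(\pm\pi_0)$ and $\DD(\pm\pi_1)$, then uses continuity of $\DD$ to place the level-$\alpha$ crossing inside $(0,m)$ or $(-m,0)$. The contradiction version above needs only monotonicity and feasibility of $\tau^*_{\DD,\alpha}$.
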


\begin{proof}
    Recall that $g_{\DD,\tau}(x,a) = \indc\{\eta_{a}(x) \geq 1/2+ \tau(2a-1)/(2\pi_a)\}$.
    
    Note that if $\tau = \pi_1$, then $g_{\DD,\tau}(x,1) = 0$ for all $x \in [0,1]^d$. Also, 
    \begin{align} \notag
        \DD(\pi_1) &=  -\mathbb{P}_{X|A=0}\Big\{\eta_0(X) \geq\frac{1}{2}-\frac{\pi_1}{2\pi_0}\Big\}\\ \label{l_range_tau_eq1}
        & \begin{cases}
            = -1, \quad &\pi_1\geq \pi_0,\\
            \leq 0, & \pi_1 < \pi_0.
        \end{cases}
    \end{align}

    If $\tau = \pi_0$, then $g_{\DD,\tau}(x,0) = 1$ for all $x \in [0,1]^d$. Then it holds that 
     \begin{align} \label{l_range_tau_eq2}
        \DD(\pi_0) &= \mathbb{P}_{X|A=1}\Big\{\eta_1(X) \geq\frac{1}{2}+\frac{\pi_0}{2\pi_1}\Big\} - 1 \leq 0.
    \end{align}

    If $\tau = -\pi_1$, then $g_{\DD,\tau}(x,1) = 1$ for all $x \in [0,1]^d$. Then it holds that 
    \begin{align*}
         \DD(-\pi_1) &=  1-\mathbb{P}_{X|A=0}\Big\{\eta_0(X) \geq\frac{1}{2}+\frac{\pi_1}{2\pi_0}\Big\} \geq 0.   
    \end{align*}

    If $\tau = -\pi_0$, then $g_{\DD,\tau}(x,0) = 0$ for all $x \in [0,1]^d$. Then it holds that 
    \begin{align} \notag
         \DD(-\pi_0) &= \mathbb{P}_{X|A=1}\Big\{\eta_1(X) \geq\frac{1}{2}-\frac{\pi_0}{2\pi_1}\Big\}\\ \notag
        & \begin{cases}
            \geq 0, \quad &\pi_1 < \pi_0,\\
            = 1, & \pi_1 \geq \pi_0.
        \end{cases}
    \end{align}

     Thus by \Cref{prop_D_and_R}(i), \eqref{l_range_tau_eq1} and \eqref{l_range_tau_eq2}, if $\DD(0)> \alpha = \DD(\tau^*_{\DD,\alpha}) > 0 \geq \{\DD(\pi_1)\vee \DD(\pi_0)\}$, we then have $0 < \tau^*_{\DD,\alpha} < \min\{\pi_0, \pi_1\}$.
    
    Similarly, if $\DD(0) < -\alpha = \DD(\tau^*_{\DD,\alpha}) <0 \leq \{\DD(-\pi_1)\wedge \DD(-\pi_0)\}$, we then have that $\max\{-\pi_0, -\pi_1\}<\tau^*_{\DD,\alpha}<0$

    Furthermore, we have that if $|\DD(0)| \leq \alpha$, then $\tau^*_{\DD,\alpha} = 0$. Thus, the lemma follows.
    
\end{proof}

\begin{lemma} \label{l_DD_non_increase}
    $\widetilde{\DD}(\cdot)$ and $\widehat{\DD}(\cdot)$ are non-increasing functions in $\tau$.
\end{lemma}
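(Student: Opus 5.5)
The plan is to show that each of the two terms making up $\widehat{\DD}(\tau)$ is monotone in $\tau$, in opposite directions, so their difference is non-increasing. Adding noise then preserves this. The argument is conditional on the estimators $\widetilde{\eta}_a,\widetilde{\pi}_a$ and the noise $\breve{w}$, which are fixed as functions of $\tau$.

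First take $\widehat{\DD}$ as in \eqref{eq_DD_tree}, or equivalently the non-noisy part of $\widetilde{\DD}$ in \textbf{S2} of \Cref{alg_fair_cdp}. Consider the first summand, indexed by $A=1$. For each calibration point, $\indc\{\widetilde{\eta}_1(\breve{X}^i_{1,y}) \geq 1/2 + \tau/(2\widetilde{\pi}_1)\}$ is non-increasing in $\tau$. This needs $\widetilde{\pi}_1>0$, so that $\tau\mapsto 1/2+\tau/(2\widetilde{\pi}_1)$ is increasing. Equivalently, in the form of \eqref{eq_DD_tree}, the indicator of $\{2\widetilde{\pi}_1(\widetilde{\eta}_1-1/2)\geq\tau\}$ is non-increasing in $\tau$ and does not depend on the sign of $\widetilde{\pi}_1$. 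The second form is the cleaner one to use, since the tree representation defines $\widehat{\DD}$ through $Z_{s,a,y}$ directly.

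For the $A=0$ summand, $\indc\{-2\widetilde{\pi}_0(\widetilde{\eta}_0-1/2)\leq\tau\}$ is non-decreasing in $\tau$. The same holds for $\indc\{\widetilde{\eta}_0 \geq 1/2-\tau/(2\widetilde{\pi}_0)\}$ when $\widetilde{\pi}_0>0$. This term enters with a minus sign, so it contributes a non-increasing function.

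Next, combine the pieces. A sum with positive weights $1/\breve{n}_a$ (and $\mu_s\geq0$ in the federated case) of non-increasing functions is non-increasing. This gives the claim for $\widehat{\DD}$. For $\widetilde{\DD}=\widehat{\DD}+\breve{w}$, the noise $\breve{w}$ is a single scalar drawn once and independent of $\tau$, so it is a vertical shift and monotonicity is preserved.

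The main obstacle is the sign issue for $\widetilde{\pi}_a$ in the $\eta$-threshold form, since Gaussian noise could make $\widetilde{\pi}_a\leq0$. I would handle it in one of two ways. The first is to work with the $Z$-form in \eqref{eq_DD_tree}, where $\widetilde{\pi}_a$ is absorbed into the score and monotonicity in $\tau$ holds unconditionally. The second is to note that on the high-probability event of \Cref{l_pi_a_est}, $\widetilde{\pi}_a\geq C_\pi/2>0$. I would state the lemma under the $Z$-representation, remarking that it coincides with the displayed form whenever $\widetilde{\pi}_a>0$.
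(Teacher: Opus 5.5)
Your proposal is correct and is essentially the paper's argument. The paper takes $\tau_1<\tau_2$, notes that the scalar noise $\breve{w}$ cancels in $\widetilde{\DD}(\tau_1)-\widetilde{\DD}(\tau_2)$, and writes this difference as a nonnegative sum of indicators that $\widetilde{\eta}_1$ and $\widetilde{\eta}_0$ fall between the two thresholds, which is your termwise monotonicity in another form. Your observation that the $\widetilde{\eta}$-threshold form needs $\widetilde{\pi}_a>0$ is a genuine refinement, since the paper's proof uses this only tacitly, and your $Z$-form avoids the issue cleanly.
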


\begin{proof}
    For any $\tau_1 < \tau_2$, we have that 
    \begin{align*}
        &\widetilde{\DD}(\tau_1)-  \widetilde{\DD}(\tau_2) = \widehat{\DD}(\tau_1)-  \widehat{\DD}(\tau_2) \\
        =\;&  \frac{1}{\breve{n}_1}\sum_{y\in\{0,1\}}\sum_{i=1}^{\breve{n}_{1,y}}\indc\Big\{\widetilde{\eta}_1(\breve{X}^i_{1,y}) \geq \frac{1}{2}+\frac{\tau_1}{2\widetilde{\pi}_1}\Big\} -\frac{1}{\breve{n}_0}\sum_{y\in\{0,1\}}\sum_{i=1}^{\breve{n}_{0,y}}\indc\Big\{\widetilde{\eta}_0(\breve{X}^i_{0,y}) \geq \frac{1}{2}-\frac{\tau_1}{2\widetilde{\pi}_0}\Big\} \\
        &-\frac{1}{\breve{n}_1}\sum_{y\in\{0,1\}}\sum_{i=1}^{\breve{n}_{1,y}}\indc\Big\{\widetilde{\eta}_1(\breve{X}^i_{1,y}) \geq \frac{1}{2}+\frac{\tau_2}{2\widetilde{\pi}_1}\Big\} +\frac{1}{\breve{n}_0}\sum_{y\in\{0,1\}}\sum_{i=1}^{\breve{n}_{0,y}}\indc\Big\{\widetilde{\eta}_0(\breve{X}^i_{0,y}) \geq \frac{1}{2}-\frac{\tau_2}{2\widetilde{\pi}_0}\Big\}\\
        =\;& \frac{1}{\breve{n}_1}\sum_{y\in\{0,1\}}\sum_{i=1}^{\breve{n}_{1,y}}\indc\Big\{\frac{1}{2}+\frac{\tau_1}{2\widetilde{\pi}_1}\leq \widetilde{\eta}_1(\breve{X}^i_{1,y}) < \frac{1}{2}+\frac{\tau_2}{2\widetilde{\pi}_1}\Big\} \\
        &+ \frac{1}{\breve{n}_0}\sum_{y\in\{0,1\}}\sum_{i=1}^{\breve{n}_{0,y}}\indc\Big\{\frac{1}{2}-\frac{\tau_2}{2\widetilde{\pi}_0} \leq \widetilde{\eta}_0( \breve{X}^i_{0,y}) < \frac{1}{2}-\frac{\tau_1}{2\widetilde{\pi}_0}\Big\}\\
        \geq \;& 0.
    \end{align*}
    Thus, the lemma follows.
\end{proof}

\subsubsection{Upper bound on \texorpdfstring{$\|\widetilde{p}_{X|A=a}- p_{X|A=a}\|_\infty$}{}}
\begin{lemma} \label{l_p_x_a_est_sup}
    Under same assumptions as in \Cref{t_fair_risk}, it holds for any $\eta \in (0,1/2)$ and $a \in \{0,1\}$ that 
    \begin{align*}
        \mathbb{P}\Bigg[\|\widetilde{p}_{X|A=a}- p_{X|A=a}\|_\infty \geq C_1\Bigg\{\sqrt{\frac{\log\{(h^{-d}\vee n_a)/\eta\}}{n_ah^d}}+\frac{\sqrt{\log(1/\delta)\log(1/
        \eta)}}{n_a\epsilon h^d}+h^\beta\Bigg\}\Bigg] \leq \eta.
    \end{align*}
\end{lemma}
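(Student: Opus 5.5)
The plan is to split $\widetilde{p}_{X|A=a}-p_{X|A=a}$ into three pieces and bound each uniformly over $[0,1]^d$:
\begin{align*}
\|\widetilde{p}_{X|A=a}- p_{X|A=a}\|_\infty \leq\;& \sup_{x\in[0,1]^d}\Big|\frac{1}{n_a}\sum_{i=1}^{n_a} K_h(X^i_a-x) - \int K_h(u-x)p_{X|A=a}(u|a)\,\mathrm{d}u\Big| \\
&+ \sup_{x\in[0,1]^d}\Big|\int K_h(u-x)p_{X|A=a}(u|a)\,\mathrm{d}u - p_{X|A=a}(x|a)\Big| \\
&+ \frac{8\sqrt{2C_K\log(8/\delta)}}{n_a\epsilon h^d}\sup_{x\in[0,1]^d}|W_{1,a}(x)|,
\end{align*}
which we call (I), (II) and (III). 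Throughout we condition on $n_a$. The randomness of the group sizes is handled at the end with \Cref{l_n_a_n_same_order}.

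\textbf{Term (II).} This is the bias and is deterministic. By \Cref{a_prob}\ref{a_prob_p_x_a}, $p_{X|A=a}\in\mathcal{H}^\beta(L)$, so \Cref{a_kernel}\ref{a_kernel_adaptive} gives $(\mathrm{II})\le C_{\mathrm{adp}}h^\beta$ directly.

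\textbf{Term (III).} This is the privacy noise. We invoke \Cref{l_sup_gaussian} to get $\sup_x|W_{1,a}(x)|\lesssim\sqrt{\log(1/\eta)}$ with probability at least $1-\eta/3$. The process has covariance $K\{(\ell-t)/h\}$ and bounded variance $K(0)\le C_K$. Since $K$ is Lipschitz, the canonical metric is controlled by $\|\ell-t\|_2/h$. The expected supremum from Dudley's entropy integral is then of order $\sqrt{\log(1/h)}$, which is absorbed into the poly-log factors. Borell--TIS supplies the $\sqrt{\log(1/\eta)}$ tail. Multiplying by the prefactor gives the $\sqrt{\log(1/\delta)\log(1/\eta)}/(n_a\epsilon h^d)$ term.

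\textbf{Term (I).} This is the main step. I would follow \Cref{l_est_p_yx_a_kernel_sup}, i.e.\ the argument of \Cref{l_est_p_xa_kernel_sup}, with $Y$ removed.
\begin{enumerate}
\item \emph{Pointwise bound.} For fixed $x$, the summands are bounded by $C_Kh^{-d}$ and have variance at most $C h^{-d}$; the variance bound uses that $p_{X|A=a}$ is bounded, since it is continuous on a compact set. Bernstein's inequality then gives deviation probability $\le 2\exp(-c\,n_ah^dt^2)$ whenever $t\le 1$.
\item \emph{Discretisation.} Cover $[0,1]^d$ by a grid $\mathcal{N}$ of mesh $\xi = h^{d+1}/n_a$, so $|\mathcal{N}|\lesssim (n_a/h^{d+1})^d$. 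By \Cref{a_kernel}\ref{a_kernel_lipschitz}, both the empirical average and its mean are $C_{\mathrm{Lip}}h^{-d-1}$-Lipschitz in $x$. Moving from any $x$ to its nearest grid point therefore costs at most $2C_{\mathrm{Lip}}\xi h^{-d-1}\lesssim 1/n_a$, which is negligible.
\item \emph{Union bound.} Taking a union bound over $\mathcal{N}$ with $t = C\sqrt{\log\{(h^{-d}\vee n_a)/\eta\}/(n_ah^d)}$ gives failure probability at most $\eta/3$. The condition $t\le1$ is exactly the assumed $n h^d\gtrsim \log\{(h^{-d}\vee n)/\eta\}$.
\end{enumerate}

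A final union bound over the three events gives the claim. I expect the main obstacle to be the covering step: the mesh must be chosen so that the log-cardinality stays of order $\log\{(h^{-d}\vee n_a)/\eta\}$ while the Lipschitz error remains below $t$. The boundary of $[0,1]^d$ causes no extra difficulty, because the bias there is already covered by \Cref{a_kernel}\ref{a_kernel_adaptive} as stated.
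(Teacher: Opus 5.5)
Your decomposition and ingredients are the paper's: (I) by Bernstein plus a covering argument as in \Cref{l_est_p_xa_kernel_sup}, (II) by \Cref{a_kernel}\ref{a_kernel_adaptive}, and (III) by Dudley plus Borell--TIS as in \Cref{l_sup_gaussian}. Your mesh $h^{d+1}/n_a$, versus the paper's $h^{d+1}t/(4C_{\mathrm{Lip}})$, is an immaterial difference. The step that fails is the last sentence of your Term (III). The displayed bound has an absolute constant $C_1$ and only $\sqrt{\log(1/\delta)\log(1/\eta)}$ in the privacy term, so there are no spare poly-logarithmic factors. A factor $\sqrt{\log(1/h)}$ multiplying $\sqrt{\log(1/\delta)}/(n_a\epsilon h^d)$ is not $\lesssim\sqrt{\log(1/\eta)}$. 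It is also not dominated by the statistical term $\sqrt{\log\{(h^{-d}\vee n_a)/\eta\}/(n_ah^d)}$ once $\epsilon\sqrt{n_ah^d}$ is small.

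Your Dudley computation is the correct one, and it contradicts the $h$-free bound of \Cref{l_sup_gaussian} that you also cite and on which the paper's proof rests. That lemma plugs Euclidean covering numbers $(1+2/\nu)^d$ into Dudley's integral. The canonical metric instead satisfies $d_W(\ell,t)^2=2\{K(0)-K((\ell-t)/h)\}\le 2C_{\mathrm{Lip}}\|\ell-t\|_2/h$; it is $d_W^2$, not $d_W$, that is controlled by $\|\ell-t\|_2/h$. Hence $\log N(\kappa)\lesssim d\log\{C/(\kappa^2h)\}$ and $\mathbb{E}\sup_x W_{1,a}(x)\lesssim\sqrt{d\log(1/h)}$.

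The factor is also unavoidable. For, e.g., the Gaussian kernel, points at Euclidean spacing $h$ are at canonical distance bounded below, so Sudakov's minoration gives $\mathbb{E}\sup_x W_{1,a}(x)\gtrsim\sqrt{d\log(1/h)}$. So whenever the privacy term dominates the statistical and bias terms (a regime the conditions of \Cref{t_fair_risk} allow), the literal statement fails as $h\to 0$. No proof, the paper's included, can avoid the extra factor.

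What your argument actually proves is the lemma with $\log(1/\eta)$ replaced by $\log(h^{-1}/\eta)$ in the second term, exactly as in the federated analogue \Cref{l_fdp_sup_gaussian}. State it in that form rather than claiming absorption. The change costs only a poly-logarithmic factor downstream, which the $\widetilde{O}_{\mathrm{p}}$ rates absorb.
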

\begin{proof}
    Note that
    \begin{align*}
        &\|\widetilde{p}_{X|A=a}- p_{X|A=a}\|_\infty\\
        \leq \;& \sup_{x \in [0,1]^d}\Big|\frac{1}{n_a}\sum_{y \in \{0,1\}}\sum_{i=1}^{n_{a,y}}K_{h}(X_{a,y}^i -x)-p_{X|A=a}(x|a)\Big|+\frac{8\sqrt{2C_K\log(5/\delta)}}{n_a\epsilon h^d} \sup_x |W_{1,a}(x)|\\
        \leq \;& \sup_{x \in [0,1]^d}\Big|\frac{1}{n_a}\sum_{y \in \{0,1\}}\sum_{i=1}^{n_{a,y}}K_{h}(X_{a,y}^i -x)-\int_u K_{h}(u -x)p_{X|A=a}(u|a)\;\mathrm{d}u\Big|\\
        & + \sup_x\Big|\int_u K_{h}(u -x)p_{X|A=a}(u|a)\;\mathrm{d}u -p_{X|A=a}(x|a)\Big| \\
        &+ \frac{8\sqrt{2C_K\log(5/\delta)}}{n_a\epsilon h^d} \sup_x |W_{1,a}(x)|\\
        = \;& (I)+(II)+(III).
    \end{align*}

    To control $(I)$, by \Cref{l_est_p_xa_kernel_sup}, we have that 
    \begin{align*}
        \mathbb{P}\Bigg\{(I) \geq C_1\sqrt{\frac{\log\{(h^{-d}\vee n_a)/\eta\}}{n_ah^d}}\Bigg\} \leq \frac{\eta}{2}.
    \end{align*}

    To control $(II)$, by Asssumptions \ref{a_prob}\ref{a_prob_p_x_a} and \ref{a_kernel}\ref{a_kernel_adaptive}, it holds that 
    \begin{align*}
        (II) \leq C_{\mathrm{adp}}h^\beta.
    \end{align*}

    To control $(III)$, by \Cref{l_sup_gaussian}, it holds that 
    \begin{align*}
        \mathbb{P}\Bigg\{(III) \geq \frac{C_2\sqrt{\log(1/\delta)\log(1/
        \eta)}}{n_a\epsilon h^d}\Bigg\} \leq \frac{\eta}{2}.
    \end{align*}
    The lemma thus follows by a union bound argument. 

\end{proof}

\begin{lemma} \label{l_p_xa_const_order}
    Under the same assumption as in \Cref{t_fair_risk}, it holds for any $\eta \in (0,1/2)$ and $a \in \{0,1\}$ that 
    \begin{align*}
        \mathbb{P}\Big\{C_1 \leq \inf_{x \in [0,1]^d} \widetilde{p}_{X|A=a}(x) \leq \sup_{x \in [0,1]^d} \widetilde{p}_{X|A=a}(x) \leq C_2 \Big\} \geq 1- \eta.
    \end{align*}
\end{lemma}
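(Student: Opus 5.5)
The plan is to sandwich $\widetilde{p}_{X|A=a}$ uniformly between the true density bounds, using the sup-norm deviation bound of \Cref{l_p_x_a_est_sup}. By \Cref{a_prob}\ref{a_prob_p_x_a}, $p_{X|A=a}\ge C_p$ on $[0,1]^d$. Since $p_{X|A=a}\in\mathcal{H}^\beta(L)$ is continuous on the compact set $[0,1]^d$, it is also bounded above by some constant $C_p'$. The triangle inequality then gives, uniformly in $x\in[0,1]^d$,
\[
C_p - \|\widetilde{p}_{X|A=a}- p_{X|A=a}\|_\infty \le \widetilde{p}_{X|A=a}(x) \le C_p' + \|\widetilde{p}_{X|A=a}- p_{X|A=a}\|_\infty .
\]
So it suffices to show that, with probability at least $1-\eta$, the sup-norm error is at most $C_p/2$. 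On that event we may take $C_1=C_p/2$ and $C_2=C_p'+C_p/2$.

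The error bound comes from \Cref{l_p_x_a_est_sup}. With probability at least $1-\eta/2$ it gives
\[
\|\widetilde{p}_{X|A=a}- p_{X|A=a}\|_\infty \le C\Big\{\sqrt{\tfrac{\log\{(h^{-d}\vee n_a)/\eta\}}{n_ah^d}}+\tfrac{\sqrt{\log(1/\delta)\log(1/\eta)}}{n_a\epsilon h^d}+h^\beta\Big\}.
\]
Each term is driven below $C_p/(6C)$ as follows.
\begin{enumerate}
\item The $h^\beta$ term is small because $h<1$ is a small bandwidth; it suffices to require $h\le (C_p/(6C))^{1/\beta}$.
\item The other two terms are small by the sample-size conditions in \Cref{t_fair_risk}, namely $nh^d \ge C_2\log\{(h^{-d}\vee n)/\eta\}$ and $n^2\epsilon^2h^{2d}\ge C_3\log(1/\delta)\log(1/\eta)$, with the constants $C_2,C_3$ taken large enough.
\end{enumerate}

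These conditions are stated for $n$, but \Cref{l_p_x_a_est_sup} is written in terms of $n_a$, so the two must be linked. I will intersect with the event $\{n_a \ge C n\}$ from \Cref{l_n_a_n_same_order}. That event holds with probability at least $1-\eta/2$, using $\pi_a\ge C_\pi$ together with the requirement $n\ge 4C_1^2\log(1/\eta)/C_\pi^2$. A union bound over the two events then gives total probability at least $1-\eta$. If the statement is needed simultaneously for both values of $a$, a further union bound only changes constants.

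One subtlety is a circularity. \Cref{l_eta_est_sup} uses the present lemma, but \Cref{l_p_x_a_est_sup} does not. It relies only on \Cref{l_est_p_xa_kernel_sup}, the bias bound from \Cref{a_kernel}\ref{a_kernel_adaptive}, and the Gaussian-process supremum bound \Cref{l_sup_gaussian}, so there is no loop.

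The main obstacle is making sure the rate terms are genuinely below a fixed constant rather than just $o(1)$. In particular, the condition $h^\beta\lesssim C_p$ is not literally among the stated assumptions. I would handle this by noting that the theorem's bandwidth choices make $h\to0$, and by absorbing a requirement $h\le c_0$ into the assumptions, in the same way as the other sample-size conditions.
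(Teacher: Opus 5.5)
Your proposal is correct and follows the paper's proof essentially step for step. The paper also works on the event of \Cref{l_p_x_a_est_sup}, bounds $\widetilde{p}_{X|A=a}$ above by $\sup_x p_{X|A=a}(x)+t$ using boundedness of the continuous H\"older density on $[0,1]^d$, and bounds it below by $C_p - t \geq C_p/2$. The only difference is that the paper simply asserts $t \le C_p/2$, whereas you justify it explicitly through the sample-size conditions of \Cref{t_fair_risk}, the event $\{n_a \gtrsim n\}$ from \Cref{l_n_a_n_same_order}, and a smallness condition on $h$ that the paper's argument also tacitly requires.
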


\begin{proof}
    Consider the event
    \begin{align*}
        \mathcal{E}_1 = \Bigg\{\|\widetilde{p}_{X|A=a}- p_{X|A=a}\|_\infty \leq C_1\Bigg\{\sqrt{\frac{\log\{(h^{-d}\vee n_a)/\eta\}}{n_ah^d}}+\frac{\sqrt{\log(1/\delta)\log(1/\eta)}}{n_a\epsilon h^d}+h^\beta\Bigg\}\Bigg\}.
    \end{align*}
    By \Cref{l_p_x_a_est_sup}, it holds that $\mathbb{P}(\mathcal{E}_1) \geq 1-\eta$. The rest of the proof is constructed on the event $\mathcal{E}_1$. Denote
    \begin{align*}
        t = C_1\Bigg\{\sqrt{\frac{\log\{(h^{-d}\vee n_a)/\eta\}}{n_ah^d}}+\frac{\sqrt{\log(1/\delta)\log(1/\eta)}}{n_a\epsilon h^d}+h^\beta\Bigg\},
    \end{align*}
    $x_* = \min\arginf_{x \in [0,1]^d} \widetilde{p}_{X|A=a}(x)$ and $x^* = \min\argmsup_{x \in [0,1]^d} \widetilde{p}_{X|A=a}(x)$. By the triangle inequality, we have that 
    \begin{align*}
        \widetilde{p}_{X|A=a}(x^*) \leq p_{X|A=a}(x^*) + t \leq C_3,
    \end{align*}
    where the second inequality follows from the fact that $p_{X|A=a}(x|a)$ is H\"{o}lder continuous over a compact interval in \Cref{a_prob}\ref{a_prob_p_x_a}, hence bounded. Similarly, we have that 
    \begin{align*}
         \widetilde{p}_{X|A=a}(x_*) \geq p_{X|A=a}(x_*) - t \geq C_p - t \geq C_p/2,
    \end{align*}
    where the second inequality follows from \Cref{a_prob}\ref{a_prob_p_x_a}. The lemma thus follows.

\end{proof}

\begin{lemma} \label{l_est_p_xa_kernel_sup}
    Under the same assumptions as in \Cref{t_fair_risk}, it holds for any $\eta \in (0,1/2)$ and $a \in \{0,1\}$ that 
    \begin{align*}
        \mathbb{P}\Big\{\sup_{x \in [0,1]^d}\Big|\frac{1}{n_a}\sum_{y \in \{0,1\}}\sum_{i=1}^{n_{a,y}} K_{h}(X_{a,y}^i -x)-\int_u K_{h}(u -x)&p_{X|A=a}(u|a)\;\mathrm{d}u\Big|\\
        &\geq C_1\sqrt{\frac{\log\{(h^{-d}\vee n_a)/\eta\}}{n_ah^d}}\Big\} \leq \eta.
    \end{align*}
\end{lemma}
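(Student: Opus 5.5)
We prove \Cref{l_est_p_xa_kernel_sup} in two steps: a pointwise Bernstein bound, then a covering argument to pass to the supremum. Throughout, condition on the sensitive labels $\{A_i\}$, so that $n_a$ is fixed and $\{X^i_{a}\}_{i\in[n_a]}$ (pooling $y\in\{0,1\}$) are i.i.d.\ from $p_{X|A=a}$. Write
\[
G_n(x)=\frac{1}{n_a}\sum_{i=1}^{n_a}K_h(X^i_a-x)-\int_u K_h(u-x)p_{X|A=a}(u|a)\,\mathrm{d}u .
\]

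\textbf{Step 1 (pointwise bound).} Fix $x\in[0,1]^d$.
\begin{itemize}
\item By \Cref{a_kernel}\ref{a_kernel_bounded}, each summand satisfies $|K_h(X_a^i-x)|\le C_Kh^{-d}$.
\item The mean of each summand equals the integral term in $G_n(x)$, so $G_n(x)$ is a centred average.
\item As in \eqref{l_est_p_yx_a_kernel_sup_eq1}, using the boundedness of $p_{X|A=a}$ (H\"older on a compact set, \Cref{a_prob}\ref{a_prob_p_x_a}), the variance of each summand is at most $C h^{-2d}\int h^dK(t)^2\,\mathrm{d}t\lesssim h^{-d}$.
\end{itemize}
Bernstein's inequality for bounded variables then gives, for $0<t\le 1$,
\[
\mathbb{P}\{|G_n(x)|\ge t\}\le 2\exp(-c\,n_ah^dt^2).
\]

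\textbf{Step 2 (covering).} Take a grid $\mathcal{N}\subset[0,1]^d$ of mesh $r$ in $\|\cdot\|_2$, with $|\mathcal{N}|\lesssim r^{-d}$. By \Cref{a_kernel}\ref{a_kernel_lipschitz}, for $\|x-x'\|_2\le r$,
\[
|K_h(u-x)-K_h(u-x')|\le C_{\mathrm{Lip}}h^{-d-1}r
\]
uniformly in $u$. This bound applies to both the empirical average and its expectation, so $|G_n(x)-G_n(x')|\le 2C_{\mathrm{Lip}}rh^{-d-1}$ deterministically. Choose $r=h^{d+1}n_a^{-1}$, which makes the discretisation error at most $2C_{\mathrm{Lip}}/n_a$. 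Since $n_ah^d\ge C\log\{(h^{-d}\vee n_a)/\eta\}$, this term is $\lesssim\sqrt{\log\{(h^{-d}\vee n_a)/\eta\}/(n_ah^d)}$.

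Now take a union bound of Step 1 over $\mathcal{N}$ with $t=C_1\sqrt{\log(|\mathcal{N}|/\eta)/(n_ah^d)}$. The failure probability is at most $\eta$. Moreover,
\[
\log|\mathcal{N}|\lesssim d\log(n_ah^{-d-1})\lesssim\log(h^{-d}\vee n_a),
\]
using $h<1$ and absorbing constants depending on $d$. Hence $t\lesssim\sqrt{\log\{(h^{-d}\vee n_a)/\eta\}/(n_ah^d)}$.

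The constraint $t\le 1$ required in Step 1 holds by the same sample-size condition. Integrating out the conditioning on $\{A_i\}$ preserves the bound.

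The main care is in the covering step: keeping $\log|\mathcal{N}|$ at order $\log(h^{-d}\vee n_a)$ while making the Lipschitz discretisation error no larger than the stochastic term. The choice $r=h^{d+1}/n_a$ achieves both.
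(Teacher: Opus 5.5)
Your proposal is correct and follows essentially the same route as the paper: a pointwise Bernstein bound using $|K_h|\le C_Kh^{-d}$ and variance $\lesssim h^{-d}$, then a Lipschitz discretisation over an $\|\cdot\|_2$-net of $[0,1]^d$ combined with a union bound. The only difference is cosmetic. The paper takes mesh $\nu=h^{d+1}t/(4C_{\mathrm{Lip}})$ so that the discretisation error is exactly $t/2$, while you take $r=h^{d+1}/n_a$, and both choices give $\log|\mathcal{N}|\lesssim\log(h^{-d}\vee n_a)$. One small correction: your error $2C_{\mathrm{Lip}}/n_a$ is dominated by the stochastic term simply because $h<1\le n_a$, not because of the sample-size condition you cite.
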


\begin{proof}
    \noindent \textbf{Step 1: upper bound for any $x \in [0,1]^d$.} Note that for any $x\in [0,1]^d$, under \Cref{a_kernel}\ref{a_kernel_bounded}, we have that 
    \begin{align*}
        |K_{h}(X_{a,y}^i -x)| = \bigg|\frac{1}{h^d}K\bigg(\frac{X_{a,y}^i -x}{h}\bigg)\bigg| \leq C_1h^{-d},
    \end{align*}
    and also
    \begin{align*}
        \mathbb{E}\Big\{\frac{1}{n_a}\sum_{y \in \{0,1\}}\sum_{i=1}^{n_{a,y}}K_{h}(X_{a,y}^i -x)\Big\}-\int_u K_{h}(u -x)p_{X|A=a}(u|a)\;\mathrm{d}u=0
    \end{align*}
    by construction. To compute the variance, we observe that for each $y \in \{0,1\}$ and $i \in [n_{a,y}]$,
    \begin{align*}
        \var\big\{K_{h}(X_{a,y}^i -x)\big\} \leq\;& \mathbb{E}\Big\{\frac{1}{h^{2d}} K\Big(\frac{X_{a,y}^i -x}{h}\Big)^2\Big\} \leq \frac{1}{h^{2d}} \int K\Big(\frac{u -x}{h}\Big)^2 p_{X|A=a}(u|a) \;\mathrm{d}u\\
        \leq\;& \frac{C_2}{h^{2d}} \int h^d K(t)^2 \;\mathrm{d}t  \leq C_3h^{-d},
    \end{align*}
    where the third inequality follows from \Cref{a_prob}\ref{a_prob_p_x_a} and the last inequality follows from \Cref{a_kernel}\ref{a_kernel_bounded}. Therefore, by the Bernstein inequality for bounded distributions \citep[e.g.~Theorem 2.8.4 in][]{vershynin2018high}, it holds that for any $0< t <1$,
    \begin{align} \notag
         &\mathbb{P}\Big\{\Big|\frac{1}{n_a}\sum_{y \in \{0,1\}}\sum_{i=1}^{n_{a,y}}K_{h}(X_{a,y}^i -x)-\int_u K_{h}(u -x)p_{X|A=a}(u|a)\;\mathrm{d}u\Big| \geq t\Big\}\\ \label{l_est_p_xa_kernel_sup_eq1}
         \leq \;& 2\exp\Big(-\frac{n_at^2/2}{C_3h^{-d}+ C_1h^{-d}t/3}\Big) \leq C_4 \exp(-n_ah^{d}t^2).
    \end{align}

     \noindent \textbf{Step 2: upper bound on the sup-norm.} 
     Denote 
     \[\mathcal{T}_a(x) = \frac{1}{n_a}\sum_{y \in \{0,1\}}\sum_{i=1}^{n_{a,y}}K_{h}(X_{a,y}^i -x)-\int_u K_{h}(u -x)p_{X|A=a}(u|a)\;\mathrm{d}u.\]
     For any $\nu \in (0,1)$, denote $Q=\mathcal{N}(\nu,\|\cdot\|_2, [0,1]^d)$ the cardinality of an $\nu$-net of $[0,1]^d$ with respect to the $\ell_2$ norm. By standard covering lemma \citep[e.g.~Proposition 4.2.12 in][]{vershynin2018high}, we have that $Q \leq (C_1\sqrt{d}/\nu)^d$. We further let $\{z_1, \ldots, z_Q\}$ be a $\nu$-covering of $[0,1]^d$ and also let $S_j = \{x \in [0,1]^d: \|x-z_j\|_2 \leq \nu\}$, $j \in [Q]$. It then follows that 
     \begin{align*}
         \sup_{x \in [0,1]^d} |\mathcal{T}_a(x)| \leq\;& \max_{j \in [Q]} \Big\{|\mathcal{T}_a(z_j)| + \sup_{x \in S_j}|\mathcal{T}_a(x)- \mathcal{T}_a(z_j)| \Big\}\\
         \leq\;& \max_{j \in [Q]} |\mathcal{T}_a(z_j)| + \max_{j \in [Q]} \sup_{x \in S_j}|\mathcal{T}_a(x)- \mathcal{T}_a(z_j)|\\
         =\;& (I) + (II).
     \end{align*}
     To control $(I)$, by \eqref{l_est_p_xa_kernel_sup_eq1} and a union bound argument, we have that
     \begin{align} \label{l_est_p_xa_kernel_sup_eq2}
         \mathbb{P}\Big\{(I) \geq  \frac{t}{2}\Big\} \leq C_5\exp\Big\{\log(Q)-n_ah^{d}t^2\Big\}.
     \end{align}
     To control $(II)$, note that 
     \begin{align} \notag
         (II) \leq \;& \max_{j \in [Q]} \sup_{x \in S_j} \Big|\frac{1}{n_a}\sum_{y \in \{0,1\}}\sum_{i=1}^{n_{a,y}}K_{h}(X_{a,y}^i -x) -\frac{1}{n_a}\sum_{y \in \{0,1\}}\sum_{i=1}^{n_{a,y}}K_{h}(X_{a,y}^i -z_j)\Big|\\ \notag
         &+ \max_{j \in [Q]} \sup_{x \in S_j}\Big|\int_u K_{h}(u -x)p_{X|A=a}(u|a)\;\mathrm{d}u- \int_u K_{h}(u -z_j)p_{X|A=a}(u|a)\;\mathrm{d}u\Big|\\ \notag
         \leq \;& 2C_{\mathrm{Lip}}h^{-(d+1)} \max_{j \in [Q]} \sup_{x \in S_j} \|x-z_j\|_2\\ \label{l_est_p_xa_kernel_sup_eq3}
         \leq \;& 2C_{\mathrm{Lip}}h^{-(d+1)}\nu = t/2,
     \end{align}
     where the second inequality follows from \Cref{a_kernel}\ref{a_kernel_lipschitz} and the last equality follows by setting $\nu =h^{d+1}t/(4 C_{\textrm{Lip}})$. Therefore, applying a union bound argument to \eqref{l_est_p_xa_kernel_sup_eq2} and \eqref{l_est_p_xa_kernel_sup_eq3}, we have that 
     \begin{align*}
         \mathbb{P}\Big\{ \sup_{x \in [0,1]^d} |\mathcal{T}_a(x)| \geq t\Big\} \leq C_6\exp\Big\{d\log\Big(\frac{\sqrt{d}}{h^{d+1}t}\Big)- n_ah^{d}t^2\Big\}.
     \end{align*}
     The lemma thus follows by taking
     \[t = C_7\sqrt{\frac{\log\{(h^{-d}\vee n_a)/\eta\}}{n_ah^d}}.\]
\end{proof}

\begin{lemma} \label{l_sup_gaussian}
    For any $a \in \{0,1\}$, $\eta \in (0,1/2)$ and $k \in \{1,2\}$, it holds that
    \begin{align*}
        \mathbb{P}\Big\{\sup_{x \in [0,1]^d} |W_{k,a}(x)| \geq C_1\sqrt{\log(1/
        \eta)}\Big\} \leq \eta.
    \end{align*}
\end{lemma}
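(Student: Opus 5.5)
The statement is a sup-norm tail bound for a centred Gaussian process $W_{k,a}$ on $[0,1]^d$ with covariance kernel $K\{(\ell-t)/h\}$. I will combine a bound on $\mathbb{E}\sup_x|W_{k,a}(x)|$ with Borell--TIS concentration. The variance is uniformly bounded: $\var\{W_{k,a}(x)\} = K(0) \le C_K$ by \Cref{a_kernel}\ref{a_kernel_bounded}. So once $\mathbb{E}\sup_x |W_{k,a}(x)| \le C$ is shown, Borell--TIS \citep[e.g.][]{adler2007random} gives
\[
\mathbb{P}\Big\{\sup_x |W_{k,a}(x)| \ge \mathbb{E}\sup_x|W_{k,a}(x)| + u\Big\} \le 2\exp\{-u^2/(2C_K)\}.
\]
Taking $u = \sqrt{2C_K\log(2/\eta)}$ and using $\eta<1/2$, so that $\log(2/\eta)\lesssim \log(1/\eta)$ and also $1 \lesssim \sqrt{\log(1/\eta)}$, absorbs the expectation into $C_1\sqrt{\log(1/\eta)}$.

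The key step, and the main obstacle, is bounding the expected supremum. I would use Dudley's entropy integral with the canonical metric
\[
d(\ell,t) = \{\mathbb{E}(W(\ell)-W(t))^2\}^{1/2} = \{2K(0)-2K((\ell-t)/h)\}^{1/2} \le \{2C_{\mathrm{Lip}}\|\ell-t\|_2/h\}^{1/2},
\]
where the bound follows from \Cref{a_kernel}\ref{a_kernel_lipschitz}. The diameter is at most $2\sqrt{C_K}$. A Euclidean $\nu$-net of $[0,1]^d$ with $\nu = h\varepsilon^2/(2C_{\mathrm{Lip}})$ is an $\varepsilon$-net in $d$. Hence $\log\mathcal{N}(\varepsilon) \lesssim d\log\{C/(h\varepsilon^2)\}$, and Dudley gives
\[
\mathbb{E}\sup_x W(x) \lesssim \int_0^{2\sqrt{C_K}}\sqrt{d\log\{C/(h\varepsilon^2)\}}\,d\varepsilon \lesssim \sqrt{d\log(1/h)} + C.
\]
The expectation of $\sup_x|W|$ is then bounded by $\mathbb{E}|W(x_0)|$ plus twice this quantity.

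This entropy bound carries a $\sqrt{\log(1/h)}$ factor, so the clean statement requires $C_1$ to tolerate it. I would follow the paper's convention that constants may depend on $d$, and treat the $\sqrt{\log(1/h)}$ term as poly-logarithmic, absorbed in $\widetilde{O}_{\mathrm p}$. Alternatively, it can be dominated by $\sqrt{\log(1/\eta)}$ whenever $\eta \le h$, which is the regime used downstream, where $\log(1/\eta)$ is paired with $\log(h^{-d}\vee n)$. If a genuinely $h$-free constant were needed, this step is where I would have to restrict to $\eta \lesssim h^{d}$ or else state the bound as $C_1\sqrt{\log(1/(h\eta))}$.
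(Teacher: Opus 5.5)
Your route is the paper's route: Dudley's entropy integral for $\mathbb{E}\sup_x W_{k,a}(x)$, the uniform variance bound $\sup_x\var\{W_{k,a}(x)\}=K(0)\le C_K$ fed into Borell--TIS, and symmetry plus a union bound for $|W_{k,a}|$. You diverge only in the entropy calculation, and there your version is the correct one. The paper bounds the Dudley integral by $\int_0^1\sqrt{d\log(1+2/\nu)}\,\mathrm{d}\nu$. That uses Euclidean covering numbers of $[0,1]^d$ in place of covering numbers for the canonical metric $d_W(\ell,t)=\{2K(0)-2K((\ell-t)/h)\}^{1/2}$, which is legitimate only when $h$ is of constant order. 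Your bound $\log\mathcal{N}(\varepsilon)\lesssim d\log\{1/(h\varepsilon^2)\}$, hence $\mathbb{E}\sup_x W_{k,a}(x)\lesssim\sqrt{d\log(1/h)}+1$, is what the paper itself obtains in the federated analogue \Cref{l_fdp_sup_gaussian}, whose bound carries $\log(h^{-1}/\eta)$.

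You can turn your hedge into a definite statement, because the $\sqrt{\log(1/h)}$ is not an artefact of Dudley. Since $K(0)>0$ and $K(u)\to0$ as $\|u\|_2\to\infty$ (true for the Gaussian kernel, or any integrable Lipschitz $K$), there is a constant $C$ with $K(u)\le K(0)/2$ whenever $\|u\|_2\ge C$. Grid points in $[0,1]^d$ with spacing $Ch$ are then $\sqrt{K(0)}$-separated in $d_W$, and there are $\asymp(Ch)^{-d}$ of them, so Sudakov minoration gives $\mathbb{E}\sup_x W_{k,a}(x)\gtrsim\sqrt{d\log(1/h)}$. Combined with the lower tail of Borell--TIS, for fixed $\eta$ (say $\eta=1/4$) the probability in the statement tends to one as $h\to0$. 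So the lemma with $C_1$ free of $h$ is false, and the gap you flagged cannot be closed. The correct statement is the one you propose: $\sup_x|W_{k,a}(x)|\le C_1\sqrt{\log\{1/(h\eta)\}}$ with probability at least $1-\eta$.

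Two of your suggested patches should be dropped. First, ``absorbing the factor into $\widetilde{O}_{\mathrm p}$'' changes the statement rather than proving it, since the lemma is a fixed-$\eta$ probability bound. Second, nothing downstream imposes $\eta\le h$: in \Cref{l_p_x_a_est_sup} and \Cref{l_eta_est_sup} the Gaussian-process term is $\sqrt{\log(1/\delta)\log(1/\eta)}/(n_a\epsilon h^d)$ with $\eta\in(0,1/2)$ arbitrary. The right repair is to carry the extra $\sqrt{\log(1/h)}$ into those lemmas. Because $h$ is polynomial in $N$, this costs only a logarithmic factor, and the rates of the main theorems are unchanged up to poly-logarithmic factors.
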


\begin{proof}
    The proof follows from a similar argument as the one in the proof of Lemma 3 in \citet{auddy2025minimax}. We first control $\mathbb{E}\{\sup_{x \in [0,1]^d} W_{k,a}(x)\}$. By Dudley's theorem \citep[e.g.][]{dudley2016vn}, we have that 
    \begin{align*}
        \mathbb{E}\left\{\sup_{x \in [0,1]^d} W_{k,a}(x)\right\} \leq K(0) + C_1 \int_{0}^1 \sqrt{\log\Big\{\Big(1+\frac{2}{\nu}\Big)^d\Big\}} \; \mathrm{d}\nu \leq C_2 \sqrt{d} = C_3.
    \end{align*}
    Also, by the construction of the process $W_{k,a}$ in \Cref{alg_fair_cdp}, we have that 
    \begin{align*}
        \sup_{x \in [0,1]^d} \mathbb{E}\{W_{k,a}(x)^2\} =  K(0) \leq C_K,
    \end{align*}
    where the inequality follow from \Cref{a_kernel}\ref{a_kernel_bounded}. Therefore, by the Borell--TIS inequality \citep[e.g.~Theorem 2.1.1 in][]{adler2007random}, it holds that
    \begin{align*}
        \mathbb{P}\Big[\Big|\sup_{x \in [0,1]^d} W_{k,a}(x) - \mathbb{E}\big\{\sup_{x \in [0,1]^d} W_{k,a}(x)\big\}\Big|\geq t \Big] \leq 2\exp\Big(-\frac{t^2}{2C_K}\Big).
    \end{align*}
    Therefore, by the triangle inequality, it holds that
    \begin{align*}
        \mathbb{P}\Big\{\Big|\sup_{x \in [0,1]^d} W_{k,a}(x)\Big| \geq C_4\sqrt{\log(1/
        \eta)}\Big\} \leq \eta.
    \end{align*}
    Similarly, we can also prove a result for the process $-W_{k,a}$. Thus, the lemma follows by a union bound argument.
    
\end{proof}

\subsubsection{Upper bound on \texorpdfstring{$|\widetilde{\pi}_a - \pi_a|$}{}}

\begin{lemma} \label{l_pi_a_est}
    Under the same assumptions as in \Cref{t_fair_risk}, it holds for any $\eta \in (0,1/2)$ and $a \in \{0,1\}$ that 
    \begin{align*}
        \mathbb{P}\Big[|\widetilde{\pi}_a - \pi_a| \geq C_1\Big\{\sqrt{\frac{\log(1/\eta)}{n}}+\sqrt{\frac{\log(1/\delta)\log(1/\eta)}{n^2\epsilon^2}}\Big\}\Big] \leq \eta.
    \end{align*}
\end{lemma}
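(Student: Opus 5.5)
We need to bound $|\widetilde{\pi}_a - \pi_a|$ with $S=1$. We write $\widetilde{\pi}_a = (n_{a,0}+n_{a,1})/n + w_a$, with $w_a \sim N(0,\sigma^2)$ and $\sigma = 4\sqrt{2\log(5/\delta)}/(n\epsilon)$. The triangle inequality then gives
\[
|\widetilde{\pi}_a - \pi_a| \leq \Big|\frac{1}{n}\sum_{i=1}^n \indc\{A_i = a\} - \pi_a\Big| + |w_a| =: (I) + (II),
\]
and we control the two terms separately, each with probability at least $1-\eta/2$.

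For $(I)$, the indicators $\indc\{A_i=a\}$ are i.i.d.\ Bernoulli$(\pi_a)$ and take values in $[0,1]$. Hoeffding's inequality therefore gives $\mathbb{P}\{(I)\geq t\}\leq 2\exp(-2nt^2)$. Choosing $t = C\sqrt{\log(1/\eta)/n}$ makes this at most $\eta/2$, using $\eta<1/2$ so that $\log(4/\eta)\lesssim\log(1/\eta)$.

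For $(II)$, we apply the standard Gaussian tail bound (Proposition 2.1.2 in \citealp{vershynin2018high}), as was done for $\breve{w}$ in the proof of \Cref{thm_cdp_fair_guarantee}. This gives $\mathbb{P}\{|w_a|\geq \sigma\sqrt{2\log(4/\eta)}\}\leq \eta/2$. Substituting $\sigma$ yields $|w_a| \leq C\sqrt{\log(1/\delta)\log(1/\eta)}/(n\epsilon)$, where $\log(5/\delta)\lesssim\log(1/\delta)$ because $\delta$ is small.

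A union bound over the two events completes the argument. The only subtlety is absorbing the constants inside the logarithms, $\log(4/\eta)$ and $\log(5/\delta)$, into $\log(1/\eta)$ and $\log(1/\delta)$. This is legitimate for $\eta<1/2$ and for $\delta$ bounded away from $1$, which is implicit in the setting. No step here is a real obstacle.
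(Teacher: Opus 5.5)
Your proposal is correct and follows essentially the same route as the paper's proof. Both split the error by the triangle inequality into the empirical-frequency deviation, handled by Hoeffding's inequality, and the Gaussian privacy noise $w_a$, handled by the standard Gaussian tail bound, and then take a union bound. Your explicit absorption of $\log(4/\eta)$ and $\log(5/\delta)$ into $\log(1/\eta)$ and $\log(1/\delta)$ is a little more careful than the paper, which does this silently.
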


\begin{proof}
    For $a \in \{0, 1\}$, consider the sequence of bounded random variables $\{\indc\{A_i =a\}\}_{i=1}^n$.  We have that 
    \begin{align*}
        \widetilde{\pi}_a =  \frac{n_{a,0}+n_{a,1}}{n}+ w_a = \frac{1}{n}\sum_{i=1}^n \indc\{A_i =a\} + w_a.
    \end{align*}
    By Hoeffding’s inequality for general bounded random variables \citep[e.g.~Theorem 2.2.6 in][]{vershynin2018high}, we have that, for $t_1 > 0$,
    \begin{align*}
        \mathbb{P}\Big(\Big|\frac{1}{n}\sum_{i=1}^n \indc\{A_i =a\} - \pi_a \Big| \geq t_1\Big) \leq C_1\exp(-t_1^2n).
    \end{align*}
    Moreover, by standard Gaussian tail properties \citep[e.g.~Proposition 2.1.2 in][]{vershynin2018high}, we have that for any $t_2 >0$,
   \begin{align*}
       \mathbb{P}(|w_a| \geq t_2)\leq C_2 \exp(-t_2^2/\sigma_1^2),
   \end{align*}
   where $\sigma_1 = 4\sqrt{2\log(5/\delta)}/(n\epsilon)$ is given in \Cref{alg_fair_cdp}. Taking 
   \[t_1 = C_3\sqrt{\frac{\log(1/\eta)}{n}}, \;\; \text{and}\;\; t_2 = C_4\sqrt{\frac{\log(1/\delta)\log(1/\eta)}{n^2\epsilon^2}},\]
   the lemma thus follows by a union bound argument.
\end{proof}

\begin{lemma} \label{l_n_a_n_same_order}
    For any $\min\{n, \breve{n}\} \geq 4C_1^2\log(1/\eta)/C_\pi^2$, it holds for any $\eta \in (0,1/2)$ that 
    \[\mathbb{P}\Big\{C_2 n \leq n_a \leq n, \;\; \text{and} \;\; C_3 \breve{n} \leq \breve{n}_a \leq \breve{n}, \; a\in \{0,1\} \Big\} \geq 1-\eta.\]
\end{lemma}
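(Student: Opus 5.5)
The plan is to view $n_a$ and $\breve{n}_a$ as binomial counts and apply Hoeffding's inequality, together with \Cref{a_prob}\ref{a_prob_pi_a}. The upper bounds $n_a \le n$ and $\breve{n}_a \le \breve{n}$ hold trivially because the groups partition each sample, so only the lower bounds need an argument. For a fixed $a \in \{0,1\}$ write
\[
n_a = \sum_{i=1}^n \indc\{A_i = a\}.
\]
This is a sum of $n$ i.i.d.\ Bernoulli$(\pi_a)$ variables, since the training sample consists of i.i.d.\ draws. Likewise $\breve{n}_a$ is a sum of $\breve{n}$ i.i.d.\ Bernoulli$(\pi_a)$ variables from the calibration sample. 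For the binary-tree setting the same reasoning applies site by site.

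By Hoeffding's inequality for bounded random variables (Theorem 2.2.6 in \citealp{vershynin2018high}), for any $t>0$,
\[
\mathbb{P}\big(|n_a/n - \pi_a| \ge t\big) \le 2\exp(-2nt^2).
\]
Take $t = C_1\sqrt{\log(1/\eta)/n}$ with $C_1$ large enough that the right-hand side is at most $\eta/4$; this is possible because $\eta<1/2$, so $\log(1/\eta)$ is bounded below by $\log 2$. The sample-size condition $n \ge 4C_1^2\log(1/\eta)/C_\pi^2$ is exactly equivalent to $t \le C_\pi/2$. On the complement of this bad event we then get
\[
n_a \ge n(\pi_a - C_\pi/2) \ge nC_\pi/2,
\]
using $\pi_a \ge C_\pi$. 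The identical argument on the calibration half gives $\breve{n}_a \ge \breve{n}C_\pi/2$ with probability at least $1-\eta/4$. A union bound over $a\in\{0,1\}$ and the two halves of the data gives total failure probability at most $\eta$, with $C_2 = C_3 = C_\pi/2$.

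There is no real obstacle. The one point needing care is matching the Hoeffding constant to the constant $C_1$ in the lemma's sample-size condition: $C_1$ must be chosen to absorb both the factor $2$ in the Hoeffding bound and the $\log 4$ coming from the union bound. This works because $\log(1/\eta)\ge \log 2$, so $\log(8/\eta) \lesssim \log(1/\eta)$.
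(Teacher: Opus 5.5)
Your proposal is correct and follows the paper's argument essentially verbatim: write $n_a$ and $\breve{n}_a$ as sums of i.i.d.\ indicators, apply Hoeffding's inequality with a deviation of order $\sqrt{n\log(1/\eta)}$, use the sample-size condition together with $\pi_a \ge C_\pi$ to get $n_a \ge C_\pi n/2$ (and likewise for $\breve{n}_a$), and finish with a union bound over $a$ and the two halves of the data. Your handling of the constant $C_1$ is slightly more careful than the paper's: you absorb the Hoeffding prefactor and the union-bound $\log 4$ using $\log(1/\eta)\ge\log 2$, whereas the paper's proof uses $C_1$ and $C_2$ somewhat interchangeably in the sample-size condition.
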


\begin{proof}
    For $a \in \{0,1\}$, consider the sequence of indicator random variables $\{\indc\{A_i =a\}\}_{i=1}^n$, then we have that $n_a = \sum_{i=1}^n \indc\{A_i =a\}$ and $\mathbb{E}[\sum_{i=1}^n \indc\{A_i =a\}] = \pi_a n$. Consequently, by Hoeffding’s inequality for general bounded random variables \citep[e.g.~Theorem 2.2.6 in][]{vershynin2018high}, it holds for any $t>0$ that
    \begin{align*}
        \mathbb{P}(|n_a - \pi_a n| \geq t) \leq C_1\exp\Big(-\frac{t^2}{n}\Big).
    \end{align*}
    Taking $t = C_2 \sqrt{n\log(1/\eta)}$, we have that with probability at least $1-\eta/4$ that 
    \[n_a \geq \pi_a n - C_2 \sqrt{n\log(1/\eta)} \geq C_\pi n - \frac{C_\pi n}{2} = \frac{C_\pi n}{2},\]
    where the second inequality follows from \Cref{a_prob}\ref{a_prob_pi_a} for any $n \geq 4C_2^2\log(1/\eta)/C_\pi^2$. The proof for the calibration data $\breve{\mathcal{D}}$ are similar. The lemma thus follows by applying a union bound argument.
    
\end{proof}

\section{Theoretical guarantee for Algorithm \ref{alg_fair_fdp}} 
\label{sec_appendix_fdp}
\subsection{Privacy guarantee}
\begin{proposition} \label{thm_fdp_guarantee}
    \Cref{alg_fair_fdp} is $(\bm{\epsilon},\bm{\delta})$-FDP.
\end{proposition}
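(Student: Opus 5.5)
We follow the template of the proof of \Cref{t_cdp_privacy_guarantee}: first bound $\ell_2$ sensitivities of every quantity released by a given server $s$, then invoke the Gaussian mechanisms (Lemmas \ref{l_GM_univariate} and \ref{l_GM_func}) and compose within the server. Because FDP (\Cref{def_fdp}) only requires each transcript $Z^s$ to be $(\epsilon_s,\delta_s)$-CDP with respect to $\mathcal{D}_s$, and there is no interaction, we can fix an arbitrary $s\in[S]$ and treat the other servers' data as fixed. The central-server steps (aggregation with weights $\nu_s,\mu_s$, the tail computation in \Cref{alg_threshold_search_fdp}, \Cref{alg_non_increasing}, the threshold selection and the final classifier) are post-processing of the transcripts and inherit privacy. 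A subtlety to note: the tree in \Cref{alg_bin_tree} uses $\widetilde{\eta}_a,\widetilde{\pi}_a$, which are aggregated over all servers. These are already private releases of the training halves, and server $s$'s tree is computed on its disjoint calibration half conditionally on them. Hence the overall argument is adaptive composition over the training part and parallel composition over the disjoint training/calibration splits, exactly as in Step 2-2 of \Cref{t_cdp_privacy_guarantee}.

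\textbf{Training step (S1).} The sensitivity analysis of \Cref{t_cdp_privacy_guarantee}, Step 1, carries over verbatim with $n$ replaced by $n_s$ and $n_a$ by $n_{s,a}$. Thus $\widetilde{\pi}^s_a$ has sensitivity at most $1/n_s$. The functional releases $\widetilde{p}^s_{X|A=a}$ and $\widetilde{p}^s_{X,Y|A=a}$ have RKHS sensitivity at most $2\sqrt{C_K}/(n_{s,a}h^d)$. With the stated noise scales $\sigma_s$ and $8\sqrt{2C_K\log(8/\delta_s)}/(n_{s,a}\epsilon_sh^d)$, each release is $(\epsilon_s/4,\delta_s/4)$-type private. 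The two $a$-groups use disjoint data (parallel composition), and the three kinds of releases compose (\Cref{l_dp_composition}), giving a budget within $(\epsilon_s,\delta_s)$ for S1 at server $s$.

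\textbf{Tree step (S2).} Changing one calibration entry changes each $Z_{s,i,a,y}$ for at most one index. At each of the $M$ levels this moves at most one unit of count out of one node and into another, or, when $A$ changes, alters one node in the $a=0$ tree and one in the $a=1$ tree. The $\ell_2$ sensitivity per level is therefore at most $\sqrt2$ (or $2$), and over the whole tree of $M$ levels at most $\sqrt{2M}$ in aggregate. The variance $(4\log(1/\delta_s)/\epsilon_s+2)M/\epsilon_s$ is chosen for this purpose. It matches the Gaussian mechanism in its zCDP/Rényi form: with $\rho$-zCDP per level $\approx \epsilon_s^2/(M(\cdots))$, summing over $M$ levels gives $\rho_{\rm tot}\approx\epsilon_s^2/(4\log(1/\delta_s)+2\epsilon_s)$. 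Converting via $\epsilon=\rho+2\sqrt{\rho\log(1/\delta)}$ then yields $(\epsilon_s,\delta_s)$-DP.

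\textbf{Main obstacle.} The hard part is this constant bookkeeping, which has to be checked carefully. If needed, I would use the conversion lemma for zCDP in \Cref{sec_appendix_additional_background}, applied to the full vector of $2\cdot\sum_\ell 2^\ell$ noisy counts. Combining S1 and S2 by parallel composition over the disjoint splits, and then post-processing, gives $(\epsilon_s,\delta_s)$-CDP for every $Z^s$. Hence \Cref{alg_fair_fdp} is $(\bm\epsilon,\bm\delta)$-FDP.
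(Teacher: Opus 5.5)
Your proposal is correct and follows the same route as the paper. It handles \textbf{S1} per server with the sensitivity bounds and Gaussian mechanisms inherited from the CDP proof. It stacks the two $a$-trees so that every level has squared $\ell_2$-sensitivity at most $2$, composes over the $M$ levels, and uses parallel composition over the training/calibration split, with the central-server steps treated as post-processing.

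The one difference is that you carry out the level composition explicitly in zCDP, whereas the paper simply cites Lemma 30 of Hung and Yu (2025). Your bookkeeping does close: with per-level sensitivity exactly $\sqrt{2}$, the total is $\rho=\epsilon_s^2/\{4\log(1/\delta_s)+2\epsilon_s\}$. Then $\rho+2\sqrt{\rho\log(1/\delta_s)}\le\epsilon_s$ follows from $\bigl(\sqrt{\log(1/\delta_s)+\epsilon_s}+\sqrt{\log(1/\delta_s)}\bigr)^2\le 4\log(1/\delta_s)+2\epsilon_s$. Drop the ``(or $2$)'' hedge: the sensitivity is always $\sqrt{2}$, and with $2$ the constants would not close. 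You would also need to import the zCDP conversion lemma yourself, since the paper's appendix does not contain one.
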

\begin{proof}
    Most of the proof follows from the same argument as the proof of \Cref{t_cdp_privacy_guarantee}. We only include the difference here.

    By a similar argument as the proof of \Cref{t_cdp_privacy_guarantee}, in Step \textbf{S1}, $(\bm{\epsilon},\bm{\delta})$-FDP is guaranteed because for each site $s \in [S]$, $\{\widetilde{\pi}^s_a,\widetilde{p}^s_{X|A=a},\widetilde{p}^s_{X,Y|A=a}\}_{a\in\{0,1\}}$ together satisfies $(\epsilon,\delta)$-CDP.

    In step \textbf{S2}, namely \Cref{alg_threshold_search_fdp}, $(\bm{\epsilon},\bm{\delta})$-FDP is also guaranteed. Each level $\ell \in [M]$ of the binary tree from server $s$ and sensitive attribute $a$ is given by
    \begin{align*}
        b_{\ell,a}(\mathcal{D}^s) = \Big(\sum_{y\in\{0,1\}}\sum_{i=1}^{\breve{n}_{s,a,y}} \indc\{\tau_1 \leq Z_{s,i,a,y} \leq \tau_{2^{M-\ell}+1}\}, \ldots, \sum_{y\in\{0,1\}}\sum_{i=1}^{\breve{n}_{s,a,y}} \indc\{\tau_{(2^\ell-1)2^{M-\ell}+1} \leq Z_{s,i,a,y} \leq 1\}\Big),
    \end{align*}
    which is a vector of dimension $2^\ell$ containing counts for the bin $\{[\tau_{(k-1)2^{M-\ell}+1}$, $\tau_{k2^{M-\ell}+1}]\}_{k=1}^{2^\ell}$.
    When we append two trees together side by side, then if we change one entry in $\mathcal{D}^s$, only two entries in $ b_{\ell}(\mathcal{D}^s)=(b_{\ell,1}(\mathcal{D}^s),b_{\ell,0}(\mathcal{D}^s))$ will shift by one. Therefore, we have that $\sup_{\mathcal{D}^s \sim \mathcal{D}^{'s}}\|b_\ell(\mathcal{D}^s) -b_\ell(\mathcal{D}^{'s})\|_2^2 \leq 2$. Since this holds for all levels $\ell \in [M]$, by Lemma 30 in \citet{hung2025optimal}, releasing $\{\{N_{s, a, \ell,k}\}_{ \ell=1,k =1}^{M,2^\ell}\}_{a \in\{0,1\}}$ is therefore $(\epsilon_s,\delta_s)$-CDP for each server.
\end{proof}

\subsection{Disparity control}
\begin{proposition} \label{thm_fdp_disparity_control}
   Under the same assumptions as in \Cref{thm_privacy_fair}.\ref{thm_privacy_fair_fdp}, we have that
    \begin{align*}
        \mathbb{P}\big\{|\DD(\widetilde{f}_{\DD, \alpha})| \leq \alpha+ C_1\rho\big\} \geq 1-\eta,
    \end{align*}
    where $\rho$ is given in \eqref{prop_fdp_risk_eq1} and the probability in $\DD(\widetilde{f}_{\DD, \alpha})$ is taken over the test sample conditioning on training data and $\mathbb{P}$ is taken over the training data.
\end{proposition}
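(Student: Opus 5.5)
The proof mirrors that of \Cref{thm_cdp_fair_guarantee}, but $\widetilde{\DD}_{\downarrow}$ replaces the vertically shifted empirical curve. Conditioning on $\mathcal{D}=\{\widetilde{\eta}_a,\widetilde{\pi}_a\}_{a\in\{0,1\}}$, the population disparity of the output is
\[
\DD(\widetilde{f}_{\DD,\alpha})=\mathbb{E}\{\widehat{\DD}(\widetilde{\tau})\mid\mathcal{D}\},
\]
with the explicit form given in \eqref{l_DD_est_err_eq1}. Writing $\overline{\DD}(\tau)=\mathbb{E}\{\widehat{\DD}(\tau)\mid\mathcal{D}\}$, I will bound
\[
|\DD(\widetilde{f}_{\DD,\alpha})| \le |\widetilde{\DD}_{\downarrow}(\widetilde{\tau})| + \sup_{\tau\in\mathcal{G}}|\widetilde{\DD}_{\downarrow}(\tau)-\widetilde{\DD}(\tau)| + \sup_{\tau\in\mathcal{G}}|\widetilde{\DD}(\tau)-\widehat{\DD}(\tau)| + \sup_{\tau}|\widehat{\DD}(\tau)-\overline{\DD}(\tau)|.
\]
Here $\widehat{\DD}$ is the non-private weighted tail-count version \eqref{eq_DD_tree}. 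Since $\widetilde{\tau}\in\mathcal{G}$, no discretisation error enters at the evaluation point.

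\textbf{Step 1: feasibility and the first term.} By construction of \textbf{S3} in \Cref{alg_threshold_search_fdp}, either $\widetilde{\tau}=0$ with $|\widetilde{\DD}_{\downarrow}(0)|\le\alpha$, or $|\widetilde{\DD}_{\downarrow}(\widetilde{\tau})|\le\alpha+\rho^*=\alpha+C_\rho\rho$. So the first term is at most $\alpha+C\rho$, provided the argmin set is nonempty. I will show nonemptiness on the high-probability events below. Take $\tau^*_{\mathcal{G}}$ to be the grid point nearest to $\tau^*_{\DD,\alpha}$. \Cref{a_posterior}\ref{a_posterior_margin} and $\theta\asymp\rho^{1/\gamma}\le\varpi$ give $|\DD(\tau^*_{\mathcal{G}})-\DD(\tau^*_{\DD,\alpha})|\lesssim\theta^\gamma\lesssim\rho$. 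Combining this with the uniform error bounds of Steps 2 and 3 and $|\DD(\tau^*)|=\alpha$ (the case $\tau^*\ne0$, guaranteed by the gap assumption on $\DD(0)$, as in Case 2 of \Cref{l_tau_est_err}) places $|\widetilde{\DD}_{\downarrow}(\tau^*_{\mathcal{G}})|$ within $[\alpha-\rho^*,\alpha+\rho^*]$ for $C_\rho$ large.

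\textbf{Step 2: noise and monotone correction.} $\widetilde{\DD}(\tau)-\widehat{\DD}(\tau)$ has two sources. The first is the sum of at most $M$ Gaussian node noises in each tail count, each of variance $\asymp M\log(1/\delta_s)/\epsilon_s^2$. The second is the noise in the normalisers $N_{s,a}$, which is controlled by \Cref{l_n_a_n_same_order}, $\breve{n}_{s,a}\asymp\breve{n}_s$, and $\min_s N_s^2\epsilon_s^2\ge C_1$. Together these give per-grid-point sub-Gaussian error with variance proxy $\sum_s\mu_s^2M^2\log(1/\delta_s)/(\breve{n}_s^2\epsilon_s^2)$. A union bound over the $2^M+1\lesssim\theta^{-1}$ grid points gives the uniform bound $\lesssim\omega$, which is $\widetilde{O}(\sqrt{\sum_s\mu_s^2/(N_s^2\epsilon_s^2)})$ because $M$ is logarithmic. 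On this event, the non-increasing $\widehat{\DD}$ (\Cref{l_DD_non_increase} sitewise) lies within $\omega$ of $\widetilde{\DD}$ on $\mathcal{G}$. A short induction then shows that \Cref{alg_non_increasing} never returns NULL and that its output satisfies $|\widetilde{\DD}_{\downarrow}-\widetilde{\DD}|\le2\omega$. The induction step is $f_i\ge\widehat{\DD}(\tau_i)\ge g_i-\omega$, which holds because $\widehat{\DD}(\tau_i)\le\widehat{\DD}(\tau_{i-1})\le f_{i-1}$ and $\widehat{\DD}(\tau_i)\le g_i+\omega$.

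\textbf{Step 3: sampling error.} Conditionally on $\mathcal{D}$, apply DKW sitewise to each $\widehat{\DD}_s$ and aggregate the independent site deviations with weights $\mu_s$. A Hoeffding-type bound for the weighted sum of sitewise suprema then gives $\sup_\tau|\widehat{\DD}-\overline{\DD}|\lesssim\sqrt{\sum_s\mu_s^2\log(S/\eta)/N_s}+\max_s\mu_s/N_s$, following \eqref{eq-fair-control-cdp-proof-1}. Steps 1--3 together give $\alpha+C\rho$ with probability $1-\eta$.

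\textbf{Main obstacle.} The delicate part is the feasibility argument in Step 1, which needs $|\widetilde{\DD}_{\downarrow}(\tau^*_{\mathcal{G}})-\DD(\tau^*_{\mathcal{G}})|\lesssim\rho$. This requires the FDP analogue of \Cref{l_DD_est_err}, uniformly over $|\kappa|\le\varpi$, controlling $|\overline{\DD}-\DD|$ via the aggregated sup-norm bound $\|\widetilde{\eta}_a-\eta_a\|_\infty$ (the first braced term of $\rho$) raised to the power $\gamma$. I would prove the aggregated versions of \Cref{l_eta_est_sup} and \Cref{l_pi_a_est} by summing independent sitewise kernel and Gaussian-process deviations with weights $\nu_s$. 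The $\max_s\nu_s/(N_sh^d)$ term comes from the Bernstein bound, and Borell--TIS is applied to $\sum_s\nu_sW^s$. The remaining work is then bookkeeping of constants.
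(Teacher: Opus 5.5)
\textbf{Gap in Step 3.} Your overall architecture is sound and differs from the paper's, but Step 3 as written does not give the rate you claim. Let $\widehat F_{s,a}$ be the sitewise empirical tail frequency and $F_a$ its conditional mean given $\mathcal D$. You bound $\sup_\tau|\sum_s\mu_s\{\widehat F_{s,a}(\tau)-F_a(\tau)\}|$ by $\sum_s\mu_s D_s$, where $D_s=\sup_\tau|\widehat F_{s,a}(\tau)-F_a(\tau)|$, and then concentrate the weighted sum.

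Each $D_s\ge 0$ has mean typically of order $\breve n_s^{-1/2}$. So $\sum_s\mu_sD_s$ concentrates around $\sum_s\mu_s\breve n_s^{-1/2}$, not around zero; Hoeffding only controls the fluctuation $\sqrt{\sum_s\mu_s^2/\breve n_s}$ about that mean. That mean can be $\sqrt S$ times larger than the target. With $\mu_s=1/S$ and $\breve n_s=\breve n$ you get $\breve n^{-1/2}$ instead of $(S\breve n)^{-1/2}$, so the inflation is not $O(\rho)$. The averaging across sites is lost at the first inequality, where the supremum is moved inside the sum over $s$; no later concentration step can recover it.

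\textbf{How to fix it.} Pool across sites before taking the supremum. Since $\widetilde\tau\in\mathcal G$, you only need $\sup_{\tau\in\mathcal G}$. At each grid point, $\sum_s\mu_s\{\widehat F_{s,a}(\tau)-F_a(\tau)\}$ is, conditionally on $\mathcal D$ and the group sizes, a sum of independent centred terms with ranges $\mu_s/\breve n_{s,a}$. Hoeffding plus a union bound over the $2^M+1$ grid points then gives $\sqrt{\sum_s\mu_s^2\{M+\log(1/\eta)\}/\breve n_{s,a}}=\widetilde O(\rho)$. To match the $\log(1/\eta)$ form in \eqref{prop_fdp_risk_eq1} exactly, use Step 1-1 of \Cref{l_fdp_DD_est_err}: Bousquet's inequality for the pooled weighted process, with the expected supremum bounded by symmetrisation and Doob's maximal inequality along the sorted pooled sample. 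Your feasibility step at $\tau^*_{\mathcal G}$ needs this same pooled bound.

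A smaller point: $\min_sN_s^2\epsilon_s^2\ge C_1$ alone does not give $N_{s,a}\asymp\breve n_{s,a}$. You need the logarithmically stronger condition of \Cref{l_fdp_N_n_same_order}, which the paper also relies on implicitly.

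\textbf{Comparison with the paper's route.} Once Step 3 is repaired, your route is a valid alternative. The paper first localises $\widetilde\tau$ via \Cref{l_fdp_tau_est}, giving $|\widetilde\tau-\tau^*|\lesssim\rho^{1/\gamma}\le\varpi$. It needs this because \Cref{l_fdp_DD_est_err} is only uniform over grid points within $\varpi$ of $\tau^*$. It then reads off $|\DD(\widetilde\tau)|\le|\widetilde{\DD}_{\downarrow}(\widetilde\tau)|+C\rho\le\alpha+C\rho$ from the design of \textbf{S3}.

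Your decomposition instead targets the output classifier's own disparity $\overline{\DD}(\widetilde\tau)$, using only errors uniform over all of $\mathcal G$, mirroring \Cref{thm_cdp_fair_guarantee}. The margin condition and the bound on $\|\widetilde\eta_a-\eta_a\|_\infty$ enter only through feasibility at the single point $\tau^*_{\mathcal G}$. This buys three things:
\begin{itemize}
\item You avoid \Cref{l_fdp_tau_est} and \Cref{a_posterior}\ref{a_posterior_disparity} entirely.
\item You avoid the step the paper leaves implicit: passing from $\DD(\widetilde\tau)$, built from the true $\eta_a,\pi_a$, to $\DD(\widetilde f_{\DD,\alpha})$, built from $\widetilde\eta_a,\widetilde\pi_a$.
\item You do not need the gap assumption on $\DD(0)$ for feasibility. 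If $\tau^*=0$ but the else-branch is entered, then on your events $|\widetilde{\DD}_{\downarrow}(0)|\in(\alpha,\alpha+C\rho]$, so $0$ itself is feasible.
\end{itemize}
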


\begin{proof}
    By Lemmas \ref{l_fdp_tau_est}, \ref{l_fdp_DD_est_err} and the extra assumption that $\varpi \geq \rho^{1/\gamma}$, we have with probability at least $1-\eta$ that
    \begin{align*}
        |\widetilde{\DD}_{\downarrow}(\widetilde{\tau}) - \DD(\widetilde{\tau})| \leq C_1\rho.
    \end{align*}
    Thus, by triangle inequality and the design in \Cref{alg_threshold_search_fdp}, we have that 
    \begin{align*}
        |\DD(\widetilde{\tau})| \leq |\widetilde{\DD}_{\downarrow}(\widetilde{\tau})| + C_1\rho \leq \alpha+  C_2\rho.
    \end{align*}
    
\end{proof}

\subsection{Excess risk control} \label{sec_app_proof_t_fdp_risk}
\begin{proof}[Proof of \Cref{t_fdp_risk}]
    \Cref{t_fdp_risk} is a direct consequence of \Cref{prop_fdp_risk}. Note that with the choice of $\theta = \rho^{1/\gamma}$, it holds that 
    \[M = \log_2(\theta^{-1}) +1 = C_1 \log\Big(\sum_{s=1}^S \{N_s \wedge N_s^2\epsilon^s\}\Big),\]
    where $C_1 >0$ is a constant depending on $\gamma, d$ and $\beta$. Thus the effect of $M$ in \Cref{prop_fdp_risk} are only up to poly-logarithmic factors.
\end{proof}

\begin{proposition}[Fairness aware excess risk] \label{prop_fdp_risk}
    Suppose Assumptions \ref{a_prob}, \ref{a_kernel} and \ref{a_posterior} hold. Denote
    \begin{align}\notag
         \rho =\;& C_1 \Bigg[\Bigg\{\sqrt{\sum_{s=1}^S \frac{\nu^2_s\log(h^{-d}/\eta)}{n_{s}h^d}} +  \max_{s \in [S]} \frac{\nu_s \log(h^{-d}/\eta)}{n_{s}h^{d}}+ h^\beta+ \sqrt{\sum_{s=1}^S\frac{\nu^2_s\log(1/\delta_s)\log(h^{-1}/\eta)}{n^2_{s}\epsilon^2_s h^{2d}}}\Bigg\}^\gamma\\ \label{prop_fdp_risk_eq1}
        &+\sqrt{\sum_{s=1}^S \frac{\mu_s^2\log(1/\eta)}{\breve{n}_{s}}} + \max_{s \in [S]}\frac{\mu_s\log(1/\eta)}{\breve{n}_{s}}+\sqrt{\sum_{s=1}^S\frac{\mu_s^2M^4\log(1/\delta_s)\log(M/\eta)}{\breve{n}_{s}^2\epsilon_s^2}}\Bigg].
    \end{align}
    and we further assume that $\DD(0) \notin [\alpha - \zeta,\alpha] \cup [-\alpha,-\alpha+\zeta]$, where $\rho \leq \zeta < \alpha$. Then for any $\min\{n, \breve{n}\} \gtrsim \log(S/\eta)$, $\min_{s\in[S]} \breve{n}_{s,a}^2 \epsilon_s^2 \geq C_2M^2 \log(1/\delta_s)\log(S/\eta)$ and $\eta \in (0,1/2)$, it holds with probability at least $1-\eta$ that 
    \begin{align*}
        &d_{\mathrm{fair}}(\widetilde{f}_{\DD,\alpha}, f^*_{\DD,\alpha}) \\
        \lesssim \;& \indc\{\tau^*\neq 0\}\Bigg[\sqrt{\sum_{s=1}^S \frac{\mu_s^2\log(1/\eta)}{\breve{n}_{s}}} + \max_{s \in [S]}\frac{\mu_s\log(1/\eta)}{\breve{n}_{s}}+\sqrt{\sum_{s=1}^S\frac{\mu_s^2M^4\log(1/\delta_s)\log(M/\eta)}{\breve{n}_{s}^2\epsilon_s^2}}\Bigg]^{\frac{1+\gamma}{\gamma}}\\
        & +\Bigg[\sqrt{\sum_{s=1}^S \frac{\nu^2_s\log(h^{-d}/\eta)}{n_{s}h^d}} +  \max_{s \in [S]} \frac{\nu_s\log(h^{-d}/\eta)}{n_{s}h^{d}}+ h^\beta+ \sqrt{\sum_{s=1}^S\frac{\nu^2_s\log(1/\delta_s)\log(h^{-1}/\eta)}{n^2_{s}\epsilon^2_s h^{2d}}}\Bigg]^{1+\gamma}.
    \end{align*}
    
\end{proposition}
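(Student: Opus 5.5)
The plan mirrors the proof of \Cref{t_fair_risk}, with each CDP ingredient replaced by its federated counterpart. Write $\tau^*=\tau^*_{\DD,\alpha}$, $T^*_a = 1/2+\tau^*(2a-1)/(2\pi_a)$ and $\widetilde T_a = 1/2+\widetilde\tau(2a-1)/(2\widetilde\pi_a)$. The margin argument leading to \eqref{t_fair_risk_eq1} uses only the form of the plug-in classifier and \Cref{a_posterior}\ref{a_posterior_margin}, so it carries over verbatim and gives
\[
d_{\mathrm{fair}}(\widetilde f_{\DD,\alpha},f^*_{\DD,\alpha})\lesssim \max_{a}\big(\|\widetilde\eta_a-\eta_a\|_\infty+|\widetilde T_a-T^*_a|\big)^{1+\gamma}.
\]
Denote by $r_1$ the bracket raised to the power $\gamma$ in \eqref{prop_fdp_risk_eq1} and by $r_2$ the remaining calibration terms. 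It then suffices to show that, with probability at least $1-\eta$,
\[
\|\widetilde\eta_a-\eta_a\|_\infty\lesssim r_1 \quad\text{and}\quad |\widetilde T_a-T_a^*|\lesssim r_1+\indc\{\tau^*\neq0\}\, r_2^{1/\gamma}.
\]

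\textbf{Step 1: federated nuisance estimates.} I redo \Cref{l_eta_est_sup}, \Cref{l_p_x_a_est_sup} and \Cref{l_pi_a_est} for the aggregated estimators. For the kernel term, the sum $\sum_s\nu_s\cdot$(site KDE) is a sum of independent bounded terms. Each summand is bounded by $\nu_s/(n_{s,a}h^d)$ and has variance of order $\nu_s^2/(n_{s,a}h^d)$. Bernstein's inequality pointwise, followed by the same net and Lipschitz argument as in \Cref{l_est_p_xa_kernel_sup}, gives the $\sqrt{\sum_s\nu_s^2/(n_sh^d)}+\max_s\nu_s/(n_sh^d)$ terms. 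The aggregated noise $\sum_s\nu_s c_s W^s$ is a centred Gaussian process with pointwise variance of order $\sum_s\nu_s^2\log(1/\delta_s)/(n_s^2\epsilon_s^2h^{2d})$. Dudley's bound plus Borell--TIS, as in \Cref{l_sup_gaussian}, then gives the privacy term. The bias is $h^\beta$ as before, since $\sum_s\nu_s=1$, and \Cref{l_n_a_n_same_order} with a union bound over $s$ supplies $n_{s,a}\asymp n_s$ (this is where $\log(S/\eta)$ enters). Given a lower bound on $\widetilde p_{X|A=a}$ (the analogue of \Cref{l_p_xa_const_order}), this yields $\|\widetilde\eta_a-\eta_a\|_\infty\lesssim r_1^{1/\gamma}$. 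The same argument gives $|\widetilde\pi_a-\pi_a|$, which is of lower order.

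\textbf{Step 2: uniform accuracy of $\widetilde\DD_\downarrow$ near $\tau^*$.} Fix a grid point $\tau_j$ with $|\tau_j-\tau^*|\le\varpi$. The tail count $\text{Tail}_{s,a}(\tau_j)$ is a sum of at most $M$ noisy nodes, because dyadic decomposition uses one node per level. Its noise is therefore Gaussian with variance $\lesssim M^2\log(1/\delta_s)/\epsilon_s^2$. Dividing by $N_{s,a}\asymp\breve n_{s,a}$ is handled by the condition $\breve n_{s,a}^2\epsilon_s^2\gtrsim M^2\log(1/\delta_s)\log(S/\eta)$, which keeps the noisy denominators within a constant factor of the true counts. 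A union bound over the $2^M+1$ grid points, with $M\asymp\log$ and the slack absorbed in $M^4\log(M/\eta)$, then gives
\[
\max_j|\widetilde\DD(\tau_j)-\widehat\DD(\tau_j)|\lesssim \omega\lesssim r_2.
\]
The empirical-process part is handled by the DKW inequality applied per site and combined by Bernstein's inequality across sites. This gives $\sqrt{\sum_s\mu_s^2/\breve n_s}+\max_s\mu_s/\breve n_s$ for $|\widehat\DD-\mathbb E(\cdot\mid\text{training})|$. Step 2 of \Cref{l_DD_est_err} bounds the shift from $\DD$ by $r_1$ via the margin condition. Since the true $\widehat\DD$ is monotone (\Cref{l_DD_non_increase}), on this event every $g_i$ lies within $\omega$ of a non-increasing sequence. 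Hence \Cref{alg_non_increasing} does not return NULL, and its output satisfies $|\widetilde\DD_\downarrow-\widetilde\DD|\le2\omega$. This gives $|\widetilde\DD_\downarrow(\tau_j)-\DD(\tau_j)|\lesssim\rho$ uniformly over the relevant grid points.

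\textbf{Step 3: threshold error.} I follow the case analysis of \Cref{l_tau_est_err}. If $|\DD(0)|<\alpha-\zeta$, then $|\widetilde\DD_\downarrow(0)|\le\alpha$ and $\widetilde\tau=\tau^*=0$. The gap assumption on $\DD(0)$ excludes the ambiguous band. Otherwise, take $\DD(\tau^*)=\alpha$ (the case $-\alpha$ is symmetric). With $\theta\asymp\rho^{1/\gamma}$, the grid point $\tau^*_{\mathcal G}$ nearest $\tau^*$ satisfies $|\DD(\tau^*_{\mathcal G})-\alpha|\lesssim\theta^\gamma\lesssim\rho$, so it is feasible for the band $[\alpha-\rho^*,\alpha+\rho^*]$ once $C_\rho$ is large. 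This shows $|\widetilde\tau|\le|\tau^*|+\theta$. Conversely, any feasible $\tau$ with $|\tau-\tau^*|\ge t$ would have $|\DD(\tau)-\alpha|\ge c_mt^\gamma$ by \Cref{a_posterior}\ref{a_posterior_disparity}. For $t\asymp\rho^{1/\gamma}$ this contradicts feasibility, using monotonicity of $\widetilde\DD_\downarrow$ to exclude points far away. Hence $|\widetilde\tau-\tau^*|\lesssim\rho^{1/\gamma}\lesssim r_1^{1/\gamma}+r_2^{1/\gamma}$.

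Combining the $\pi$ error as in the proof of \Cref{t_fair_risk} gives the bound on $|\widetilde T_a-T^*_a|$. Raising to the power $1+\gamma$ in the margin bound gives the claim. I expect Step 2 to be the main obstacle: controlling the random noisy denominators, making the feasibility argument for \Cref{alg_non_increasing} precise, and obtaining uniformity over the grid rather than at a single $\tau$.
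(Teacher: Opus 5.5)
Your outline follows the paper's proof step for step. It has the same margin reduction to $\max_a(\|\widetilde{\eta}_a-\eta_a\|_\infty+|\widetilde{T}_a-T_a^*|)^{1+\gamma}$ and the same federated nuisance bounds (\Cref{l_fdp_eta_est_sup}, \Cref{l_fdp_pi_a_est}). It controls $\widetilde{\DD}_{\downarrow}$ through the same empirical/tree-noise/monotone-correction decomposition (\Cref{l_fdp_DD_est_err}, \Cref{l_fdp_non_increas_exist}), and it runs the same threshold analysis with $\theta\asymp\rho^{1/\gamma}$ (\Cref{prop_fdp_tau_est}, \Cref{l_fdp_tau_est}). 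Your use of $r_1$ is inconsistent, but this is cosmetic. Step 1 gives $\|\widetilde{\eta}_a-\eta_a\|_\infty\lesssim r_1^{1/\gamma}$, i.e.\ the bracket itself, and that, not $r_1$, should appear in your two target inequalities.

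The step that does not work as written is the empirical-process bound in Step 2. Per-site DKW controls $\sup_\tau|\widehat{F}_{s,a}(\tau)-F_{s,a}(\tau)|$, but these suprema are nonnegative and do not cancel across sites. Bounding their weighted sum therefore only gives $\sum_s\mu_s\sqrt{\log(S/\eta)/\breve{n}_s}$. For $\mu_s=1/S$ and $\breve{n}_s=\breve{n}$ this is of order $\breve{n}^{-1/2}$ rather than $(S\breve{n})^{-1/2}$, so the $\sqrt{S}$ gain in the cost of fairness is lost. Bernstein across sites does give $\sqrt{\sum_s\mu_s^2/\breve{n}_s}+\max_s\mu_s/\breve{n}_s$, but only at a fixed $\tau$. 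The paper gets this rate uniformly in two steps. It applies Bousquet's inequality to the pooled weighted process. It then bounds the expected supremum by symmetrisation plus Doob's maximal inequality for the Rademacher partial sums ordered by the values of $\widetilde{\eta}_a$.

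In your argument that uniformity is not actually needed. A bound uniform over $\mathcal{G}$ is required only for the Gaussian tree noise, to make \Cref{alg_non_increasing} succeed, and your union bound already supplies it. Step 3, by monotonicity of $\widetilde{\DD}_{\downarrow}$ and $\DD$, needs the accuracy of $\widetilde{\DD}_{\downarrow}$ only at a bounded number of deterministic grid points: $\tau^*_{\mathcal{G}}$ and $\tau^*_{\mathcal{G}}\pm\kappa$ with $\kappa\asymp\rho^{1/\gamma}$. At those points, pointwise Bernstein on the pooled calibration sample (conditionally on the training data and group labels) gives exactly the calibration terms in \eqref{prop_fdp_risk_eq1}. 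Alternatively, a union bound of the pointwise estimate over all of $\mathcal{G}$ works but replaces $\log(1/\eta)$ by $\log(2^{M+1}/\eta)$. That is harmless for \Cref{t_fdp_risk} but does not reproduce the $\rho$ stated here.
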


\begin{proof}
    Most of the proof follows from a similar argument used in the proof of \Cref{t_fair_risk}. We only include the difference here. To control the fairness-aware excess risk, using a similar argument leading to \eqref{t_fair_risk_eq1}, we have that
    \begin{align*}
        d_{\mathrm{fair}}(\widetilde{f}_{\DD,\alpha}, f^*_{\DD,\alpha}) \leq  C_1 \max_{a \in \{0,1\}} (\|\widetilde{\eta}_a- \eta_a\|_\infty + |\widetilde{T}_a - T_a^*|)^{1+\gamma}.
    \end{align*}
    Consider the following event, 
    \begin{align*}
       \mathcal{E}_1 = \Bigg\{&\|\widetilde{\eta}_a - \eta_a\|_{\infty} \leq C_2\Bigg[\sqrt{\sum_{s=1}^S \frac{\nu^2_s\log(h^{-d}/\eta)}{n_{s}h^d}} +  \max_{s \in [S]} \frac{\nu_s\log(h^{-d}/\eta)}{n_{s}h^{d}}\\
       & \hspace{3.5cm}+ h^\beta+ \sqrt{\sum_{s=1}^S\frac{\nu^2_s\log(1/\delta_s)\log(h^{-1}/\eta)}{n^2_{s}\epsilon^2_s h^{2d}}}\Bigg], a \in \{0,1\}\Bigg\},
   \end{align*}
   and 
   \begin{align*}
       \mathcal{E}_2  = \Big\{|\widetilde{T}_a - T_a^*| \leq C_3 \epsilon_T\Big\},
   \end{align*}
   where 
   \[\epsilon_T =\indc\{\tau^* \neq 0\} \rho^{1/\gamma} + \sqrt{\sum_{s=1}^S\frac{\nu_s^2\log(1/\eta)}{n_s}}+\sqrt{\sum_{s=1}^S\frac{\nu_s^2\log(1/\delta_s)\log(1/\eta)}{n_s^2\epsilon_s^2}}.\]
   The lemma thus follows by Lemmas \ref{l_fdp_eta_est_sup}, \ref{l_fdp_tau_est}, \ref{l_fdp_pi_a_est} and  a similar argument as the one in the proof of \Cref{t_fair_risk}.

\end{proof}

\subsubsection{Upper bound on \texorpdfstring{$\|\widetilde{\eta}_a- \eta_a\|_\infty$}{}}

\begin{lemma} \label{l_fdp_eta_est_sup}
    Under the same assumptions as in \Cref{t_fdp_risk}, it holds for any $\eta \in (0,1/2)$ and $a \in \{0,1\}$ that 
    \begin{align*}
        \mathbb{P}&\Big[\|\widetilde{\eta}_a- \eta_a\|_\infty\\
        &\hspace{-1em}\geq C_1\Bigg[\sqrt{\sum_{s=1}^S \frac{\nu^2_s\log(h^{-d}/\eta)}{n_{s,a}h^d}} +  \max_{s \in [S]} \frac{\nu_s\log(h^{-d}/\eta)}{n_{s,a}h^{d}} + h^\beta+ \sqrt{\sum_{s=1}^S\frac{\nu^2_s\log(1/\delta_s)\log(h^{-1}/\eta)}{n^2_{s,a}\epsilon^2_s h^{2d}}}\Bigg] \Bigg\} \leq \eta.
    \end{align*}
\end{lemma}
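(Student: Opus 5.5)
We follow the template of \Cref{l_eta_est_sup}, now keeping track of the federated aggregation. Write $\widehat{p}^s_{X|A=a}$ and $\widehat{p}^s_{X,Y|A=a}(\cdot,1|a)$ for the non-private kernel estimators at site $s$. Write $\widehat{p}_{X|A=a}=\sum_s\nu_s\widehat{p}^s_{X|A=a}$ for the aggregated density estimator, and similarly define $\widehat{q}_a=\sum_s\nu_s\widehat{p}^s_{X,Y|A=a}(\cdot,1|a)$. Denote the aggregated privacy noise processes by
\[
\mathcal{W}_{k,a}(x)=\sum_{s=1}^S \nu_s\frac{8\sqrt{2C_K\log(8/\delta_s)}}{n_{s,a}\epsilon_s h^d}W^s_{k,a}(x), \qquad k\in\{1,2\}.
\]
As in \eqref{l_eta_est_sup_eq3}, we split $\|\widetilde{\eta}_a-\eta_a\|_\infty$ into a deterministic part and a noise part. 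The pieces are: $(I)_1$, the sup-deviation of $\widehat{q}_a$ from its mean; $(I)_2$, the bias; $(I)_3$, the error from replacing $p_{X|A=a}$ by $\widetilde{p}_{X|A=a}$ in the denominator; and $(II)$, the contribution of $\mathcal{W}_{2,a}$. Each is multiplied by $\sup_x 1/\widetilde{p}_{X|A=a}(x|a)$. We first establish the federated analogue of \Cref{l_p_xa_const_order}: $\widetilde{p}_{X|A=a}$ is bounded above and below by constants with probability $1-\eta/4$. This follows once we have the federated analogue of \Cref{l_p_x_a_est_sup}, which is the same three-term bound as the target rate. That in turn is small under the sample-size conditions of \Cref{t_fdp_risk}, because $p_{X|A=a}\ge C_p$.

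The bias term is immediate. Since $\sum_s\nu_s=1$, the mean of $\widehat{q}_a$ is $\int K_h(u-x)p_{X,Y|A=a}(u,1|a)\,\mathrm{d}u$ at every site, so \Cref{a_kernel}\ref{a_kernel_adaptive} gives $O(h^\beta)$. For the stochastic term at a fixed $x$, $\widehat{q}_a(x)-\mathbb{E}\widehat{q}_a(x)$ is a sum over all sites and samples of independent centred variables $\nu_s n_{s,a}^{-1}\{Y K_h(X-x)-\mathbb{E}\cdot\}$. Conditioning on $\{n_{s,a}\}$, each summand is bounded by $C_K\nu_s/(n_{s,a}h^d)$. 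Its variance is at most $C\nu_s^2/(n_{s,a}^2h^d)$, using \eqref{l_est_p_yx_a_kernel_sup_eq1}. Bernstein's inequality then gives the deviation
\[
\sqrt{\sum_s\nu_s^2\log(1/\eta)/(n_{s,a}h^d)}+\max_s\nu_s\log(1/\eta)/(n_{s,a}h^d).
\]
This explains the extra max term in the statement. We then pass to the supremum over $[0,1]^d$ exactly as in Step 2 of \Cref{l_est_p_xa_kernel_sup}. We take a $\nu$-net with mesh $\nu\asymp h^{d+1}t$, whose cardinality is polynomial in $h^{-1}$ and $1/t$. The Lipschitz property of $K$ controls the oscillation, and the union bound only inflates $\log(1/\eta)$ to $\log(h^{-d}/\eta)$. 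The same argument handles $(I)_3$, which reduces to $\|\widetilde{p}_{X|A=a}-p_{X|A=a}\|_\infty$.

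The noise term $(II)$ is where the federated structure matters. Since the $W^s_{k,a}$ are independent, $\mathcal{W}_{k,a}$ is a centred Gaussian process with pointwise variance
\[
\sigma_W^2=\sum_s\nu_s^2\,C\log(1/\delta_s)/(n_{s,a}^2\epsilon_s^2h^{2d})\cdot K(0).
\]
Its canonical metric is a scaled version of that of a single $W^s_{k,a}$. Dudley's bound therefore gives $\mathbb{E}\sup_x|\mathcal{W}_{k,a}|\lesssim\sigma_W\sqrt{d\log(1/h)}$, where the entropy integral is taken at resolution $h$, using the Lipschitz kernel to bound the increments by $\|x-y\|/h$. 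The Borell--TIS inequality then yields
\[
\sup_x|\mathcal{W}_{k,a}(x)|\lesssim\sigma_W\sqrt{\log(h^{-1}/\eta)}
\]
with probability $1-\eta/4$, which is precisely the fourth term, including its $\log(h^{-1}/\eta)$ factor. We expect this Gaussian-supremum step, together with making the covering argument uniform across heterogeneous sites, to be the main technical point. If Dudley's integral does not directly give the $\log h^{-1}$ factor, the fallback is to apply \Cref{l_sup_gaussian} site by site and then use Gaussian concentration of the weighted sum, which costs at most logarithmic factors. Finally, a union bound over the four pieces, together with the event from \Cref{l_n_a_n_same_order} applied at each site (which costs $\log(S/\eta)$), completes the bound.
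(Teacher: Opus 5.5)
Your proposal is correct and follows essentially the same route as the paper. It uses the same split into $(I)_1$ (weighted Bernstein bound plus a Lipschitz covering net, which produces the extra $\max_s$ term), $(I)_2$ (kernel bias using $\sum_s\nu_s=1$), $(I)_3$ (reduction to $\|\widetilde{p}_{X|A=a}-p_{X|A=a}\|_\infty$) and $(II)$ (Dudley's entropy bound at resolution $h$ plus Borell--TIS for the aggregated Gaussian process), each multiplied by $\sup_x 1/\widetilde{p}_{X|A=a}(x|a)$, which is controlled by the federated analogue of the constant-order lemma. Your observation that $\mathcal{W}_{k,a}$ is in distribution a rescaled copy of a single $W^s_{k,a}$ (all sites share the covariance kernel) already makes your site-by-site fallback unnecessary.
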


\begin{proof}
    By the triangle inequality, we have that 
    \begin{align*}
        \|\widetilde{\eta}_a- \eta_a\|_\infty \leq \;& \Big\|\sum_{s=1}^S  \frac{\nu_s}{\widetilde{p}_{X|A=a}(\cdot|a)}\Big\{\frac{1}{n_{s,a}}\sum_{i=1}^{n_{s,a,1}} K_{h}(X_{a,1}^{s,i} -\cdot)\Big\} - \eta_a(\cdot)\Big\|_{\infty}\\
        & + \Big\|\frac{1}{\widetilde{p}_{X|A=a}(\cdot|a)}\sum_{s=1}^S  \frac{8\nu_s\sqrt{2C_K\log(8/\delta_s)}}{n_{s,a}\epsilon_s h^d}W^s_{2,a}(\cdot)\Big\|_{\infty}\\
        = \;& (I) + (II).
    \end{align*}

    \noindent \textbf{Step 1: Upper bound on $(I)$.} To control $(I)$, note that 
    \begin{align*}
        (I) =\;& \sup_{x \in [0,1]^d}\Big|\sum_{s=1}^S  \frac{\nu_s}{\widetilde{p}_{X|A=a}(x|a)}\Big\{\frac{1}{n_{s,a}}\sum_{i=1}^{n_{s,a,1}} K_{h}(X_{a,1}^{s,i} -x)\Big\} - \frac{p_{X,Y|A=a}(x,1|a)}{p_{X|A=a}(x|a)}\Big|\\
        \leq \;&  \sup_{x \in [0,1]^d}\Big|\frac{1}{\widetilde{p}_{X|A=a}(x|a)}\sum_{s=1}^S  \nu_s \Big\{\frac{1}{n_{s,a}}\sum_{i=1}^{n_{s,a,1}} K_{h}(X_{a,1}^{s,i} -x) -  \int_\ell K_h(\ell-x)p_{X,Y|A=a}(\ell,1|a) \;\mathrm{d}\ell\Big\}\Big|\\
        & + \sup_{x \in [0,1]^d}\Big|\frac{1}{\widetilde{p}_{X|A=a}(x|a)} \Big\{\int_\ell K_h(\ell-x)p_{X,Y|A=a}(\ell,1|a) \;\mathrm{d}\ell - p_{X,Y|A=a}(x,1|a)\Big\}\Big|\\
        & + \sup_{x \in [0,1]^d} \Big|\frac{p_{X,Y|A=a}(x,1|a)}{\widetilde{p}_{X|A=a}(x|a)} - \frac{p_{X,Y|A=a}(x,1|a)}{p_{X|A=a}(x|a)}\Big|\\
        = \;& (I)_1 + (I)_2 + (I)_3.
    \end{align*}
    To control $(I)_1$, we have that 
    \begin{align*}
        &(I)_1  \\
        \leq \;&  \sup_{x \in [0,1]^d} \frac{1}{\widetilde{p}_{X|A=a}(x|a)} \cdot \sup_{x \in [0,1]^d}\Big|\sum_{s=1}^S \sum_{i=1}^{n_{s,a,1}}\frac{\nu_s}{n_{s,a}} K_{h}(X_{a,1}^{s,i} -x) -  \int_\ell K_h(\ell-x)p_{X,Y|A=a}(\ell,1|a) \;\mathrm{d}\ell\Big|.
    \end{align*}
    Therefore, by a union bound argument and \Cref{l_fdp_est_p_y_xa_kenel_sup} and \Cref{coro_fdp_p_xa_const_order}, it holds that 
    \begin{align*}
        \mathbb{P}\Big\{(I)_1 \geq C_1\sqrt{\sum_{s=1}^S \frac{\nu^2_s\log(h^{-d}/\eta)}{n_{s,a}h^d}} + C_2 \max_{s \in [S]} \frac{\nu_s\log(h^{-d}/\eta)}{n_{s,a}h^{d}}\Big\} \leq \frac{\eta}{6}.
    \end{align*}

    To control $(I)_2$, we have that
    \begin{align*}
        (I)_2 \leq \;& \sup_{x \in [0,1]^d} \frac{1}{\widetilde{p}_{X|A=a}(x|a)} \cdot \sup_{x \in [0,1]^d}\Big|\int_\ell K_h(\ell-x)p_{X,Y|A=a}(\ell,1|a) \;\mathrm{d}\ell - p_{X,Y|A=a}(x,1|a)\Big|\\
        \leq \;& C_{\text{adp}}h^\beta \sup_{x \in [0,1]^d} \frac{1}{\widetilde{p}_{X|A=a}(x|a)},
    \end{align*}
    where the last inequality follows from \Cref{a_kernel}\ref{a_kernel_adaptive} and the fact that both $\eta_a$ and $p_{X|A=a}$ are H\"{o}lder over $[0,1]^d$ under Assumptions \ref{a_prob}\ref{a_prob_p_x_a} and \ref{a_posterior}\ref{a_posterior_holder}. Therefore, by \Cref{coro_fdp_p_xa_const_order}, it holds that 
    \begin{align*}
        \mathbb{P}\Big\{(I)_2 \geq C_3h^{\beta}\Big\} \leq \eta/6.
    \end{align*}
    
    To control $(I)_3$, note that 
    \begin{align*}
        (I)_3 \leq\;& \sup_{x \in [0,1]^d} p_{X,Y|A=a}(x,1|a) \cdot \sup_{x \in [0,1]^d} \Big|\frac{1}{\widetilde{p}_{X|A=a}(x|a)}-\frac{1}{p_{X|A=a}(x|a)}\Big|\\
        \leq \;& C_4 \sup_{x \in [0,1]^d} \frac{1}{\widetilde{p}_{X|A=a}(x|a)p_{X|A=a}(x|a)} \cdot  \sup_{x \in [0,1]^d} \big|\widetilde{p}_{X|A=a}(x|a)- p_{X|A=a}(x|a)\big|\\
        \leq \;& \frac{C_4}{C_p} \sup_{x \in [0,1]^d} \frac{1}{\widetilde{p}_{X|A=a}(x|a)} \cdot  \sup_{x \in [0,1]^d} \big|\widetilde{p}_{X|A=a}(x|a)- p_{X|A=a}(x|a)\big|,
    \end{align*}
    where the first inequality follows from the fact that $p_{X,Y|A=a}$ is continuous over $[0,1]^d$, hence is bounded, and the fourth inequality follows from \Cref{a_prob}\ref{a_prob_p_x_a}. Thus, by applying a union bound argument to the events in \Cref{l_fdp_p_x_a_est_sup} and \Cref{coro_fdp_p_xa_const_order}, it holds that 
    \begin{align*}
        &\mathbb{P}\Bigg[(I)_3 \geq  C_5\Bigg\{\sqrt{\sum_{s=1}^S \frac{\nu^2_s\log(h^{-d}/\eta)}{n_{s,a}h^d}} \\
        & \hspace{10em} +  \max_{s \in [S]} \frac{\nu_s\log(h^{-d}/\eta)}{n_{s,a}h^{d}}+\sqrt{\sum_{s=1}^S\frac{\nu^2_s\log(1/\delta_s)\log(h^{-1}/\eta)}{n^2_{s,a}\epsilon^2_s h^{2d}}}\Bigg\}\Bigg] \leq \frac{\eta}{6}.
    \end{align*}

    Thus, by another union bound, we have that 
    \begin{align*}
        &\mathbb{P}\Bigg[(I) \geq  C_6\Bigg\{\sqrt{\sum_{s=1}^S \frac{\nu^2_s\log(h^{-d}/\eta)}{n_{s,a}h^d}} + \max_{s \in [S]} \frac{\nu_s\log(h^{-d}/\eta)}{n_{s,a}h^{d}}\\
        & \hspace{10em}+\sqrt{\sum_{s=1}^S\frac{\nu^2_s\log(1/\delta_s)\log(h^{-1}/\eta)}{n^2_{s,a}\epsilon^2_s h^{2d}}} + h^{\beta}\Bigg\} \Bigg] \leq \frac{\eta}{2}.
    \end{align*}

    \noindent \textbf{Step 2: Upper bound on $(II)$.} Note that 
    \begin{align*}
        (II) \leq \sup_{x \in [0,1]^d} \Big| \frac{1}{\widetilde{p}_{X|A=a}(x|a)}\Big| \cdot \sup_{x \in [0,1]^d}\Big|\sum_{s=1}^S  \frac{8\nu_s\sqrt{2C_K\log(8/\delta_s)}}{n_{s,a}\epsilon_s h^d}W^s_{2,a}(x)\Big|.
    \end{align*}
    Therefore, by applying a union bound argument to \Cref{coro_fdp_p_xa_const_order} and \Cref{l_fdp_sup_gaussian}, it holds that 
    \begin{align*}
        \mathbb{P}\Bigg\{(II) \geq C_7  \sqrt{\sum_{s=1}^S\frac{\nu^2_s\log(1/\delta_s)\log(h^{-1}/\eta)}{n^2_{s,a}\epsilon^2_s h^{2d}}} \Bigg\} \leq \frac{\eta}{2}.
    \end{align*}

    \noindent \textbf{Step 3:} The lemma thus follows by another union bound argument.

\end{proof}

\begin{lemma} \label{l_fdp_est_p_y_xa_kenel_sup}
    Under the same assumptions as in \Cref{t_fdp_risk}, it holds for any $\eta \in (0,1/2)$ and $a \in \{0,1\}$ that 
    \begin{align*}
        \mathbb{P}\Big\{\sup_{x \in [0,1]^d} &\Big|\sum_{s=1}^S \sum_{i=1}^{n_{s,a,1}}\frac{\nu_s}{n_{s,a}} K_{h}(X_{a,1}^{s,i} -x) -  \int_\ell K_h(\ell-x)p_{X,Y|A=a}(\ell,1|a) \;\mathrm{d}\ell\Big|\\
        & \hspace{3cm}\geq C_1\sqrt{\sum_{s=1}^S \frac{\nu^2_s\log(h^{-d}/\eta)}{n_{s,a}h^d}} + C_2 \max_{s \in [S]} \frac{\nu_s\log(h^{-d}/\eta)}{n_{s,a}h^{d}}\Big\} \leq \eta.
    \end{align*}
\end{lemma}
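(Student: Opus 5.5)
The plan is to adapt the proof of \Cref{l_est_p_xa_kernel_sup}, now with a weighted sum over servers. Since the summands are heterogeneous across servers, the pointwise concentration step will use Bernstein's inequality for independent but non-identically distributed bounded variables. I work conditionally on $\{n_{s,a}\}_{s\in[S]}$, which is legitimate because the features with $A=a$ are i.i.d.\ from $p_{X|A=a}$ given the group counts. Write
\[
\mathcal{T}_a(x)=\sum_{s=1}^S\sum_{i=1}^{n_{s,a}}\frac{\nu_s}{n_{s,a}}\Big\{Y^{s,i}_aK_h(X^{s,i}_a-x)-\int_\ell K_h(\ell-x)p_{X,Y|A=a}(\ell,1|a)\,\mathrm{d}\ell\Big\},
\]
using the rewriting from the proof of \Cref{l_est_p_yx_a_kernel_sup}. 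Since $\sum_s\nu_s=1$, the centring term matches the target. Each summand is mean zero, as computed in \Cref{l_est_p_yx_a_kernel_sup}.

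\textbf{Step 1 (pointwise bound).} For fixed $x$, the summand from server $s$ is bounded in absolute value by $2C_K\nu_s/(n_{s,a}h^d)$. By \eqref{l_est_p_yx_a_kernel_sup_eq1}, its variance is at most $C\nu_s^2/(n_{s,a}^2h^d)$. Summing gives total variance $V=C\sum_s\nu_s^2/(n_{s,a}h^d)$ and uniform bound $b=C\max_s\nu_s/(n_{s,a}h^d)$. Bernstein then yields
\[
\mathbb{P}\{|\mathcal{T}_a(x)|\ge t\}\le 2\exp\Big(-\frac{t^2/2}{V+bt/3}\Big).
\]
Hence, with probability at least $1-\eta'$, we have $|\mathcal{T}_a(x)|\lesssim\sqrt{V\log(1/\eta')}+b\log(1/\eta')$. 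This is exactly the two-term shape in the statement. The max-term appears because, unlike the single-server case, the condition $t<1$ no longer lets the linear term be absorbed into the variance term.

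\textbf{Step 2 (uniform over $[0,1]^d$).} Take a $\nu$-net $\{z_j\}_{j\in[Q]}$ of $[0,1]^d$ with $Q\le(C\sqrt d/\nu)^d$, and apply Step 1 with $\eta'=\eta/(2Q)$ together with a union bound. For the discretisation error, \Cref{a_kernel}\ref{a_kernel_lipschitz} gives $|K_h(u-x)-K_h(u-z)|\le C_{\mathrm{Lip}}h^{-(d+1)}\|x-z\|_2$ for every data point and for the centring integral. Because $\sum_s\nu_s\cdot n_{s,a}^{-1}\cdot n_{s,a}=1$, this yields
\[
\sup_{\|x-z_j\|\le\nu}|\mathcal{T}_a(x)-\mathcal{T}_a(z_j)|\le 2C_{\mathrm{Lip}}h^{-(d+1)}\nu .
\]
Choose $\nu$ polynomially small in $h$ and $\min_s n_{s,a}$; for instance $\nu=h^{d+1}\cdot h^{d}$ makes the discretisation error $\lesssim h^{d}$, which is dominated by the target bound. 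Then $\log Q\lesssim\log(h^{-1})$, and we obtain the claimed bound with $\log(h^{-d}/\eta)$, absorbing constants depending on $d$.

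\textbf{Main obstacle.} The delicate point is the discretisation step. I must ensure the net resolution term is genuinely dominated by $\sqrt{V\log(\cdot)}+b\log(\cdot)$ while keeping $\log Q$ of order $\log(h^{-d}/\eta)$ rather than picking up a $\log n$ factor. I would first try $\nu$ proportional to $h^{d+1}$ times the target rate itself, exactly as in \Cref{l_est_p_xa_kernel_sup}. Since the target is at least $\max_s\nu_s/(n_{s,a}h^d)$, this gives $\log Q\lesssim\log(h^{-1})+\log(\max_s n_{s,a})$. The extra $\log n_{s,a}$ is harmless up to the poly-logarithmic factors permitted throughout. If it must be avoided, I would instead use $V\gtrsim h^{-d}\sum_s\nu_s^2/n_{s,a}$ together with the standing condition $n_{s,a}h^d\gtrsim\log(h^{-d}/\eta)$ to justify choosing $\nu\asymp h^{2d+1}$.
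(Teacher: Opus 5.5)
Your argument is the paper's: pool the $A=a$ points across servers, apply Bernstein's inequality for independent, non-identically distributed bounded summands with variance proxy $\sum_s\nu_s^2/(n_{s,a}h^d)$ and range $\max_s\nu_s/(n_{s,a}h^d)$, then pass to the supremum using a net and the Lipschitz bound $2C_{\mathrm{Lip}}h^{-(d+1)}\nu$. At that point the paper simply omits the covering step.

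One claim in your Step 2 is wrong. With $\nu=h^{2d+1}$ the discretisation error is of order $h^{d}$, and this is not dominated by the target in general. With $\nu_s=1/S$ and $n_{s,a}\asymp N$ the target is of order $\sqrt{\log(h^{-d}/\eta)/(SNh^{d})}$, which falls far below $h^{d}$ once $SNh^{3d}$ is large (e.g.\ $h\asymp(SN)^{-1/(2\beta+d)}$ with $\beta>d$). Your fallback does not repair this, because the condition $n_{s,a}h^d\gtrsim\log(h^{-d}/\eta)$ bounds the target from above, not from below.

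The choice you list first, $\nu\propto h^{d+1}$ times the target, is the right one. It is exactly what the paper does in \Cref{l_est_p_xa_kernel_sup}, where the log factor accordingly reads $\log\{(h^{-d}\vee n_a)/\eta\}$. The $\log(h^{-d}/\eta)$ in the present statement silently absorbs that extra $\log n$, which is harmless at the $\widetilde{O}_{\mathrm{p}}$ level. Genuinely removing it would need a chaining or VC-type entropy argument rather than a single-scale net.
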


\begin{proof}
    The proof follows from a similar argument to the one in the proofs of Lemmas \ref{l_est_p_yx_a_kernel_sup} and~\ref{l_est_p_xa_kernel_sup}; we only include the difference here. With a slight abuse of notation, we aggregate all data points with $S=s$ and $A^s_i=a$ together, and write
    \begin{align*}
        \sum_{s=1}^S \sum_{i=1}^{n_{s,a,1}}\frac{\nu_s}{n_{s,a}} K_{h}(X_{a,1}^{s,i} -x) = \sum_{s=1}^S \sum_{i=1}^{n_{s,a}}\frac{\nu_s}{n_{s,a}}Y_a^{s,i}K_{h}(X_{a}^{s,i} -x).
    \end{align*}
    Note that for any $x \in [0,1]^d$, under \Cref{a_kernel}\ref{a_kernel_bounded}, we have that 
    \begin{align*}
        \max_{s \in [S]} \max_{i \in [n_{s,a,1}]} \Big|\frac{\nu_s}{n_{s,a}}Y_a^{s,i}K_{h}(X_{a}^{s,i} -x)\Big| \leq  \max_{s \in [S]} \frac{C_K\nu_s}{n_{s,a}h^{d}},
    \end{align*}
    and also
    \begin{align*}
        \mathbb{E}\Big\{\sum_{s=1}^S \sum_{i=1}^{n_{s,a}}\frac{\nu_s}{n_{s,a}}Y_a^{s,i}K_{h}(X_{a}^{s,i} -x)\Big\} =\;& \sum_{s=1}^S \frac{\nu_s}{n_{s,a}}\sum_{i=1}^{n_{s,a}} \sum_{y\in\{0,1\}} \int_\ell y K_h(\ell-x)p_{X,Y|A=a}(\ell,y|a) \;\mathrm{d}\ell\\
        = \;& \sum_{s=1}^S \nu_s\int_\ell K_h(\ell-x)p_{X,Y|A=a}(\ell,1|a) \;\mathrm{d}\ell\\
        = \;& \int_\ell K_h(\ell-x)p_{X,Y|A=a}(\ell,1|a) \;\mathrm{d}\ell.
    \end{align*}
    To control the variance, we have 
    \begin{align*}
        \var\Big\{\sum_{s=1}^S \sum_{i=1}^{n_{s,a}}\frac{\nu_s}{n_{s,a}}Y_a^{s,i}K_{h}(X_{a}^{s,i} -x)\Big\} =\;& \sum_{s=1}^S \sum_{i=1}^{n_{s,a}}\frac{\nu^2_s}{n^2_{s,a}} \var \big\{Y_a^{s,i}K_{h}(X_{a}^{s,i} -x)\big\}\\
         \leq \;& C_1C_K^2 \sum_{s=1}^S \frac{\nu^2_s}{n_{s,a}h^d},
    \end{align*}
    where the first inequality follows from the independence property across and within each server and the second inequality follows from a similar argument leading to \eqref{l_est_p_yx_a_kernel_sup_eq1}. Therefore, by the Bernstein inequality for bounded distributions \citep[e.g.~Theorem 2.8.4 in][]{vershynin2018high}, it holds that for any $t < 1$ that 
    \begin{align*}
        &\mathbb{P}\Big\{\Big|\sum_{s=1}^S \sum_{i=1}^{n_{s,a}}\frac{\nu_s}{n_{s,a}}Y_a^{s,i}K_{h}(X_{a}^{s,i} -x)  -  \int_\ell K_h(\ell-x)p_{X,Y|A=a}(\ell,1|a) \;\mathrm{d}\ell\Big| \geq t\Big\}\\
        \leq \;& C_2\exp\Big\{-t^2 \Big(\sum_{s=1}^S \frac{\nu^2_s}{n_{s,a}h^d} +  \max_{s \in [S]} \frac{\nu_s t}{n_{s,a}h^{d}} \Big)^{-1}\Big\}.
    \end{align*}
    
    The rest of the proof to control the sup-norm is similar to the argument in \textbf{Step 2} in the proof of \Cref{l_est_p_xa_kernel_sup}. With the notation, we can show that 
    \[\mathcal{T}_a(x) = \sum_{s=1}^S \sum_{i=1}^{n_{s,a}}\frac{\nu_s}{n_{s,a}}Y_a^{s,i}K_{h}(X_{a}^{s,i} -x) -\int_\ell K_h(\ell-x)p_{X,Y|A=a}(\ell,1|a) \;\mathrm{d}\ell \]
    satisfies that 
    \begin{align*}
        &\max_{j \in Q} \max_{x \in S_j} |\mathcal{T}_a(x) - \mathcal{T}_a(z_j)|\\
        \leq \;& \sum_{s=1}^S \sum_{i=1}^{n_{s,a}}\frac{\nu_s}{n_{s,a}}\max_{j \in Q} \max_{x \in S_j} \Big|Y_a^{s,i}K_{h}(X_{a}^{s,i} -x) - Y_a^{s,i}K_{h}(X_{a}^{s,i} -z_j)\Big|\\
        & + \max_{j \in Q} \max_{x \in S_j}  \Big|\int_\ell K_h(\ell-x)p_{X,Y|A=a}(\ell,1|a) \;\mathrm{d}\ell- \int_\ell K_h(\ell-z_j)p_{X,Y|A=a}(\ell,1|a) \;\mathrm{d}\ell\Big|\\
        \leq \;&  2 C_{\text{Lip}}h^{-(d+1)}\max_{j \in Q} \max_{x \in S_j} \|x -z_j\|_2.
    \end{align*}
    The rest of the proof to control the sup norm follows from the same argument using the covering lemma, and we omit it here.

\end{proof}

\begin{lemma}\label{l_fdp_sup_gaussian}
    Under the same assumptions as in \Cref{t_fdp_risk}, for any $a \in \{0,1\}$,  $ k \in \{1,2\}$ and $\eta \in (0, 1/2)$, it holds that 
    \begin{align*}
        \mathbb{P}\Bigg\{\sup_{x \in [0,1]^d}\Big|\sum_{s=1}^S \frac{8 \nu_s\sqrt{2C_K\log(8/\delta_s)}}{n_{s,a}\epsilon_s h^d}W^s_{k,a}(x)\Big| \geq C_1\sqrt{\sum_{s=1}^S\frac{\nu^2_s\log(1/\delta_s)\log(h^{-1}/\eta)}{n^2_{s,a}\epsilon^2_s h^{2d}}} \Bigg\} \leq \eta.
    \end{align*}
\end{lemma}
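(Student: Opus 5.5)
The sum $G(x) := \sum_{s=1}^S c_s W^s_{k,a}(x)$, with $c_s = 8\nu_s\sqrt{2C_K\log(8/\delta_s)}/(n_{s,a}\epsilon_s h^d)$, is itself a centred Gaussian process, because the $W^s_{k,a}$ are independent across servers. Its covariance is
\[\cov\{G(\ell),G(t)\} = \Big(\sum_{s=1}^S c_s^2\Big) K\{(\ell-t)/h\}.\]
Set $\sigma_G^2 := \sum_s c_s^2$. Then $G/\sigma_G$ has exactly the law of a single process $W$ with covariance kernel $K\{(\ell-t)/h\}$. This reduces the claim to a sup-norm bound for one such process, following the argument of \Cref{l_sup_gaussian}. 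The rescaling matters: it lets us avoid union bounds over $s$, so no $\log S$ factor appears.

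The plan is to follow the proof of \Cref{l_sup_gaussian}, but this time track the bandwidth. The kernel is Lipschitz by \Cref{a_kernel}\ref{a_kernel_lipschitz}, so the canonical metric of $W$ satisfies
\[d(\ell,t)^2 = 2\{K(0)-K((\ell-t)/h)\} \le 2C_{\mathrm{Lip}}\|\ell-t\|_2/h.\]
Hence the $\nu$-covering number of $[0,1]^d$ in this metric is at most $(C\sqrt d/(h\nu^2))^{d}$. Dudley's entropy integral over $\nu\in(0,\sqrt{2C_K}]$ then gives
\[\mathbb{E}\sup_x W(x) \lesssim \sqrt{d\log(1/h)} + C.\]
This is where the $\log(h^{-1})$ factor in the statement comes from. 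The key calculation is that $\int_0^{c}\sqrt{\log(1/(h\nu^2))}\,\mathrm{d}\nu$ is of order $\sqrt{\log(1/h)}$.

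Next, $\sup_x \var\{W(x)\} = K(0)\le C_K$ by \Cref{a_kernel}\ref{a_kernel_bounded}. The Borell--TIS inequality then gives, with probability at least $1-\eta/2$,
\[\sup_x W(x) \le C\sqrt{\log(1/h)} + \sqrt{2C_K\log(2/\eta)} \le C'\sqrt{\log(h^{-1}/\eta)}.\]
Applying the same argument to $-W$ and taking a union bound yields $\sup_x|W(x)|\le C'\sqrt{\log(h^{-1}/\eta)}$ with probability at least $1-\eta$. Multiplying back by $\sigma_G$, and noting that $\sigma_G^2 \lesssim \sum_s \nu_s^2\log(1/\delta_s)/(n_{s,a}^2\epsilon_s^2h^{2d})$ since $\log(8/\delta_s)\lesssim\log(1/\delta_s)$ for $\delta_s$ bounded away from $1$, gives the stated bound.

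The main obstacle is the entropy step. It requires verifying the Lipschitz control of the canonical metric uniformly in $h$, and getting only a logarithmic dependence on $h$ rather than a polynomial one. Two points need care there: the factor $1/h$ inside the covering number, and the behaviour of the integral near $\nu\to0$. The remaining steps, namely the Gaussian closure under independent sums and Borell--TIS, are routine.
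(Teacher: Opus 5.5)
Your proposal is correct and follows essentially the same route as the paper. Both treat the weighted sum as a single centred Gaussian process and bound its canonical metric, via the Lipschitz kernel, by a multiple of $\|\ell-t\|_2/h$. This gives covering numbers of order $(h\kappa^2)^{-d}$, so Dudley's integral contributes $\sqrt{\log(1/h)}$, and both finish with Borell--TIS applied to the process and its negative. Your normalisation by $\sigma_G$ is only a cosmetic difference, and it handles the overall scale more cleanly than the paper's step that invokes $\sum_s\iota_s^2\le 1$ inside the logarithm.
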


\begin{proof}
    The proof follows from a similar argument to the proof of \Cref{l_sup_gaussian}. Let
    \[Z(x) = \sum_{s=1}^S \frac{8 \nu_s\sqrt{2C_K\log(8/\delta_s)}}{n_{s,a}\epsilon_s h^d}W^s_{k,a}(x).\]
    Note that by the standard property of Gaussian random variables, it holds that $Z$ is a mean-zero Gaussian process, with covariance function
    \begin{align*}
        \cov(Z(\ell), Z(t)) =  C_1\sum_{s=1}^S \frac{\nu^2_s\log(1/\delta_s)}{n^2_{s,a}\epsilon^2_s h^{2d}} K\Big(\frac{\ell-t}{h}\Big),
    \end{align*}
    for all $\ell, t \in [0,1]^d$. Also note that
    \begin{align*}
        \mathbb{E}\big[\{Z(\ell)- Z(t)\}^2\big] = \;&C_2\sum_{s=1}^S\frac{\nu^2_s\log(1/\delta_s)}{n^2_{s,a}\epsilon^2_s h^{2d}} \Big\{K(0) - K\Big(\frac{\ell -t}{h}\Big)\Big\}\\
        \leq \;& C_2\sum_{s=1}^S\frac{C_{\text{Lip}}\nu^2_s\log(1/\delta_s)}{n^2_{s,a}\epsilon^2_s h^{2d}}\frac{\|\ell-t\|_2}{h}\\
        \leq \;& C_3 \sum_{s=1}^S\frac{\nu^2_s\log(1/\delta_s)}{n^2_{s,a}\epsilon^2_s h^{2d}}\frac{\|\ell-t\|_2}{h}: =  \sum_{s=1}^S  \frac{C_2\iota_s^2\|\ell-t\|_2}{h}.
    \end{align*}
    Since $\mathcal{N}([0,1]^d, \|\cdot\|_2, \kappa) \leq \kappa^{-d}$,  we have that 
    \[\mathcal{N}([0,1]^d, d_z, \kappa) \leq \Big(\frac{\sum_s\iota_s^2}{\kappa^2h}\Big)^d,\]
    where $\mathcal{N}$ is the covering number and $d_z$ is defined as $d_z(\ell,t) = \sqrt{ \mathbb{E}\big[\{Z(\ell)- Z(t)\}^2\big]}$.
    By Dudley's theorem \citep[e.g.][]{dudley2016vn}, it holds that 
    \begin{align*}
        \mathbb{E}\big\{\sup_{x \in [0,1]^d} Z(x)\big\} \leq\;& \int_{0}^\infty \sqrt{\log\{\mathcal{N}([0,1]^d, d_z, \kappa)\}} \; \mathrm{d}\kappa\\
        \leq \;& \int_{0}^{\sqrt{C_K\sum_{s=1}^S \iota_s^2}} \sqrt{\log\{\mathcal{N}([0,1]^d, d_z, \kappa)\}} \; \mathrm{d}\kappa\\
        \leq\;& C_4\sqrt{d\sum_{s=1}^S \iota_s^2 \log\Big(h^{-1}\sum_{s=1}^S\iota_s^2\Big)}\\
        \leq \;& C_5 \sqrt{d\sum_{s=1}^S \iota_s^2 \log(h^{-1})},
    \end{align*}
    where the second inequality follows as $\sup_{x \in [0,1]^d} \var(Z(x)) = K(0)\sum_{s=1}^S \iota_s^2$ and the last inequality follows as $\sum_{s=1}^S \iota_s^2 \leq 1$. Moreover, by the Borell--TIS inequality \citep[e.g.~Theorem 2.1.1 in][]{adler2007random}, it holds that for any $t >0$,
    \begin{align*}
        \mathbb{P}\Big\{\Big|\sup_{x \in [0,1]^d} Z(x)-  \mathbb{E}\big\{\sup_{x \in [0,1]^d} Z(x)\big\}\Big| \geq t\Big\} \leq 2\exp\Big\{-t^2\Big(2\sum_{s=1}^S\frac{\nu^2_s\log(1/\delta_s)}{n^2_{s,a}\epsilon^2_s h^{2d}}\Big)^{-1}\Big\}.
    \end{align*}
    Therefore, by the triangle inequality, it holds that 
    \begin{align*}
        \mathbb{P}\Bigg\{\Big|\sup_{x \in [0,1]^d} Z(x)| \geq C_6\sqrt{\sum_{s=1}^S\frac{\nu^2_s\log(1/\delta_s)\log(h^{-1}/\eta)}{n^2_{s,a}\epsilon^2_s h^{2d}}}\Bigg\} \leq \eta.
    \end{align*}
    Similarly, we can also provide the results for the process $-Z(x)$. The lemma thus follows.
\end{proof}

\subsubsection{Upper bound on \texorpdfstring{$|\widetilde{\tau} - \tau^*|$}{}}

\begin{lemma} \label{l_fdp_tau_est}
     Let $\widetilde{\tau}$ be the output of \Cref{alg_threshold_search_fdp} and denote
    \begin{align} \notag
        \rho =\;& C_1 \Bigg[\Bigg\{\sqrt{\sum_{s=1}^S \frac{\nu^2_s\log(h^{-d}/\eta)}{n_{s}h^d}} +  \max_{s \in [S]} \frac{\nu_s \log(h^{-d}/\eta)}{n_{s}h^{d}}+ h^\beta+ \sqrt{\sum_{s=1}^S\frac{\nu^2_s\log(1/\delta_s)\log(h^{-1}/\eta)}{n^2_{s}\epsilon^2_s h^{2d}}}\Bigg\}^\gamma\\ \label{l_fdp_tau_est_eq1}
        &+\sqrt{\sum_{s=1}^S \frac{\mu_s^2\log(1/\eta)}{\breve{n}_{s}}} + \max_{s \in [S]}\frac{\mu_s\log(1/\eta)}{\breve{n}_{s}}+\sqrt{\sum_{s=1}^S\frac{\mu_s^2M^4\log(1/\delta_s)\log(M/\eta)}{\breve{n}_{s}^2\epsilon_s^2}}\Bigg].
    \end{align}
    Suppose Assumptions \ref{a_prob}, \ref{a_kernel} and \ref{a_posterior} hold. We further assume that $\DD(0) \notin [\alpha - \zeta,\alpha] \cup [-\alpha,-\alpha+\zeta]$, where $\rho \leq \zeta < \alpha$. 

    Take $\theta = \rho^{1/\gamma}$, we have that 
    \begin{align*}
        \mathbb{P}\Big(|\widetilde{\tau} - \tau^*| \geq C_2\rho^{1/\gamma}\indc\{\tau^* \neq 0\}\Big) \leq \eta,
    \end{align*}
    whenever $\varpi \geq \rho^{1/\gamma}$. 
\end{lemma}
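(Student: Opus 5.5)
The argument mirrors the proof of \Cref{l_tau_est_err}, adapted to the discretised grid $\mathcal{G}$ and the monotone correction $\widetilde{\DD}_{\downarrow}$. It has three steps: a high-probability event on which everything is controlled, feasibility of the grid search near $\tau^*$, and a localisation argument that turns disparity errors into threshold errors.

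\textbf{Step 1: the good event.} I would work on an event $\mathcal{E}$ of probability at least $1-\eta$ on which three things hold.
\begin{enumerate}
\item A uniform DD bound over the window around $\tau^*$:
\[\sup_{\tau\in\mathcal{G},\,|\tau-\tau^*|\le\varpi}|\widetilde{\DD}(\tau)-\DD(\tau)|\le\rho/2.\]
This is the FDP analogue of \Cref{l_DD_est_err}, which I expect is \Cref{l_fdp_DD_est_err}. Its proof splits $\widetilde{\DD}-\DD$ into three parts.
\begin{itemize}
\item The plug-in bias is bounded through \Cref{a_posterior}\ref{a_posterior_margin} together with \Cref{l_fdp_eta_est_sup} and \Cref{l_fdp_pi_a_est}. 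This gives the $\{\cdots\}^\gamma$ term.
\item The empirical fluctuation of the weighted site-wise tail counts is bounded by DKW within each site and Bernstein across sites. This gives $\sqrt{\sum_s\mu_s^2/\breve n_s}+\max_s\mu_s/\breve n_s$.
\item The tree noise is handled by noting that each $\text{Tail}_{s,a}(\tau_j)$ sums at most $M$ noisy nodes, each of variance $\lesssim M\log(1/\delta_s)/\epsilon_s^2$. The normalisers $N_{s,a}$ are noisy too, which is why $\breve n_{s,a}^2\epsilon_s^2\gtrsim M^2\log(1/\delta_s)\log(S/\eta)$ is assumed. A Gaussian tail bound with a union bound over the $2^M+1$ grid points then gives the $\sqrt{\sum_s\mu_s^2M^4\log(1/\delta_s)\log(M/\eta)/(\breve n_s^2\epsilon_s^2)}$ term.
\end{itemize}
\item The same bound, with tolerance $\omega\le\rho/2$, holds over the whole grid for the noise part. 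Since $\DD$ is non-increasing (\Cref{prop_D_and_R}), this ensures \Cref{alg_non_increasing} does not return NULL.
\item The monotone fit stays close to the raw estimate: $|\widetilde{\DD}_{\downarrow}(\tau)-\widetilde{\DD}(\tau)|\lesssim\omega$. This follows because $f_i$ stays inside the band $[g_i-\omega,g_i+\omega]$ whenever a non-increasing sequence lies within $\omega$ of $g$. Here I would use that $\DD$ itself, monotone, is within $\omega$ of the noise-free part.
\end{enumerate}
Combining these, on $\mathcal{E}$ we have $|\widetilde{\DD}_{\downarrow}(\tau)-\DD(\tau)|\le C\rho$ for grid points within $\varpi$ of $\tau^*$.

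\textbf{Step 2: case split.} I would follow \Cref{l_tau_est_err}.
\begin{itemize}
\item If $|\DD(0)|\le\alpha-\zeta$, then $|\widetilde{\DD}_{\downarrow}(0)|\le\alpha$ on $\mathcal{E}$ since $\zeta\ge\rho$ (adjusting constants). So $\widetilde\tau=0=\tau^*$.
\item The window $[\alpha-\zeta,\alpha]$ is excluded by assumption.
\item If $\DD(0)>\alpha$, then $\tau^*>0$ with $\DD(\tau^*)=\alpha$, and symmetrically $\DD(\tau^*)=-\alpha$ when $\tau^*<0$. In this subcase I first show \emph{feasibility}: let $\tau^*_{\mathcal G}$ be the nearest grid point, with $|\tau^*_{\mathcal G}-\tau^*|\le\theta=\rho^{1/\gamma}$. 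The margin condition bounds $|\DD(\tau^*_{\mathcal G})-\alpha|\lesssim\theta^\gamma=\rho$, hence $|\widetilde{\DD}_{\downarrow}(\tau^*_{\mathcal G})|\in[\alpha-\rho^*,\alpha+\rho^*]$ for $C_\rho$ large.
\end{itemize}

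\textbf{Step 3: localisation.} I take $t=C_2\rho^{1/\gamma}\le\varpi$ (enlarging $C_2$ via $\varpi\ge C\rho^{1/\gamma}$).
\begin{itemize}
\item Feasibility gives $\widetilde\tau\le\tau^*_{\mathcal G}\le\tau^*+t$, because $\widetilde\tau$ minimises $|\tau|$.
\item For grid points $\tau\in[0,\tau^*-t]$, monotonicity and \Cref{a_posterior}\ref{a_posterior_disparity} give $\DD(\tau)\ge\DD(\tau^*-t)\ge\alpha+c_mt^\gamma$. Hence $\widetilde{\DD}_{\downarrow}(\tau)\ge\alpha+c_mC_2^\gamma\rho-C\rho>\alpha+\rho^*$ for $C_2$ large, so these points are infeasible.
\end{itemize}
This uses the bound of Step 1 only at $\tau^*-t$, by monotonicity of $\widetilde{\DD}_{\downarrow}$, so the restriction to the window $\varpi$ suffices. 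Together these give $|\widetilde\tau-\tau^*|\le C_2\rho^{1/\gamma}$ on $\mathcal{E}$.

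The main obstacle is Step 1, specifically controlling the tree noise and the behaviour of \Cref{alg_non_increasing}. The noise aggregated in $\text{Tail}_{s,a}$ together with the random normalisers $N_{s,a}$ must yield precisely the $M^4$ factor. One must also check that the monotone projection neither fails nor drifts more than $O(\omega)$. For the latter I would first try to show inductively that $f_i\ge\max_{j\ge i}(\DD(\tau_j)-\text{err})$ and $f_i\le g_i+\omega$.
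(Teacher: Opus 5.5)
Your architecture (good event, feasibility of the nearest grid point $\tau^*_{\mathcal G}$, localisation through \Cref{a_posterior}\ref{a_posterior_disparity} and monotonicity of $\widetilde{\DD}_{\downarrow}$) matches the paper's proof. There is, however, a real gap in the step showing that \Cref{alg_non_increasing} does not return NULL, and hence that $\widetilde{\DD}_{\downarrow}$ is non-increasing on all of $\mathcal G$ and within $\omega$ of $\widetilde{\DD}$. That step rests on a false claim.

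You compare $\widetilde{\DD}$ with the population $\DD$ and assert that $\DD$ is within $\omega$ of the noise-free part. But $\omega\asymp\sqrt{\sum_s\mu_s^2M^4\log(1/\delta_s)\log(M/\eta)/(\breve n_s^2\epsilon_s^2)}$ is calibrated only to the privacy noise. The noise-free empirical disparity differs from $\DD$ by two larger terms:
\begin{itemize}
\item a sampling term of order $\sqrt{\sum_s\mu_s^2/\breve n_s}$, which does not shrink with $\epsilon_s$;
\item a plug-in term of order $\{\|\widetilde\eta_a-\eta_a\|_\infty+\cdots\}^\gamma$, which is not controlled at all outside the $\varpi$-window, because the margin condition is only local.
\end{itemize}
\Cref{alg_non_increasing} avoids NULL exactly when $\widetilde{\DD}(\tau_j)\ge\widetilde{\DD}(\tau_i)-2\omega$ for all $j<i$ across the whole grid. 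So you need a non-increasing sequence within $\omega$ of $\widetilde{\DD}$ globally, and $\DD$ is not one. Your planned induction $f_i\ge\max_{j\ge i}\{\DD(\tau_j)-\mathrm{err}\}$ fails for the same reason: the NULL test compares $f_i$ with $\widetilde{\DD}(\tau_i)-\omega$, not with $\DD(\tau_i)-\mathrm{err}$. This step is load-bearing. Your Step 3 extends infeasibility from $\tau^*-t$ to all smaller grid points, which may lie outside the window, precisely via global monotonicity of $\widetilde{\DD}_{\downarrow}$.

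The missing observation is that the non-private empirical disparity $\widehat{\DD}$ is itself exactly non-increasing in $\tau$ (\Cref{l_DD_non_increase}, \Cref{coro_fdp_DD_non_increase}). Here $\widehat{\DD}$ is built from the same $\widetilde\eta_a,\widetilde\pi_a$ and the true calibration counts, so it is a difference of empirical upper- and lower-tail frequencies. The repair then runs as follows.
\begin{itemize}
\item Write $\widetilde{\DD}=\widehat{\DD}+\varepsilon_0$, where $\varepsilon_0$ consists only of tree noise and the effect of the noisy normalisers $N_{s,a}$.
\item Your uniform Gaussian bound gives $\sup_{\mathcal G}|\varepsilon_0|\le\omega/2$, hence $\widetilde{\DD}(\tau_1)\ge\widetilde{\DD}(\tau_2)-\omega$ whenever $\tau_1\le\tau_2$ (\Cref{l_DD_non_increas_diff}).
\item The paper's induction with $h_i=\sup_{j\ge i}\widetilde{\DD}(\tau_j)-\omega$ then excludes NULL (\Cref{l_fdp_non_increas_exist}).
\item After that, $|\widetilde{\DD}_{\downarrow}-\widetilde{\DD}|\le\omega$ is automatic, since $f_i\le g_i+\omega$ always and $f_i\ge g_i-\omega$ whenever no NULL occurs.
\end{itemize}
With this repair the rest of your argument works. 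Your upper-side bound is slightly more direct than the paper's: feasibility of $\tau^*_{\mathcal G}$ plus minimality of $|\widetilde\tau|$ gives $\widetilde\tau\le\tau^*_{\mathcal G}$, whereas the paper uses $\widetilde{\DD}_{\downarrow}(\widetilde\tau)\ge\alpha-C\rho$ from \Cref{prop_fdp_tau_est}(iii). You should state infeasibility for every grid point $\tau\le\tau^*-t$, including negative ones, not only for $[0,\tau^*-t]$.

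Separately, your sketch for the sampling term does not reach $\sqrt{\sum_s\mu_s^2/\breve n_s}$. Bounding the supremum of the weighted sum by the weighted sum of per-site DKW suprema gives $\sum_s\mu_s/\sqrt{\breve n_s}$, which loses up to $\sqrt S$. The paper instead uses Bousquet's inequality with symmetrisation and Doob's maximal inequality.
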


\begin{proof}
    The proof is a consequence of \Cref{prop_fdp_tau_est}. Denote $\tau^*_{\mathcal{G}} \in \mathcal{G}$ as the smallest point among the closest points to $\tau^*$ in the grid. We will consider three cases below. Denote
    \begin{align*}
        \mathcal{E} =\;& \big\{|\widetilde{\DD}_{\downarrow}(\tau^*_{\mathcal{G}}+ \kappa) - \DD(\tau^*_{\mathcal{G}}+\kappa)| \leq \rho\big\} \cap \big\{|\widetilde{\DD}_{\downarrow}(\tau^*_{\mathcal{G}}-\kappa) - \DD(\tau^*_{\mathcal{G}} -\kappa)| \leq \rho\big\}\\
        & \cap \big\{|\widetilde{\DD}_{\downarrow}(\tau^*) - \DD(\tau^*)| \leq \rho\big\} \cap \{\text{Output of }\Cref{alg_non_increasing} \text{ is valid and non-increasing}\}\\
        & \cap \big\{ (i), (ii), (iii) \text{ in } \Cref{prop_fdp_tau_est}\big\},
    \end{align*}
    where $\kappa >0$ (to be specified) is a scalar multiplicative of $\theta$, i.e.~$\tau^*_{\mathcal{G}}+ \kappa, \tau^*_{\mathcal{G}}- \kappa \in \mathcal{G}$ and $ \tau^*_{\mathcal{G}} - \kappa < \tau^* < \tau^*_{\mathcal{G}} + \kappa$. By Lemmas \ref{l_fdp_DD_est_err}, \ref{l_fdp_non_increas_exist} and \Cref{prop_fdp_tau_est}, we have that $\mathbb{P}(\mathcal{E}^c) \leq \eta$ when $\kappa$ is chosen such that $|\kappa| + \theta \leq \varpi$.

    \noindent \textbf{Case 1: $\tau^* =0$.} In this case, by \Cref{prop_fdp_tau_est}, we have that $\widetilde{\tau} =0$. Hence, $\mathbb{P}(\widetilde{\tau} = \tau^*) \geq \mathbb{P}(\mathcal{E})$.

    \noindent \textbf{Case 2: $\tau^*>0$.} 
    In the case when $\tau^* >0$, we have that $\DD(\tau^*) = \alpha$ and 
    \begin{align*}
        \mathbb{P}(\widetilde{\tau} > \tau^*_{\mathcal{G}} + \kappa) \leq\;& \mathbb{P}\{\widetilde{\DD}_{\downarrow}(\tau^*_{\mathcal{G}} + \kappa) >  \widetilde{\DD}_{\downarrow}(\widetilde{\tau}), \mathcal{E}\} + \mathbb{P}(\mathcal{E}^c)\\
        = \;& \mathbb{P}\{\widetilde{\DD}_{\downarrow}(\tau^*_{\mathcal{G}} + \kappa) - \DD(\tau^*_{\mathcal{G}} + \kappa) > \widetilde{\DD}_{\downarrow}(\widetilde{\tau})- \DD(\tau^*_{\mathcal{G}} + \kappa), \mathcal{E}\} + \mathbb{P}(\mathcal{E}^c)\\
        \leq \;& \mathbb{P}\{\widetilde{\DD}_{\downarrow}(\tau^*_{\mathcal{G}} + \kappa) - \DD(\tau^*_{\mathcal{G}} + \kappa) > \alpha -\rho - \DD(\tau^*_{\mathcal{G}} + \kappa), \mathcal{E}\} + \mathbb{P}(\mathcal{E}^c)\\
        = \;& \mathbb{P}\{\widetilde{\DD}_{\downarrow}(\tau^*_{\mathcal{G}} + \kappa) - \DD(\tau^*_{\mathcal{G}} + \kappa) > \DD(\tau^*) -\rho - \DD(\tau^*_{\mathcal{G}} + \kappa), \mathcal{E}\} + \mathbb{P}(\mathcal{E}^c)\\
        \leq \;& \mathbb{P}(2\rho > (\kappa- \theta)^\gamma) + \mathbb{P}(\mathcal{E}^c) = \mathbb{P}(\mathcal{E}^c),
    \end{align*}
    where the first inequality follows from the construction of \Cref{alg_non_increasing} and $\widetilde{\tau}$ is the one with the minimum absolute value; the second inequality follows from the fact that $\widetilde{\DD}_{\downarrow}(\widetilde{\tau}) \geq \alpha - \rho$ as shown in \Cref{prop_fdp_tau_est} $(iii)$; the third inequality follows by \Cref{a_posterior}\ref{a_posterior_disparity}, $\DD(\tau^*) > \DD(\tau^*_{\mathcal{G}}+ \kappa)$, $|\tau^* - \tau^*_{\mathcal{G}} - \kappa| =  \tau^*_{\mathcal{G}} + \kappa - \tau^* \geq \kappa-\theta$ and $\mathcal{E}$; and the last equality follows by taking $\kappa = (2^{1/\gamma}+1)\rho^{1/\gamma}$. Similarly, by taking $\kappa = 2(2^{1/\gamma}+1)\rho^{1/\gamma}$, we have that 
    \begin{align*}
        \mathbb{P}(\widetilde{\tau} < \tau^*_{\mathcal{G}} -\kappa) \leq \;& \mathbb{P}\{\widetilde{\DD}_{\downarrow}(\tau^*_{\mathcal{G}} - \kappa) \leq \widetilde{\DD}_{\downarrow}(\widetilde{\tau}), \mathcal{E}\} + \mathbb{P}(\mathcal{E}^c)\\
        = \;& \mathbb{P}\{\DD(\tau^*_{\mathcal{G}} - \kappa) -\widetilde{\DD}_{\downarrow}(\tau^*_{\mathcal{G}} - \kappa) \geq \DD(\tau^*_{\mathcal{G}} - \kappa) - \widetilde{\DD}_{\downarrow}(\widetilde{\tau}),\mathcal{E} \} + \mathbb{P}(\mathcal{E}^c)\\
        \leq \;& \mathbb{P}\{\DD(\tau^*_{\mathcal{G}} - \kappa) -\widetilde{\DD}_{\downarrow}(\tau^*_{\mathcal{G}} - \kappa) \geq \DD(\tau^*_{\mathcal{G}} - \kappa) - \alpha - \rho,\mathcal{E} \} + \mathbb{P}(\mathcal{E}^c)\\
        = \;& \mathbb{P}\{\DD(\tau^*_{\mathcal{G}} - \kappa) -\widetilde{\DD}_{\downarrow}(\tau^*_{\mathcal{G}} - \kappa) \geq \DD(\tau^*_{\mathcal{G}} - \kappa) - \DD(\tau^*) - \rho,\mathcal{E} \} + \mathbb{P}(\mathcal{E}^c)\\
        \leq \;& \mathbb{P}(2\rho \geq (\kappa- \theta)^\gamma) + \mathbb{P}(\mathcal{E}^c) = \mathbb{P}(\mathcal{E}^c),
    \end{align*}
    where the second inequality follows from the fact that $\widetilde{\DD}_{\downarrow}(\widetilde{\tau}) \leq \alpha+\rho$.

    \noindent \textbf{Case 3: $\tau^* <0$.} The proof of \textbf{Case 3} is similar to the proof of \textbf{Case 2}. We include the difference here for completeness. In the case when $\tau^*<0$, we have $\DD(\tau^*) = -\alpha$ and by taking $\kappa = 2(2^{1/\gamma}+1)\rho^{1/\gamma}$, we have
    \begin{align*}
        \mathbb{P}(\widetilde{\tau} > \tau^*_{\mathcal{G}} + \kappa) \leq\;& \mathbb{P}\{\widetilde{\DD}_{\downarrow}(\tau^*_{\mathcal{G}} + \kappa) \geq \widetilde{\DD}_{\downarrow}(\widetilde{\tau}),\mathcal{E}\} + \mathbb{P}(\mathcal{E}^c)\\
        \leq \;& \mathbb{P}\{\widetilde{\DD}_{\downarrow}(\tau^*_{\mathcal{G}} + \kappa) - \DD(\tau^*_{\mathcal{G}} + \kappa) \geq \DD(\tau^*)-\DD(\tau^*_{\mathcal{G}} + \kappa) -\rho, \mathcal{E}) + \mathbb{P}(\mathcal{E}^c)\\
        \leq \;& \mathbb{P}\{2\rho \geq \DD(\tau^*)-\DD(\tau^*_{\mathcal{G}} + \kappa) , \mathcal{E}) + \mathbb{P}(\mathcal{E}^c)\\
        \leq \;& \mathbb{P}(2\rho \geq (\kappa- \theta)^\gamma) + \mathbb{P}(\mathcal{E}^c) = \mathbb{P}(\mathcal{E}^c).
    \end{align*}
    Similarly, 
    \begin{align*}
        \mathbb{P}(\widetilde{\tau} < \tau^*_{\mathcal{G}} -\kappa) \leq \;& \mathbb{P}\{\widetilde{\DD}_{\downarrow}(\tau^*_{\mathcal{G}} - \kappa) < \widetilde{\DD}_{\downarrow}(\widetilde{\tau}), \mathcal{E}\} + \mathbb{P}(\mathcal{E}^c)\\
        = \;& \mathbb{P}\{\DD(\tau^*_{\mathcal{G}} - \kappa) -\widetilde{\DD}_{\downarrow}(\tau^*_{\mathcal{G}} - \kappa) > \DD(\tau^*_{\mathcal{G}} - \kappa) - \widetilde{\DD}_{\downarrow}(\widetilde{\tau}),\mathcal{E} \} + \mathbb{P}(\mathcal{E}^c)\\
        \leq \;& \mathbb{P}\{\DD(\tau^*_{\mathcal{G}} - \kappa) -\widetilde{\DD}_{\downarrow}(\tau^*_{\mathcal{G}} - \kappa) > \DD(\tau^*_{\mathcal{G}} - \kappa) - (-\alpha + \rho),\mathcal{E} \} + \mathbb{P}(\mathcal{E}^c)\\
        = \;& \mathbb{P}\{\DD(\tau^*_{\mathcal{G}} - \kappa) -\widetilde{\DD}_{\downarrow}(\tau^*_{\mathcal{G}} - \kappa) > \DD(\tau^*_{\mathcal{G}} - \kappa) - \DD(\tau^*) - \rho,\mathcal{E} \} + \mathbb{P}(\mathcal{E}^c)\\
        \leq \;& \mathbb{P}(2\rho \geq (\kappa- \theta)^\gamma) + \mathbb{P}(\mathcal{E}^c) = \mathbb{P}(\mathcal{E}^c),
    \end{align*}
    where the last inequality follows by choosing $\kappa = (2^{1/\gamma}+1)\rho^{1/\gamma}$.

    Combine three cases together, we have that with probability at least $1-\eta$ that $|\widetilde{\tau} - \tau^*_{\mathcal{G}}| \leq 2(2^{1/\gamma}+1)\rho^{1/\gamma}$. Therefore,  the lemma follows by the triangle inequality with the fact that $|\tau^*_{\mathcal{G}}- \tau^*| \leq \theta \leq \rho^{1/\gamma}$.

\end{proof}

\begin{proposition} \label{prop_fdp_tau_est}
    Let $\widetilde{\tau}$ be the output of \Cref{alg_threshold_search_fdp} and denote
    \begin{align*}\notag
        \rho =\;& C_1\Bigg[\sqrt{\sum_{s=1}^S \frac{\nu^2_sM\log(h^{-d}/\eta)}{n_{s}h^d}} +  \max_{s \in [S]} \frac{\nu_sM\log(h^{-d}/\eta)}{n_{s}h^{d}} + h^\beta+ \sqrt{\sum_{s=1}^S\frac{\nu^2_sM\log(1/\delta_s)\log(h^{-1}/\eta)}{n^2_{s}\epsilon^2_s h^{2d}}}\Bigg]^\gamma\\ 
        &+\sqrt{\sum_{s=1}^S \frac{\mu_s^2M \log(1/\eta)}{\breve{n}_{s}}} + \max_{s \in [S]}\frac{\mu_sM \log(1/\eta)}{\breve{n}_{s}}+\sqrt{\sum_{s=1}^S\frac{\mu_s^2M^4\log(1/\delta_s)\log(M/\eta)}{\breve{n}_{s}^2\epsilon_s^2}}.
    \end{align*}
    Suppose Assumptions \ref{a_prob}, \ref{a_kernel} and \ref{a_posterior} hold. We further assume that $\varpi \geq (\rho)^{1/\gamma}$ and $\DD(0) \notin [\alpha - \zeta,\alpha] \cup [-\alpha,-\alpha+\zeta]$, where $\rho \leq \zeta < \alpha$. Then the following events uniformly hold with probability at least $1-\eta$.
    \begin{itemize}
        \item [(i)] If $\tau^* =0$, then we have $\widetilde{\tau} =0$.

        \item [(ii)] In the case when $\tau^* \neq 0$, by choosing $\theta = C_2\rho^{1/\gamma}$, the solution of the optimisation problem in \Cref{alg_threshold_search_fdp}
        \[ \argmin_{\tau \in \mathcal{G}}\{|\tau|: |\widetilde{\DD}_{\downarrow}(\tau)| \in [\alpha-C_3\rho, \alpha+C_3\rho]\}\]
        exists.

        \item [(iii)] In the case when $\tau^* >0$, it holds that $|\widetilde{\DD}_{\downarrow}(\widetilde{\tau}) - \alpha| \leq C_3\rho$. Also, in the case when $\tau^* <0$, it holds that $|\widetilde{\DD}_{\downarrow}(\widetilde{\tau}) + \alpha| \leq C_3\rho$.
    \end{itemize}
\end{proposition}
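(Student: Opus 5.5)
The proof will be carried out on one high-probability event $\mathcal{E}$, on which three things hold. First, the grid estimates are uniformly close to the truth over a window around $\tau^*$, namely $\sup_{\tau\in\mathcal{G},\,|\tau-\tau^*|\le\varpi}|\widetilde{\DD}(\tau)-\DD(\tau)|\le c\rho$. Second, Algorithm~\ref{alg_non_increasing} returns a valid non-increasing sequence with $\max_i|\widetilde{\DD}_\downarrow(\tau_i)-\widetilde{\DD}(\tau_i)|\le 2\omega\lesssim\rho$. Third, $|\widetilde{\DD}_\downarrow(0)-\DD(0)|\le c\rho$.

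\textbf{Building the event $\mathcal{E}$.} I would obtain the first item from the FDP analogue of Lemma~\ref{l_DD_est_err}, splitting the error into three pieces.
\begin{itemize}
\item The empirical-process part is handled by DKW on each site, aggregated with weights $\mu_s$ via Bernstein. This gives the $\sqrt{\sum\mu_s^2/\breve n_s}+\max_s\mu_s/\breve n_s$ terms, with a factor $M$ from a union bound over grid points.
\item The tree noise is controlled as follows. $\mathrm{Tail}_{s,a}(\tau_j)$ sums at most $M$ noisy nodes, each with variance of order $M\log(1/\delta_s)/\epsilon_s^2$. Dividing by $N_{s,a}\asymp\breve n_s$ and aggregating gives a Gaussian with standard deviation $\lesssim\sqrt{\sum_s\mu_s^2M^2\log(1/\delta_s)/(\breve n_s^2\epsilon_s^2)}$. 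A union bound over the $2^M+1=O(\theta^{-1})$ grid points costs $\log(M/\eta)$ up to the $M$-powers already absorbed in $\rho$. The noise in the denominators $N_{s,a}$ is a relative perturbation of the same order and is handled with the condition $\breve n_{s,a}^2\epsilon_s^2\gtrsim M^2\log(1/\delta_s)\log(S/\eta)$.
\item The bias from plugging in $\widetilde\eta_a,\widetilde\pi_a$ is controlled by Lemma~\ref{l_fdp_eta_est_sup}, the bound on $\widetilde\pi_a$, and the margin condition Assumption~\ref{a_posterior}\ref{a_posterior_margin} exactly as in Step~2 of Lemma~\ref{l_DD_est_err}. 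This requires $\varpi\ge\rho^{1/\gamma}$.
\end{itemize}
For the second item, the noisy sequence $g_i$ lies within $\omega$ of the monotone true $\DD$ up to the bias term. Checking that Algorithm~\ref{alg_non_increasing} never breaks then gives validity, with deviation at most $2\omega$. I would state this as the lemma on existence of the non-increasing fit.

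\textbf{Step (i).} Suppose $\tau^*=0$, so $|\DD(0)|\le\alpha$. The separation assumption $\DD(0)\notin[\alpha-\zeta,\alpha]\cup[-\alpha,-\alpha+\zeta]$ with $\zeta\ge\rho$ gives $|\DD(0)|<\alpha-\zeta$. On $\mathcal{E}$, $|\widetilde{\DD}_\downarrow(0)|\le\alpha-\zeta+c\rho\le\alpha$ once constants are tuned ($C_3\rho\le\zeta$). Hence S3 outputs $\widetilde\tau=0$.

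\textbf{Step (ii), existence.} Take $\tau^*>0$, so $\DD(\tau^*)=\alpha$; the case $\tau^*<0$ is symmetric. Let $\tau^*_{\mathcal{G}}$ be the nearest grid point, with $|\tau^*_{\mathcal{G}}-\tau^*|\le\theta$. Since $\DD$ is monotone, $|\DD(\tau^*_{\mathcal{G}})-\alpha|\le|\DD(\tau^*\pm\theta)-\DD(\tau^*)|$. Bounding that difference by the margin condition gives $\lesssim\theta^\gamma\lesssim\rho$ for $\theta=C_2\rho^{1/\gamma}$. On $\mathcal{E}$ this yields $|\widetilde{\DD}_\downarrow(\tau^*_{\mathcal{G}})-\alpha|\le C_3\rho$, so the feasible set is nonempty and the argmin exists. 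Also $\widetilde{\DD}_\downarrow(0)>\alpha$ here, since $\DD(0)>\alpha+\zeta$, so the else-branch is taken.

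\textbf{Step (iii).} The selected $\widetilde\tau$ is feasible by construction, so $|\widetilde{\DD}_\downarrow(\widetilde\tau)|\in[\alpha-C_3\rho,\alpha+C_3\rho]$. It remains to rule out the negative branch $\widetilde{\DD}_\downarrow(\widetilde\tau)\approx-\alpha$ when $\tau^*>0$. A point of minimal $|\tau|$ with $\widetilde{\DD}_\downarrow\approx-\alpha$ would require, by monotonicity with $\widetilde{\DD}_\downarrow(0)>\alpha$, that $\tau$ be negative. But on the negative axis $\widetilde{\DD}_\downarrow\ge\widetilde{\DD}_\downarrow(0)>\alpha$, so this is impossible. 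Hence $|\widetilde{\DD}_\downarrow(\widetilde\tau)-\alpha|\le C_3\rho$.

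\textbf{Main obstacle.} The delicate step is the uniform control over the grid of the tree-noise term together with the monotone-correction bias. The accuracy guarantee only holds within the window $|\tau-\tau^*|\le\varpi$, while the argmin may inspect grid points outside it. I would handle those points using monotonicity of $\widetilde{\DD}_\downarrow$ alone, without needing accuracy there.
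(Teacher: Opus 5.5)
Your route is the paper's: one event built from \Cref{l_fdp_DD_est_err} and \Cref{l_fdp_non_increas_exist}, then (i) by the triangle inequality at $\tau=0$, (ii) by showing the nearest grid point $\tau^*_{\mathcal G}$ is feasible via the margin bound (\Cref{l_DD_shift_upper_bound}), and (iii) by monotonicity of $\widetilde{\DD}_{\downarrow}$. The gap is in (iii).

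There you rely on $\widetilde{\DD}_{\downarrow}(0)>\alpha$, justified by ``$\DD(0)>\alpha+\zeta$'' and by your third event item $|\widetilde{\DD}_{\downarrow}(0)-\DD(0)|\le c\rho$. Neither is available.
\begin{itemize}
\item The separation hypothesis only excludes the band $[\alpha-\zeta,\alpha]$ \emph{below} $\alpha$, and its mirror image. When $\tau^*>0$ you know only $\DD(0)>\alpha$, possibly by much less than $\rho$. In that case the if-branch of \textbf{S3} may well fire.
\item When $|\tau^*|>\varpi$, accuracy at $\tau=0$ does not follow from anything you have. The plug-in bias at $\tau$ is controlled through $\mathbb{P}\{|\eta_a(X)-T_a(\tau)|\lesssim\|\widetilde\eta_a-\eta_a\|_\infty+\cdots\}$, and \Cref{a_posterior}\ref{a_posterior_margin} bounds this only for $\tau$ within $\varpi$ of $\tau^*$. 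Your own ``main obstacle'' paragraph says you would avoid accuracy outside the window, but item three uses it.
\item The exclusion argument is also incomplete. If $\widetilde{\DD}_{\downarrow}(0)>\alpha$, grid points with $\widetilde{\DD}_{\downarrow}\approx-\alpha$ lie to the right of $0$, not the left. So ruling out negative $\tau$ does not rule them out; what does is comparison with a feasible point in the $+\alpha$ band.
\end{itemize}

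The repair, which is what the paper does, is to anchor at $\tau^*_{\mathcal G}$ instead of $0$. Since $0\in\mathcal G$, $\tau^*>0$ gives $\tau^*_{\mathcal G}\ge 0$, and on the event $\widetilde{\DD}_{\downarrow}(\tau^*_{\mathcal G})\ge\alpha-C_3\rho$. By monotonicity, every grid point $\tau'\le\tau^*_{\mathcal G}$ satisfies $\widetilde{\DD}_{\downarrow}(\tau')\ge\alpha-C_3\rho>-\alpha+C_3\rho$. This covers in particular every $\tau'$ with $|\tau'|\le\tau^*_{\mathcal G}$, and uses $\alpha>C_3\rho$, the same bookkeeping as in your (i). Since $\tau^*_{\mathcal G}$ is feasible, the minimiser has $|\widetilde\tau|\le\tau^*_{\mathcal G}$ and so lies in the $+\alpha$ band.

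The same inequality covers the if-branch. If $|\widetilde{\DD}_{\downarrow}(0)|\le\alpha$, then $\alpha-C_3\rho\le\widetilde{\DD}_{\downarrow}(\tau^*_{\mathcal G})\le\widetilde{\DD}_{\downarrow}(0)\le\alpha$, so (iii) holds with $\widetilde\tau=0$. This needs accuracy only at $\tau^*_{\mathcal G}$, which is inside the window, so you can drop item three and the claim about $\DD(0)-\alpha$.

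A smaller point concerns validity of \Cref{alg_non_increasing}. The exactly monotone reference sequence must be the noiseless empirical $\widehat{\DD}$ on the grid (\Cref{coro_fdp_DD_non_increase}), so that only the tree noise enters. Comparing $g_i$ with the true $\DD$ would bring in sampling and plug-in errors that $\omega$ is not calibrated to and that are uncontrolled outside the window.
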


\begin{proof}
    By choosing $\theta = C_2\rho^{1/\gamma}$, we can always find $|\kappa| \leq \theta \leq \varpi$ such that the closest points to $\tau^*$ in the grid is $\tau^*_{\mathcal{G}} = \tau^* +\kappa \in \mathcal{G}$. With such $\kappa$, consider the following event
    \begin{align*}
        \mathcal{E} =\;& \big\{|\widetilde{\DD}_{\downarrow}(\tau^*_{\mathcal{G}}) - \DD(\tau^*_{\mathcal{G}})| \leq \rho\big\} \cap \big\{|\widetilde{\DD}_{\downarrow}(\tau^*) - \DD(\tau^*)| \leq \rho\big\}\\
        & \cap \{\text{Output of }\Cref{alg_non_increasing} \text{ is valid and non-increasing}\}.
    \end{align*}
    By Lemmas \ref{l_fdp_DD_est_err} and \ref{l_fdp_non_increas_exist} and a union bound argument, we have that $\mathbb{P}\{\mathcal{E}\} \geq 1-\eta$. 
    
    \noindent \textbf{Proof for $(i)$.} In the case when $\tau^* =0$, we have that $\tau^*_{\mathcal{G}} =0$ and $|\DD(0)| \leq \alpha-\zeta \leq \alpha - \rho$. Conditioning on $\mathcal{E}$, we have with probability at least $1-\eta$ that
    \[|\widetilde{\DD}_{\downarrow}(0)| \leq |\DD(0)| + \rho \leq \alpha.\]
    Therefore, $\textbf{S3}$ in \Cref{alg_threshold_search_fdp} will give us $\widetilde{\tau} =0$.

    \noindent \textbf{Proof for $(ii)$.} To show the solution exists, we will divide the proof into 2 cases.
    \begin{itemize}
        \item In the case when $\tau^* > 0$, we have that $\DD(\tau^*) =\alpha$. Note that by choosing $\theta = (\rho^*)^{1/\gamma}$, we have $|\tau^* - \tau^*_{\mathcal{G}}| \leq \theta \leq \varpi$ and it holds with probability at least $1-\eta$ that
        \begin{align*}
            |\widetilde{\DD}_{\downarrow}(\tau^*_{\mathcal{G}}) -\alpha| \leq\;& |\widetilde{\DD}_{\downarrow}(\tau^*_{\mathcal{G}}) - \DD(\tau^*_{\mathcal{G}})| + |\DD(\tau^*_{\mathcal{G}}) -\DD(\tau^*)|\\
            \leq \;& \rho  + (2C_m)/(2C_\pi)^\gamma|\tau^*_{\mathcal{G}}-\tau^*|^\gamma\\
            \leq \;& \{1+(2C_mC_2^{\gamma})/(2C_\pi)^\gamma\}\rho,
        \end{align*}
        where the first inequality follows from the triangle inequality, the second inequality follows from $\mathcal{E}$ and \Cref{l_DD_shift_upper_bound} and the third inequality follows since $|\tau^*_{\mathcal{G}}-\tau^*| \leq \theta$. 

        \item In the case when $\tau^* < 0$, we have that $\DD(\tau^*) = -\alpha$. Similarly, with probability at least $1-\eta$ that
        \begin{align*}
            |\widetilde{\DD}_{\downarrow}(\tau^*_{\mathcal{G}}) +\alpha| \leq\;& |\widetilde{\DD}_{\downarrow}(\tau^*_{\mathcal{G}}) - \DD(\tau^*_{\mathcal{G}})| + |\DD(\tau^*_{\mathcal{G}}) -\DD(\tau^*)|  \leq C_3 \rho.
        \end{align*}
        Hence $\tau^*_{\mathcal{G}}$ is one solution to the optimisation problem in \textbf{S3}.
    \end{itemize}

    \noindent \textbf{Proof for $(iii)$.} Similar to the proof of $(ii)$, we will show this by considering two cases seperately.
    \begin{itemize}
        \item When $\tau^* >0$, in this case, by the proof of $(ii)$, we have that $\tau^*_{\mathcal{G}} \geq 0$ is one solution satisfying $|\widetilde{\DD}_{\downarrow}(\tau^*_{\mathcal{G}}) -\alpha |\leq C_3\rho$. Since $\widetilde{\tau}$ is the one with the minimum absolute value while satisfying the fairness constraint, it suffices for us to show there does not exists any $\tau' \in \mathcal{G}$, $|\tau'| \leq \tau^*_{\mathcal{G}}$ such that $|\widetilde{\DD}_{\downarrow}(\tau') +\alpha| \leq C_3\rho$. This can be easily verified by the fact that $\{\widetilde{\DD}_{\downarrow}(\tau)\}_{\tau\in \mathcal{G}}$ is a non-increasing sequence of points, hence we must have $\widetilde{\DD}_{\downarrow}(\tau') \geq \widetilde{\DD}_{\downarrow}(\tau^*_{\mathcal{G}}) \geq 0$ whenever $\tau' \leq \tau^*_{\mathcal{G}}$.

        \item When $\tau^* <0$, we have $\tau^*_{\mathcal{G}} \leq 0$ and $|\widetilde{\DD}_{\downarrow}(\tau^*_{\mathcal{G}}) +\alpha |\leq C_3\rho$. Similar to the previous case, using \Cref{l_fdp_non_increas_exist}, it can be easily show that there does not exist any $\tau' \in \mathcal{G}$, $\tau' > \tau^*_{\mathcal{G}}$ such that $|\widetilde{\DD}_{\downarrow}(\tau') -\alpha| \leq C_3\rho$.
    \end{itemize}

\end{proof}

\begin{lemma}  \label{l_fdp_DD_est_err}
    Suppose the same assumptions as in \Cref{thm_privacy_fair} hold. For any $\eta \in [0,1/2]$, denote
    \begin{align} \notag
        \rho=\;& \Bigg[\sqrt{\sum_{s=1}^S \frac{\nu^2_s\log(h^{-d}/\eta)}{n_{s}h^d}} +  \max_{s \in [S]} \frac{\nu_s \log(h^{-d}/\eta)}{n_{s}h^{d}}+ h^\beta+ \sqrt{\sum_{s=1}^S\frac{\nu^2_s\log(1/\delta_s)\log(h^{-1}/\eta)}{n^2_{s}\epsilon^2_s h^{2d}}}\Bigg]^\gamma\\ \notag
        &+\sqrt{\sum_{s=1}^S \frac{\mu_s^2\log(1/\eta)}{\breve{n}_{s}}} + \max_{s \in [S]}\frac{\mu_s\log(1/\eta)}{\breve{n}_{s}}+\sqrt{\sum_{s=1}^S\frac{\mu_s^2M^4\log(1/\delta_s)\log(M/\eta)}{\breve{n}_{s}^2\epsilon_s^2}}.
    \end{align}
    Then we have that
    \begin{align*}
        \mathbb{P}\Big\{\sup_{\tau\in \mathcal{G}: |\tau - \tau^*| \leq \varpi}|\widetilde{\DD}_{\downarrow}(\tau) - \DD(\tau)| \geq C_1\rho\Big\} \leq \eta,
    \end{align*}
    where $\varpi$ is given in \Cref{a_posterior}\ref{a_posterior_margin}.
\end{lemma}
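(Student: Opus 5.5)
We bound $\sup_{\tau}|\widetilde{\DD}_{\downarrow}(\tau)-\DD(\tau)|$ over grid points within $\varpi$ of $\tau^*$ by inserting the raw tree estimate $\widetilde{\DD}$, the non-private aggregated empirical $\widehat{\DD}=\sum_s\mu_s\widehat{\DD}_s$, and its conditional mean $\mathbb{E}\{\widehat{\DD}(\tau)\mid\mathcal{D}\}$, where $\mathcal{D}=\{\widetilde\eta_a,\widetilde\pi_a\}$ comes from the training half. This gives four terms:
\[
|\widetilde{\DD}_{\downarrow}-\widetilde{\DD}|+|\widetilde{\DD}-\widehat{\DD}|+|\widehat{\DD}-\mathbb{E}(\widehat{\DD}\mid\mathcal{D})|+|\mathbb{E}(\widehat{\DD}\mid\mathcal{D})-\DD|.
\]
We control each with probability at least $1-\eta/4$ and take a union bound. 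Throughout we work on the event of \Cref{l_n_a_n_same_order}, applied sitewise with a union bound over $s\in[S]$, so that $\breve n_{s,a}\asymp\breve n_s$. This is where the condition $\min\breve n\gtrsim\log(S/\eta)$ enters.

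\textbf{Last term (plug-in bias).} This term is deterministic given $\mathcal{D}$. We repeat Step 2 of the proof of \Cref{l_DD_est_err} verbatim, with the aggregated $\widetilde\eta_a,\widetilde\pi_a$. The margin condition \Cref{a_posterior}\ref{a_posterior_margin}, applied uniformly over $|\tau-\tau^*|\le\varpi$, bounds it by
\[
C\bigl(\|\widetilde\eta_a-\eta_a\|_\infty+|1/\widetilde\pi_a-1/\pi_a|\bigr)^\gamma .
\]
\Cref{l_fdp_eta_est_sup} together with an FDP analogue of \Cref{l_pi_a_est} (Hoeffding plus Gaussian tails for the weighted sum) turns this into the first, $\gamma$-powered bracket of $\rho$. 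The $\widetilde\pi$ error is dominated because $h^d\le1$ and the weights $\nu_s$ are the same.

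\textbf{Third term (sampling).} We apply the DKW inequality conditionally on $\mathcal{D}$ for each site and group. This gives a uniform-in-$\tau$ bound $\sqrt{\log(S/\eta)/\breve n_{s,a}}$ per site, but summing these naively loses the $\sqrt{\sum\mu_s^2/\breve n_s}$ structure. We therefore instead apply Bernstein's inequality to the weighted sum $\sum_s\mu_s\breve n_{s,a}^{-1}\sum_i\{\indc(\cdot)-\mathbb{P}(\cdot\mid\mathcal{D})\}$ at each fixed $\tau\in\mathcal{G}$. The variance is at most $\sum_s\mu_s^2/\breve n_{s}$ and the range is at most $\max_s\mu_s/\breve n_s$. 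A union bound over the $2^M+1$ grid points costs only $\log(2^M/\eta)\lesssim M\log(1/\eta)$, which is a polylog factor. This yields the $\sqrt{\sum_s\mu_s^2/\breve n_s}+\max_s\mu_s/\breve n_s$ terms.

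\textbf{Second term (tree noise).} This is the main obstacle. By construction $\text{Tail}_{s,a}(\tau_j)$ equals the true tail count plus at most $M$ of the independent noises $\breve w_{s,a,\ell,k}$, one per level in the dyadic decomposition. The same holds for $N_{s,a}$. Hence
\[
\widetilde{\DD}^s(\tau)=\frac{\text{count}+\xi}{\breve n_{s,a}+\xi'},
\]
with $\xi,\xi'$ Gaussian of variance $\lesssim M\cdot M\log(1/\delta_s)/\epsilon_s^2$. We linearise the ratio. On the event $\breve n_{s,a}^2\epsilon_s^2\ge C M^2\log(1/\delta_s)\log(S/\eta)$ we have $|\xi'|\le\breve n_{s,a}/2$, so the error is $\lesssim(|\xi|+|\xi'|)/\breve n_s$. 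After weighting by $\mu_s$ and summing over sites we obtain a Gaussian with standard deviation $\lesssim\sqrt{\sum_s\mu_s^2M^2\log(1/\delta_s)/(\breve n_s^2\epsilon_s^2)}$, plus a second-order remainder. A union bound over $j\in[2^M+1]$ gives a $\sqrt{\log(2^M/\eta)}$ factor. The stated $M^4\log(M/\eta)$ leaves slack for bounding the denominator perturbation and the $M$ nodes crudely.

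\textbf{First term (monotone correction).} We show that, on the event where the previous three bounds hold, the true $\DD$ is non-increasing and $\widetilde{\DD}$ lies within $\omega$ of it (taking $C_\omega$ large). The running-minimum construction in \Cref{alg_non_increasing} then does not return NULL, and its output satisfies $|f_i-g_i|\le 2\omega$: indeed $g_i+\omega\ge f_i$ by construction, and $f_i\ge\min_{k\le i}(g_k+\omega)\ge\min_{k\le i}\DD(\tau_k)\ge\DD(\tau_i)\ge g_i-\omega$. Since $\omega\lesssim\rho$, this is presumably the content of \Cref{l_fdp_non_increas_exist}. The delicate part is that this argument needs $|\widetilde{\DD}-\DD|\le\omega$ on the whole grid, not only near $\tau^*$. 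For grid points away from $\tau^*$ we would bound the bias term without the margin condition, using monotonicity of $\DD$ and of its plug-in. If that fails, we would restrict the non-increasing fit argument to the window, which suffices for the local statement.
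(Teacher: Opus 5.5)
Your decomposition, plug-in bias step and tree-noise step mirror the paper's proof. The monotone-correction step, however, has a genuine gap.

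To show that \Cref{alg_non_increasing} does not return NULL, you compare $\widetilde{\DD}$ with the population $\DD$. Your chain $f_i\ge\min_{k\le i}(g_k+\omega)\ge\min_{k\le i}\DD(\tau_k)\ge\DD(\tau_i)\ge g_i-\omega$ needs $|\widetilde{\DD}(\tau_k)-\DD(\tau_k)|\le\omega$ at every grid point with $k\le i$. This fails for two reasons.

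\emph{Scale.} $\omega$ is $C_\omega$ times only the tree-noise part of $\rho$. By contrast, $|\widetilde{\DD}-\DD|$ also contains the sampling error $\sqrt{\sum_s\mu_s^2/\breve n_s}$ and the $\gamma$-powered plug-in bias. These are not dominated by any constant multiple of $\omega$: as $\epsilon_s\to\infty$, $\omega\to0$ while they do not. So taking the absolute constant $C_\omega$ large cannot help.

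\emph{Locality.} Off the window $|\tau-\tau^*|\le\varpi$, the margin condition gives no control of the plug-in bias. Your fallback of restricting to the window also fails. The running minimum $f_i=\min_{k\le i}(g_k+\omega)$ depends on every grid value to the left of $\tau_i$, and a single rise $g_i-g_k>2\omega$ anywhere on $\mathcal{G}$ makes the algorithm return NULL.

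The missing idea is to use as the monotone reference not $\DD$ but the noise-free empirical $\widehat{\DD}$: true tail counts divided by positive denominators that do not depend on $\tau$. This function is exactly non-increasing on the whole grid as a deterministic fact (\Cref{l_DD_non_increase}, \Cref{coro_fdp_DD_non_increase}). Writing $\widetilde{\DD}=\widehat{\DD}+\varepsilon_0$, the perturbation $\varepsilon_0$ is pure privacy noise. Your own second-term bound already controls it uniformly over all of $\mathcal{G}$ at the scale of $\omega$, for $C_\omega$ large.

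Hence $g_i-g_k\le 2\sup_\tau|\varepsilon_0(\tau)|\le 2\omega$ for all $k\le i$. The running minimum therefore never drops below $g_i-\omega$, and $\sup_i|f_i-g_i|\le\omega$. No margin condition is needed, and no sampling or estimation error enters. This is \Cref{l_DD_non_increas_diff} combined with \Cref{l_fdp_non_increas_exist}, and it is exactly why $\omega$ involves only the privacy term.

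A secondary point concerns your sampling term. Bernstein at each grid point plus a union bound over $2^M+1$ points gives $\sqrt{\sum_s\mu_s^2(M+\log(1/\eta))/\breve n_s}+\max_s\mu_s(M+\log(1/\eta))/\breve n_s$. This carries an extra order-$M$ term inside the logarithm that the stated $\rho$ does not contain. It cannot be absorbed into the absolute constant $C_1$, since $M$ grows logarithmically with the sample size. This is harmless for the $\widetilde{O}_{\mathrm{p}}$ rates, but it does not give the lemma as stated.

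The paper avoids any grid-size dependence with a bound that is uniform over all $|\tau|\le 1$. It applies Bousquet's Bennett-type inequality to the supremum of the $\mu_s$-weighted empirical process. The expected supremum is handled by symmetrisation. After sorting the pooled $\widetilde\eta_a$ values, the supremum over thresholds becomes a maximum of partial sums of a weighted Rademacher martingale. Doob's $L^2$ maximal inequality then bounds this by $2\sqrt{\sum_s\mu_s^2/\breve n_{s,a}}$.
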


\begin{proof}
    With a slight abuse of notation, in this proof, denote $\mathcal{D} = \{\widetilde{\eta}_0, \widetilde{\eta}_1, \widetilde{\pi}_0, \widetilde{\pi}_1\}$. By the triangle inequality, it holds that 
    \begin{align} \notag
        &\sup_{\tau\in \mathcal{G}: |\tau - \tau^*| \leq \varpi}|\widetilde{\DD}_{\downarrow}(\tau) - \DD(\tau)| \\ \notag
        \leq \;& \sup_{\tau \in \mathcal{G}} |\widetilde{\DD}(\tau) - \mathbb{E}\{\widetilde{\DD}(\tau)|\mathcal{D}\}| + \sup_{\tau\in \mathcal{G}: |\tau - \tau^*| \leq \varpi}|\mathbb{E}\{\widetilde{\DD}(\tau)|\mathcal{D}\} - \DD(\tau)| \\ \notag
        &+ \sup_{\tau\in \mathcal{G}}|\widetilde{\DD}_{\downarrow}(\tau) - \widetilde{\DD}(\tau)|\\ \label{l_fdp_DD_est_err_eq1}
        = \;& (I)+ (II) + (III),
    \end{align}
    where $\mathbb{E}\{\widetilde{\DD}(\tau)|\mathcal{D}\}$ is given by
    \begin{align*} 
        \mathbb{E}\{\widetilde{\DD}(\tau)|\mathcal{D}\} =\;& \mathbb{P}_{X|A=1}\Big\{\widetilde{\eta}_1(X) \geq\frac{1}{2}+\frac{\tau}{2\widetilde{\pi}_1}\Big|\mathcal{D}\Big\} -\mathbb{P}_{X|A=0}\Big\{\widetilde{\eta}_0(X) \geq\frac{1}{2}-\frac{\tau}{2\widetilde{\pi}_0}\Big| \mathcal{D}\Big\},
    \end{align*}
    and $\DD(\tau)$ is
    \begin{align*}
       \DD(\tau) =\;& \mathbb{P}_{X|A=1}\Big\{\eta_1(X) \geq\frac{1}{2}+\frac{\tau}{2\pi_1}\Big\} - \mathbb{P}_{X|A=0}\Big\{\eta_0(X) \geq\frac{1}{2}-\frac{\tau}{2\pi_0}\Big\}.
   \end{align*}

   \noindent \textbf{Step 1: Upper bound on $(I)$.} For any $\tau \in \mathcal{G}$,  
   \begin{align*}
       \widehat{\DD}^s(\tau) = \frac{1}{\breve{n}_{s,1}}\sum_{y\in\{0,1\}}\sum_{i=1}^{\breve{n}_{s,1,y}}\indc\Big\{\widetilde{\eta}_1(\breve{X}^{s,i}_{1,y}) \geq \frac{1}{2}+\frac{\tau}{2\widetilde{\pi}_1}\Big\} -\frac{1}{\breve{n}_{s,0}}\sum_{y\in\{0,1\}}\sum_{i=1}^{\breve{n}_{s,0,y}}\indc\Big\{\widetilde{\eta}_0(\breve{X}^{s,i}_{0,y}) \geq \frac{1}{2}-\frac{\tau}{2\widetilde{\pi}_0}\Big\}.
   \end{align*}
   Then we have that for $\tau_j \in \mathcal{G}$,
    \begin{align*}
        \widetilde{\DD}^s(\tau_j) =\;& \widehat{\DD}^s(\tau_j) + \Big(\frac{\breve{n}_{s,1}}{N_{s,1}}-1\Big)\frac{1}{\breve{n}_{s,1}}\sum_{y\in\{0,1\}}\sum_{i=1}^{\breve{n}_{s,1,y}}\indc\Big\{\widetilde{\eta}_1(\breve{X}^{s,i}_{1,y}) \geq \frac{1}{2}+\frac{\tau_j}{2\widetilde{\pi}_1}\Big\}\\
        & - \Big(\frac{\breve{n}_{s,0}}{N_{s,0}}-1\Big)\frac{1}{\breve{n}_{s,0}}\sum_{y\in\{0,1\}} \sum_{i=1}^{\breve{n}_{s,0,y}}\indc\Big\{\widetilde{\eta}_0(\breve{X}^{s,i}_{0,y}) \geq \frac{1}{2}-\frac{\tau_j}{2\widetilde{\pi}_0}\Big\}\\
        & +  \frac{1}{N_{s,1}}\sum_{\ell=1}^M \sum_{k=1}^{2^\ell}  \breve{w}_{s,1,\ell,k}\indc\Big\{(k-1)2^{M-\ell}+1 \geq j \text{ and } (\lceil k/2\rceil-1)2^{M-\ell+1}+1 <j \Big\}\\
        &-  \frac{1}{N_{s,0}}\sum_{\ell=1}^M \sum_{k=1}^{2^\ell}  \breve{w}_{s,0,\ell,k}\indc\Big\{(k-1)2^{M-\ell}+1 \geq j \text{ and } (\lceil k/2\rceil-1)2^{M-\ell+1}+1 <j \Big\}.
    \end{align*}
    Consequently,
   \begin{align*}
       & \sup_{\tau \in \mathcal{G}}|\widetilde{\DD}(\tau) - \mathbb{E}\{\widetilde{\DD}(\tau)|\mathcal{D}\}|\\
       =\;&\sup_{\tau \in \mathcal{G}} \Big|\sum_{s=1}^S \mu_s\widetilde{\DD}^s(\tau) - \sum_{s=1}^S \mu_s\mathbb{E}\{\widetilde{\DD}(\tau)|\mathcal{D}\}\Big|\\
       \leq \;&  \sup_{\tau \in \mathcal{G}}\Big|\sum_{s=1}^S \mu_s\big\{\widehat{\DD}^s(\tau) -\mathbb{E}\{\widetilde{\DD}(\tau)|\mathcal{D}\big\} \big\}\Big|\\
       &+  2 \sup_{\tau \in \mathcal{G}}\max_{a \in \{0,1\}}\Big|\sum_{s=1}^S \mu_s\Big(\frac{\breve{n}_{s,a}}{N_{s,a}}-1\Big)\frac{1}{\breve{n}_{s,a}}\sum_{y\in\{0,1\}} \sum_{i=1}^{\breve{n}_{s,a,y}}\indc\Big\{\widetilde{\eta}_a(\breve{X}^{s,i}_{a,y}) \geq \frac{1}{2}+\frac{(2a-1)\tau_j}{2\widetilde{\pi}_a}\Big\}\Big| \\
       &+ 2M\sup_{\ell,k,a}\Big|\sum_{s=1}^S\frac{ \mu_s\breve{w}_{s,a,\ell,k}}{N_{s,1}\wedge N_{s,0}}\Big|\\
       =\;& (I)_1 + (I)_2 + (I)_3,
   \end{align*}
   where the second inequality follows from the fact that 
   \[\sum_{\ell=1}^M \sum_{k=1}^{2^\ell}  \indc\{(k-1)2^{M-\ell}+1 \geq j \text{ and } (\lceil k/2\rceil-1)2^{M-\ell+1}+1 <j \} \leq M, \text{ for all } j.\]

   \noindent \textbf{Step 1-1: Upper bound on $(I)_1$}. To control $(I)_1$, denote
   \[\widehat{F}_{s,a}(\tau) = \frac{1}{\breve{n}_{s,a}}\sum_{y\in\{0,1\}}\sum_{i=1}^{\breve{n}_{s,a,y}}\indc\Big\{\widetilde{\eta}_a(\breve{X}^{s,i}_{a,y}) \geq \frac{1}{2}+\frac{(2a-1)\tau}{2\widetilde{\pi}_a}\Big\} = \frac{1}{\breve{n}_{s,a}}\sum_{y\in\{0,1\}}\sum_{i=1}^{\breve{n}_{s,a,y}} f_{\tau,a}(\breve{X}^{s,i}_{a,y}),\]
   and 
   \[F_{s,a}(\tau) = \mathbb{P}_{X|A=a}\Big\{\widetilde{\eta}_a(X) \geq\frac{1}{2}+\frac{(2a-1)\tau}{2\widetilde{\pi}_a}\Big|\mathcal{D}\Big\} = \mathbb{E}\Big\{ f_{\tau,a}(X)\Big|\mathcal{D}, A=a\Big\}.\]
   Then by the triangle inequality, we have that 
   \begin{align*}
       (I)_1 \leq \;&  \sup_{\tau \in \mathcal{G}}\Big|\sum_{s=1}^S\mu_s\big\{\widehat{F}_{s,1}(\tau) - F_{s,1}(\tau)\big\}\Big| + \sup_{\tau \in \mathcal{G}}\Big|\sum_{s=1}^S\mu_s\big\{\widehat{F}_{s,0}(\tau) - F_{s,0}(\tau)\big\}\Big|\\
       \leq \;&\sum_{a\in\{0,1\}} \sup_{\tau \in \mathcal{G}}\Big|\sum_{s=1}^S \sum_{i=1}^{\breve{n}_{s,a,y}}\sum_{y\in\{0,1\}}\frac{\mu_s}{\breve{n}_{s,a}} \big[f_{\tau,a}(\breve{X}^{s,i}_{a,y}) - \mathbb{E}\{f_{\tau,a}(\breve{X}^{s,i}_{a,y})|\mathcal{D},A=a\}\big]\Big|
   \end{align*}
   When $a=1$,  note that 
   \begin{align*}
       \max_{s \in [S]}  \max_{y \in \{0,1\}}\max_{i \in [\breve{n}_{s,1,y}]} \Big|\frac{\mu_s}{n_{s,1}} \big[f_{\tau,1}(\breve{X}^{s,i}_{1,y}) - \mathbb{E}\{f_{\tau,1}(\breve{X}^{s,i}_{1,y})|\mathcal{D},A=1\}\Big| \leq \max_{s \in [S]}\frac{\mu_s}{\breve{n}_{s,1}},
   \end{align*}
   and 
   \begin{align*}
       \sup_{\tau \in \mathcal{G}}\var\Big\{\sum_{s=1}^S\mu_s\big\{\widehat{F}_{s,1}(\tau) - F_{s,1}(\tau)\big\}\Big| \mathcal{D}\Big\} \leq \sum_{s=1}^S \frac{\mu_s^2}{\breve{n}_{s,1}},
   \end{align*}
   where the last inequality follows from the independence between samples within and across $S$ servers conditional on $\mathcal{D}$. Therefore, by the Bennett concentration inequality for empirical processes \citep[e.g.~Theorem 2.3 in][]{bousquet2002bennett}, it holds that 
   \begin{align*}
       \mathbb{P}\Bigg[&\sup_{\tau \in \mathcal{G}}\Big|\sum_{s=1}^S\mu_s\big\{\widehat{F}_{s,1}(\tau) - F_{s,1}(\tau)\big\}\Big| \\
       & \geq \mathbb{E}\Big\{\sup_{\tau \in \mathcal{G}}\big|\sum_{s=1}^S\mu_s\big\{\widehat{F}_{s,1}(\tau) - F_{s,1}(\tau)\big\}\big|\Big|\mathcal{D}\Big\} + \sqrt{2t\sum_{s=1}^S \frac{\mu_s^2}{\breve{n}_{s,1}}} + \frac{t}{3}\max_{s \in [S]}\frac{\mu_s}{\breve{n}_{s,1}}\Big| \mathcal{D}\Bigg] \leq \exp(-t).
   \end{align*}

   To control the expectation term, by the symmetrisation lemma \citep[e.g.~Lemma 2.3.1 in][]{van1996weak}, it holds that
   \begin{align*}
       &\mathbb{E}\Big[\sup_{\tau \in \mathcal{G}}\big|\sum_{s=1}^S\mu_s\big\{\widehat{F}_{s,1}(\tau) - F_{s,1}(\tau)\big\}\big|\Big|\mathcal{D}\Big]\\
       \leq \; &\mathbb{E}\Big[\sup_{\tau: |\tau|\leq 1}\big|\sum_{s=1}^S\mu_s\big\{\widehat{F}_{s,1}(\tau) - F_{s,1}(\tau)\big\}\big|\Big|\mathcal{D}\Big]\\
       \leq \;& 2\mathbb{E}\Big[\sup_{\tau: |\tau|\leq 1}\big|\sum_{s=1}^S \sum_{y\in\{0,1\}}\sum_{i=1}^{\breve{n}_{s,1,y}} \frac{\mu_s}{\breve{n}_{s,1}} \xi_{s,i,y}f_{\tau,1}(\breve{X}^{s,i}_{1,y})\big|\Big|\mathcal{D}\Big],
   \end{align*}
   where $\{\xi_{s,i,y}\}$ is a sequence of i.i.d.~Rademacher random variables. Conditioning on the samples $\{\breve{X}^{s,i}_{1,y}\}$ and pooling the data together, we can sort the values of $\{\widetilde{\eta}_a(\breve{X}^{s,i}_{a,y})\}$ in descending order and rewrite
   \[\sup_{\tau: |\tau|\leq 1}\big|\sum_{s=1}^S \sum_{y\in\{0,1\}}\sum_{i=1}^{\breve{n}_{s,1,y}} \frac{\mu_s}{\breve{n}_{s,1}} \xi_{s,i,y}f_{\tau,1}(\breve{X}^{s,i}_{1,y}) \big|= \sup_{\tau: |\tau|\leq 1}\big|\sum_{j=1}^{N} \frac{\mu_{s(j)}}{\breve{n}_{s(j),1}} \xi_{j}f_{\tau,1}(\breve{X}_j)\big|,\]
   where $N = \sum_{s=1}^S \sum_{y \in \{0,1\}} \breve{n}_{s,1,y}$ and $s(j)$ gives the server label of the rank $j$ value of $\{\widetilde{\eta}_a(\breve{X}^{s,i}_{a,y})\}$. Since $f_{\tau,1}(\cdot)$ is an indicator function to test if the value of $\eta_1(\cdot)$ exceeds the threshold $1/2+\tau/(2\widetilde{\pi}_1)$, finding the $\tau_*$, which gives the sup of the sum, is equivalent to finding the  $r^*$-th value of sorted  $\eta_1(\breve{X}^{s,i}_{1,y})$ such that the sum $|\sum_{j=1}^{r^*} \mu_{s(j)}\xi_{j}/\breve{n}_{s(j),1}|$ is maximised. Thus, we have that 
   \begin{align*}
       \sup_{\tau: |\tau|\leq 1}\big|\sum_{j=1}^{N} \frac{\mu_{s(j)}}{\breve{n}_{s(j),1}} \xi_{j}f_{\tau,1}(\breve{X}_j)\big|  = \max_{r^* \in [N]}\big|\sum_{j=1}^{r^*} \frac{\mu_{s(j)}}{\breve{n}_{s(j),1}} \xi_{j}\big|.
   \end{align*}
   Since $\{S_r = \sum_{j=1}^{r} \mu_{s(j)}\xi_{j}/\breve{n}_{s(j),1}\}$ is a martingale sequence, by Doob's maximal inequality \citep[e.g.~Theorem 4.4.4 in][]{durrett2019probability}, we have that 
   \begin{align*}
       \mathbb{E}_{\xi}\Big[\max_{r^* \in [N]}\Big(\sum_{j=1}^{r^*} \frac{\mu_{s(j)}}{\breve{n}_{s(j),1}} \xi_{j}\Big)^2\Big|\mathcal{D}\Big] \leq 4\mathbb{E}_\xi\Big[\Big(\sum_{j=1}^{N} \frac{\mu_{s(j)}}{\breve{n}_{s(j),1}} \xi_{j}\Big)^2\Big|\mathcal{D}\Big] = 4\sum_{j=1}^{N} \frac{\mu^2_{s(j)}}{\breve{n}^2_{s(j),1}} = 4 \sum_{s=1}^S \frac{\mu_s^2}{\breve{n}_{s,1}}.
   \end{align*}
   Taking another expectation over $\{\widetilde{\eta}_a(\breve{X}^{s,i}_{a,y})\}$ and applying Jensen's inequality lead to 
   \begin{align*}
       \mathbb{E}\Big[\sup_{\tau: |\tau|\leq 1}\big|\sum_{s=1}^S \sum_{y\in\{0,1\}}\sum_{i=1}^{\breve{n}_{s,1,y}} \frac{\mu_s}{\breve{n}_{s,1}} \xi_{s,i,y}f_{\tau,1}(\breve{X}^{s,i}_{1,y}) \big|\Big|\mathcal{D}\Big] = \mathbb{E}\Big[\max_{r^* \in [N]}\big|\sum_{j=1}^{r^*} \frac{\mu_{s(j)}}{\breve{n}_{s(j),1}} \xi_{j}\big|\Big|\mathcal{D}\Big] \leq 2\sqrt{\sum_{s=1}^S \frac{\mu_s^2}{\breve{n}_{s,1}}}.
   \end{align*}

   A similar justification can also be applied to the case when $a=0$. Therefore, by a union bound argument and tower property, we have that 
   \begin{align}\label{l_fdp_DD_est_err_eq2}
       \mathbb{P}\Big[(I)_1 \geq C_1\Big\{\sqrt{\sum_{s=1}^S \frac{\mu_s^2 \log(1/\eta)}{\breve{n}_{s,1}\wedge \breve{n}_{s,0}}} + \max_{s \in [S]}\frac{\mu_s \log(1/\eta)}{\breve{n}_{s,1}\wedge \breve{n}_{s,0}}\Big\}\Big] \leq \frac{\eta}{9}.
   \end{align}

   \noindent \textbf{Step 1-2: Upper bound on $(I)_2$}. Denote
   \[\iota_{a,s,\tau}=\frac{1}{\breve{n}_{s,a}}\sum_{y\in\{0,1\}} \sum_{i=1}^{\breve{n}_{s,a,y}}\indc\Big\{\widetilde{\eta}_a(\breve{X}^{s,i}_{a,y}) \geq \frac{1}{2}+\frac{(2a-1)\tau}{2\widetilde{\pi}_a}\Big\},\]
    and by construction, we have that $\sup_{\tau,a,s}\iota_{a,s,\tau}^2 \leq 1$. Also, by \Cref{l_fdp_N_n_same_order}, we have with probability at least $1-\eta/18$ that 
   \begin{align*}
       (I)_2 =\;& 2 \sup_{\tau \in \mathcal{G}}\max_{a \in \{0,1\}} \Big|\sum_{s=1}^S \frac{\mu_s\iota_{a,s,\tau}}{N_{s,a}}\big(\breve{w}_{s,a,1,1}+ \breve{w}_{s,a,1,2}\big)\Big| \\
       \leq\;& C_2\sup_{\tau \in \mathcal{G}}\max_{a \in \{0,1\}} \Big|\sum_{s=1}^S \frac{\mu_s\iota_{a,s,\tau}}{\breve{n}_{s,a}}\big(\breve{w}_{s,a,1,1}+ \breve{w}_{s,a,1,2}\big)\Big|.
   \end{align*}
    Moreover, using the independence property between $\{\breve{w}\}_{s,a,\ell,k}$ and conditioning on $\{\iota_{a,s,\tau}\}_{s=1}^S$, we the have that 
   \begin{align*}
       \sum_{s=1}^S \frac{\mu_s\iota_{a,s,\tau}}{\breve{n}_{s,a}}\big(\breve{w}_{s,a,1,1}+ \breve{w}_{s,a,1,2}\big) \sim N\Big(0,  \sum_{s=1}^S \frac{8\mu^2_s\iota^2_{a,s}M\log(1/\delta_s)}{\breve{n}^2_{s,a}\epsilon_s^2}\Big).
   \end{align*}
   Consequently, the standard Gaussian tail properties \citep[e.g.~Proposition 2.1.2 in][]{vershynin2018high} gives us
   \begin{align*}
       &\mathbb{P}\Big\{\Big|\sum_{s=1}^S \frac{\mu_s\iota_{a,s,\tau}}{\breve{n}_{s,a}}\big(\breve{w}_{s,a,1,1}+ \breve{w}_{s,a,1,2}\big)\Big| \geq \sqrt{C_3\sum_{s=1}^S \frac{\mu^2_sM\log(1/\delta_s)\log(1/\eta)}{\breve{n}^2_{s,a}\epsilon_s^2}}\Big|\{\iota_{a,s,\tau}\}_{s\in[S]}\Big\}\\ 
       \leq\;& \mathbb{P}\Big\{\Big|\sum_{s=1}^S \frac{\mu_s\iota_{a,s,\tau}}{\breve{n}_{s,a}}\big(\breve{w}_{s,a,1,1}+ \breve{w}_{s,a,1,2}\big)\Big| \geq C_3\sqrt{\sum_{s=1}^S \frac{\mu^2_s\iota^2_{a,s}M\log(1/\delta_s)\log(1/\eta)}{\breve{n}^2_{s,a}\epsilon_s^2}}\Big|\{\iota_{a,s,\tau}\}_{s\in[S]}\Big\} \leq \frac{\eta}{18}.
   \end{align*}
   Thus, by the tower property and a union bound argument, we have that 
   \begin{align} \label{l_fdp_DD_est_err_eq3}
       \mathbb{P}\Bigg[(I)_2 \geq C_{15}\sqrt{\sum_{s=1}^S \frac{\mu^2_sM^2\log(1/\delta_s)\log(1/\eta)}{\{\breve{n}_{s,1} \wedge\breve{n}_{s,0}\}^2 \epsilon_s^2}}\Bigg] \leq \frac{\eta}{9}. 
   \end{align}

   \noindent \textbf{Step 1-3: Upper bound on $(I)_3$.} On the event in \Cref{l_fdp_N_n_same_order}, we have with probability at least $1-\eta/10$ that 
   \begin{align*}
       (I)_3 \leq 2M\sup_{\ell,k,a}\Big|\sum_{s=1}^S\frac{ \mu_s\breve{w}_{s,a,\ell,k}}{\breve{n}_{s,1}\wedge \breve{n}_{s,0}}\Big|.
   \end{align*}
   Note that the standard Gaussian tail properties \citep[e.g.~Proposition 2.1.2 in][]{vershynin2018high} gives
   \[\sum_{s=1}^S\frac{\mu_s\breve{w}_{s,a,\ell,k}}{\breve{n}_{s,1}\wedge \breve{n}_{s,0}} \sim N\Big(0, \sum_{s=1}^S\frac{4\mu_s^2M\log(1/\delta_s)}{\{\breve{n}_{s,1}\wedge \breve{n}_{s,0}\}^2\epsilon_s^2}\Big).\]
   Therefore, by the standard Gaussian tail properties \citep[e.g.~Proposition 2.1.2 in][]{vershynin2018high} and a union bound argument, we have
   \begin{align} \label{l_fdp_DD_est_err_eq4}
       \mathbb{P}\Bigg[(I)_3 \geq C_4\sqrt{\sum_{s=1}^S\frac{\mu_s^2M^4 \log(1/\delta_s)\log(M/\eta)}{\{\breve{n}_{s,1}\wedge \breve{n}_{s,0}\}^2\epsilon_s^2}}\Bigg] \leq \frac{\eta}{9}.
   \end{align}

   \noindent \textbf{Step 2: Upper bound on $(II)$.}  This is very similar to \textbf{Step 2} in the proof of \Cref{l_DD_est_err}. We only include the difference here. Denote $T^*_a =1/2+ \tau^*(2a-1)/(2\pi_a)$ and write $\tau = \tau^* +\kappa$. It holds that 
    \begin{align*}
       &\sup_{\kappa: |\kappa|\leq \varpi}\Big|\mathbb{P}_{X|A=1}\Big\{\widetilde{\eta}_1(X) \geq \frac{1}{2}+\frac{\tau^*+\kappa}{2\widetilde{\pi}_1}\Big|\mathcal{D}\Big\} - \mathbb{P}_{X|A=1}\Big\{\eta_1(X) \geq\frac{1}{2}+\frac{\tau^*+\kappa}{2\pi_1}\Big\}\Big|\\
       = \;& \sup_{\kappa: |\kappa|\leq \varpi}\int \Big|\indc\Big\{\widetilde{\eta}_1(x) \geq \frac{1}{2}+\frac{\tau^*}{2\widetilde{\pi}_1} + \frac{\kappa}{2\widetilde{\pi}_1}\Big\} - \indc\Big\{\eta_1(x) \geq T^*_1 + \frac{\kappa}{2\pi_1}\Big\} \Big|\;\mathbb{d}\mathbb{P}_{X|A=1,\mathcal{D}}(x)\\
       \leq \;& \sup_{\kappa: |\kappa|\leq \varpi} \int \indc\Big\{\Big |\eta_1(x)-T^*_1-\frac{\kappa}{2\pi_1} \Big| \leq \|\widetilde{\eta}_1 - \eta_1\|_{\infty} + \frac{|\tau^* +\kappa|}{2} \cdot \Big|\frac{1}{\widetilde{\pi}_1}-\frac{1}{\pi_1}\Big| \Big\}\;\mathbb{d}\mathbb{P}_{X|A=1,\mathcal{D}}(x)\\
       \leq \;& \sup_{\kappa: |\kappa|\leq \varpi} \int \indc\Big\{\Big |\eta_1(x)-T^*_1-\frac{\kappa}{2\pi_1} \Big| \leq \|\widetilde{\eta}_1 - \eta_1\|_{\infty} + \frac{|\tau^*|+ \varpi}{2} \cdot \Big|\frac{1}{\widetilde{\pi}_1}-\frac{1}{\pi_1}\Big| \Big\}\;\mathbb{d}\mathbb{P}_{X|A=1,\mathcal{D}}(x)\\
       \leq \;& C_m \Big( \|\widetilde{\eta}_1 - \eta_1\|_{\infty} + \frac{|\tau^*| +\varpi}{2} \cdot \Big|\frac{1}{\widetilde{\pi}_1}-\frac{1}{\pi_1}\Big|\Big)^\gamma,
   \end{align*}
   where the last inequality follows from \Cref{a_posterior}\ref{a_posterior_margin}. Consider the following events:
   \begin{align*}
       \mathcal{E}_1 = \Bigg\{&\|\widetilde{\eta}_a - \eta_a\|_{\infty} \leq C_5\Bigg[\sqrt{\sum_{s=1}^S \frac{\nu^2_s\log(h^{-d}/\eta)}{n_{s}h^d}} +  \max_{s \in [S]} \frac{\nu_s\log(h^{-d}/\eta)}{n_{s}h^{d}}\\
       & \hspace{3.5cm}+ h^\beta+ \sqrt{\sum_{s=1}^S\frac{\nu^2_s\log(1/\delta_s)\log(h^{-1}/\eta)}{n^2_{s}\epsilon^2_s h^{2d}}}\Bigg], a \in \{0,1\}\Bigg\},
   \end{align*}
    \[\mathcal{E}_2 = \Bigg\{|\widetilde{\pi}_a - \pi_a| \leq C_6\Bigg\{\sqrt{\sum_{s=1}^S\frac{\nu_s^2\log(1/\eta)}{n_s}}+\sqrt{\sum_{s=1}^S\frac{\nu_s^2\log(1/\delta_s)\log(1/\eta)}{n_s^2\epsilon_s^2}}\Bigg\},\; a\in \{0,1\}\Bigg\},\]
    and 
    \[\mathcal{E}_3 = \{C_7 n_s \leq n_{s,a} \leq n_s\;\; \text{and}\;\; C_{8}\breve{n}_s \leq \breve{n}_{s,a} \leq \breve{n}_{s}, \; a\in \{0,1\}, s \in [S]\}.\]
    By Lemmas \ref{l_fdp_eta_est_sup}, \ref{l_fdp_pi_a_est} and \Cref{coro_fdp_n_a_n_same_order} and a union bound argument, we have that $\mathbb{P}(\mathcal{E}_1 \cap \mathcal{E}_2 \cap \mathcal{E}_3) \geq 1-\eta/3$. Under $\mathcal{E}_2$, it holds from the triangle inequality that 
    \begin{align*}
        \widetilde{\pi}_1 \geq \pi_1 - C_9\Bigg\{\sqrt{\sum_{s=1}^S\frac{\nu_s^2\log(1/\eta)}{n_s}}+\sqrt{\sum_{s=1}^S\frac{\nu_s^2\log(1/\delta_s)\log(1/\eta)}{n_s^2\epsilon_s^2}}\Bigg\} \geq \frac{C_\pi}{2},
    \end{align*}
    where the last inequality follows from \Cref{a_prob}\ref{a_prob_pi_a}. Consequently, we have that 
    \begin{align*}
        \Big|\frac{1}{\widetilde{\pi}_1}-\frac{1}{\pi_1}\Big| = \frac{|\widetilde{\pi}_1 - \pi_1|}{\widetilde{\pi}_1 \cdot \pi_1} \leq \frac{2}{C_\pi^2}|\widetilde{\pi}_1 - \pi_1|.
    \end{align*}
    Thus, we have with probability at least $1-\eta/6$ that 
    \begin{align} \notag
        &\Big( \|\widetilde{\eta}_1 - \eta_1\|_{\infty} + \frac{|\tau^*| +\varpi}{2} \cdot \Big|\frac{1}{\widetilde{\pi}_1}-\frac{1}{\pi_1}\Big|\Big)^\gamma\\ \notag
        \leq \;&C_{10}\Bigg[\sqrt{\sum_{s=1}^S \frac{\nu^2_s\log(h^{-d}/\eta)}{n_{s}h^d}} +  \max_{s \in [S]} \frac{\nu_s\log(h^{-d}/\eta)}{n_{s}h^{d}}+ h^\beta+ \sqrt{\sum_{s=1}^S\frac{\nu^2_s\log(1/\delta_s)\log(h^{-1}/\eta)}{n^2_{s}\epsilon^2_s h^{2d}}}\\ \notag
        &\hspace{1cm} + \sqrt{\sum_{s=1}^S\frac{\nu_s^2\log(1/\eta)}{n_s}}+\sqrt{\sum_{s=1}^S\frac{\nu_s^2\log(1/\delta_s)\log(1/\eta)}{n_s^2\epsilon_s^2}}\Bigg]^\gamma\\ \label{l_fdp_DD_est_err_eq5}
        \leq \;& C_{11}\Bigg[\sqrt{\sum_{s=1}^S \frac{\nu^2_s\log(h^{-d}/\eta)}{n_{s}h^d}} +  \max_{s \in [S]} \frac{\nu_s\log(h^{-d}/\eta)}{n_{s}h^{d}}+ h^\beta+ \sqrt{\sum_{s=1}^S\frac{\nu^2_s\log(1/\delta_s)\log(h^{-1}/\eta)}{n^2_{s,1}\epsilon^2_s h^{2d}}}\Bigg]^\gamma,
    \end{align}
    where the first inequality follows from \Cref{l_range_tau} and the last inequality follows under $\mathcal{E}_3$. Similarly, we have that 
    \begin{align*}
        &\sup_{\kappa: |\kappa|\leq \varpi}\Big|\mathbb{P}_{X|A=0}\Big\{\widetilde{\eta}_0(X) \geq \frac{1}{2}-\frac{\tau^*+\kappa}{2\widetilde{\pi}_0}\Big|\mathcal{D}\Big\} - \mathbb{P}_{X|A=1}\Big\{\eta_0(X) \geq\frac{1}{2}-\frac{\tau^*+\kappa}{2\pi_0}\Big\}\Big|\\
         \leq \;& \sup_{\kappa: |\kappa|\leq \varpi}\int \indc\Big\{\Big |\eta_0(x)-T^*_0 +\frac{\kappa}{2\pi_0} \Big| \leq \|\widetilde{\eta}_0 - \eta_0\|_{\infty} + \frac{|\tau^* +\kappa|}{2} \cdot \Big|\frac{1}{\widetilde{\pi}_0}-\frac{1}{\pi_0}\Big| \Big\}\;\mathbb{d}\mathbb{P}_{X|A=0,\mathcal{D}}(x)\\
       \leq \;& C_m \Big( \|\widetilde{\eta}_0 - \eta_0\|_{\infty} + \frac{|\tau^*| +\varpi}{2} \cdot \Big|\frac{1}{\widetilde{\pi}_0}-\frac{1}{\pi_0}\Big|\Big)^\gamma.
    \end{align*}
    By a similar argument leading to \eqref{l_fdp_DD_est_err_eq5}, we have that with probability at least $1-\eta/6$, we have that 
    \begin{align*}
        &\sup_{\kappa: |\kappa|\leq \varpi}\Big|\mathbb{P}_{X|A=0}\Big\{\widetilde{\eta}_0(X) \geq \frac{1}{2}-\frac{\tau^*+\kappa}{2\widetilde{\pi}_0}\Big|\mathcal{D}\Big\} - \mathbb{P}_{X|A=1}\Big\{\eta_0(X) \geq\frac{1}{2}-\frac{\tau^*+\kappa}{2\pi_0}\Big\}\Big| \\
        \leq \;& C_{12}\Bigg[\sqrt{\sum_{s=1}^S \frac{\nu^2_s\log(h^{-d}/\eta)}{n_{s}h^d}} +  \max_{s \in [S]} \frac{\nu_s\log(h^{-d}/\eta)}{n_{s}h^{d}}+ h^\beta+ \sqrt{\sum_{s=1}^S\frac{\nu^2_s\log(1/\delta_s)\log(h^{-1}/\eta)}{n^2_{s}\epsilon^2_s h^{2d}}}\Bigg]^\gamma.
    \end{align*}
    By a union bound argument, we have with probability at least $1-\eta/3$ that 
    \begin{align}  \notag
        &(II) \\ \label{l_fdp_DD_est_err_eq6}
        \leq\;& C_{13}\Bigg[\sqrt{\sum_{s=1}^S \frac{\nu^2_s\log(h^{-d}/\eta)}{n_{s}h^d}} +  \max_{s \in [S]} \frac{\nu_s\log(h^{-d}/\eta)}{n_{s}h^{d}}+ h^\beta+ \sqrt{\sum_{s=1}^S\frac{\nu^2_s\log(1/\delta_s)\log(h^{-1}/\eta)}{n^2_{s}\epsilon^2_s h^{2d}}}\Bigg]^\gamma.
    \end{align}
    
    \noindent \textbf{Step 3: Upper bound on $(III)$.} By \Cref{l_fdp_non_increas_exist}, we have with probability at least $1 - \eta/3$ that 
    \begin{align} \label{l_fdp_DD_est_err_eq7}
        (III) \leq \omega = C_{14}\sqrt{\sum_{s=1}^S\frac{\mu_s^2M^4\log(1/\delta_s)\log(M/\eta)}{\breve{n}_s^2\epsilon_s^2}}
    \end{align}

    \noindent \textbf{Step 4:} The lemma thus follows by applying a union bound argument, \Cref{coro_fdp_n_a_n_same_order}, and substituting the results in \eqref{l_fdp_DD_est_err_eq2}, \eqref{l_fdp_DD_est_err_eq3}, \eqref{l_fdp_DD_est_err_eq4}, \eqref{l_fdp_DD_est_err_eq6} and \eqref{l_fdp_DD_est_err_eq7} into \eqref{l_fdp_DD_est_err_eq1}.

\end{proof}

\begin{lemma} \label{l_fdp_non_increas_exist}
    Denote $\{f_i\}_{i \in [2^M+1]}$ the output of \Cref{alg_non_increasing} with input sequence $\{\widetilde{\DD}(\tau_i)\}_{i \in [2^M+1]}$. It holds with probability at least $1-\eta$ that \Cref{alg_non_increasing} will output a non-increasing sequence $f_1, \ldots, f_{2^M+1}$ satisfying $\sup_{i \in [2^M+1]}|f_i - \widetilde{\DD}(\tau_i)| \leq \omega$.
\end{lemma}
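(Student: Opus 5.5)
The plan is to split $\widetilde{\DD}(\tau_i)$ into a monotone signal plus a uniformly small noise term, and then to check deterministically that the greedy recursion in \Cref{alg_non_increasing} never triggers the \textbf{NULL} break when the noise is at most $\omega$.

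\textbf{Step 1 (decomposition).} Work on the event of \Cref{l_fdp_N_n_same_order}, under which every noisy total $N_{s,a}$ is positive and of order $\breve{n}_{s,a}$, hence of order $\breve{n}_s$. This uses the standing requirement $\breve{n}_{s,a}^2\epsilon_s^2 \gtrsim M^2\log(1/\delta_s)\log(S/\eta)$. By the construction of $\text{Tail}_{s,a}$ in \Cref{alg_threshold_search_fdp}, write
\[
\widetilde{\DD}(\tau_j)=g^0_j+e_j .
\]
Here $g^0_j$ collects the exact (noise-free) tail counts divided by the noisy totals. It equals $\sum_s\mu_s\{T_{s,1}(\tau_j)/N_{s,1}-(N_{s,0}-T_{s,0}(\tau_j))/N_{s,0}\}$, with the \emph{exact} leaf-based counts $T_{s,a}$ in place of $\text{Tail}_{s,a}$. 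The remainder $e_j$ collects the Gaussian node noises $\breve{w}_{s,a,\ell,k}$ that enter $\text{Tail}_{s,a}(\tau_j)$, each divided by $N_{s,a}$. Since $T_{s,1}(\tau_j)$ is non-increasing in $j$, $T_{s,0}(\tau_j)$ is non-decreasing in $j$, and the normalisers $N_{s,a}>0$ do not depend on $j$, the sequence $\{g^0_j\}$ is non-increasing. This is the same computation as in \Cref{l_DD_non_increase}, now at the level of counts.

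\textbf{Step 2 (uniform noise bound).} At most $M$ tree nodes enter each $\text{Tail}_{s,a}(\tau_j)$, as already used in Step 1 of \Cref{l_fdp_DD_est_err}. Hence
\[
\max_j|e_j|\le 2M\max_{a,\ell,k}\Big|\sum_s \mu_s\breve{w}_{s,a,\ell,k}/N_{s,a}\Big| .
\]
Each inner sum is Gaussian with variance $\lesssim \sum_s \mu_s^2 M\log(1/\delta_s)/(\breve{n}_s^2\epsilon_s^2)$. Apply a Gaussian tail bound with a union bound over the $O(2^{M+1})$ nodes and $a\in\{0,1\}$. Since $\log(2^{M+2}/\eta)\lesssim M\log(M/\eta)$, this gives, with probability at least $1-\eta$,
\[
\max_j|e_j|\lesssim \sqrt{\sum_s \mu_s^2M^4\log(1/\delta_s)\log(M/\eta)/(\breve{n}_s^2\epsilon_s^2)}\le \omega,
\]
provided $C_\omega$ is a large enough absolute constant. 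This mirrors the bound on $(I)_3$ in \Cref{l_fdp_DD_est_err}.

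\textbf{Step 3 (deterministic verification).} On this event, if $\{g_i\}$ is already non-increasing, the algorithm returns it unchanged and the claim is trivial. Otherwise, induction gives $f_i=\min_{k\le i}(g_k+\omega)$, which is non-increasing and satisfies $f_i\le g_i+\omega$. For the lower bound, take any $k\le i$. Then
\[
g_k+\omega\ge g^0_k\ge g^0_i\ge g_i-\omega ,
\]
using $|e_k|\le\omega$, monotonicity of $g^0$, and $|e_i|\le\omega$. Taking the minimum over $k\le i$ gives $f_i\ge g_i-\omega$. So the break condition $f_i<g_i-\omega$ never fires, the output is valid, and $\sup_i|f_i-\widetilde{\DD}(\tau_i)|\le\omega$.

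\textbf{Main obstacle.} The delicate point is Step 1. The normalisers $N_{s,a}$ are themselves noisy, so one must confirm that they are positive and common to all grid points. Only then does the noise-free part stay monotone, and only then can the noisy denominators be replaced by $\breve{n}_s$ up to constants in Step 2. Probabilities from \Cref{l_fdp_N_n_same_order} and the Gaussian union bound are combined with a final union bound after rescaling $\eta$.
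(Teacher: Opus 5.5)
Your argument is correct and is essentially the paper's. The paper also writes $\widetilde{\DD}$ as a non-increasing part plus a uniformly small noise term (\Cref{l_DD_non_increas_diff}) and then checks the recursion deterministically, via the envelope $h_i=\sup_{j\ge i}\widetilde{\DD}(\tau_j)-\omega$ where you use the closed form $f_i=\min_{k\le i}(g_k+\omega)$. One slip: $T_{s,0}(\tau_j)$ is a tail count, so it is non-increasing (not non-decreasing) in $j$, and that is exactly what makes $-(N_{s,0}-T_{s,0}(\tau_j))/N_{s,0}=T_{s,0}(\tau_j)/N_{s,0}-1$ non-increasing.
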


\begin{proof}
    By construction, if \Cref{alg_non_increasing} does not output \textbf{NULL}, then the lemma follows. Therefore, in the rest of the proof, it suffices for us to show that for all $i \in [2^M+1]$, with large probability, $f_i = \min\{f_{i-1}, \widetilde{\DD}(\tau_i) +\omega\} \geq \widetilde{\DD}(\tau_i) - \omega$. We will show this by an induction argument. 

    Denote
    \begin{align*} 
        \mathcal{E}= \Bigg\{\widetilde{\DD}(\tau_1) \geq \widetilde{\DD}(\tau_2) - \omega, \text{ for all } \tau_1, \tau_2 \in \mathcal{G}, \tau_1 \leq \tau_2 \Bigg\}.
    \end{align*}
    By \Cref{l_DD_non_increas_diff}, we have that $\mathbb{P}(\mathcal{E}) \geq 1-\eta$. The rest of the proof is constructed conditioning on $\mathcal{E}$ happening.
    
    Consider another sequence $\{h_i\}_{i\in[2^M+1]}$, where $h_i = \sup_{j \geq i}\widetilde{\DD}(\tau_j) - \omega$. By construction, we have that $\{h_i\}$ is non-increasing. Moreover, since for all $ j\geq i$, we have that $\widetilde{\DD}(\tau_i) \geq \widetilde{\DD}(\tau_j) - \omega$. It holds that 
    \begin{align} \label{l_fdp_non_increas_exist_eq1}
        \widetilde{\DD}(\tau_i) \geq \sup_{j \geq i} \; \widetilde{\DD}(\tau_j) - \omega = h_i \geq \widetilde{\DD}(\tau_i) - \omega.
    \end{align}
    
    With the newly defined sequence $\{h_i\}$, we prove the lemma by induction. 

    For the base case when $i =1$, we have that $f_1 = \widetilde{\DD}(\tau_1) + \omega \geq \widetilde{\DD}(\tau_1) - \omega$. Also, $f_1 \geq \widetilde{\DD}(\tau_1) \geq h_1$ by construction.

    Next, suppose that $\min\{f_{l-1}, \widetilde{\DD}(\tau_l) +\omega\} \geq \widetilde{\DD}(\tau_l)-\omega$ and $f_l \geq h_l$ for $ l \in [2^M+1] \backslash \{1\}$. We want to show $\min\{f_{l}, \widetilde{\DD}(\tau_{l+1}) +\omega\} \geq \widetilde{\DD}(\tau_{l+1})-\omega$ and $f_{l+1} \geq h_{l+1}$.

    By construction, we have that $h_{l+1} \leq \widetilde{\DD}(\tau_{\ell+1}) \leq \widetilde{\DD}(\tau_{\ell+1}) + \omega$. Also, since $\{h_l\}$ is non-increasing, we have that $h_{\ell+1} \leq h_{\ell} \leq f_{\ell}$. Combine the two bounds together, we have that $h_{\ell+1} \leq \min\{f_{\ell}, \widetilde{\DD}(\tau_{\ell+1}) + \omega\} = f_{\ell+1}$. In addition, since $h_{\ell+1}\geq \widetilde{\DD}(\tau_{\ell+1}) - \omega$ by \eqref{l_fdp_non_increas_exist_eq1}. We have shown that $\widetilde{\DD}(\tau_{\ell+1}) - \omega \leq h_{\ell+1} \leq f_{\ell+1}$. Thus, the induction step holds and the condition $f_i < \widetilde{\DD}(\tau_i) - \omega$ is never satisfied under $\mathcal{E}$.

\end{proof}

\begin{lemma} \label{l_DD_non_increas_diff}
    Suppose the same assumptions as in \Cref{t_fdp_risk} hold. For any $\eta \in (0,1/2)$, it holds that
    \begin{align*}
        \mathbb{P}\Bigg[\widetilde{\DD}(\tau_1) \geq \widetilde{\DD}(\tau_2) - C_1\sqrt{\sum_{s=1}^S\frac{\mu_s^2M^4\log(1/\delta_s)\log(M/\eta)}{\breve{n}_s^2\epsilon_s^2}}, \text{ for all } \tau_1, \tau_2 \in \mathcal{G}, \tau_1 \leq \tau_2 \Bigg] \geq 1-\eta.
    \end{align*}
\end{lemma}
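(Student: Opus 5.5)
The plan is to decompose $\widetilde{\DD}(\tau) = \widehat{\DD}^{\mathrm{N}}(\tau) + \Xi(\tau)$. Here $\widehat{\DD}^{\mathrm{N}}(\tau)$ is the noiseless part: the tail counts $\text{Tail}_{s,a}$ and totals $N_{s,a}$ are replaced by their true counts. The remainder $\Xi(\tau)$ is driven by the tree noise $\{\breve{w}_{s,a,\ell,k}\}$. Conditionally on the training output $\mathcal{D} = \{\widetilde{\eta}_a,\widetilde{\pi}_a\}$, the noiseless per-site quantity $\widehat{\DD}^s$ is exactly non-increasing in $\tau$, by the same argument as \Cref{l_DD_non_increase}: the group-$1$ tail fraction decreases and the group-$0$ lower fraction increases. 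Hence
\[
\widetilde{\DD}(\tau_1) - \widetilde{\DD}(\tau_2) \geq \Xi(\tau_1) - \Xi(\tau_2) \geq -2\sup_{\tau\in\mathcal{G}}|\Xi(\tau)|
\]
for all $\tau_1\le\tau_2$. It then suffices to show that $\sup_{\tau\in\mathcal{G}}|\Xi(\tau)|$ is bounded by $C\sqrt{\sum_s \mu_s^2M^4\log(1/\delta_s)\log(M/\eta)/(\breve{n}_s^2\epsilon_s^2)}$ with probability at least $1-\eta$.

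One subtlety needs handling first. The normalisations are $N_{s,a}$ (noisy) rather than $\breve{n}_{s,a}$, so the noiseless part must be defined with the same random normalisation $N_{s,a}$. To keep exact monotonicity, I will write
\[
\widetilde{\DD}^s(\tau) = \frac{\text{cnt}_{s,1}(\tau)}{N_{s,1}} - \frac{N_{s,0}-\text{cnt}_{s,0}(\tau)}{N_{s,0}} + \frac{e_{s,1}(\tau)}{N_{s,1}} + \frac{e_{s,0}(\tau)}{N_{s,0}},
\]
where $\text{cnt}$ are the true tail counts and $e_{s,a}(\tau)$ is the sum of at most $M$ noise variables selected by the dyadic decomposition. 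On the event of \Cref{l_fdp_N_n_same_order}, both $N_{s,a}>0$ and $N_{s,a}\asymp \breve{n}_{s,a}$ hold. On that event the first two terms are non-increasing in $\tau$ for any fixed positive $N_{s,a}$. This holds even though $N_{s,0}-\text{cnt}_{s,0}$ is formed with the noisy total, since that total is a constant in $\tau$. So $\Xi(\tau) = \sum_s \mu_s\{e_{s,1}(\tau)/N_{s,1} + e_{s,0}(\tau)/N_{s,0}\}$, which is exactly the object controlled in Step 1-3 of \Cref{l_fdp_DD_est_err}.

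The bound on $\Xi$ is a Gaussian maximal inequality. I would bound $|e_{s,a}(\tau)| \le M \max_{\ell,k}|\breve{w}_{s,a,\ell,k}|$, and more precisely keep the aggregation across sites:
\[
\sup_{\tau}|\Xi(\tau)| \le 2M\max_{\ell,k,a}\Big|\sum_s \mu_s\breve{w}_{s,a,\ell,k}/(N_{s,1}\wedge N_{s,0})\Big| .
\]
Each such sum is Gaussian with variance of order $\sum_s \mu_s^2 M\log(1/\delta_s)/(\breve{n}_s^2\epsilon_s^2)$, using the variance in \Cref{alg_bin_tree} and \Cref{coro_fdp_n_a_n_same_order}.

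The main obstacle is the number of nodes in the union bound. There are $2\sum_\ell 2^\ell \asymp 2^{M+1}$ nodes, not $M$. A naive union bound therefore gives $\log(2^M/\eta)\lesssim M\log(1/\eta)$, which is still absorbed by the stated $M^4\log(M/\eta)$ factor since $M\cdot M^2\cdot M = M^4$ dominates. Alternatively, as in the paper's Step 1-3, one can argue per level and then union bound over $M$ levels. The conditioning on the random normalisation also needs care. I would first intersect with the high-probability event $\{N_{s,a}\ge c\,\breve{n}_{s,a}\}$, which requires the condition $\breve{n}_{s,a}^2\epsilon_s^2\gtrsim M^2\log(1/\delta_s)\log(S/\eta)$. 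On that event I would replace $N_{s,a}$ by $\breve{n}_{s,a}$ up to constants, and then apply a union bound to get total failure probability at most $\eta$.
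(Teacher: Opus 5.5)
Your route is the paper's: split $\widetilde{\DD}$ into an exactly non-increasing noiseless part plus tree noise, bound the noise uniformly as in Step~1-3 of \Cref{l_fdp_DD_est_err}, and conclude. Keeping the noisy $N_{s,a}$ inside the monotone part is cleaner than the paper's $\varepsilon_0$, which also carries $(\breve{n}_{s,a}/N_{s,a}-1)$ terms. Your accounting of the $2^{M+1}$ nodes against $M^4\log(M/\eta)$ is also fine. The gap is in controlling $\Xi$, at the same place where the paper's Step~1-3 is too quick.

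The sums $G_{a,\ell,k}=\sum_s\mu_s\breve{w}_{s,a,\ell,k}/N_{s,a}$ are not Gaussian. On $\{N_{s,a}\asymp\breve{n}_{s,a}\}$ you also cannot replace $N_{s,a}$ by $\breve{n}_{s,a}$ ``up to constants'' inside a signed sum over sites: two sites with opposite-signed noise already break this, and the same objection applies to your display with $N_{s,1}\wedge N_{s,0}$.

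For $\ell\geq 2$ this is repairable. Since $N_{s,a}=\breve{n}_{s,a}+\breve{w}_{s,a,1,1}+\breve{w}_{s,a,1,2}$ depends only on level-one noise, conditionally on that noise $G_{a,\ell,k}$ is centred Gaussian. On your event its variance is $\sum_s\mu_s^2\sigma_s^2/N_{s,a}^2\lesssim\sum_s\mu_s^2\sigma_s^2/\breve{n}_{s,a}^2$, where $\sigma_s^2$, of order $M\log(1/\delta_s)/\epsilon_s^2$, is the node variance in \Cref{alg_bin_tree}.

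It is not repairable for node $(1,2)$, which enters $\text{Tail}_{s,a}(\tau_j)$ for every $2\leq j\leq 2^{M-1}+1$ and is also part of $N_{s,a}$. Set $u_{s,a}=N_{s,a}-\breve{n}_{s,a}$. Then $\breve{w}_{s,a,1,2}=u_{s,a}/2+\xi_{s,a}$ with $\xi_{s,a}$ independent of $u_{s,a}$. The term $\sum_s\mu_s u_{s,a}/(2N_{s,a})$ equals the harmless Gaussian $\sum_s\mu_s u_{s,a}/(2\breve{n}_{s,a})$ minus $\sum_s\mu_s u_{s,a}^2/(2\breve{n}_{s,a}N_{s,a})\approx\sum_s\mu_s\sigma_s^2/\breve{n}_{s,a}^2$. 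This second piece is a one-signed ratio-estimator bias that does not average out over sites.

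For $\mu_s=1/S$ the bias is of order $M\log(1/\delta)/(\breve{n}\epsilon)^2$, whereas your target is of order $M^2\sqrt{\log(1/\delta)\log(M/\eta)}/(\sqrt{S}\,\breve{n}\epsilon)$. So the bias dominates once $\sqrt{S}\gg M\breve{n}\epsilon$ up to logarithms. It appears in $\widetilde{\DD}(\tau_{2^{M-1}+1})-\widetilde{\DD}(\tau_{2^{M-1}+2})$, whose noiseless part is a single bin mass that vanishes as $\theta\to 0$. Hence in that regime the inequality with the stated $\omega$ fails with high probability, rather than being merely unproved.

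Closing the argument requires one of the following:
\begin{itemize}
\item adding a term of order $\sum_s\mu_s M\log(1/\delta_s)/(\breve{n}_s\epsilon_s)^2$ (up to logarithms) to $\omega$;
\item restricting $S$;
\item normalising by a count privatised independently of the tree nodes.
\end{itemize}
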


\begin{proof}
    Note we can rewrite
    \begin{align*}
        \widetilde{\DD}(\tau) = \;&\widehat{\DD}(\tau
        ) + \varepsilon_0(\tau),
    \end{align*}
    where for any $\tau = \tau_j \in \mathcal{G}$,
    \begin{align*}
       &\varepsilon_0(\tau) \\
       = \;& \sum_{s=1}^S \mu_s\Bigg[\Big(\frac{\breve{n}_{s,1}}{N_{s,1}}-1\Big)\frac{1}{\breve{n}_{s,1}}\sum_{y\in\{0,1\}}\sum_{i=1}^{\breve{n}_{s,1,y}}\indc\Big\{\widetilde{\eta}_1(\breve{X}^{s,i}_{1,y}) \geq \frac{1}{2}+\frac{\tau}{2\widetilde{\pi}_1}\Big\}\\
        & \hspace{1cm}- \Big(\frac{\breve{n}_{s,0}}{N_{s,0}}-1\Big)\frac{1}{\breve{n}_{s,0}}\sum_{y\in\{0,1\}} \sum_{i=1}^{\breve{n}_{s,0,y}}\indc\Big\{\widetilde{\eta}_0(\breve{X}^{s,i}_{0,y}) \geq \frac{1}{2}-\frac{\tau}{2\widetilde{\pi}_0}\Big\}\\
        &\hspace{1cm}+\frac{1}{N_{s,1}}\sum_{\ell=1}^M \sum_{k=1}^{2^\ell}  \breve{w}_{s,1,\ell,k}\indc\Big\{(k-1)2^{M-\ell}+1 \geq j \text{ and } (\lceil k/2\rceil-1)2^{M-\ell+1}+1 <j \Big\}\\
        & \hspace{1cm}-  \frac{1}{N_{s,0}}\sum_{\ell=1}^M \sum_{k=1}^{2^\ell}  \breve{w}_{s,0,\ell,k}\indc\Big\{(k-1)2^{M-\ell}+1 \geq j \text{ and } (\lceil k/2\rceil-1)2^{M-\ell+1}+1 <j \Big\}\Bigg],
    \end{align*}
    and $\widehat{\DD}(\cdot)$ is defined in \Cref{coro_fdp_DD_non_increase}. By a similar argument to control $(I)_2$ and $(I)_3$ in \textbf{Step~1-3} in the proof of \Cref{l_fdp_DD_est_err}, we have with probability at least $1-\eta$ that 
        \begin{align*}
            \sup_{\tau \in \mathcal{G}}|\varepsilon_0(\tau)| \leq C_1\sqrt{\sum_{s=1}^S\frac{\mu_s^2M^4\log(1/\delta_s)\log(M/\eta)}{\breve{n}_s^2\epsilon_s^2}}.
        \end{align*}
        Consequently, the lemma follows by the fact that $\widehat{\DD}$ is non-increasing as shown in \Cref{coro_fdp_DD_non_increase} and a union bound argument over all possible choices of $\tau_1, \tau_2 \in \mathcal{G}$.
\end{proof}

\begin{corollary} \label{coro_fdp_DD_non_increase}
    For any $\tau \in \mathcal{G}$, denote $\widehat{\DD}(\tau) = \sum_{s=1}^S \nu_s\widehat{\DD}^s(\tau)$, where 
    \[\widehat{\DD}^s(\tau) = \frac{1}{N_{s,1}}\sum_{y\in\{0,1\}}\sum_{i=1}^{\breve{n}_{s,1,y}}\indc\Big\{\widetilde{\eta}_1(\breve{X}^{s,i}_{1,y}) \geq \frac{1}{2}+\frac{\tau}{2\widetilde{\pi}_1}\Big\} -\frac{1}{N_{s,0}}\sum_{y\in\{0,1\}}\sum_{i=1}^{\breve{n}_{s,0,y}}\indc\Big\{\widetilde{\eta}_0(\breve{X}^{s,i}_{0,y}) \geq \frac{1}{2}-\frac{\tau}{2\widetilde{\pi}_0}\Big\}.\]
    Then we have that $\widehat{\DD}(\cdot)$ is a non-increasing function. 
\end{corollary}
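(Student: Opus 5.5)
The plan is to reduce \Cref{coro_fdp_DD_non_increase} to the single-server monotonicity argument of \Cref{l_DD_non_increase}, applied server by server. Because $\widehat{\DD}$ is a nonnegative combination of the site-wise functions, it is enough to show that each $\widehat{\DD}^s$ is non-increasing on $\mathcal{G}$. The weights $\nu_s$ are nonnegative: they are normalised ratios of positive quantities in \eqref{t_fdp_risk_eq2}, and in general we take them to lie in the simplex. A nonnegative linear combination of non-increasing functions is again non-increasing, so the claim follows once the site-wise statement is in hand.

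Fix a server $s$ and take $\tau_1<\tau_2$. We write $\widehat{\DD}^s(\tau_1)-\widehat{\DD}^s(\tau_2)$ as two sums, exactly as in the proof of \Cref{l_DD_non_increase}.

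\emph{Group $a=1$.} The first part is $N_{s,1}^{-1}$ times the number of calibration points with
\[
\frac12+\frac{\tau_1}{2\widetilde{\pi}_1}\le \widetilde{\eta}_1(\breve{X}^{s,i}_{1,y})<\frac12+\frac{\tau_2}{2\widetilde{\pi}_1}.
\]
This count is nonnegative because the threshold $1/2+\tau/(2\widetilde{\pi}_1)$ increases in $\tau$.

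\emph{Group $a=0$.} The second part is $N_{s,0}^{-1}$ times the number of points with
\[
\frac12-\frac{\tau_2}{2\widetilde{\pi}_0}\le \widetilde{\eta}_0(\breve{X}^{s,i}_{0,y})<\frac12-\frac{\tau_1}{2\widetilde{\pi}_0}.
\]
This count is also nonnegative, since the threshold $1/2-\tau/(2\widetilde{\pi}_0)$ decreases in $\tau$.

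Both monotonicity directions use only $\widetilde{\pi}_a>0$. The indicator sets are nested exactly as in \Cref{l_DD_non_increase}, and replacing $\breve{n}_{s,a}$ by the fixed normalisers $N_{s,a}$ changes nothing: these normalisers do not depend on $\tau$, so the difference stays a sum of nonnegative counts divided by positive numbers.

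The one genuine obstacle is the signs of the denominators. $N_{s,a}=N_{s,a,1,1}+N_{s,a,1,2}$ is a noisy count, so it could be nonpositive, and $\widetilde{\pi}_a$ is also perturbed by Gaussian noise. We would state the corollary on the event where $N_{s,a}>0$ and $\widetilde{\pi}_a>0$ for all $s,a$. This event holds with high probability: \Cref{l_fdp_N_n_same_order} gives $N_{s,a}\asymp\breve{n}_{s,a}>0$, and on $\mathcal{E}_2$ from the proof of \Cref{l_fdp_DD_est_err} we have $\widetilde{\pi}_a\ge C_\pi/2$. Alternatively, we could note that the conclusion holds deterministically whenever these positivity conditions are met, and that this is all the downstream use in \Cref{l_DD_non_increas_diff} requires. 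Summing the site-wise inequalities with weights $\nu_s\ge0$ then completes the argument.
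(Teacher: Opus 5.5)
Your argument is correct and follows the paper's route: the paper's proof simply defers to \Cref{l_DD_non_increase}, and you apply its nested-threshold computation server by server, with the $\tau$-free normalisers $N_{s,a}$, then combine the servers using nonnegative weights. Your remark that the argument needs $N_{s,a}>0$, $\widetilde{\pi}_a>0$ and $\nu_s\ge 0$ is a genuine refinement the paper leaves implicit, since both $N_{s,a}$ and $\widetilde{\pi}_a$ carry unbounded Gaussian noise; restricting to the high-probability event from \Cref{l_fdp_N_n_same_order} and $\mathcal{E}_2$ is harmless for the downstream use in \Cref{l_DD_non_increas_diff}, which is itself a high-probability statement.
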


\begin{proof}
    \Cref{coro_fdp_DD_non_increase} follows from a similar argument as \Cref{l_DD_non_increase}. We omit the proof here.
\end{proof}

\begin{lemma} \label{l_fdp_N_n_same_order}
    If we additionally assume that $\min_{s\in[S]} \breve{n}_{s,a}^2 \epsilon_s^2 \geq C_1M \log(1/\delta_s)\log(S/\eta)$, then
    \begin{align*}
        \mathbb{P}\Big\{C_1\breve{n}_{s,a} \leq N_{s,a} \leq C_2\breve{n}_{s,a}, \text{ for all } s\in [S], a\in \{0,1\}\Big\} \geq 1-\eta.
    \end{align*}
\end{lemma}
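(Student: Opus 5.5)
The plan is to write $N_{s,a} = \breve{n}_{s,a} + \xi_{s,a}$, where
\[
\xi_{s,a} = \breve{w}_{s,a,1,1} + \breve{w}_{s,a,1,2}.
\]
This holds because, by the construction in \Cref{alg_bin_tree}, the first-layer counts $\text{count}_{s,a,1,1}$ and $\text{count}_{s,a,1,2}$ together cover all $Z_{s,i,a,y}$. That coverage uses \Cref{l_range_tau} and the boundedness of the $Z$'s; if a point falls outside $[-1,1)$ one clips, or notes that on the high-probability event of \Cref{coro_fdp_p_xa_const_order} together with bounded $\widetilde\eta_a$ and $\widetilde\pi_a$, all $Z$'s lie in the binned range. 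The two noises are independent Gaussians, so $\xi_{s,a}$ is Gaussian with variance
\[
2\cdot\frac{4\log(1/\delta_s)/\epsilon_s+2}{\epsilon_s/M}\;\lesssim\; \frac{M\log(1/\delta_s)}{\epsilon_s^2},
\]
using $\epsilon_s \lesssim 1$ (or absorbing the constant otherwise).

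Next I would apply the Gaussian tail bound (Proposition 2.1.2 in \citet{vershynin2018high}) and a union bound over the $2S$ pairs $(s,a)$. With probability at least $1-\eta$,
\[
|\xi_{s,a}| \le C\sqrt{M\log(1/\delta_s)\log(S/\eta)}/\epsilon_s \quad\text{for all } s,a.
\]
The extra assumption $\breve{n}_{s,a}^2\epsilon_s^2 \ge C_1 M\log(1/\delta_s)\log(S/\eta)$, with $C_1$ large enough, then gives $|\xi_{s,a}| \le \breve{n}_{s,a}/2$. Hence $\tfrac12\breve{n}_{s,a} \le N_{s,a} \le \tfrac32\breve{n}_{s,a}$, which is the claim with constants $1/2$ and $3/2$.

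The only real obstacle is verifying that the two top-level bins exhaust all calibration points, so that the first-layer count equals $\breve{n}_{s,a}$ exactly. This needs $|Z_{s,i,a,y}|<1$, together with a convention for the right endpoint $1$. Since $|Z| \le 2\widetilde\pi_a|\widetilde\eta_a-1/2|$, I would bound this on the event from \Cref{l_fdp_eta_est_sup} and \Cref{l_fdp_pi_a_est} where $\widetilde\pi_a\le 1/2+o(1)$ and $\widetilde\eta_a$ is close to $\eta_a\in[0,1]$. If that bound fails, any point outside the range can be handled by treating it as contributing to the boundary bin, which only perturbs the count by points already excluded. Failing this, I would state the lemma with $\breve n_{s,a}$ replaced by the in-range count. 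Everything else is routine.
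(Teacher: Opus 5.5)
Your argument is the same as the paper's: write $N_{s,a}=\breve n_{s,a}+\breve w_{s,a,1,1}+\breve w_{s,a,1,2}$, apply a Gaussian tail bound with a union bound over the $2S$ pairs $(s,a)$, and use the assumed lower bound on $\breve n_{s,a}^2\epsilon_s^2$ to make the noise at most $\breve n_{s,a}/2$. The paper settles coverage by simply asserting $\widetilde\pi_a,\widetilde\eta_a\in[0,1]$, which the algorithm does not enforce, so your caution is warranted; but your claim $\widetilde\pi_a\le 1/2+o(1)$ is false when $\pi_a>1/2$, so instead use $\widetilde\pi_a\le 1-C_\pi/2$ (since $\pi_a\le 1-C_\pi$) together with $\|\widetilde\eta_a-\eta_a\|_\infty=o(1)$ to get $|Z_{s,i,a,y}|\le(1-C_\pi/2)\{1+o(1)\}<1$, add that training-data event to your union bound, and drop the appeal to \Cref{l_range_tau}, which concerns $\tau^*$ rather than the $Z$'s.
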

\begin{proof}
    By construction in \Cref{alg_threshold_search_fdp}, since for any $s,i,a,y$, we have $\widetilde{\pi}_a \in [0,1], \;\; \widetilde{\eta} \in [0,1]$, hence $Z_{s,i,a,y} \in [-1,1]$, and we have that for any $a \in \{0,1\}$,
   \begin{align*}
       N_{s,a} = \breve{n}_{s,a} + \breve{w}_{s,a,1,1}+ \breve{w}_{s,a,1,2}.
   \end{align*}
    By standard Gaussian tail properties \citep[e.g.~Proposition 2.1.2 in][]{vershynin2018high} and a union bound argument, we have that  with probability at least $1-\eta$ that for all $s \in [S]$ and $a \in \{0,1\}$,
    \begin{align*}
        | N_{s,a}-\breve{n}_{s,a}| \leq C_1 \sqrt{\frac{M\log(1/\delta_s)\log(S/\eta)}{\epsilon_s^2}}.
    \end{align*}
    Hence, by using a similar argument as the one used in the proof of \Cref{l_n_a_n_same_order}, the lemma holds whenever $\min_{s\in[S]} \breve{n}_{s,a}^2 \epsilon_s^2 \geq C_2M \log(1/\delta_s)\log(S/\eta)$.
\end{proof}

\begin{lemma} \label{l_DD_shift_upper_bound}
    Under \Cref{a_posterior}\ref{a_posterior_margin}, we have for any small $\rho$ in the neighbourhood of $0$, 
    \[|\DD(\tau^*+\rho) - \DD(\tau^*)| \leq (2C_m)/(2C_\pi)|\rho|^\gamma.\]
\end{lemma}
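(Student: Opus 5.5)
We write $\DD(\tau)$ as a difference of two one-sided probabilities, as in \eqref{l_DD_est_err_eq2}:
\[
\DD(\tau)=\mathbb{P}_{X|A=1}\{\eta_1(X)\geq T_1(\tau)\}-\mathbb{P}_{X|A=0}\{\eta_0(X)\geq T_0(\tau)\},
\]
where $T_a(\tau)=1/2+\tau(2a-1)/(2\pi_a)$ is as in \Cref{a_posterior}\ref{a_posterior_margin}. We then bound the change of each term separately under the shift $\tau^*\mapsto\tau^*+\rho$. By the triangle inequality,
\[
|\DD(\tau^*+\rho)-\DD(\tau^*)|\le\sum_{a\in\{0,1\}}\big|\mathbb{P}_{X|A=a}\{\eta_a(X)\ge T_a(\tau^*+\rho)\}-\mathbb{P}_{X|A=a}\{\eta_a(X)\ge T_a(\tau^*)\}\big|.
\]

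For each $a$ the two thresholds differ by exactly $|T_a(\tau^*+\rho)-T_a(\tau^*)|=|\rho|/(2\pi_a)$. The difference of the two indicators $\indc\{\eta_a(x)\ge T_a(\tau^*+\rho)\}$ and $\indc\{\eta_a(x)\ge T_a(\tau^*)\}$ is therefore nonzero only on the band where $\eta_a(x)$ lies between the two thresholds. This band is contained in $\{|\eta_a(x)-T_a(\tau^*)|\le|\rho|/(2\pi_a)\}$, so each summand is at most
\[
\mathbb{P}_{X|A=a}\big\{|\eta_a(X)-T_a(\tau^*)|\le|\rho|/(2\pi_a)\big\}.
\]

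Next we apply the margin condition \Cref{a_posterior}\ref{a_posterior_margin} with $\xi=0$ (which is allowed since $|\xi|\le\varpi$) and $\kappa=|\rho|/(2\pi_a)$. This gives a bound of $C_m\{|\rho|/(2\pi_a)\}^\gamma$. Using \Cref{a_prob}\ref{a_prob_pi_a}, $\pi_a\ge C_\pi$, this is at most $C_m|\rho|^\gamma/(2C_\pi)^\gamma$. Summing over $a\in\{0,1\}$ yields
\[
|\DD(\tau^*+\rho)-\DD(\tau^*)|\le\frac{2C_m}{(2C_\pi)^\gamma}|\rho|^\gamma,
\]
which is the form actually invoked in the proof of \Cref{prop_fdp_tau_est}. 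The statement's constant $(2C_m)/(2C_\pi)$ coincides with this when $\gamma=1$; in general the constant is $2C_m/(2C_\pi)^\gamma$.

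The one point needing care is interpretation. The margin probability in \Cref{a_posterior}\ref{a_posterior_margin} must be read under the conditional law $\mathbb{P}_{X|A=a}$, matching how it is used in \Cref{l_DD_est_err}. With that reading, the only step that is not purely mechanical is the band-containment argument, and it handles both signs of $\rho$ at once.
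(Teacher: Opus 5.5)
Your proof is correct and follows the paper's argument. You bound each group's contribution by the band probability $\mathbb{P}_{X|A=a}\{|\eta_a(X)-T_a(\tau^*)|\le |\rho|/(2\pi_a)\}$, apply the margin condition at $\xi=0$, and finish with $\pi_a\ge C_\pi$. The only cosmetic difference is that you handle both signs of $\rho$ at once via the triangle inequality, whereas the paper splits into $\rho>0$ and $\rho<0$ and writes the difference exactly as a sum of two one-sided band probabilities.

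Your constant $2C_m/(2C_\pi)^\gamma$ is exactly what the paper's own proof derives, and it is the form used later in the threshold-existence proposition. So the $(2C_m)/(2C_\pi)$ in the statement is indeed a typo.
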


\begin{proof}
    By definition, we have that for any $\rho >0$, we have
    \begin{align*}
        0< \;&\DD(\tau^*) - \DD(\tau^*+\rho) \\
        = \;& \mathbb{P}\Big\{\frac{1}{2}+\frac{\tau^*}{2\pi_1} \leq \eta_1(X) < \frac{1}{2}+\frac{\tau^*+\rho}{2\pi_1}\Big\} +  \mathbb{P}\Big\{\frac{1}{2}-\frac{\tau^*+\rho}{2\pi_0} \leq \eta_0(X) < \frac{1}{2}-\frac{\tau^*}{2\pi_0}\Big\}\\
        = \;& \mathbb{P}\Big\{0 \leq \eta_1(X)-T_1^* < \frac{\rho}{2\pi_1}\Big\} + \mathbb{P}\Big\{-\frac{\rho}{2\pi_0}\leq \eta_0(X) - T_0^* < 0\Big\}\\
        \leq \;& \mathbb{P}\Big\{|\eta_1(X)-T_1^*| < \frac{\rho}{2\pi_1}\Big\} + \mathbb{P}\Big\{|\eta_0(X) - T_0^* |< \frac{\rho}{2\pi_0}\Big\} \leq \frac{2C_m}{(2C_\pi)^{\gamma}}\rho^\gamma,
    \end{align*}
    where the last inequality follows from \Cref{a_posterior}\ref{a_posterior_margin}. Similarly, when $\rho <0$, we have that 
    \begin{align*}
        0< \;& \DD(\tau^*+\rho) - \DD(\tau^*)\\
        =\;& \mathbb{P}\Big\{\frac{\rho}{2\pi_1} \leq \eta_1(X)-T_1^* < 0\Big\} + \mathbb{P}\Big\{0 \leq \eta_0(X) - T_0^* < -\frac{\rho}{2\pi_0}\Big\}\\
        \leq \;& \mathbb{P}\Big\{|\eta_1(X)-T_1^*| < -\frac{\rho}{2\pi_1}\Big\} + \mathbb{P}\Big\{|\eta_0(X) - T_0^*| < -\frac{\rho}{2\pi_0}\Big\} \leq  \frac{2C_m}{(2C_\pi)^{\gamma}}(-\rho)^\gamma.
    \end{align*}
    Therefore, we have that for any small $\rho$ in the neighbourhood of $0$, 
    \[|\DD(\tau^*+\rho) - \DD(\tau^*)| \leq C_2|\rho|^\gamma.\]
\end{proof}

\subsubsection{Upper bound on \texorpdfstring{$\|\widetilde{p}_{X|A=a} -p_{X|A=a} \|_\infty$}{}}

\begin{lemma} \label{l_fdp_p_x_a_est_sup}
    Under the same assumptions as in \Cref{t_fdp_risk}, it holds for any $\eta \in (0,1/2)$ and $a \in \{0,1\}$ that 
    \begin{align*}
        \mathbb{P}&\Bigg\{\|\widetilde{p}_{X|A=a} -p_{X|A=a} \|_\infty \\
        &\hspace{-1em}\geq C_1\Bigg[\sqrt{\sum_{s=1}^S \frac{\nu^2_s\log(h^{-d}/\eta)}{n_{s,a}h^d}} +  \max_{s \in [S]} \frac{\nu_s\log(h^{-d}/\eta)}{n_{s,a}h^{d}} + h^\beta+ \sqrt{\sum_{s=1}^S\frac{\nu^2_s\log(1/\delta_s)\log(h^{-1}/\eta)}{n^2_{s,a}\epsilon^2_s h^{2d}}}\Bigg] \Bigg\} \leq \eta.
    \end{align*}
\end{lemma}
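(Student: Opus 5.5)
The plan is to mirror the proof of \Cref{l_p_x_a_est_sup}, with the site-wise sums replaced by their $\nu_s$-weighted aggregates. Since $\sum_s\nu_s=1$ and $\widetilde{p}_{X|A=a}=\sum_s\nu_s\widetilde{p}^s_{X|A=a}$, the triangle inequality gives
\begin{align*}
\|\widetilde{p}_{X|A=a}-p_{X|A=a}\|_\infty \leq\;& \sup_{x\in[0,1]^d}\Big|\sum_{s=1}^S\frac{\nu_s}{n_{s,a}}\sum_{y\in\{0,1\}}\sum_{i=1}^{n_{s,a,y}}K_h(X^{s,i}_{a,y}-x)-\int_u K_h(u-x)p_{X|A=a}(u|a)\,\mathrm{d}u\Big|\\
&+\sup_{x\in[0,1]^d}\Big|\int_u K_h(u-x)p_{X|A=a}(u|a)\,\mathrm{d}u-p_{X|A=a}(x|a)\Big|\\
&+\sup_{x\in[0,1]^d}\Big|\sum_{s=1}^S\frac{8\nu_s\sqrt{2C_K\log(8/\delta_s)}}{n_{s,a}\epsilon_sh^d}W^s_{1,a}(x)\Big| =: (I)+(II)+(III).
\end{align*}
I will show that $(I)$ and $(III)$ each exceed their target rates with probability at most $\eta/2$, and then conclude by a union bound.

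\textbf{Term $(II)$.} This term is deterministic. By \Cref{a_prob}\ref{a_prob_p_x_a} and \Cref{a_kernel}\ref{a_kernel_adaptive}, it is at most $C_{\mathrm{adp}}h^\beta$.

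\textbf{Term $(III)$.} This is exactly the Gaussian-process supremum controlled by \Cref{l_fdp_sup_gaussian} with $k=1$. That lemma bounds it by $C\sqrt{\sum_s\nu_s^2\log(1/\delta_s)\log(h^{-1}/\eta)/(n_{s,a}^2\epsilon_s^2h^{2d})}$ with probability at least $1-\eta/2$.

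\textbf{Term $(I)$.} This is the main step. I repeat the argument of \Cref{l_fdp_est_p_y_xa_kenel_sup} with $Y^{s,i}_a$ replaced by $1$, conditioning throughout on the counts $\{n_{s,a}\}$.

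Fix $x$. The summands $\frac{\nu_s}{n_{s,a}}K_h(X^{s,i}_a-x)$ are independent within and across servers. Each is bounded by $\max_s C_K\nu_s/(n_{s,a}h^d)$, and their aggregate mean is $\sum_s\nu_s\int K_h(u-x)p_{X|A=a}(u|a)\,\mathrm{d}u=\int K_h(u-x)p_{X|A=a}(u|a)\,\mathrm{d}u$. Arguing as in \Cref{l_est_p_xa_kernel_sup}, using boundedness of $p_{X|A=a}$ and of $K$, the total variance is at most $C\sum_s\nu_s^2/(n_{s,a}h^d)$. Bernstein's inequality then gives the pointwise tail
\[
2\exp\Big\{-ct^2\Big(\sum_s\nu_s^2/(n_{s,a}h^d)+t\max_s\nu_s/(n_{s,a}h^d)\Big)^{-1}\Big\}.
\]

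To pass to the supremum, I take a $\nu$-net of $[0,1]^d$ with $\nu\asymp h^{d+1}t$. By \Cref{a_kernel}\ref{a_kernel_lipschitz} and $\sum_s\nu_s=1$, the oscillation of the centred process over each cell is at most $2C_{\mathrm{Lip}}h^{-(d+1)}\nu$. The net has cardinality of order $(h^{d+1}t)^{-d}$, so its logarithm is $O(\log(h^{-d}))$ up to constants once $t$ is polynomial in $h$.

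A union bound over the net, with $t$ solving the Bernstein equation at level $\log(h^{-d}/\eta)$, yields
\[
(I)\lesssim\sqrt{\sum_s\nu_s^2\log(h^{-d}/\eta)/(n_{s,a}h^d)}+\max_s\nu_s\log(h^{-d}/\eta)/(n_{s,a}h^d)
\]
with probability at least $1-\eta/2$.

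\textbf{Main obstacle.} The delicate point is the covering step. The union bound must produce only $\log(h^{-d}/\eta)$, rather than something like $\log(n/\eta)$. This needs the discretisation error $t/2$ to be absorbed and $\log(1/t)\lesssim\log(h^{-1})$, which holds in the regime where the stated rate is nontrivial. Otherwise the argument only rearranges the CDP lemmas.

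Combining the bounds on $(I)$, $(II)$ and $(III)$ gives the claimed inequality with probability at least $1-\eta$.
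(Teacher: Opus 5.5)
Your proposal is correct and follows the paper's proof essentially step for step. It uses the same three-term split, the bias bound from \Cref{a_kernel}\ref{a_kernel_adaptive}, and \Cref{l_fdp_sup_gaussian} with $k=1$ for the Gaussian-process term. For the stochastic term it uses the Bernstein-plus-covering argument, which the paper packages as \Cref{coro_fdp_est_p_x_a_kernel_sup} and defers to \Cref{l_est_p_xa_kernel_sup} and \Cref{l_fdp_est_p_y_xa_kenel_sup}.

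The covering subtlety you flag is genuine: the CDP analogue carries $\log\{(h^{-d}\vee n_a)/\eta\}$. Your justification that it "holds in the regime where the stated rate is nontrivial" is too vague, though. The clean fix is to run the net and the Bernstein bound at level $t\vee h^\beta$, so that the net's log-cardinality is $O(\log h^{-d})$. This costs nothing, because $h^\beta$ already appears in the target bound.
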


\begin{proof}
    The proof is similar to the argument used in the proof of \Cref{l_p_x_a_est_sup}. By the triangle inequality, we have that 
    \begin{align*}
        &\|\widetilde{p}_{X|A=a} -p_{X|A=a} \|_\infty \\
        \leq \;& \sup_{x\in[0,1]^d} \Big|\sum_{s=1}^S\frac{\nu_s}{n_{s,a}}\sum_{y \in \{0,1\}}\sum_{i=1}^{n_{s,a,y}}K_{h}(X_{a,y}^{s,i} -x) - p_{X|A=a}(x|a)\Big|\\
        & + \sup_{x\in[0,1]^d} \Big|\sum_{s=1}^S \frac{8 \nu_s\sqrt{2C_K\log(8/\delta_s)}}{n_{s,a}\epsilon_s h^d}W^s_{1,a}(x)\Big|\\
        \leq \;& \sup_{x\in[0,1]^d}\Big|\sum_{s=1}^S\frac{\nu_s}{n_{s,a}}\sum_{y \in \{0,1\}}\sum_{i=1}^{n_{s,a,y}}K_{h}(X_{a,y}^{s,i} -x) - \int_x K_h(u-1)p_{X|A=a}(u|a) \; \mathrm{d}u \Big|\\
        & + \sup_{x\in[0,1]^d}\Big|\int_x K_h(u-1)p_{X|A=a}(u|a) \; \mathrm{d}u - p_{X|A=a}(x|a)\Big|\\
        & + \sup_{x\in[0,1]^d} \Big|\sum_{s=1}^S \frac{8 \nu_s\sqrt{2C_K\log(8/\delta_s)}}{n_{s,a}\epsilon_s h^d}W^s_{1,a}(x)\Big|\\
        = \;& (I) + (II) + (III).
    \end{align*}

    To control $(I)$, by \Cref{coro_fdp_est_p_x_a_kernel_sup}, we have that 
    \begin{align*}
        \mathbb{P}\Big\{(I) \geq C_1\sqrt{\sum_{s=1}^S \frac{\nu^2_s\log(h^{-d}/\eta)}{n_{s,a}h^d}} + C_2 \max_{s \in [S]} \frac{\nu_s\log(h^{-d}/\eta)}{n_{s,a}h^{d}}\Big\} \leq \frac{\eta}{2}.
    \end{align*}

    To control $(II)$, by Assumptions \Cref{a_prob}\ref{a_prob_p_x_a} and \ref{a_kernel}\ref{a_kernel_adaptive}, we have that 
    \begin{align*}
        (II) \leq C_{\text{adp}}h^{\beta}.
    \end{align*}

    To control $(III)$, by \Cref{l_fdp_sup_gaussian}, we have that 
    \begin{align*}
        \mathbb{P}\Big\{(III) \geq C_3\sqrt{\sum_{s=1}^S\frac{\nu^2_s\log(1/\delta_s)\log(h^{-1}/\eta)}{n^2_{s,a}\epsilon^2_s h^{2d}}}\Big\} \leq \frac{\eta}{2}.
    \end{align*}

    The lemma thus follows from a union bound argument.

\end{proof}

\begin{corollary}  \label{coro_fdp_p_xa_const_order}
     Under the same assumption of \Cref{t_fdp_risk}, it holds for any $\eta \in (0,1/2)$ and $a \in \{0,1\}$ that 
    \begin{align*}
        \mathbb{P}\Big\{C_1 \leq \inf_{x \in [0,1]^d} \widetilde{p}_{X|A=a}(x) \leq \sup_{x \in [0,1]^d} \widetilde{p}_{X|A=a}(x) \leq C_2 \Big\} \geq 1- \eta.
    \end{align*}
\end{corollary}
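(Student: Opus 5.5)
The plan is to mirror the proof of \Cref{l_p_xa_const_order}, replacing the single-server sup-norm bound by its federated counterpart \Cref{l_fdp_p_x_a_est_sup}. Define the event
\begin{align*}
\mathcal{E}_1 = \Big\{\|\widetilde{p}_{X|A=a}-p_{X|A=a}\|_\infty \leq t\Big\},
\end{align*}
where $t$ denotes $C$ times the bracketed rate in \Cref{l_fdp_p_x_a_est_sup}. That rate is
\[
\sqrt{\textstyle\sum_s \nu_s^2\log(h^{-d}/\eta)/(n_{s,a}h^d)} + \max_s \nu_s\log(h^{-d}/\eta)/(n_{s,a}h^d) + h^\beta + \sqrt{\textstyle\sum_s \nu_s^2\log(1/\delta_s)\log(h^{-1}/\eta)/(n_{s,a}^2\epsilon_s^2h^{2d})}.
\]
By \Cref{l_fdp_p_x_a_est_sup}, $\mathbb{P}(\mathcal{E}_1)\geq 1-\eta$, and all remaining steps are carried out on $\mathcal{E}_1$.

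On $\mathcal{E}_1$, the triangle inequality gives, for every $x\in[0,1]^d$,
\begin{align*}
p_{X|A=a}(x|a)-t \leq \widetilde{p}_{X|A=a}(x) \leq p_{X|A=a}(x|a)+t .
\end{align*}
For the upper bound, \Cref{a_prob}\ref{a_prob_p_x_a} says $p_{X|A=a}\in\mathcal{H}^\beta(L)$, so it is continuous on the compact set $[0,1]^d$ and hence bounded by some constant. For the lower bound, the same assumption gives $\inf_x p_{X|A=a}\geq C_p$, so $\inf_x \widetilde{p}_{X|A=a}\geq C_p - t$.

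The only step needing care is ensuring $t\leq C_p/2$, which then gives $C_1=C_p/2$. I would obtain this from the standing sample-size and bandwidth conditions implicit in ``the same assumptions as in \Cref{t_fdp_risk}''. Concretely, I would use that $\rho\to 0$, so each term raised to the power $\gamma$ is small, and that $h^\beta$ is small. I would also use the lower bound $n_{s,a}\gtrsim n_s$ from \Cref{coro_fdp_n_a_n_same_order}, taking a union bound with that event and rescaling $\eta$. If these conditions are not strong enough as stated, I would add an explicit requirement that each term of the rate be below a small enough constant, as was done implicitly in the CDP case. Finally, a union bound over $a\in\{0,1\}$ is not required, because the statement is for fixed $a$.
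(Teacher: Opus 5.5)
Your proposal is correct and is essentially the paper's argument: the paper's proof simply invokes \Cref{l_fdp_p_x_a_est_sup} and repeats the proof of \Cref{l_p_xa_const_order}. That is, it works on the event $\|\widetilde{p}_{X|A=a}-p_{X|A=a}\|_\infty\le t$, uses boundedness of the H\"older density on $[0,1]^d$ for the upper bound, and uses $C_p-t\ge C_p/2$ for the lower bound. As you noted, the paper leaves the requirement $t\le C_p/2$ implicit, so stating it as an explicit sample-size and bandwidth condition is the right way to close that step.
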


\begin{proof}
    The proof follows from \Cref{l_fdp_p_x_a_est_sup} and a similar argument used in the proof of \Cref{l_p_xa_const_order}. We omit the proof here.
\end{proof}

\begin{corollary} \label{coro_fdp_est_p_x_a_kernel_sup}
     Under the same assumption of \Cref{t_fdp_risk}, it holds for any $\eta \in (0,1/2)$ and $a \in \{0,1\}$ that 
    \begin{align*}
        \mathbb{P}\Big\{\sup_{x\in[0,1]^d}\Big|\sum_{s=1}^S&\frac{\nu_s}{n_{s,a}}\sum_{y \in \{0,1\}}\sum_{i=1}^{n_{s,a,y}}K_{h}(X_{a,y}^{s,i} -x) - \int_x K_h(u-1)p_{X|A=a}(u|a) \; \mathrm{d}u \Big| \\
        & \hspace{3cm} \geq C_1\sqrt{\sum_{s=1}^S \frac{\nu^2_s\log(h^{-d}/\eta)}{n_{s,a}h^d}} + C_2 \max_{s \in [S]} \frac{\nu_s\log(h^{-d}/\eta)}{n_{s,a}h^{d}}\Big\} \leq \eta.
    \end{align*}
\end{corollary}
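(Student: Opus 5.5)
The plan is to reproduce the proof of \Cref{l_est_p_xa_kernel_sup} with the federated, weighted sum in place of the single-server average, in the same way that \Cref{l_fdp_est_p_y_xa_kenel_sup} adapts \Cref{l_est_p_yx_a_kernel_sup}. I will regard $\nu_s$, $n_{s,a}$ and the training features as fixed (or condition on the $A$-labels), so that the summands
\[
\xi_{s,i}(x)=\frac{\nu_s}{n_{s,a}}\Big\{K_h(X^{s,i}_{a}-x)-\int K_h(u-x)p_{X|A=a}(u|a)\,\mathrm{d}u\Big\}
\]
are independent across $s$ and $i$ and have mean zero. Since $\sum_s\nu_s=1$, the centring term is exactly $\int K_h(u-x)p_{X|A=a}(u|a)\,\mathrm{d}u$. (I read the integrand in the displayed statement, written as $K_h(u-1)$, as the obvious typo for $K_h(u-x)$.)

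\textbf{Step 1 (pointwise concentration).} For fixed $x$, \Cref{a_kernel}\ref{a_kernel_bounded} gives $|\xi_{s,i}(x)|\le 2C_K\nu_s/(n_{s,a}h^d)$. Arguing as in the variance bound of \Cref{l_est_p_xa_kernel_sup}, and using that $p_{X|A=a}$ is bounded because it is Hölder on a compact set, we get $\mathrm{Var}\{\sum_{s,i}\xi_{s,i}(x)\}\lesssim\sum_s\nu_s^2/(n_{s,a}h^d)$. Bernstein's inequality then yields
\[
\mathbb{P}\Big\{\Big|\sum_{s,i}\xi_{s,i}(x)\Big|\ge t\Big\}\le 2\exp\Big\{-c\,t^2\Big(\sum_s\frac{\nu_s^2}{n_{s,a}h^d}+t\max_s\frac{\nu_s}{n_{s,a}h^d}\Big)^{-1}\Big\}.
\]

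\textbf{Step 2 (uniformity via a net).} Take a $\nu$-net $\{z_j\}_{j\le Q}$ of $[0,1]^d$ with $Q\le (C\sqrt d/\nu)^d$. \Cref{a_kernel}\ref{a_kernel_lipschitz} gives the deterministic bound
\[
\sup_{x\in S_j}|\mathcal{T}_a(x)-\mathcal{T}_a(z_j)|\le 2C_{\mathrm{Lip}}h^{-(d+1)}\nu\sum_s\nu_s=2C_{\mathrm{Lip}}h^{-(d+1)}\nu .
\]
Choose $\nu=h^{d+1}t/(4C_{\mathrm{Lip}})$ so that this discretisation error is at most $t/2$. A union bound of Step 1 over the $Q$ net points, at threshold $t/2$, controls the maximum over the net. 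Setting the tail equal to $\eta$ and solving the Bernstein quadratic gives
\[
t\asymp\sqrt{\sum_s\frac{\nu_s^2\log(Q/\eta)}{n_{s,a}h^d}}+\max_s\frac{\nu_s\log(Q/\eta)}{n_{s,a}h^d}.
\]

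\textbf{Main obstacle.} The only delicate point is that $\log Q\asymp d\log\{\sqrt d/(h^{d+1}t)\}$ depends on $t$ itself, and this log has to be absorbed into $\log(h^{-d}/\eta)$. My first attempt is the crude lower bound $t\ge\eta$-free $\gtrsim h^{C}$ (or $n^{-C}$). The net is only needed when $t\le 1$; for larger $t$ the claim is trivial because $|\mathcal{T}_a|\lesssim h^{-d}$. With that lower bound, $\log(1/t)\lesssim\log(h^{-1})+\log n$. This introduces at most a constant factor, or a $\log n$ factor if needed, which is how \Cref{l_est_p_xa_kernel_sup} handles it via $\log\{(h^{-d}\vee n)/\eta\}$. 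Constants depending on $d$ and $\beta$ are absorbed, and this concludes the proof.
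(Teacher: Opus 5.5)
Your route is the paper's route. The paper proves this corollary only by pointing to \Cref{l_est_p_xa_kernel_sup} and \Cref{l_fdp_est_p_y_xa_kenel_sup}: a Bernstein bound for the weighted sum, then the Lipschitz net and a union bound. Your Steps~1--2 carry this out correctly, including the use of $\sum_s\nu_s=1$ in the centring and in the discretisation error, and the solution of the Bernstein quadratic.

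The step you flagged does not close as written, because the lower bound $t\gtrsim h^{C}$ is false.
\begin{itemize}
\item For fixed $h$ and $\eta$, the threshold tends to zero as $\min_s n_{s,a}\to\infty$. In general one only has $t\gtrsim(\sum_s n_{s,a})^{-1/2}$, via Cauchy--Schwarz: $\sum_s\nu_s^2/n_{s,a}\ge 1/\sum_s n_{s,a}$.
\item Hence $\log Q\asymp d\log\{1/(h^{d+1}t)\}$ contains a term of order $\log(\sum_s n_{s,a})$. This term cannot be bounded by $\log(h^{-d}/\eta)$ uniformly over $\eta\in(0,1/2)$.
\item So what your argument actually proves is the bound with $\log\{(h^{-d}\vee\sum_s n_{s,a})/\eta\}$ in place of $\log(h^{-d}/\eta)$.
\end{itemize}
That is exactly the factor in the single-server template \Cref{l_est_p_xa_kernel_sup}, so the paper's omitted proof has the same looseness. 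It is harmless for the $\widetilde{O}_{\mathrm{p}}$ results downstream.

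To get the displayed factor literally, you need one of two things. Either add a condition tying sample size to bandwidth, such as $\sum_s n_{s,a}\le h^{-C}$. Or replace the single net at scale $h^{d+1}t$ by a chaining bound. For example, combine Bousquet's inequality (already used in the paper) with a Gin\'e--Guillou-type entropy bound for the class $\{K((\cdot-x)/h):x\in[0,1]^d\}$. This class has $L_\infty$-covering numbers at most $(C/(h\varepsilon))^d$, and the bound then gives $\log(1/h)$ rather than $\log(1/t)$.

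A minor point: the case $t>1$ is not trivial unless $t\gtrsim h^{-d}$. It causes no difficulty, though, since there $\log(1/t)\le 0$ and the net costs nothing extra.
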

\begin{proof}
    The proof follows from similar arguments to the proofs of Lemmas  \ref{l_est_p_xa_kernel_sup} and \ref{l_fdp_est_p_y_xa_kenel_sup}. We omit it here.
\end{proof}

\subsubsection{Upper bound on \texorpdfstring{$|\widetilde{\pi}_a - \pi_a|$}{}}

\begin{lemma} \label{l_fdp_pi_a_est}
     Under the same assumptions as in \Cref{t_fdp_risk}, it holds for any $\eta \in (0,1/2)$ and $a \in \{0,1\}$ that 
    \begin{align*}
        \mathbb{P}\Bigg[|\widetilde{\pi}_a - \pi_a| \geq C_1\Bigg\{\sqrt{\sum_{s=1}^S\frac{\nu_s^2\log(1/\eta)}{n_s}}+\sqrt{\sum_{s=1}^S\frac{\nu_s^2\log(1/\delta_s)\log(1/\eta)}{n_s^2\epsilon_s^2}}\Bigg\}\Bigg] \leq \eta.
    \end{align*}
\end{lemma}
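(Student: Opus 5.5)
The plan is to mirror the proof of \Cref{l_pi_a_est}, now with the weighted aggregation across servers. Write
\[
\widetilde{\pi}_a - \pi_a = \sum_{s=1}^S \nu_s\Big(\frac{1}{n_s}\sum_{i=1}^{n_s}\indc\{A^s_i=a\} - \pi_a\Big) + \sum_{s=1}^S \nu_s w^s_a = (I)+(II).
\]
This decomposition uses $\sum_s \nu_s = 1$, which holds for the weights in \eqref{t_fdp_risk_eq2}; more generally we assume the weights are normalised.

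\textbf{Term $(I)$.} It is a sum of independent, mean-zero, bounded variables $\nu_s(\indc\{A^s_i=a\}-\pi_a)/n_s$, independent both within and across servers. Each summand is bounded in absolute value by $\nu_s/n_s$. Hoeffding's inequality for general bounded random variables (Theorem 2.2.6 in \citet{vershynin2018high}) then gives
\[
\mathbb{P}(|(I)|\geq t_1) \leq 2\exp\Big(-C t_1^2\Big/\sum_{s}\nu_s^2/n_s\Big).
\]
Choosing $t_1 = C\sqrt{\sum_s \nu_s^2\log(1/\eta)/n_s}$ makes this probability at most $\eta/2$.

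\textbf{Term $(II)$.} The noise variables $w^s_a$ are independent across $s$ with $w^s_a\sim N(0,\sigma_s^2)$ and $\sigma_s = 4\sqrt{2\log(5/\delta_s)}/(n_s\epsilon_s)$. Hence
\[
(II)\sim N\Big(0,\sum_s \nu_s^2\sigma_s^2\Big), \qquad \sum_s \nu_s^2\sigma_s^2 \lesssim \sum_s \nu_s^2\log(1/\delta_s)/(n_s^2\epsilon_s^2),
\]
where we absorb $\log(5/\delta_s)\lesssim\log(1/\delta_s)$ for $\delta_s$ bounded away from $1$. The Gaussian tail bound (Proposition 2.1.2 in \citet{vershynin2018high}) gives $|(II)|\leq C\sqrt{\sum_s\nu_s^2\log(1/\delta_s)\log(1/\eta)/(n_s^2\epsilon_s^2)}$ with probability at least $1-\eta/2$.

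A union bound over the two events completes the proof. The only step needing care is keeping the weighted sum in aggregate form rather than applying per-server bounds with a union over $s$; the latter would produce an extra $\log S$ factor and a worse $\max_s$-type rate. Since both terms are linear combinations of independent variables, aggregate Hoeffding and Gaussian bounds apply directly, so no real obstacle arises.
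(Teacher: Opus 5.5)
Your proposal is correct and follows the paper's proof essentially step for step. Both split $\widetilde{\pi}_a-\pi_a$ into a weighted Bernoulli average plus the aggregated noise $\sum_s\nu_s w^s_a$, apply Hoeffding with variance proxy $\sum_s\nu_s^2/n_s$ to the first term and a Gaussian tail bound with variance $\sum_s\nu_s^2\sigma_s^2$ to the second, and finish with a union bound. Your remarks that centring at $\pi_a$ requires $\sum_s\nu_s=1$ and that $\log(5/\delta_s)\lesssim\log(1/\delta_s)$ make explicit two points the paper uses without stating.
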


\begin{proof}
    Consider the sequence of bounded random variables $\indc\{A^s_{i} =a\}_{i=1}^{N_a}$,  and we have that 
    \begin{align*}
        \widetilde{\pi}_a = \sum_{s=1}^S \nu_s \Big(\frac{n_{s,a,0}+n_{s,a,1}}{n_s}+ w^s_a\Big) = \sum_{s=1}^S\sum_{i=1}^{n_s}\frac{\nu_s\indc\{A^s_{i} =a\}}{n_s} + \sum_{s=1}^S \nu_s w^s_a.
    \end{align*}
    By general Hoeffding’s inequality \citep[e.g.~Theorem 2.6.2 in][]{vershynin2018high}, we have that for any $t_1 >0$,
    \begin{align*}
        \mathbb{P}\Big(\Big|\sum_{s=1}^S\sum_{i=1}^{n_s}\frac{\nu_s\indc\{A^s_{i} =a\}}{n_s} - \pi_a \Big| \geq t_1\Big) \leq C_1\exp\Bigg(-\frac{t_1^2}{\sum_{s=1}^S\frac{\nu_s^2}{n_s}}\Bigg).
    \end{align*}
    Moreover, by standard properties of Gaussian random variables, we have that 
    \begin{align*}
        \sum_{s=1}^S \nu_s w^s_a \sim N\Big(0, \sum_{s=1}^S \nu_s^2\sigma_s^2 \Big),
    \end{align*}
    where $\sigma= 4\sqrt{2\log(5/\delta_s)}/(n_s\epsilon_s)$. Thus standard Gaussian tail properties \citep[e.g.~Proposition 2.1.2 in][]{vershynin2018high} gives for any $t_2 >0$,
   \begin{align*}
       \mathbb{P}\Big(\Big|\sum_{s=1}^S \nu_s w^s_a\Big| \geq t_2\Big)\leq C_2 \exp\Bigg(-\frac{t_2^2}{\sum_{s=1}^S\frac{\nu_s^2\log(1/\delta_s)}{n_s^2\epsilon_s^2} }\Bigg).
   \end{align*}
    Taking 
   \[t_1 = C_3\sqrt{\sum_{s=1}^S\frac{\nu_s^2\log(1/\eta)}{n_s}}, \;\; \text{and}\;\; t_2 = C_4\sqrt{\sum_{s=1}^S\frac{\nu_s^2\log(1/\delta_s)\log(1/\eta)}{n_s^2\epsilon_s^2}},\]
   the lemma thus follows by a union bound argument.
\end{proof}

\begin{corollary} \label{coro_fdp_n_a_n_same_order}
    For any $\min\{n_s, \breve{n}_s\} \geq 4C_1^2\log(S/\eta)/C_\pi^2$, it holds that 
    \[\mathbb{P}\Big\{C_2 n_s \leq n_{s,a} \leq n_{s}, \;\; \text{and} \;\; C_3 \breve{n}_s \leq \breve{n}_{s,a} \leq \breve{n}_s, \; a\in \{0,1\}, s \in [S] \Big\} \geq 1-\eta.\]
\end{corollary}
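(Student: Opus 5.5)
The plan is to mirror the proof of \Cref{l_n_a_n_same_order}, replacing the single-server union bound over $a\in\{0,1\}$ by a union bound over all pairs $(s,a)\in[S]\times\{0,1\}$ and over both the training and calibration splits. For a fixed server $s$ and group $a$, write $n_{s,a}=\sum_{i=1}^{n_s}\indc\{A^s_i=a\}$, a sum of $n_s$ i.i.d.\ Bernoulli$(\pi_a)$ variables with mean $\pi_a n_s$. The same representation holds for $\breve{n}_{s,a}$ on the calibration data, of size $\breve{n}_s$. The upper bounds $n_{s,a}\le n_s$ and $\breve{n}_{s,a}\le\breve{n}_s$ are trivial, so only the lower bounds need work.

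First, Hoeffding's inequality for bounded random variables (Theorem 2.2.6 in \citealp{vershynin2018high}) gives
\[
\mathbb{P}\big(|n_{s,a}-\pi_a n_s|\ge t\big)\le C\exp(-t^2/n_s).
\]
I would take $t=C_1\sqrt{n_s\log(S/\eta)}$ with $C_1$ large enough that this probability is at most $\eta/(4S)$. On the complementary event,
\[
n_{s,a}\ge \pi_a n_s-C_1\sqrt{n_s\log(S/\eta)}\ge C_\pi n_s-\tfrac{C_\pi}{2}n_s=\tfrac{C_\pi}{2}n_s,
\]
where the first inequality uses \Cref{a_prob}\ref{a_prob_pi_a}. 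The second inequality holds exactly when $C_1\sqrt{n_s\log(S/\eta)}\le C_\pi n_s/2$, i.e.\ $n_s\ge 4C_1^2\log(S/\eta)/C_\pi^2$, which is the stated sample-size condition.

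The identical argument applies to $\breve{n}_{s,a}$ with $\breve{n}_s$ in place of $n_s$. A union bound over the $4S$ events (indexed by $s\in[S]$, $a\in\{0,1\}$, and the two splits) then yields total failure probability at most $\eta$, with $C_2=C_3=C_\pi/2$.

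There is no genuine obstacle here. The only point needing care is that the deviation level $t$ must carry a $\log(S/\eta)$ factor rather than $\log(1/\eta)$, so that the union bound over servers closes; this is exactly why the hypothesis is stated with $\log(S/\eta)$. The resulting constants are absolute and do not depend on $S$.
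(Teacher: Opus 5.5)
Your proof is correct and is essentially the paper's (omitted) argument: Hoeffding bounds for each server and each split, exactly as in the proof of \Cref{l_n_a_n_same_order}, with the deviation level inflated to $\sqrt{n_s\log(S/\eta)}$ so that a union bound over the $4S$ events closes. The paper additionally cites \Cref{l_fdp_pi_a_est}, but that lemma concerns the aggregated private estimator $\widetilde{\pi}_a$ and is not needed for the site-wise counts, so your self-contained route loses nothing.
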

\begin{proof}
    The proof follows from \Cref{l_fdp_pi_a_est} and a similar argument used in the proof of \Cref{l_n_a_n_same_order}. We omit it here.
\end{proof}

\section{Additional background} \label{sec_appendix_additional_background}
We collect some background results that are used throughout the paper.
\subsection{Additional background related to DP}
\begin{lemma}[Gaussian mechanism for univariate output, Theorem 3.22 in \citealp{dwork2014algorithmic}] \label{l_GM_univariate}
    Let $f$ be a function $f: \mathcal{D} \rightarrow \mathbb{R}$ such that $\Delta(f) = \sup_{D\sim D'} |f(D)-f(D')|$ is finite. Then for any $\epsilon, \delta >0$, the mechanism $M(D) = f(D) + \sqrt{2\log(1.25/\delta)}\Delta(f)Z/\epsilon $ is $(\epsilon,\delta)$-CDP, where $Z \sim N(0,1)$.
\end{lemma}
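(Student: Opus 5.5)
This is the classical Gaussian mechanism (Theorem 3.22 in Dwork and Roth), which the paper cites rather than proves. I would follow the standard privacy-loss argument. Fix neighbouring $D\sim D'$ and write $\mu=f(D)$, $\mu'=f(D')$, $\Delta=|\mu-\mu'|\le\Delta(f)$ and $\sigma=\sqrt{2\log(1.25/\delta)}\,\Delta(f)/\epsilon$. The output laws are $N(\mu,\sigma^2)$ and $N(\mu',\sigma^2)$.

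\textbf{Step 1: privacy loss.} For an output $x$, the log likelihood ratio is
\[
L(x)=\log\frac{\phi_\sigma(x-\mu)}{\phi_\sigma(x-\mu')}=\frac{(x-\mu')^2-(x-\mu)^2}{2\sigma^2}.
\]
Set $x=\mu+\sigma Z$. Then $L=\bigl(2\sigma Z(\mu-\mu')+\Delta^2\bigr)/(2\sigma^2)$. Under $D$ this is Gaussian with mean $\Delta^2/(2\sigma^2)$ and standard deviation $\Delta/\sigma$.

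\textbf{Step 2: reduction to a tail bound.} The standard decomposition gives the privacy guarantee from a tail bound on $L$. Let $B=\{x:L(x)>\epsilon\}$. For any measurable $A$,
\[
\mathbb{P}_D(A)\le \mathbb{P}_D(A\cap B^c)+\mathbb{P}_D(B)\le e^{\epsilon}\,\mathbb{P}_{D'}(A)+\mathbb{P}_D(B).
\]
So it suffices to show $\mathbb{P}_D(L>\epsilon)\le\delta$. It is also enough to treat the worst case $\Delta=\Delta(f)$. For $\Delta=0$ the claim is trivial. For $0<\Delta<\Delta(f)$ the noise is larger relative to $\Delta$, so the tail $\mathbb{P}(Z>\epsilon\sigma/\Delta-\Delta/(2\sigma))$ is monotone in $\Delta$ and the worst case dominates.

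\textbf{Step 3: the Gaussian tail computation (main obstacle).} Here $\mathbb{P}_D(L>\epsilon)=\mathbb{P}(Z>t)$ with $t=\epsilon\sigma/\Delta-\Delta/(2\sigma)$. Write $c=\sqrt{2\log(1.25/\delta)}$, so that $\sigma/\Delta=c/\epsilon$ and $t=c-\epsilon/(2c)$. Apply the Mills bound $\mathbb{P}(Z>t)\le e^{-t^2/2}/(t\sqrt{2\pi})$. The target is
\[
\frac{1}{t\sqrt{2\pi}}\,e^{-t^2/2}\le\delta .
\]
Expanding gives $t^2/2=\log(1.25/\delta)-\epsilon/2+\epsilon^2/(8c^2)$. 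The claim therefore reduces to
\[
\frac{1}{t\sqrt{2\pi}}\,\frac{\delta}{1.25}\,e^{\epsilon/2-\epsilon^2/(8c^2)}\le\delta .
\]
This holds once $t\sqrt{2\pi}\ge e^{\epsilon/2}/1.25$.

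This is exactly the delicate bookkeeping in Dwork--Roth. It requires the standing restriction $\epsilon\le1$, which is implicit in their Theorem 3.22 and should be stated. With $\epsilon\le1$ and $\delta$ small enough that $c\ge 3/2$, we get $t\ge c-1/3$. Then $t\sqrt{2\pi}\ge e^{1/2}/1.25$, and the inequality follows.

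I would verify this constant check carefully; it is where the number $1.25$ enters. For larger $\epsilon$, I would remark that the lemma as stated needs either that restriction or the analytic Gaussian mechanism.
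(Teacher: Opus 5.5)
Your argument is correct and is the standard privacy-loss proof behind Dwork--Roth's Theorem~3.22, which the paper only cites without proof. Your one-sided treatment of $\{L>\epsilon\}$ is slightly sharper than their two-sided $\delta/2$ split, and your extra condition $c\ge 3/2$, i.e.\ $\delta\le 1.25e^{-9/8}\approx 0.41$, is harmless for the small $\delta$ used in the paper. Your caveat is also right, except that Dwork--Roth state $\epsilon\in(0,1)$ explicitly rather than implicitly, and the restriction cannot be dropped. Writing $\Phi$ for the standard normal distribution function, the exact privacy profile of this calibration is $\Phi\{\epsilon/(2c)-c\}-e^{\epsilon}\Phi\{-\epsilon/(2c)-c\}$, and it tends to $1$ as $\epsilon\to\infty$, so the paper's ``for any $\epsilon,\delta>0$'' overstates the cited result.
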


\begin{lemma}[Gaussian mechanism for functional output, Corollary 9 in \citealp{hall2013differential}] \label{l_GM_func}
    Let $G$ be a Gaussian process with mean zero and covariance function $K$. We further denote $\mathcal{K}$ the RKHS space associated with kernel $K$ with RKHS norm $\|\cdot\|_{\mathcal{K}}$. Then for any function $f: \mathcal{D} \rightarrow \mathcal{K}$, the point-wise release of 
    \begin{align*}
       f_D(\cdot) + \frac{\sqrt{2\log(2/\delta)}\Delta_{\mathcal{K}}(f)}{\epsilon}G(\cdot)
    \end{align*}
    is $(\epsilon,\delta)$-CDP, where $\Delta_{\mathcal{K}}(f) = \sup_{D \sim D'} \|f_D - f_{D'}\|_{\mathcal{K}}$ and $f_D(\cdot)$ is a shorthand notion for $f(D)(\cdot)$.
\end{lemma}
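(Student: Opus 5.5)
The statement to prove is \Cref{l_GM_func}, the Gaussian mechanism for functional output. I would reduce it to the finite-dimensional Gaussian mechanism, in the spirit of \citet{hall2013differential}. Fix neighbouring $D\sim D'$ and any finite set of evaluation points $t_1,\ldots,t_m$. The released vector $(f_D(t_j)+cG(t_j))_{j\le m}$, with $c=\sqrt{2\log(2/\delta)}\Delta_{\mathcal{K}}(f)/\epsilon$, is Gaussian with mean $\mu=(f_D(t_j))_j$ and covariance $c^2\Sigma$, where $\Sigma_{jk}=K(t_j,t_k)$. Under $D'$ the mean is $\mu'=(f_{D'}(t_j))_j$ and the covariance is the same. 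The standard multivariate Gaussian mechanism with a general covariance shows that this release is $(\epsilon,\delta)$-DP whenever the Mahalanobis distance satisfies
\[
\|\Sigma^{-1/2}(\mu-\mu')\|_2 \le \Delta,
\qquad c\ge \sqrt{2\log(2/\delta)}\,\Delta/\epsilon .
\]
The proof is the usual privacy-loss argument: the log-likelihood ratio is Gaussian with mean $\Delta^2/(2c^2)$ and variance $\Delta^2/c^2$, and a tail bound finishes it.

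\textbf{Key step: finite-dimensional Mahalanobis distance is at most the RKHS norm.} Let $g=f_D-f_{D'}\in\mathcal{K}$ and $v=(g(t_j))_j$. The quantity $v^\top\Sigma^{-1}v$ equals the squared RKHS norm of the minimum-norm interpolant of $g$ at $\{t_j\}$. That interpolant is the orthogonal projection of $g$ onto $\mathrm{span}\{K(t_j,\cdot)\}$, by the reproducing property $g(t_j)=\langle g,K(t_j,\cdot)\rangle_{\mathcal{K}}$. Hence
\[
v^\top\Sigma^{-1}v\le \|g\|_{\mathcal{K}}^2\le \Delta_{\mathcal{K}}(f)^2,
\]
uniformly in $m$ and in the points. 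If $\Sigma$ is singular, I would replace $\Sigma^{-1}$ by the pseudo-inverse; $v$ lies in the range of $\Sigma$ because $g$ is in the RKHS, so the argument goes through unchanged.

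\textbf{Extension to the whole sample path.} The argument above gives $(\epsilon,\delta)$-DP for every finite-dimensional marginal with the same constants. To pass to the law on the path space with its cylinder $\sigma$-algebra, I would first note that the inequality $P_D(A)\le e^\epsilon P_{D'}(A)+\delta$ holds on the algebra of cylinder sets. Both sides are finite measures in $A$, so a monotone class argument extends the inequality to the generated $\sigma$-algebra. This extension is the main obstacle, because the inequality is not linear in the measure; the natural fix is to express it as $\sup_A\{P_D(A)-e^\epsilon P_{D'}(A)\}\le\delta$ and to approximate sets in the $\sigma$-algebra by cylinder sets in $P_D+P_{D'}$ measure. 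Measurability of the sample path and separability of $G$ on $[0,1]^d$ come from continuity of $K$, which \Cref{a_kernel} supplies.
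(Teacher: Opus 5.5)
Your route is the argument of \citet{hall2013differential}, which the paper cites instead of proving: a finite-dimensional Gaussian mechanism calibrated to Mahalanobis sensitivity, the projection identity $v^\top\Sigma^{+}v=\|Pg\|_{\mathcal{K}}^2\le\|g\|_{\mathcal{K}}^2$, and an extension from cylinder sets. Those steps are sound. For singular $\Sigma$, the fact that $v\in\mathrm{range}(\Sigma)$ also ensures that $P_D$ and $P_{D'}$ live on the same affine subspace. The extension is not really an obstacle. The class $\{A:\,P_D(A)\le e^{\epsilon}P_{D'}(A)+\delta\}$ is closed under increasing and decreasing limits by continuity of finite measures, and cylinder sets form an algebra. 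So the monotone class theorem applies directly; complements are only needed for a $\pi$--$\lambda$ argument. Continuity of $K$ is also not needed for the cylinder $\sigma$-field.

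The genuine gap is the sentence ``a tail bound finishes it''. Set $c_0=\sqrt{2\log(2/\delta)}$. The mean shift has Mahalanobis length $m\le\epsilon/c_0$ relative to the noise covariance, and the privacy loss under $D$ is $N(m^2/2,m^2)$. Since $\bar\Phi(\epsilon/m-m/2)$ increases in $m$, you need
\[
\bar\Phi\Big(c_0-\frac{\epsilon}{2c_0}\Big)\le\delta .
\]
The bound $\bar\Phi(t)\le\frac12e^{-t^2/2}$ for $t\ge0$ makes the left side at most $\frac{\delta}{4}e^{\epsilon/2}$. So the argument closes for $\epsilon\le4\log2$, in particular for $\epsilon\le1$, but not for all $\epsilon>0$ as the statement claims.

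No sharper tail bound can repair this, because the statement is false for large $\epsilon$. Take $f_D=K(t_0,\cdot)$ and $f_{D'}=0$, so that $\Delta_{\mathcal{K}}(f)=\sqrt{K(t_0,t_0)}$. The release at the single point $t_0$ then has standardized shift exactly $\epsilon/c_0$. For the event $A=\{L>\epsilon\}$,
\[
P_D(A)-e^{\epsilon}P_{D'}(A)=\bar\Phi\Big(c_0-\frac{\epsilon}{2c_0}\Big)-e^{\epsilon}\,\bar\Phi\Big(c_0+\frac{\epsilon}{2c_0}\Big).
\]
At $\epsilon=8\log(2/\delta)=4c_0^2$ this is at least $\Phi(c_0)-\frac{\delta}{4}\ge\frac12-\frac{\delta}{4}$, which exceeds $\delta$ whenever $\delta<2/5$. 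It also tends to $1$ as $\epsilon\to\infty$ with $\delta$ fixed.

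You therefore need an explicit hypothesis such as $\epsilon\le1$. This restriction appears in Theorem 3.22 of \citet{dwork2014algorithmic}, and is silently dropped both here and in \Cref{l_GM_univariate}. Under that hypothesis you should carry out the final inequality explicitly, as above, rather than leave it to an unspecified tail bound.
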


\begin{lemma}[Composition of DP, e.g.~Theorem 3.16 in \citet{dwork2014algorithmic}] \label{l_dp_composition}
    For $\epsilon_1, \epsilon_2 >0$ and $\delta_1, \delta_2 \geq 0$. Suppose that algorithms $M_1$ and $M_2$ are $(\epsilon_1, \delta_1)$-DP and $(\epsilon_2, \delta_2)$-DP respectively. It then holds that the composition $M_1\circ M_2$ is $(\epsilon_1+ \epsilon_2, \delta_1 + \delta_2)$-DP.
\end{lemma}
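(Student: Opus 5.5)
The plan is to prove the statement directly from \Cref{def_cdp}. I will treat the composition as releasing the pair $Z=(Z_1,Z_2)$. Here $Z_1\sim Q_1(\cdot\mid D)$ is the output of $M_1$, and $Z_2\sim Q_2(\cdot\mid D, z_1)$ is the output of $M_2$, run with fresh randomness. I allow $Q_2$ to depend on the realised $z_1$: this covers the adaptive case, and the non-adaptive case is the special case where it does not. Fix neighbouring datasets $D\sim D'$ and a measurable set $A$ in the product $\sigma$-algebra. For each $z_1$ write $A_{z_1}=\{z_2:(z_1,z_2)\in A\}$ for the section of $A$. By Fubini,
\begin{align*}
Q(Z\in A\mid D)=\int Q_2(A_{z_1}\mid D,z_1)\,\mathrm{d}Q_1(z_1\mid D).
\end{align*}

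\textbf{Step 1 (inner mechanism).} Because $M_2$ is $(\epsilon_2,\delta_2)$-DP for every fixed $z_1$, and a probability is at most one, I will use
\begin{align*}
Q_2(A_{z_1}\mid D,z_1)\le \min\{1,\, e^{\epsilon_2}Q_2(A_{z_1}\mid D',z_1)\}+\delta_2 .
\end{align*}
Define $g(z_1)=\min\{1, e^{\epsilon_2}Q_2(A_{z_1}\mid D',z_1)\}$. This is a measurable function taking values in $[0,1]$. Integrating the display against $Q_1(\cdot\mid D)$ gives
\begin{align*}
Q(Z\in A\mid D)\le \int g\,\mathrm{d}Q_1(\cdot\mid D)+\delta_2 .
\end{align*}

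\textbf{Step 2 (outer mechanism).} I will extend the DP inequality for $M_1$ from indicator functions to arbitrary $[0,1]$-valued functions, using the layer-cake representation $\int g\,\mathrm{d}Q_1=\int_0^1 Q_1(g>t)\,\mathrm{d}t$. Applying the $(\epsilon_1,\delta_1)$ guarantee to each level set $\{g>t\}$ and integrating over $t\in[0,1]$ gives
\begin{align*}
\int g\,\mathrm{d}Q_1(\cdot\mid D)\le e^{\epsilon_1}\int g\,\mathrm{d}Q_1(\cdot\mid D')+\delta_1 .
\end{align*}
Finally I bound $g\le e^{\epsilon_2}Q_2(A_{z_1}\mid D',z_1)$ and apply Fubini again on $D'$. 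This yields
\begin{align*}
Q(Z\in A\mid D)\le e^{\epsilon_1+\epsilon_2}\,Q(Z\in A\mid D')+\delta_1+\delta_2,
\end{align*}
which is the claimed $(\epsilon_1+\epsilon_2,\delta_1+\delta_2)$ guarantee. Any post-processed composition $M_1\circ M_2$ then inherits the same guarantee, since DP is preserved under post-processing.

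The main obstacle is that the additive slack $\delta$ prevents simply multiplying likelihood ratios, as one would in the pure $\epsilon$-DP case. Two devices handle this. The first is truncating at $1$ through the $\min$ in Step 1, which keeps $g$ bounded by one. The second is the layer-cake extension in Step 2, which is valid only because $g\le 1$; this is what ensures that $\delta_1$ enters only once rather than being multiplied by $e^{\epsilon_2}$. Measurability of $z_1\mapsto Q_2(A_{z_1}\mid D',z_1)$ is routine, following from $Q_2$ being a Markov kernel, and I will only note it.
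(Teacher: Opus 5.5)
The paper gives no proof of this lemma. It imports Theorem~3.16 of \citet{dwork2014algorithmic} and only invokes it, for instance to combine the releases in \textbf{S1} of \Cref{alg_fair_cdp}. Your argument is correct and self-contained, so the comparison is with a bare citation rather than with an alternative proof.

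The two steps that carry the proof both check out:
\begin{itemize}
\item The truncated bound $Q_2(A_{z_1}\mid D,z_1)\le \min\{1,e^{\epsilon_2}Q_2(A_{z_1}\mid D',z_1)\}+\delta_2$ holds because the left side is at most $1\le 1+\delta_2$.
\item The layer-cake step is legitimate exactly because $0\le g\le 1$. The $\delta_1$ slack is therefore integrated over an interval of length one. Without the truncation you would pick up $e^{\epsilon_2}\delta_1$ in place of $\delta_1$.
\end{itemize}

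Compared with the citation, your route buys three things.
\begin{itemize}
\item It is valid for arbitrary measurable output spaces. This matters here, because the paper releases continuous scalars and Gaussian-process-perturbed functions, not discrete outputs.
\item It covers adaptive composition at no extra cost.
\item It makes explicit that $M_2$ must use randomness independent of $M_1$'s. The lemma statement leaves this implicit; the paper's algorithms satisfy it by generating all Gaussian noise independently.
\end{itemize}

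You also correctly read $M_1\circ M_2$ as the joint release $(M_1(D),M_2(D))$, which is how the paper uses it. The multi-fold version the paper applies follows from your two-fold statement by induction.
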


\subsection{Additional background related to Fairness}
\begin{definition}[Definitions 3.2 and 3.3 in \citealp{zeng2024bayes}] \label{def_bilinear_disparity}
We call a disparity measure $\D : \mathcal{F} \to [0,1]$ linear if for all $\mathbb{P}$, there is a weighting function 
    $w_{\D,\mathbb{P}} : [0,1]^d \times \{0,1\} \to \mathbb{R}$ such that for all $f \in \mathcal{F}$,
\[
\D(f) = \sum_{a\in\{0,1\}} \int_{[0,1]^d} f(x,a)\, w_{\D}(x,a) \, \mathrm{d}\mathbb{P}_{X,A}(x,a).
\]
We further call the disparity measure D bilinear if for all $\mathbb{P}$, there is $s_{\D,a}$ and $b_{\D,a}$ depending on $a \in \{0,1\}$ such that for all $x \in [0,1]^d$, $w_{\D}(x,a) = s_{\D,a} \cdot \eta_a(x) + b_{\D,a}$.
\end{definition}

\begin{proposition}[Proposition 3.4 in \citealp{zeng2024bayes}]
The disparity measure DD in \Cref{def_disparity_measure} is bilinear with weighting functions defined for all $x,a$ by $w_{\DD}(x,a) = \frac{(2a-1)}{\pi_a}$. Thus we have $s_{\DD,a} =0$ and $b_{\DD,a} = (2a-1)/\pi_a$.
\end{proposition}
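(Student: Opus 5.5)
The plan is to compute $\DD(f)$ directly from \Cref{def_random_classifier} and \Cref{def_disparity_measure}, and then rewrite each group-conditional integral as an integral against the joint law $\mathbb{P}_{X,A}$. Doing so exhibits the weighting function required in \Cref{def_bilinear_disparity}.

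First, I fix $f\in\mathcal{F}$ and $a\in\{0,1\}$. Since $\widehat{Y}_f\mid\{X=x,A=a\}\sim\mathrm{Bernoulli}(f(x,a))$, the tower property gives
\[
\mathbb{P}\{\widehat{Y}_f(X,a)=1\mid A=a\}=\mathbb{E}\big[\mathbb{P}(\widehat{Y}_f=1\mid X,A=a)\mid A=a\big]=\int_{[0,1]^d} f(x,a)\,\mathrm{d}\mathbb{P}_{X\mid A=a}(x).
\]
Hence
\[
\DD(f)=\int f(x,1)\,\mathrm{d}\mathbb{P}_{X\mid A=1}(x)-\int f(x,0)\,\mathrm{d}\mathbb{P}_{X\mid A=0}(x).
\]

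Second, I use the disintegration $\mathrm{d}\mathbb{P}_{X,A}(x,a)=\pi_a\,\mathrm{d}\mathbb{P}_{X\mid A=a}(x)$ on $[0,1]^d\times\{0,1\}$. This step requires $\pi_a>0$, which is guaranteed by \Cref{a_prob}\ref{a_prob_pi_a}, or more generally whenever both groups have positive mass, which is needed anyway for $\DD$ to be defined. It gives $\int f(x,a)\,\mathrm{d}\mathbb{P}_{X\mid A=a}(x)=\int f(x,a)\,\pi_a^{-1}\,\mathrm{d}\mathbb{P}_{X,A}(x,a)$, where the right-hand integral is over the slice $\{A=a\}$. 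Since $2a-1=1$ for $a=1$ and $2a-1=-1$ for $a=0$, summing over $a$ yields
\[
\DD(f)=\sum_{a\in\{0,1\}}\int_{[0,1]^d} f(x,a)\,\frac{2a-1}{\pi_a}\,\mathrm{d}\mathbb{P}_{X,A}(x,a).
\]
This is exactly the linear form of \Cref{def_bilinear_disparity} with $w_{\DD}(x,a)=(2a-1)/\pi_a$.

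Finally, $w_{\DD}(x,a)$ does not depend on $x$, so it trivially has the form $s_{\DD,a}\eta_a(x)+b_{\DD,a}$ with $s_{\DD,a}=0$ and $b_{\DD,a}=(2a-1)/\pi_a$. This establishes bilinearity. The argument is elementary; the only point that needs care is the measure-theoretic identification of the conditional probability through the randomised classifier, together with the positivity of $\pi_a$ used in the disintegration.
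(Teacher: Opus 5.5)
Your proof is correct: evaluating $\DD(f)$ via the tower property as $\int f(x,1)\,\mathrm{d}\mathbb{P}_{X\mid A=1}(x)-\int f(x,0)\,\mathrm{d}\mathbb{P}_{X\mid A=0}(x)$, rewriting each term against $\mathbb{P}_{X,A}$ using $\mathbb{P}_{X,A}(\mathrm{d}x,\{a\})=\pi_a\,\mathbb{P}_{X\mid A=a}(\mathrm{d}x)$ with $\pi_a>0$, and observing that the resulting weight is constant in $x$ is all that is needed. The paper gives no proof of its own and imports the result from \citet{zeng2024bayes}, where it follows from this same direct computation, so your argument matches.
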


\begin{theorem}[Theorem 4.2 in \citealp{zeng2024bayes}] \label{thm_fair_bayes_opt}
Assume that $\eta_a(X)$ is a continuous random variable for $a \in \{0, 1\}$. For any $\tau \in \mathbb{R}$ and a given bilinear disparity measure $\D$ in \Cref{def_bilinear_disparity}, denote the classifier
\begin{align*}
    g_{\D, \tau}(x,a) = \begin{cases} \vspace{0.5em}
            1, &(2-\tau  s_{\D,a})\eta_a(x) \geq 1+\tau  b_{\D,a},\\ 
            0, &(2-\tau  s_{\D,a})\eta_a(x) < 1+\tau b_{\D,a},
        \end{cases}
    \end{align*}
    and let $\mathrm{D}(\tau) = \mathrm{D}(g_{\mathrm{D}, \tau})$. Then, for any $\alpha \geq 0$, the $\alpha$-fair Bayes optimal classifier is $f_{\mathrm{D}, \alpha}^* = g_{\mathrm{D}, \tau_{\mathrm{D}, \alpha}^*}$, where 
    \begin{align*} 
        \tau_{\mathrm{D}, \alpha}^* = \argmin_{\tau \in \mathbb{R}} \big\{ |\tau|: \, |\mathrm{D}(\tau)| \leq \alpha\big\}.
    \end{align*}
\end{theorem}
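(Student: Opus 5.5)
The plan is a Lagrangian (Neyman--Pearson type) argument. The $\alpha$-fair problem is a linear program over $f\in\mathcal{F}$: both the risk and the disparity are linear in $f$. So adding the disparity constraint with a multiplier $\tau$ and minimising pointwise should produce exactly the classifiers $g_{\mathrm{D},\tau}$. The remaining work is to pick the right multiplier.

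\textbf{Step 1 (pointwise minimisers).} First I write the risk in linear form:
\[
R(f)=\mathbb{P}(Y=1)+\sum_{a\in\{0,1\}}\int f(x,a)\{1-2\eta_a(x)\}\,\mathrm{d}\mathbb{P}_{X,A}(x,a).
\]
Then I use \Cref{def_bilinear_disparity} to write $\mathrm{D}(f)=\sum_a\int f(x,a)\{s_{\mathrm{D},a}\eta_a(x)+b_{\mathrm{D},a}\}\,\mathrm{d}\mathbb{P}_{X,A}$. For fixed $\tau\in\mathbb{R}$, the Lagrangian $L_\tau(f)=R(f)+\tau\,\mathrm{D}(f)$ has integrand
\[
f(x,a)\big[1+\tau b_{\mathrm{D},a}-(2-\tau s_{\mathrm{D},a})\eta_a(x)\big].
\]
This is minimised over $f(x,a)\in[0,1]$ by setting $f=1$ exactly when the bracket is $\le 0$, which is $g_{\mathrm{D},\tau}$. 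Hence $L_\tau(g_{\mathrm{D},\tau})\le L_\tau(f)$ for all $f\in\mathcal{F}$. Because $\eta_a(X)$ is continuous, the tie set has probability zero, so the choice at ties is irrelevant.

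\textbf{Step 2 (structure of $\tau\mapsto \mathrm{D}(\tau)$).} Write $\mathrm{D}(\tau)=\mathrm{D}(g_{\mathrm{D},\tau})$.
\begin{itemize}
\item \emph{Monotonicity.} For $\tau_1<\tau_2$, I apply Step 1 twice: $L_{\tau_1}(g_{\tau_1})\le L_{\tau_1}(g_{\tau_2})$ and $L_{\tau_2}(g_{\tau_2})\le L_{\tau_2}(g_{\tau_1})$. Adding the two inequalities gives $(\tau_2-\tau_1)\{\mathrm{D}(\tau_2)-\mathrm{D}(\tau_1)\}\le 0$, so $\mathrm{D}(\cdot)$ is non-increasing.
\item \emph{Continuity.} The threshold sets move continuously in $\tau$ and have null boundaries, by the continuity of $\eta_a(X)$. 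Dominated convergence then shows $\tau\mapsto\mathrm{D}(\tau)$ is continuous.
\end{itemize}

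\textbf{Step 3 (case analysis).}
\begin{itemize}
\item If $|\mathrm{D}(0)|\le\alpha$, then $\tau^*_{\mathrm{D},\alpha}=0$. Here $g_{\mathrm{D},0}$ is the unconstrained Bayes rule by Step 1, and it is feasible, so it is $\alpha$-fair optimal.
\item If $\mathrm{D}(0)>\alpha$, monotonicity gives $\mathrm{D}(\tau)\ge\mathrm{D}(0)>\alpha$ for all $\tau\le 0$. So the minimiser $\tau^*=\tau^*_{\mathrm{D},\alpha}$ is positive, and continuity forces $\mathrm{D}(\tau^*)=\alpha$. I assume the feasible set is nonempty, which is implicit in the argmin being well defined. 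Now take any $f$ with $|\mathrm{D}(f)|\le\alpha$. Since $\tau^*>0$ and $\mathrm{D}(f)\le\alpha$,
\[
R(f)\ge R(f)+\tau^*\{\mathrm{D}(f)-\alpha\}=L_{\tau^*}(f)-\tau^*\alpha\ge L_{\tau^*}(g_{\mathrm{D},\tau^*})-\tau^*\alpha=R(g_{\mathrm{D},\tau^*}).
\]
Since $g_{\mathrm{D},\tau^*}$ is itself feasible, it is optimal.
\item If $\mathrm{D}(0)<-\alpha$, the argument is symmetric: $\tau^*<0$, $\mathrm{D}(\tau^*)=-\alpha$, and the constraint used is $\mathrm{D}(f)\ge-\alpha$.
\end{itemize}

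\textbf{Main obstacle.} The delicate point is Step 2, together with the existence and attainment of $\tau^*$. One needs the minimal-$|\tau|$ feasible point to sit exactly on the active boundary, with the sign of $\tau^*$ matching the side on which the constraint binds. Otherwise the multiplier inequality $\tau^*\{\mathrm{D}(f)\mp\alpha\}\le 0$ fails. The swap argument delivers monotonicity for free for any bilinear $\mathrm{D}$. Continuity rests entirely on the no-atom assumption on $\eta_a(X)$; I would verify it carefully, including the degenerate case $2-\tau s_{\mathrm{D},a}=0$, where the set $\{x:g_{\mathrm{D},\tau}(x,a)=1\}$ changes discontinuously.
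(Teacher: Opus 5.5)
Your Lagrangian argument is the right one. The paper gives no proof of this statement: it imports it from Zeng et al., whose proof rests on a generalised Neyman--Pearson lemma, which is the same reasoning as yours. Your Step~1, the swap argument for monotonicity, and the multiplier chain in Step~3 are all correct. The gap is the continuity claim in Step~2. You flagged the case $2-\tau s_{\mathrm{D},a}=0$ as something to verify, but it cannot be verified in general, so you need to split it.

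\emph{Only $2-\tau_0 s_{\mathrm{D},a}=0$.} Suppose $1+\tau_0 b_{\mathrm{D},a}\neq 0$. Then $|(2-\tau s_{\mathrm{D},a})\eta_a(x)|\le|2-\tau s_{\mathrm{D},a}|\to 0$, while $1+\tau b_{\mathrm{D},a}$ stays bounded away from $0$. Hence $g_{\mathrm{D},\tau}(\cdot,a)$ is identically $0$ or identically $1$ for all $\tau$ near $\tau_0$, and continuity holds there.

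\emph{Both vanish.} Suppose also $1+\tau_0 b_{\mathrm{D},a}=0$, i.e.\ $s_{\mathrm{D},a}\neq 0$ and $w_{\mathrm{D}}(x,a)=s_{\mathrm{D},a}\{\eta_a(x)-1/2\}$. Then at $\tau_0=2/s_{\mathrm{D},a}$:
\begin{itemize}
\item the tie set is all of group $a$, so your Step~1 remark that ties are null fails;
\item the group-$a$ rule switches between $\{\eta_a\ge 1/2\}$ and $\{\eta_a\le 1/2\}$, passing through ``everything'';
\item $\mathrm{D}(\cdot)$ jumps.
\end{itemize}
If $\tau^*=\tau_0$, you no longer get $\mathrm{D}(\tau^*)=\pm\alpha$, and $R(f)\ge R(g_{\mathrm{D},\tau^*})$ breaks.

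This is a counterexample to the statement itself, not just to your proof. Take $\pi_1=1/2$, $X\mid A=1\sim\mathrm{Unif}[0,1]$, $\eta_1(x)=x$, any $\eta_0$ with $\eta_0(X)$ continuous, $w_{\mathrm{D}}(x,1)=\eta_1(x)-1/2$ and $w_{\mathrm{D}}(x,0)=0$. Then $\mathrm{D}(\tau)=1/16$ for $\tau<2$, $\mathrm{D}(2)=0$, and $\mathrm{D}(\tau)=-1/16$ for $\tau>2$. With $\alpha=1/32$ the only feasible point is $\tau^*=2$, so $g_{\mathrm{D},2}(\cdot,1)\equiv 1$, whose group-$1$ risk contribution is $0$. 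By contrast, $f(x,1)=\tfrac12\indc\{x\ge 1/2\}$, with the same rule on group $0$, is feasible with $\mathrm{D}(f)=1/32=\alpha$ and has risk smaller by $1/16$.

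So you need one of two fixes:
\begin{itemize}
\item Add a nondegeneracy hypothesis: no $a$ has $s_{\mathrm{D},a}\neq 0$ and $s_{\mathrm{D},a}+2b_{\mathrm{D},a}=0$.
\item Allow randomisation on the tie set at $\tau^*$, chosen so that $\mathrm{D}=\pm\alpha$. Any such choice still minimises $L_{\tau^*}$, so your Step~3 chain goes through. Such a choice exists, since mixing the left and right limits of $g_{\mathrm{D},\tau}$ at $\tau_0$ moves $\mathrm{D}$ linearly across $\alpha$.
\end{itemize}
For $\mathrm{DD}$, the only case the paper actually uses, $s_{\mathrm{DD},a}=0$, so the degenerate case never arises and your proof is complete there.
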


\begin{proposition}[Proposition 4.1 in \citealp{zeng2024bayes}]\label{prop_D_and_R}
	Recall that $D(\tau) = D(g_{D, \tau})$, where $g_{D, \tau}$ is defined in \Cref{thm_fair_bayes_opt}. Then, under the assumptions in \Cref{thm_fair_bayes_opt}, the following properties hold.
	\begin{enumerate}
		\item[(i)] The disparity $D(\tau)$ is continuous and non-increasing.
		\item[(ii)]  The misclassification $R(g_{D, \tau})$ is non-increasing with respect to $\tau$ on $(-\infty, 0)$ and non-decreasing with respect to $\tau$ on $(0, +\infty)$.
	\end{enumerate}
\end{proposition}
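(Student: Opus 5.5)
The statement to prove is Proposition 4.1 of \citet{zeng2024bayes}, now in the DD setting. Part (i) says $\DD(\tau)$ is continuous and non-increasing. Part (ii) says $R(g_{\DD,\tau})$ is non-increasing on $(-\infty,0)$ and non-decreasing on $(0,\infty)$. The plan is to work directly with the explicit form of $g_{\DD,\tau}$ for the bilinear measure DD, where $s_{\DD,a}=0$ and $b_{\DD,a}=(2a-1)/\pi_a$. This gives $g_{\DD,\tau}(x,a)=\indc\{\eta_a(x)\geq T_a(\tau)\}$ with $T_a(\tau)=1/2+\tau(2a-1)/(2\pi_a)$, exactly as in \eqref{eq_bayes_fair}.

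For (i), write
\[\DD(\tau)=\mathbb{P}_{X|A=1}\{\eta_1(X)\geq T_1(\tau)\}-\mathbb{P}_{X|A=0}\{\eta_0(X)\geq T_0(\tau)\}.\]
$T_1$ is increasing in $\tau$, so the first term is non-increasing. $T_0$ is decreasing, so the second probability is non-decreasing and its negative is non-increasing. This is the same computation as in \Cref{l_DD_non_increase}, applied to the population measure. For continuity, $\eta_a(X)$ is assumed to be a continuous random variable, so $t\mapsto\mathbb{P}\{\eta_a(X)\geq t\}$ has no atoms and is continuous. Composing it with the affine maps $T_a$ then gives continuity of $\DD(\tau)$.

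For (ii), I would compute the risk explicitly:
\[R(g_{\DD,\tau})=\sum_a\pi_a\int\big[g_{\DD,\tau}(x,a)\{1-2\eta_a(x)\}+\eta_a(x)\big]\,\mathrm{d}\mathbb{P}_{X|A=a}.\]
Take $0\le\tau_1<\tau_2$. For $a=1$, the classifier switches off on the set $\{T_1(\tau_1)\le\eta_1<T_1(\tau_2)\}$, where $\eta_1\geq T_1(\tau_1)\geq1/2$. On this set $1-2\eta_1\le0$, so removing it does not decrease the risk. For $a=0$, the classifier switches on on $\{T_0(\tau_2)\le\eta_0<T_0(\tau_1)\}$, where $\eta_0<T_0(\tau_1)\le1/2$. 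There $1-2\eta_0>0$, so adding it again does not decrease the risk. Hence the risk is non-decreasing on $(0,\infty)$. The case $\tau<0$ is symmetric: moving $\tau$ toward $0$ moves both thresholds toward $1/2$, only adding correct decisions, so the risk is non-increasing on $(-\infty,0)$.

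The main point requiring care is the sign bookkeeping. One must confirm that $T_a(\tau)$ lies on the correct side of $1/2$ on each half-line, so that every point flipped has the right sign of $1-2\eta_a$. Boundary ties $\eta_a=T_a$ have probability zero by the continuity assumption, so they cause no trouble.
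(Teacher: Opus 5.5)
The paper does not prove this proposition; it is quoted as background from \citet{zeng2024bayes}, and the paper only ever uses it for $\DD$ (e.g.\ in the proof of \Cref{l_range_tau}). For that case your argument is correct and complete. With $s_{\DD,a}=0$ the thresholds $T_a(\tau)$ are affine, with $T_1$ increasing and $T_0$ decreasing. Continuity follows because $\eta_a(X)$ has no atoms under $\mathbb{P}_{X|A=a}$; this holds even if the assumption is read under $\mathbb{P}_X$, since $\mathbb{P}_{X|A=a}\le \mathbb{P}_X/\pi_a$. Your sign bookkeeping in (ii) is right on both half-lines.

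What you have not covered is the generality in which the statement is written. Here $g_{\mathrm{D},\tau}$ is the classifier of \Cref{thm_fair_bayes_opt} for an arbitrary bilinear $\mathrm{D}$. Its acceptance region $\{(2-\tau s_{\mathrm{D},a})\eta_a\geq 1+\tau b_{\mathrm{D},a}\}$ has a non-affine threshold that reverses direction when $2-\tau s_{\mathrm{D},a}$ changes sign, so your label-flip bookkeeping does not carry over verbatim.

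The monotonicity claims follow in general from a shorter argument. Write $R(f)+\tau\mathrm{D}(f)=\sum_a\int f(x,a)\{1-2\eta_a(x)+\tau w_{\mathrm{D}}(x,a)\}\,\mathrm{d}\mathbb{P}_{X,A}(x,a)+\mathrm{const}$. Then $g_{\mathrm{D},\tau}(x,a)=1$ exactly when the bracket is nonpositive, so $g_{\mathrm{D},\tau}$ minimises this objective over all $f\in\mathcal{F}$. For $\tau_1<\tau_2$, comparing the optimality of $g_{\mathrm{D},\tau_1}$ and $g_{\mathrm{D},\tau_2}$ gives two inequalities: $R(g_{\mathrm{D},\tau_2})-R(g_{\mathrm{D},\tau_1})\geq\tau_1\{\mathrm{D}(\tau_1)-\mathrm{D}(\tau_2)\}$ and $R(g_{\mathrm{D},\tau_1})-R(g_{\mathrm{D},\tau_2})\geq\tau_2\{\mathrm{D}(\tau_2)-\mathrm{D}(\tau_1)\}$. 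Adding them yields $(\tau_2-\tau_1)\{\mathrm{D}(\tau_2)-\mathrm{D}(\tau_1)\}\leq 0$, which is the monotonicity of $\mathrm{D}$. Each inequality on its own then yields (ii), on $[0,\infty)$ and on $(-\infty,0]$ respectively.

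Your direct computation is more explicit about which points change label. The Lagrangian exchange argument needs no knowledge of the thresholds and works for any linear disparity. Continuity for general bilinear $\mathrm{D}$ still requires a separate check where $2-\tau s_{\mathrm{D},a}$ vanishes.
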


\end{document}